\newtheorem{proposition}{Proposition}
\DeclareRobustCommand\onedot{\futurelet\@let@token\@onedot}
\def\@onedot{\ifx\@let@token.\else.\null\fi\xspace}
\def\eg{\emph{e.g}\onedot}
\def\wrt{w.r.t\onedot}
\newcommand{\web}{\url{https://github.com/feifeiobama/RectifID}}
\title{RectifID: Personalizing Rectified Flow with\\Anchored Classifier Guidance}
\author{%
  Zhicheng Sun$^1$\;, Zhenhao Yang$^3$, Yang Jin$^1$, Haozhe Chi$^1$, Kun Xu$^2$, Kun Xu$^2$,\\
  \textbf{Liwei Chen$^2$, Hao Jiang$^1$, Yang Song, Kun Gai$^2$, Yadong Mu$^1$}\thanks{Corresponding author.} \\
  $^1$Peking University, $^2$Kuaishou Technology,\\$^3$University of Electronic Science and Technology of China\\
  \texttt{\{sunzc,myd\}@pku.edu.cn} \\
}
\begin{document}

\maketitle

\begin{abstract}
  Customizing diffusion models to generate identity-preserving images from user-provided reference images is an intriguing new problem. The prevalent approaches typically require training on extensive domain-specific images to achieve identity preservation, which lacks flexibility across different use cases. To address this issue, we exploit classifier guidance, a training-free technique that steers diffusion models using an existing classifier, for personalized image generation. Our study shows that based on a recent rectified flow framework, the major limitation of vanilla classifier guidance in requiring a special classifier can be resolved with a simple fixed-point solution, allowing flexible personalization with off-the-shelf image discriminators. Moreover, its solving procedure proves to be stable when anchored to a reference flow trajectory, with a convergence guarantee. The derived method is implemented on rectified flow with different off-the-shelf image discriminators,~delivering advantageous personalization results for human faces, live subjects, and certain objects.
  Code is available at~\web.
\end{abstract}

\section{Introduction}
\label{sect:intro}

Recent advances in diffusion models~\citep{sohl2015deep, song2019generative, ho2020denoising} have ignited a surge of research into their customizability. A prominent example is personalized image generation, which aims to integrate user-defined subjects into the generated image. This plays a pivotal role in AI art creation, empowering users to produce identity-consistent images with greater customizability beyond text prompts. Nevertheless, there remain significant challenges in accurately preserving the subject's identity and being flexible to a variety of personalization needs.

Existing personalization methods are limited in these two aspects, as they require an extra finetuning or pre-training stage. For example, the pioneering works~\citep{gal2023image, ruiz2023dreambooth} finetune conditional embeddings or model parameters per subject, resulting in suboptimal efficiency and identity consistency due to lack of domain knowledge. On the other hand, the recently prevailing tuning-free methods~\citep{wei2023elite, ye2023ip, li2024photomaker, wang2024instantid} pre-train a conditioning adapter to encode subject features into the generation process. However, their models must be pre-trained on extensive domain-specific data, \eg LAION-Face 50M~\citep{zheng2022general}, which is costly in the first place, and cannot be transferred flexibly across different data domains, \eg from human faces to common live subjects and objects, and even to multiple subjects.

To address both challenges of identity consistency and flexibility, we advocate a training-free approach that utilizes the guidance of a pre-trained discriminator without extra training of the generative model. This methodology is well-known as classifier guidance~\citep{dhariwal2021diffusion}, which modifies an existing denoising process using the gradient from a pre-trained classifier. The rationale behind our exploitation is twofold: first, it directly harnesses the discriminator's domain knowledge for identity preservation, which may be a cost-effective substitute for training on domain-specific datasets; secondly, keeping the diffusion model intact allows for plug-and-play combination with different discriminators, as shown in~\cref{fig:illustration}, which enhances its flexibility across various personalization tasks. However, the original classifier guidance is largely limited in the reliance on a special classifier trained on noised inputs. Despite recent efforts to approximate the guidance~\citep{kim2022diffusionclip, liu2023flowgrad, wallace2023end, ben2024d}, they have mainly focused on computational efficiency, and have yet to achieve sophisticated performance on personalization tasks.

Technically, to extend classifier guidance for personalized image generation, our work builds on a recent framework named rectified flow~\citep{liu2023flow} featuring strong theoretical properties, \eg the straightness of its sampling trajectory. By approximating the rectified flow to be ideally straight, the original classifier guidance is reformulated as a simple fixed-point problem concerning only the trajectory endpoints, thus naturally overcoming its reliance on a special noise-aware classifier. This allows flexible reuse of image discriminators for identity preservation in personalization tasks. Furthermore, we propose to anchor the classifier-guided flow trajectory to a reference trajectory to improve the stability of its solving process, which provides a convergence guarantee in theoretical scenarios and proves even more crucial in practice. Lastly, a clear connection is established between our derived anchored classifier guidance and the existing approximation practices.

The derived method is implemented for a practical class of rectified flow~\citep{yan2024perflow} assumed to be piecewise straight, in combination with face or object discriminators. This provides flexibility for a range of personalization tasks on human faces, live subjects, certain objects, and multiple subjects.
Extensive experimental results on these tasks clearly validate the effectiveness of our approach.\linebreak
Our contributions are summarized as follows: (1) We propose a training-free approach to flexibly personalize rectified flow, based on a fixed-point formulation of classifier guidance. (2) To improve its stability, we anchor the flow trajectory to a reference trajectory, which yields a theoretical convergence guarantee when the flow is ideally straight. (3) The proposed method is implemented on a relaxed piecewise rectified flow and demonstrates advantageous results in various personalization tasks.

\begin{figure}[t]
\captionsetup{belowskip=-3pt}
  \centering
  \includegraphics[width=\linewidth]{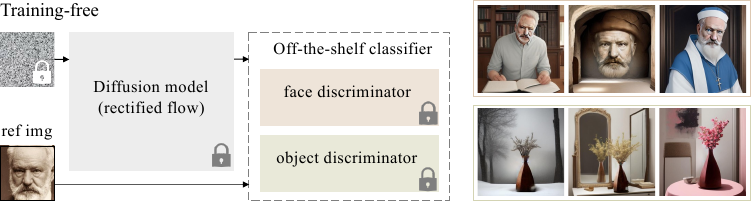}
  \caption{Illustration of training-free classifier guidance. Left: an off-the-shelf discriminator can be reused to steer the existing diffusion model, \eg rectified flow, to generate identity-preserving~images. Right: personalized image generation results for human faces and objects using our proposed method.}
  \label{fig:illustration}
\end{figure}

\section{Background}

\textbf{Personalized image generation} studies incorporating user-specified subjects into the text-to-image generation pipeline. To preserve the subject's identity, the seminal works Textual Inversion~\citep{gal2023image} and DreamBooth~\citep{ruiz2023dreambooth} finetune conditional embeddings or model parameters for each subject, which imposes high computational costs. Subsequent literature resorts to more efficient parameters~\citep{hu2022lora, han2023svdiff, yuan2023inserting} or a pre-trained subject encoder~\citep{wei2023elite, ye2023ip} to allow personalization within a few minutes or even without tuning. At the other end, a recent trend is the reuse of existing discriminators to improve identity consistency, such as extracting discriminative face features as the condition~\citep{ye2023ip, wang2024instantid} or as a training objective for the encoder~\citep{peng2024portraitbooth, gal2024lcm, guo2024pulid}. However, these models require extensive pre-training on domain-specific data, \eg LAION-Face 50M~\citep{zheng2022general}. In contrast, our method is a training-free approach that exploits existing discriminators based on the recent rectified flow model, allowing flexible personalization for a variety of tasks.

\textbf{Rectified flow}
is an instance of flow-based generative models~\citep{song2021score, xu2022poisson, liu2023flow, albergo2023building, lipman2023flow}. They aim to learn a velocity~field $\bm{v}$ that maps random noise $\bm{z}_0\sim\pi_0$ to samples from a complex distribution $\bm{z}_1\sim\pi_{\textrm{data}}$ via an ordinary differential equation (ODE):
\begin{equation}
    d\bm{z}_t = \bm{v}(\bm{z}_t, t)dt.
\end{equation}
Instead of directly solving the ODE~\citep{chen2018neural}, rectified flow~\citep{liu2023flow} simply learns a linear interpolation between the two distributions by minimizing the following objective:
\begin{equation}
\label{eq:obj}
    \min_{\bm{v}}\int_0^1\mathbb{E}\left[\|(\bm{z}_1-\bm{z}_0)-\bm{v}(\bm{z}_t,t)\|^2\right]dt.
\end{equation}
This procedure straightens the flow trajectory and thus allows faster sampling. Ideally, a well-trained rectified flow is a straight flow with uniform velocity $\bm{v}(\bm{z}_t,t)=\bm{v}(\bm{z}_0,0)$ following:
\begin{equation}
\label{eq:rflow}
    \bm{z}_t = \bm{z}_0 + \bm{v}(\bm{z}_t,t)t.
\end{equation}
Recently, rectified flow has shown promising efficiency~\citep{liu2024instaflow} and quality~\citep{esser2024scaling, yan2024perflow} in text-to-image generation. Our work extends its capabilities and theoretical properties to personalized image generation via classifier guidance.

\textbf{Classifier guidance,}
initially proposed for class-conditioned diffusion models~\citep{dhariwal2021diffusion}, introduces a test-time mechanism to adjust the predicted noise $\epsilon(\bm{z}_t, t)$ based on the guidance from a classifier. Given condition $c$ and classifier output $p(c|\bm{z}_t)$, the adjustment is formulated as:
\begin{equation}
\label{eq:eps}
    \hat\epsilon(\bm{z}_t, t) = \epsilon(\bm{z}_t, t) + s\cdot\sigma_t\nabla_{\bm{z}_t}\log p(c|\bm{z}_t),
\end{equation}
where $s$ denotes the guidance scale, and $\sigma_t$ is determined by the noise schedule. Noteworthy, the condition $c$ is not restricted to class labels, but can be extended to text~\citep{nichol2022glide} and beyond. However, it is largely limited by the reliance on a noise-aware classifier for noised inputs $\bm{z}_t$, which\linebreak restricts the use of most pre-trained discriminators that only predict the likelihood $p(c|\bm{z}_1)$ on clean images. Consequently, its usefulness is limited in practice. See~\cref{app:rel} for more related work.

\section{Method}

This work aims at customizing rectified flow with classifier guidance. We show that the above limit of classifier guidance may be solved with a simple fixed-point solution for rectified flow (\cref{sect:rflow}). To improve its stability, \cref{sect:anchored} proposes a new anchored classifier guidance with a convergence guarantee. Lastly, the implementation and applications are described in~\cref{sect:alg,sect:app}.

\subsection{Classifier Guidance for Rectified Flow}
\label{sect:rflow}

This section first derives the vanilla classifier guidance for rectified flow, and then present an initial attempt to remove the need for the noise-aware classifier $p(c|\bm{z}_t)$, which is based on a new fixed-point solution of classifier guidance assuming that the rectified flow is ideally straight.

The classifier guidance can be derived as modifying the potential associated with the rectified flow. According to the Helmholtz decomposition, a velocity field $v$ may be decomposed into:
\begin{equation}
    \bm{v}(\bm{z}_t, t)=\nabla_{\bm{z}_t}\phi(\bm{z}_t,t) + \bm{r}(\bm{z}_t,t),
\end{equation}
where $\phi$ is a scaler potential and $\bm{r}$ is a divergence-free rotation field. They can be determined by solving the Poisson's equation $\nabla^2\phi(\bm{z}_t,t)=\nabla\cdot \bm{v}(\bm{z}_t, t)$, but this is beyond our focus. We directly add a new potential proportional to the log-likelihood to simulate classifier guidance, as follows:
\begin{equation}
   \hat{\bm v}(\bm{z}_t, t)=\nabla_{\bm{z}_t}\left[\phi(\bm{z}_t,t)+s\cdot\log p(c|\bm{z}_t)\right] + \bm{r}(\bm{z}_t,t),
\end{equation}
where $s$ denotes the guidance scale, and $\hat\;$ is used to distinguish the new flow from the original one. Subtracting the above two equations yields the vanilla classifier guidance, similar in form to~\cref{eq:eps}:
\begin{equation}
\label{eq:cls}
    \hat{\bm v}(\bm{z}_t, t) = \bm{v}(\bm{z}_t, t) + s\cdot\nabla_{\bm{z}_t}\log p(c|\bm{z}_t).
\end{equation}
While this classifier guidance should allow for test-time conditioning of rectified flow, it cannot be applied in the absence of noise-aware classifier $p(c|\bm{z}_t)$. In the following, we show that this limitation may be overcome by exploiting the straightness property of rectified flow.

\textbf{Attempt to bypass noise-aware classifier.}
We make a key observation that the intermediate classifier guidance $\nabla_{\bm{z}_t}\log p(c|\bm{z}_t)$ can be circumvented by approximating the new flow trajectory to be straight (an ideal guidance should preserve the properties of rectified flow) and focusing on the endpoint $\bm{z}_1$. Formally, substituting $t=1$ in~\cref{eq:cls,eq:rflow} allows skipping any intermediate guidance terms:
\begin{equation}
\label{eq:eq0}
\begin{aligned}
\bm{z}_1&=\bm{z}_0+\hat{\bm v}(\bm{z}_1,1)\\
&=\bm{z}_0+\bm{v}(\bm{z}_1,1)+ s\cdot\nabla_{\bm{z}_1}\log p(c|\bm{z}_1).
\end{aligned}
\end{equation}
Interestingly, this turns out to be a fixed-point problem \wrt $\bm{z}_1$, suggesting that the classifier-guided flow trajectory could be solved iteratively by numerical methods such as the fixed-point iteration, without knowing the noise-aware classifier. This greatly enhances the flexibility of classifier guidance to a variety of off-the-shelf image discriminators. However, our further analysis reveals both empirical (\cref{sect:ablation}) and theoretical evidence questioning the convergence of this iterative approach:
\begin{proposition}
\label{prop:prop1}
There exist Lipschitz continuous functions $\bm{v}(\bm{z}_1,1)$ and $\nabla_{\bm{z}_1} \log p(c|\bm{z}_1)$, such that the fixed-point iteration for solving the target trajectory based on~\cref{eq:eq0} is not guaranteed to converge by the Banach fixed-point theorem~\citep{banach1922operations}, irrespective of the choice of $s>0$.
\end{proposition}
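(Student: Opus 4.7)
The plan is to exhibit explicit Lipschitz continuous functions $\bm{v}(\bm{z}_1,1)$ and $\nabla_{\bm{z}_1}\log p(c|\bm{z}_1)$ such that the fixed-point operator implicit in~\cref{eq:eq0},
\[
F(\bm{z}_1) := \bm{z}_0 + \bm{v}(\bm{z}_1,1) + s\cdot\nabla_{\bm{z}_1}\log p(c|\bm{z}_1),
\]
fails to be a contraction for every $s > 0$. Since the Banach fixed-point theorem only certifies convergence when $F$ is a strict contraction on a complete metric space, this alone yields the proposition.

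First I would note that, because $F$ combines the two vector fields with coefficients $1$ and $s$, its Lipschitz constant satisfies $\mathrm{Lip}(F) \le \mathrm{Lip}(\bm{v}(\cdot,1)) + s\cdot\mathrm{Lip}(\nabla\log p(c|\cdot))$, with equality achievable when the two derivatives are co-aligned. Hence the strategy is to pick the two fields so that their slopes reinforce one another uniformly in $s$, making the sum grow with $s$ rather than allowing any cancellation that contraction would require.

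Concretely, I would work in one dimension (extending coordinatewise to $\mathbb{R}^d$) and set $\bm{v}(z_1,1) := z_1$, so the velocity equals the position, together with $\nabla_{z_1}\log p(c|z_1) := z_1$, arising from the (un-normalized) potential $\log p(c|z_1) = z_1^2/2$. Both maps are $1$-Lipschitz. Then~\cref{eq:eq0} yields $F(z_1) = z_0 + (1+s)z_1$, whose Lipschitz constant equals $1+s > 1$ for every $s > 0$, so Banach's theorem never applies. To emphasize that this is not a vacuous obstruction, I would also observe that the explicit iteration $z_1^{(k+1)} = z_0 + (1+s)z_1^{(k)}$ diverges geometrically from every initialization distinct from the unique fixed point $-z_0/s$, exhibiting actual non-convergence rather than mere inapplicability of the theorem.

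The main nuance to address is that $\log p(c|z_1) = z_1^2/2$ is not a normalized probability, whereas classifier log-likelihoods are typically bounded above by $0$. This is not a genuine obstacle: the proposition hypothesizes only Lipschitz continuity of $\nabla\log p$, not normalization of $p$; moreover, the argument localizes, so any smooth modification whose gradient agrees with $z_1$ on a neighborhood of the would-be fixed point still violates contractivity on that neighborhood and preserves the counter-example.
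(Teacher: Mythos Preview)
Your proof is correct and follows essentially the same route as the paper: both choose the two vector fields to be identical so that their Lipschitz constants add rather than cancel, forcing $\mathrm{Lip}(F)=(1+s)\,\mathrm{Lip}(\bm{v})>1$ for every $s>0$. Your version is a concrete instantiation (the identity map, Lipschitz constant $1$) of the paper's abstract construction (``identical functions with a Lipschitz constant greater than $1$''), and you additionally exhibit the explicit divergent iteration and address the normalization caveat, which the paper omits.
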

\begin{proof}
Consider the following construction. Let $\bm{v}(\bm{z}_1,1)$ and $\nabla{\bm{z}_1} \log p(c|\bm{z}_1)$ be identical functions with a Lipschitz constant greater than 1. Then, the Lipschitz constant of the right-hand side of the fixed-point equation is greater than 1 for any $s > 0$. This violates the Banach fixed-point theorem's requirement for a Lipschitz constant strictly less than 1, thus convergence is not guaranteed.
\end{proof}
\Cref{prop:prop1} shows that the derived fixed-point solution may not always be practical. Intuitively, even with a small perturbation at $\bm{z}_1$, the target flow trajectory estimated by~\cref{eq:eq0} could diverge significantly after iterated updates, which hinders the controllability of rectified flow. This motivates us to anchor the target flow trajectory to a reference trajectory to stabilize its solving process.

\subsection{Anchored Classifier Guidance}
\label{sect:anchored}

\begin{figure}[t]
  \centering
  \includegraphics[width=\linewidth]{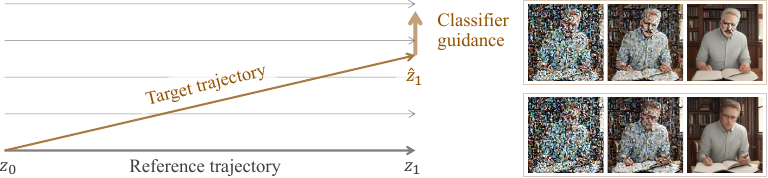}\hfill
  \caption{Illustration of anchored classifier guidance for rectified flow. Left: we propose to guide the flow trajectory while implicitly enforcing it to flow straight and stay close to a reference trajectory. Right: comparison of the new trajectory with the reference trajectory (in the last three sampling~steps).}
  \label{fig:flow}
\end{figure}

This section establishes a new type of classifier guidance based on a reference trajectory. The idea is to constrain the new trajectory to be straight and near the reference trajectory, as illustrated in~\cref{fig:flow}. It provides a better convergence guarantee and a certain degree of interpretability.

Let $\hat{\bm z}_t$ and $\bm{z}_t$ represent two flow trajectories originating from the common starting point $\bm{z}_0$ with or without classifier guidance. The symbol $\hat\;$ denotes the new trajectory with classifier guidance. Assuming the two trajectories are close and straight (ideally preserving the characteristics of rectified flow), their difference can be estimated based on~\cref{eq:cls} and the first-order Taylor expansion:
\begin{equation}
\begin{aligned}
    \hat{\bm v}(\hat{\bm z}_t,t)-\bm{v}(\bm{z}_t,t)&=\bm{v}(\hat{\bm z}_t,t)+s\cdot\nabla_{\hat{\bm z}_t}\log p(c|\hat{\bm z}_t)-\bm{v}(\bm{z}_t,t)\\
    &\approx\left[\nabla_{\bm{z}_t}\bm{v}(\bm{z}_t,t)\right](\hat{\bm z}_t-\bm{z}_t)+s\cdot\nabla_{\hat{\bm z}_t}\log p(c|\hat{\bm z}_t)\\
    &=\left[\nabla_{\bm{z}_t}\bm{v}(\bm{z}_t,t)t\right](\hat{\bm v}(\hat{\bm z}_t,t)-\bm{v}(\bm{z}_t,t))+s\cdot\nabla_{\hat{\bm z}_t}\log p(c|\hat{\bm z}_t)\\
    &=\left[\bm{I}-\nabla_{\bm{z}_t}\bm{z}_0\right](\hat{\bm v}(\hat{\bm z}_t,t)-\bm{v}(\bm{z}_t,t))+s\cdot\nabla_{\hat{\bm z}_t}\log p(c|\hat{\bm z}_t),
\end{aligned}
\end{equation}
where the final step is derived from~\cref{eq:rflow}. From here, a new form of classifier guidance is obtained:
\begin{equation}
\label{eq:ref}
    \hat{\bm v}(\hat{\bm z}_t,t)=\bm{v}(\bm{z}_t,t)+s\cdot\left[\nabla_{\bm{z}_0}{\bm{z}_t}\right]\nabla_{\hat{\bm z}_t}\log p(c|\hat{\bm z}_t).
\end{equation}
This new classifier guidance anchors the target velocity to a predetermined reference velocity $\bm{v}(\bm{z}_t,t)$ that is dependent only on $t$ and irrelevant to the current state $\hat{\bm z}_t$, thereby constraining the target flow trajectory near the reference trajectory and improving its controllability. Next, we extend its applicability to the more common scenarios where the noise-aware classifier $p(c|\hat{\bm z}_t)$ is absent.

\textbf{Bypassing noise-aware classifier.}
To circumvent the intermediate classifier guidance, we follow the previous practice of substituting $t=1$ into~\cref{eq:ref,eq:rflow}, yielding a fixed-point problem \wrt $\hat{\bm z}_1$:
\begin{equation}
\label{eq:eq1}
\hat{\bm z}_1 = \bm{z}_1 + s\cdot\left[\nabla_{\bm{z}_0}{\bm{z}_1}\right]\nabla_{\hat{\bm z}_1}\log p(c|\hat{\bm z}_1).
\end{equation}
As can be seen, the target endpoint $\hat{\bm z}_1$ is also anchored to a known reference point $\bm{z}_1$, which should enhance its stability in the solving process via fixed-point iteration or alternative numerical methods. Below, we exemplify its favorable theoretical property using the fixed-point iteration:
\begin{proposition}
\label{prop:prop2}
Suppose $\nabla_{\hat{\bm z}_1} \log p(c|\hat{\bm z}_1)$ is Lipschitz continuous \wrt $\hat{\bm z}_1$, the fixed-point iteration to solve the target trajectory by~\cref{eq:eq1} exhibits at least linear convergence with a properly chosen $s$.
\end{proposition}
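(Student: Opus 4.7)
The plan is to cast \cref{eq:eq1} as a fixed-point equation $\hat{\bm z}_1 = F(\hat{\bm z}_1)$ for the operator
\[
F(\hat{\bm z}_1) \;:=\; \bm{z}_1 \;+\; s\cdot\bigl[\nabla_{\bm{z}_0}\bm{z}_1\bigr]\,\nabla_{\hat{\bm z}_1}\log p(c|\hat{\bm z}_1),
\]
and then to verify the hypotheses of the Banach fixed-point theorem for a suitably small $s$. Crucially, the Jacobian $J := \nabla_{\bm{z}_0}\bm{z}_1$ is evaluated along the \emph{reference} trajectory and therefore is a fixed matrix, independent of the iterate $\hat{\bm z}_1$; this is what distinguishes \cref{eq:eq1} from the unanchored \cref{eq:eq0} and is exactly why the construction in \cref{prop:prop1} no longer applies.

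First I would bound $\|J\|_{\mathrm{op}}$. Under the ideal straightness assumption of rectified flow used throughout \cref{sect:anchored}, we have $\bm{z}_1 = \bm{z}_0 + \bm{v}(\bm{z}_0,0)$ (via \cref{eq:rflow} with $t=1$), so $J = \bm{I} + \nabla_{\bm{z}_0}\bm{v}(\bm{z}_0,0)$, which has finite operator norm $M$ whenever $\bm{v}(\cdot,0)$ is Lipschitz, a mild regularity assumption on the learned flow. Next, letting $L$ denote the Lipschitz constant of $\nabla_{\hat{\bm z}_1}\log p(c|\hat{\bm z}_1)$ as provided by hypothesis, for any two candidates $\hat{\bm z}_1^{(a)}, \hat{\bm z}_1^{(b)}$ I would estimate
\[
\bigl\|F(\hat{\bm z}_1^{(a)}) - F(\hat{\bm z}_1^{(b)})\bigr\| \;\le\; s\,\|J\|_{\mathrm{op}}\,L\,\bigl\|\hat{\bm z}_1^{(a)} - \hat{\bm z}_1^{(b)}\bigr\| \;=\; s M L\,\bigl\|\hat{\bm z}_1^{(a)} - \hat{\bm z}_1^{(b)}\bigr\|.
\]
Choosing any $s$ with $0 < s < 1/(ML)$, say $s = 1/(2ML)$, makes $F$ a strict contraction with modulus $q := sML < 1$.

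Applying the Banach fixed-point theorem then gives both existence and uniqueness of a fixed point $\hat{\bm z}_1^\ast$ and the standard convergence estimate $\|\hat{\bm z}_1^{(k)} - \hat{\bm z}_1^\ast\| \le q^k\,\|\hat{\bm z}_1^{(0)} - \hat{\bm z}_1^\ast\|$, which is precisely linear (geometric) convergence of the iteration, as claimed. The anchoring has thus turned the unbounded right-hand side of \cref{eq:eq0} into one whose variation is scaled by the user-controlled parameter $s$, restoring the contraction property that \cref{prop:prop1} showed was unattainable without the reference trajectory.

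The main obstacle I anticipate is a clean and honest statement of the regularity assumption on $\bm{v}$ that justifies $\|J\|_{\mathrm{op}} \le M < \infty$; one can either take this as an implicit hypothesis consistent with the ``ideally straight'' rectified flow regime already invoked in deriving \cref{eq:ref}, or strengthen the statement of \cref{prop:prop2} to require Lipschitz continuity of $\bm{v}(\cdot,0)$ explicitly. Everything else is a direct application of the contraction mapping principle, and the admissible range $s \in (0, 1/(ML))$ is exactly what is meant by ``properly chosen $s$.''
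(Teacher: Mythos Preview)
Your proposal is correct and follows essentially the same route as the paper: both cast \cref{eq:eq1} as a fixed-point equation, bound the Lipschitz constant of the right-hand side by $s$ times the norm of the fixed Jacobian $\nabla_{\bm{z}_0}\bm{z}_1$ times the Lipschitz constant of $\nabla_{\hat{\bm z}_1}\log p(c|\hat{\bm z}_1)$, and then invoke the Banach fixed-point theorem for sufficiently small $s$. The only cosmetic differences are that the paper uses the Frobenius norm of $\nabla_{\bm{z}_0}\bm{z}_1$ (calling it $L_1$) rather than the operator norm, and does not pause to justify why this norm is finite---it simply treats it as a given constant, whereas you supply the extra step of tracing it back to Lipschitz continuity of $\bm{v}(\cdot,0)$.
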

\begin{proof}
Denote the Frobenius norm of $\nabla_{\bm{z}_0}{\bm{z}_1}$ as $L_1$, and the Lipschitz constant of $\nabla_{\hat{\bm z}_1}\log p(c|\hat{\bm z}_1)$ as $L_2$. The Lipschitz constant of the right side of the equation \wrt $\hat{\bm z}_1$ is upper bounded by $s\cdot L_1\cdot L_2$. By choosing a sufficiently small $s<1/(L_1\cdot L_2)$, the Lipschitz constant of the right side is reduced to less than 1, thus ensuring linear convergence by the Banach fixed-point theorem.
\end{proof}

\textbf{Interpretation of new classifier guidance.}
In addition to the above convergence guarantee, our new classifier guidance can be interpreted by connecting with gradient backpropagation. From~\cref{eq:ref} one could obtain an estimate of the intermediate classifier guidance (see~\cref{app:deriv} for derivation):
\begin{equation}
\label{eq:noise}
    \nabla_{\hat{\bm z}_t}\log p(c|\hat{\bm z}_t)=\left[\nabla_{\bm{z}_t}{\bm{z}_1}\right]\nabla_{\hat{\bm z}_1}\log p(c|\hat{\bm z}_1).
\end{equation}
This suggests that our method is secretly estimating the intermediate classifier guidance with gradient backpropagation. While this is implicitly assumed or directly used in recent works that adapt classifier guidance to flow-based models~\citep{wallace2023end, liu2023flowgrad, ben2024d}, it is explicitly derived here based on a very different assumption (the straightness of the flow trajectory). Such a connection helps to rationalize both our adopted assumption and the existing practice.

\subsection{Practical Algorithm}
\label{sect:alg}

\textbf{Extension to piecewise rectified flow.}
The above analyses are performed based on the assumption that the rectified flow is well-trained and straight, which is often not the case in reality. In fact, existing rectified flow usually require multiple sampling steps due to the inherent curvature in the flow trajectory. Inspired by~\citet{yan2024perflow}, we adopt a relaxed assumption during implementation that the rectified flow is piecewise linear. Let there be $K$ time windows $\{[t_k,t_{k-1})\}_{k=K}^1$ where $1=t_K>\cdots>t_k>t_{k-1}>\cdots>t_0=0$, and the flow trajectory is assumed straight within each time window, then the inference procedure can be expressed as:
\begin{equation}
\label{eq:piece}
    \bm{z}_t=\bm{z}_{t_{k-1}}+\bm{v}(\bm{z}_t,t)(t-t_{k-1}),
\end{equation}
where $k$ is the index of the time window $[t_k,t_{k-1})$ that $t$ belongs to. Note that this framework is also compatible with the vanilla rectified flow by setting $K$ to the number of sampling steps.

The previously derived fixed-point iteration in~\cref{eq:eq1} cannot be applied directly, since its assumption that the target and reference trajectory segments share the same starting point (\eg $\hat{\bm z}_{t_{k-1}}=\bm{z}_{t_{k-1}}$) may be violated after updates. A quick fix is to reinitialize the reference trajectory every round with predictions for updated target starting points. This allows to formulate the following problem:
\begin{equation}
\begin{aligned}
\label{eq:eq2}
    \hat{\bm z}_{t_k}&=\bm{z}_{t_k}^e + s\cdot\left[\nabla_{\bm{z}_{t_{k-1}}}{\bm{z}_{t_k}^e}\right]\nabla_{\hat{\bm z}_{t_k}}\log p(c|\hat{\bm z}_{t_k})\\
    &=\bm{z}_{t_k}^e + s\cdot\left[\nabla_{\bm{z}_{t_{k-1}}}{\bm{z}_1^e}\right]\nabla_{\hat{\bm z}_1}\log p(c|\hat{\bm z}_1),
\end{aligned}
\end{equation}
where the last step is obtained by recursively applying~\cref{eq:noise} to backpropagate the guidance signal, and a superscript $e$ is introduced to denote the endpoint of the previous trajectory segment, as the above fix may disconnect different segments of the reference trajectory. Meanwhile, a straight-through estimator~\citep{bengio2013estimating} is applied to allow computing the Jacobian across~different trajectory segments by estimating the Jacobian between the adjacent points $\bm{z}_{t_k}$ and $\bm{z}_{t_k}^e$ with $\bm{I}$.

\begin{algorithm}[t]
\caption{Anchored Classifier Guidance}\label{alg:piecewise}
\begin{algorithmic}
\Require rectified flow $v$, classifier $p(c|\cdot)$, sampling steps $K$, iterations $N$.
\State Initialize reference trajectory $\bm{z}_{t_k}^{[0]}$ from $\bm{v}$. \Comment{\cref{eq:piece}}
\State Initialize target trajectory $\hat{\bm z}_{t_k}^{[0]} \gets \bm{z}_{t_k}^{[0]}$.
\For{$i \gets 0$ to $N-1$}
    \State Update reference trajectory with predicted starting points $\bm{z}_{t_k}^{[i+1]}$. \Comment{\cref{eq:update1}}
    \State Update target trajectory $\hat{\bm z}_{t_k}^{[i+1]}$ with classifier output $p(c|\hat{\bm z}_1^{[i]})$. \Comment{\cref{eq:update2}}
\EndFor
\Ensure target trajectory $\hat{\bm z}_{t_k}^{[N]}$ subject to condition $c$.
\end{algorithmic}
\end{algorithm}

\textbf{Solving target flow trajectory.}
The target trajectory under classifier guidance, subject to~\cref{eq:eq2}, can be estimated iteratively by starting with $\hat{\bm z}_{t_k}^{[0]}=\bm{z}_{t_k}^{[0]}$ and performing the following iterations:
\begin{align}
\label{eq:update1}
    \bm{z}_{t_k}^{[i+1]}&=\bm{z}_{t_k}^{[i]}+\underbrace{\bm{z}_{t_k}^{e[i+1]}-\bm{z}_{t_k}^{e[i]}}_{\textrm{current offset}}+\underbrace{\hat{\bm z}_{t_k}^{[i]}-\bm{z}_{t_k}^{e[i]}}_{\textrm{predicted update}},\\
\label{eq:update2}
    \hat{\bm z}_{t_k}^{[i+1]}&=\bm{z}_{t_k}^{e[i+1]} + s\cdot\left[\nabla_{\bm{z}_{t_{k-1}}^{[i+1]}}{\bm{z}_1^{e[i+1]}}\right]\nabla_{\hat{\bm z}_1^{[i]}}\log p(c|\hat{\bm z}_1^{[i]}),
\end{align}
where the superscript $[i]$ is used to indicate the target and reference trajectories at the $i$-th iteration. Specifically, \cref{eq:update1} implements the prediction of updated target starting points by extrapolating from history updates, and~\cref{eq:update2} tackles the derived problem. Note that there are more sophisticated methods for predicting target starting points and solving this problem, \eg quasi-Newton methods, but we opt for simplicity here and leave their exploration to future work. The complete procedure for implementing the proposed classifier guidance is summarized by~\cref{alg:piecewise}.

\subsection{Applications}
\label{sect:app}

The proposed algorithm is flexible for various personalized image generation tasks on human faces and common subjects. Given a reference image $\bm{z}_{\textrm{ref}}$ and our generated image $\hat{\bm z}_1$, we use their feature similarity on an off-the-shelf discriminator $f$, \eg the face specialist ArcFace~\citep{deng2019arcface} or a self-supervised backbone DINOv2~\citep{oquab2023dinov2}, as classifier guidance. In addition, to improve the guidance signal, a face detector or an open-vocabulary object detector $g$ is employed to locate the identity-relevant region for feature extraction. Formally, the classifier output is as follows:
\begin{equation}
    p(c|\hat{\bm z}_1^{[i]})=\textrm{sim}\left(f\circ g(\hat{\bm z}_1^{[i]}),f\circ g(\bm{z}_{\textrm{ref}})\right).
\end{equation}
More details are described in~\cref{app:set}. Notably, both configurations can be flexibly extended to a multi-subject scenario by incorporating a bipartite matching step between multiple detected subjects.

\section{Experiments}

\subsection{Experimental Settings}
\label{sect:set}

\textbf{Datasets.}
Our method does not involve training data, as it operates only at test time. For face-centric evaluation, we follow~\citet{pang2024cross} to evaluate on 20 prompts with the first 200 images from CelebA-HQ~\citep{liu2015deep, karras2018progressive} as reference images. For subject-driven generation, we conduct qualitative studies on a subset of examples from the DreamBooth dataset~\citep{ruiz2023hyperdreambooth}, spanning 10 subjects across two live subject categories and three object categories.

\textbf{Metrics.}
Three metrics are considered: identity similarity, prompt consistency, and computation time. The first two are measured using an ArcFace model~\citep{deng2019arcface} and CLIP encoders \citep{radford2021learning}, while the latter is tested on an NVIDIA A800 GPU. We reproduce the latest methods IP-Adapter~\citep{ye2023ip}, PhotoMaker~\citep{li2024photomaker}, and InstantID~\citep{wang2024instantid} for a comprehensive comparison, and also include the existing baselines in~\citet{pang2024cross}.

\textbf{Implementation details.}
We experiment with a frozen piecewise rectified flow~\citep{yan2024perflow} finetuned from Stable Diffusion 1.5~\citep{rombach2022high} with 4 equally divided time windows. The number of sampling steps is set to a minimum $K=4$ given the memory overhead of backpropagation.
\newpage
A naive implementation takes 14GB of GPU memory, which fits on a range of consumer-grade GPUs. More results on alternative rectified flows can be found in~\cref{app:2rect}. For hyperparameters, the guidance scale is fixed to $s=1$ in quantitative evaluation. Meanwhile, for stability, the gradient is normalized following~\citet{karunratanakul2024optimizing}. The number of iterations is set to $N=100$.

\begin{table}[t]
  \captionsetup{aboveskip=5pt, belowskip=0pt}
  \caption{Quantitative comparison for face-centric personalization. The inference time is measured in seconds on an NVIDIA A800. Unlike the previous state-of-the-art methods that require training on large face datasets (the number of images is listed for reference), our method achieves superior performance in a training-free manner, by exploiting the guidance from an off-the-shelf discriminator.}
  \label{tab:single_person}
  \centering
  \resizebox{\linewidth}{!}{
  \begin{tabular}{@{}lccccc@{}}
    \toprule
    Method & Base model & Training & Identity $\uparrow$ & Prompt $\uparrow$ & Time $\downarrow$\\
    \midrule
    Textual Inversion~\citep{gal2023image} & SD 2.1 & - & 0.2115 & 0.2498 & 6331 \\
    DreamBooth~\citep{ruiz2023dreambooth} & SD 2.1 & - & 0.2053 & 0.3015 & 623 \\
    NeTI~\citep{alaluf2023neural} & SD 1.4 & - & 0.3789 & 0.2325 & 1527 \\
    Celeb Basis~\citep{yuan2023inserting} & SD 1.4 & - & 0.2070 & 0.2683 &140 \\
    Cross Initialtion~\citep{pang2024cross} & SD 2.1 & - & 0.2517 & 0.2859 & 346 \\
    IP-Adapter~\citep{ye2023ip} & SD 1.5 & 10M & 0.4778 & 0.2627 & \textbf{2} \\
    PhotoMaker~\citep{li2024photomaker} & SDXL & 112K & 0.2271 & \underline{0.3079} & \underline{4} \\
    InstantID~\citep{wang2024instantid} & SDXL & 60M & \underline{0.5806} & 0.3071 & 6 \\
    \cmidrule{1-6}
    RectifID (20 iterations) & SD 1.5 & - & 0.4860 & 0.2995 & 9 \\
    RectifID (100 iterations) & SD 1.5 & - & \textbf{0.5930} & 0.2933 & 46 \\
    \color{gray}RectifID (20 iterations) & \color{gray}SD 2.1 & \color{gray}- & \color{gray}0.5034 & \color{darkgray}\textbf{0.3151} & \color{gray}20 \\
    \bottomrule
  \end{tabular}
  }
\end{table}

\begin{figure}[t]
    \centering
    \footnotesize
    \begin{subfigure}[t]{\linewidth}
        \makebox[.1\linewidth]{Input}
        \makebox[.13\linewidth]{}
        \makebox[.145\linewidth]{Celeb Basis}\hfill
        \makebox[.145\linewidth]{IP-Adapter}\hfill
        \makebox[.145\linewidth]{PhotoMaker}\hfill
        \makebox[.145\linewidth]{InstantID}\hfill
        \makebox[.145\linewidth]{RectifID}
    \end{subfigure}
    \scriptsize
    \begin{subfigure}[t]{\linewidth}
        \raisebox{.02\linewidth}{\includegraphics[width=.1\linewidth]{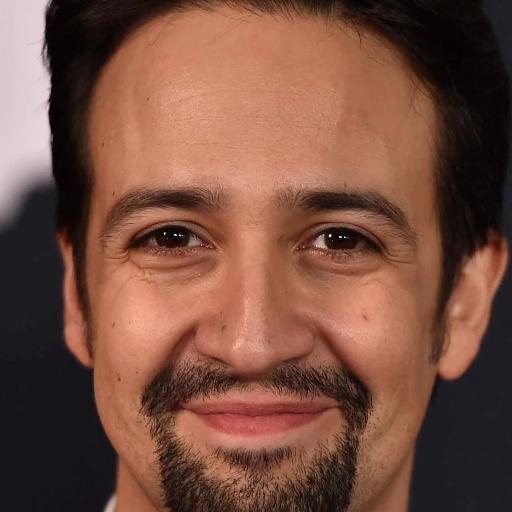}}
        \raisebox{.064\linewidth}{\makebox[.13\linewidth][c]{\begin{tabular}{c}western vibes,\\sunset, rugged\\landscape\end{tabular}}}
        \includegraphics[width=.145\linewidth]{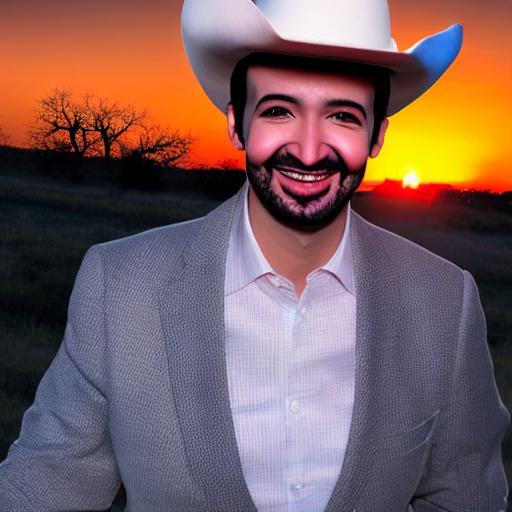}\hfill
        \includegraphics[width=.145\linewidth]{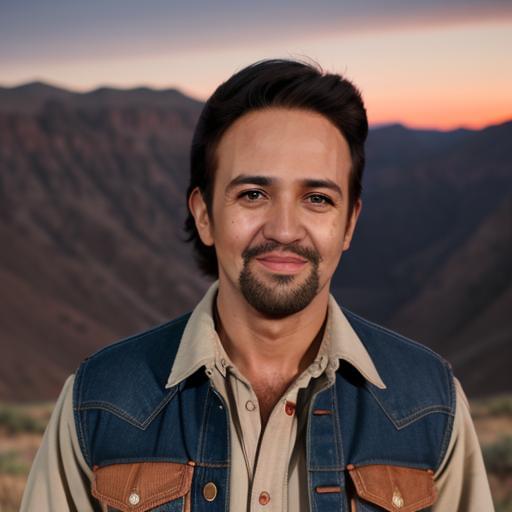}\hfill
        \includegraphics[width=.145\linewidth]{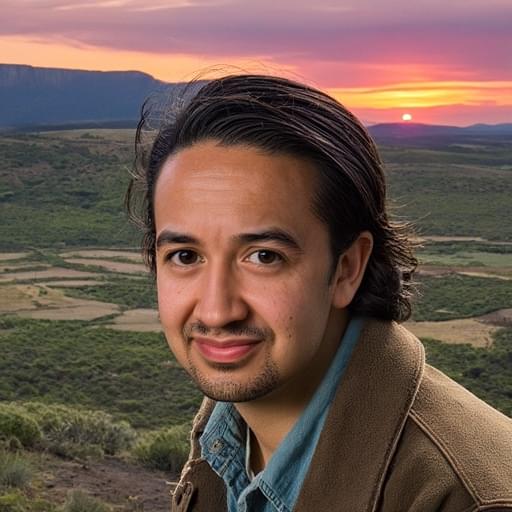}\hfill
        \includegraphics[width=.145\linewidth]{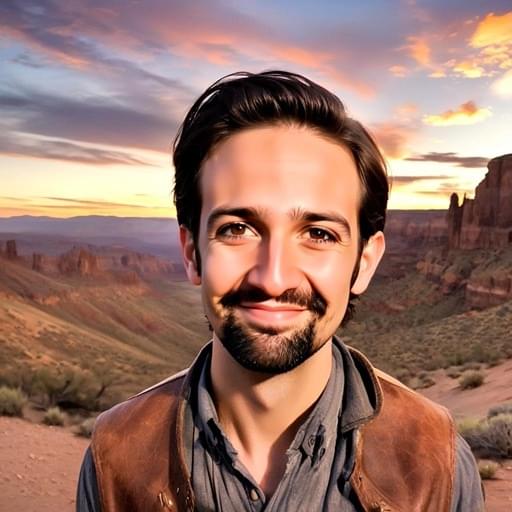}\hfill
        \includegraphics[width=.145\linewidth]{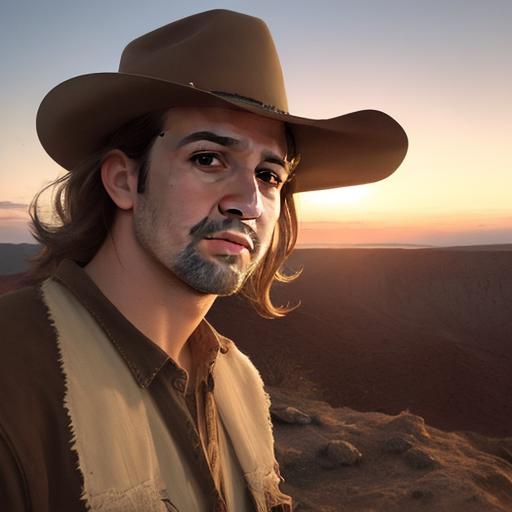}
    \end{subfigure}
    \begin{subfigure}[t]{\linewidth}
        \raisebox{.02\linewidth}{\includegraphics[width=.1\linewidth]{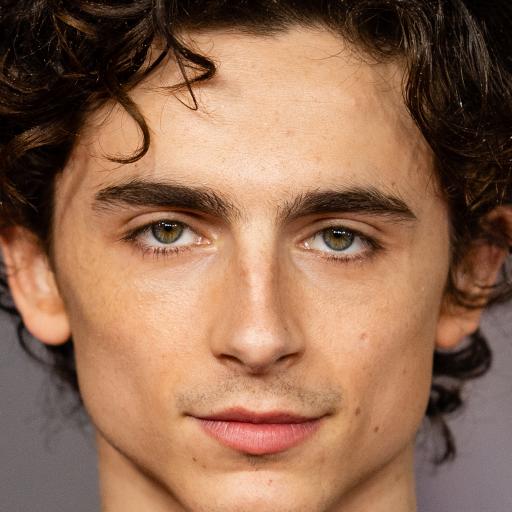}}
        \raisebox{.064\linewidth}{\makebox[.13\linewidth][c]{\begin{tabular}{c}in a cowboy\\hat\end{tabular}}}
        \includegraphics[width=.145\linewidth]{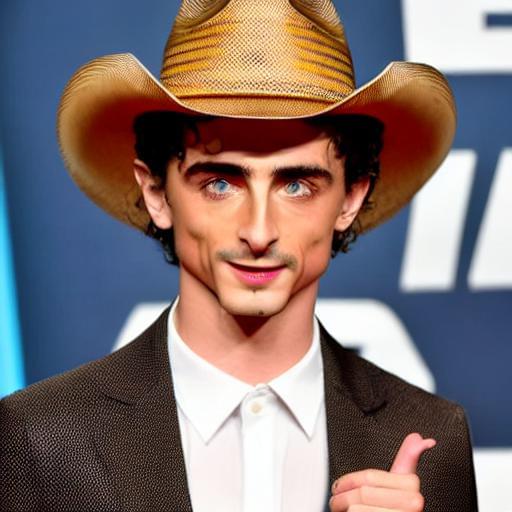}\hfill
        \includegraphics[width=.145\linewidth]{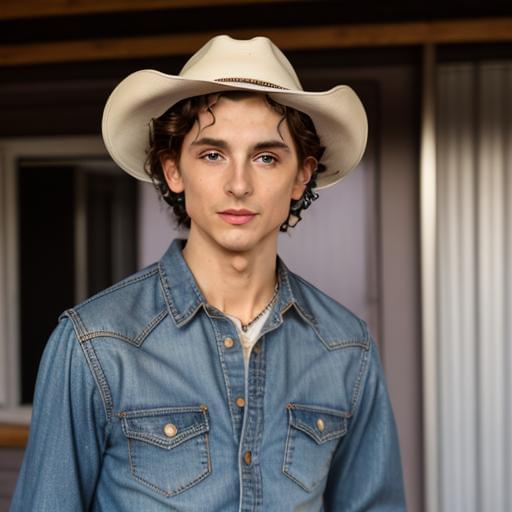}\hfill
        \includegraphics[width=.145\linewidth]{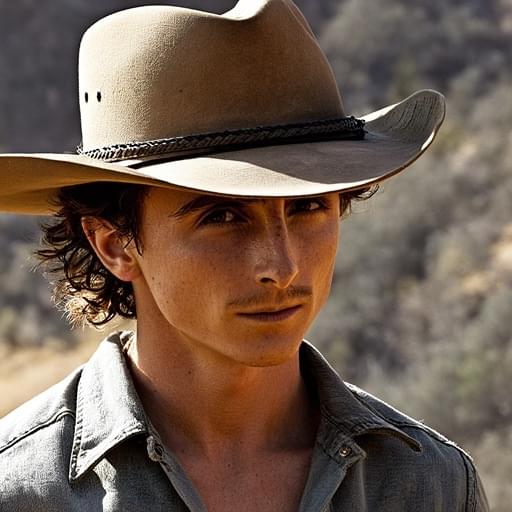}\hfill
        \includegraphics[width=.145\linewidth]{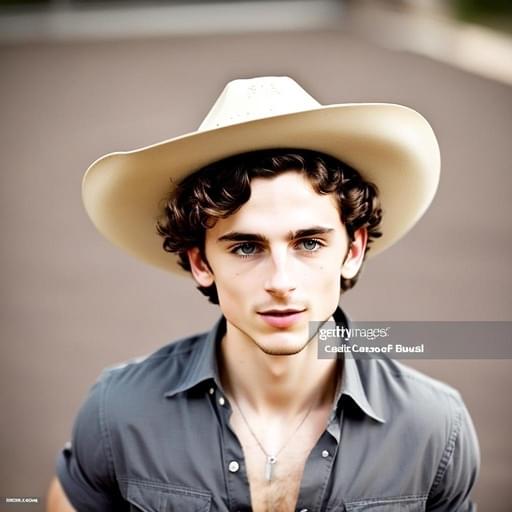}\hfill
        \includegraphics[width=.145\linewidth]{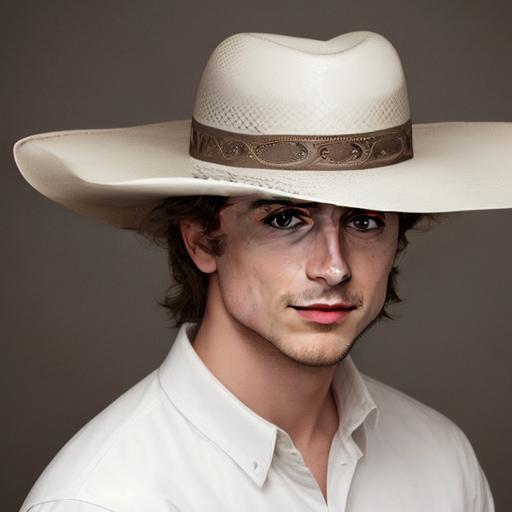}
    \end{subfigure}
    \begin{subfigure}[t]{\linewidth}
        \raisebox{.02\linewidth}{\includegraphics[width=.1\linewidth]{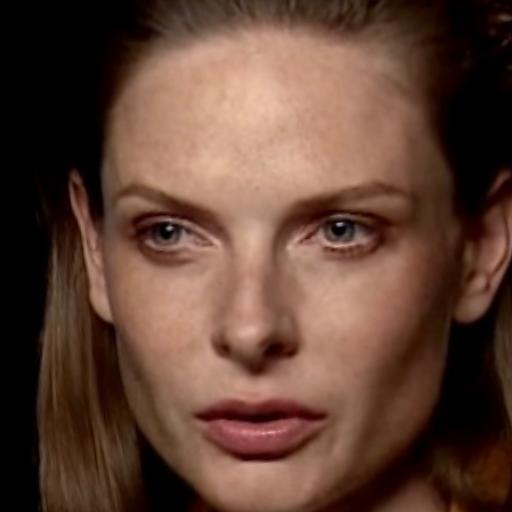}}
        \raisebox{.064\linewidth}{\makebox[.13\linewidth][c]{\begin{tabular}{c}colorful mural\\on a street wall\end{tabular}}}
        \includegraphics[width=.145\linewidth]{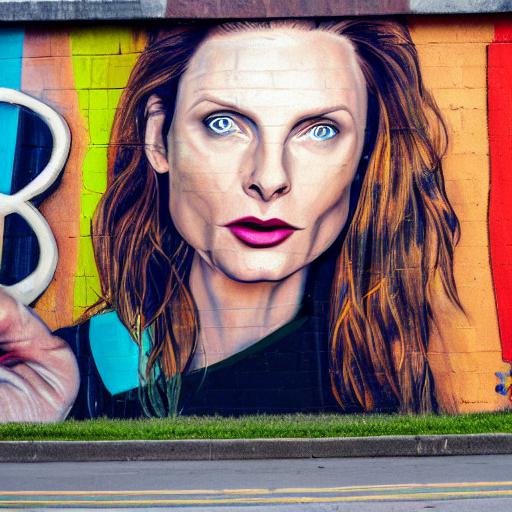}\hfill
        \includegraphics[width=.145\linewidth]{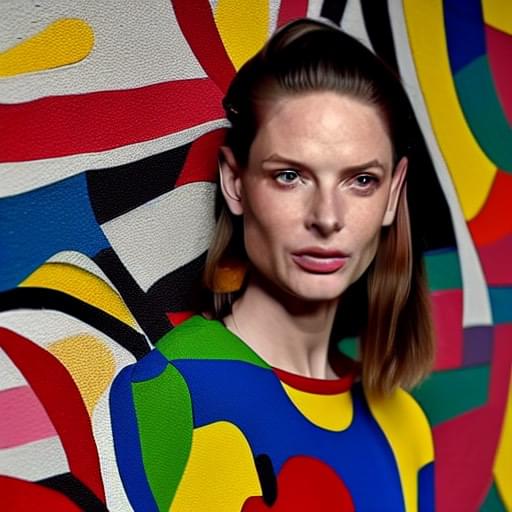}\hfill
        \includegraphics[width=.145\linewidth]{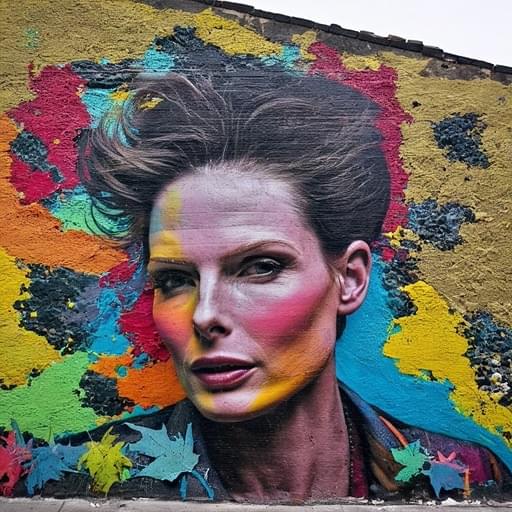}\hfill
        \includegraphics[width=.145\linewidth]{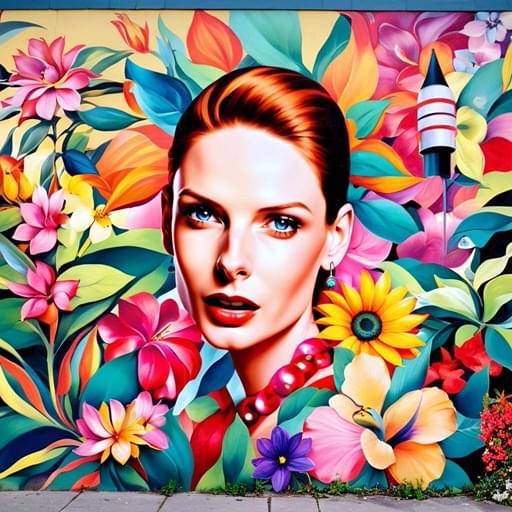}\hfill
        \includegraphics[width=.145\linewidth]{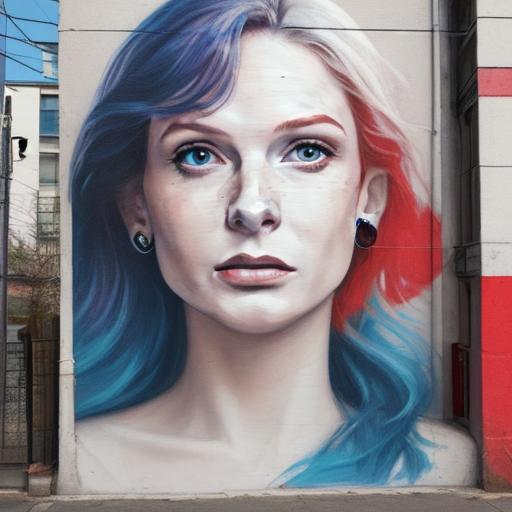}
    \end{subfigure}
    \begin{subfigure}[t]{\linewidth}
        \raisebox{.02\linewidth}{\includegraphics[width=.1\linewidth]{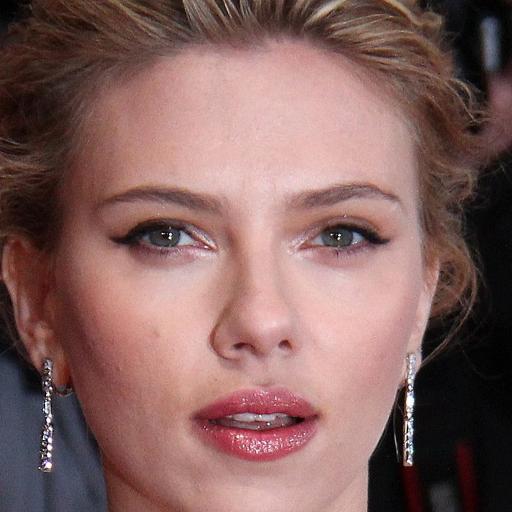}}
        \raisebox{.064\linewidth}{\makebox[.13\linewidth][c]{\begin{tabular}{c}as a knight in\\plate armor\end{tabular}}}
        \includegraphics[width=.145\linewidth]{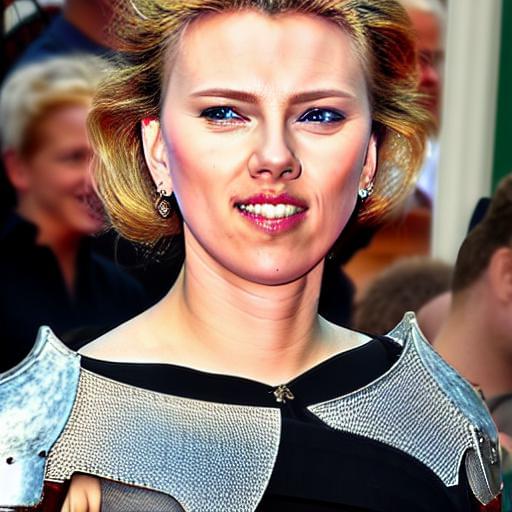}\hfill
        \includegraphics[width=.145\linewidth]{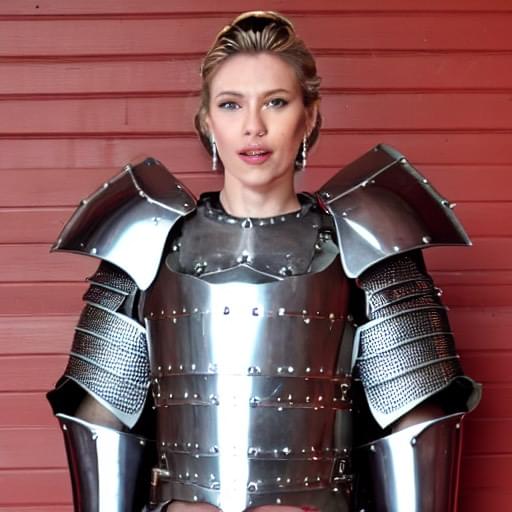}\hfill
        \includegraphics[width=.145\linewidth]{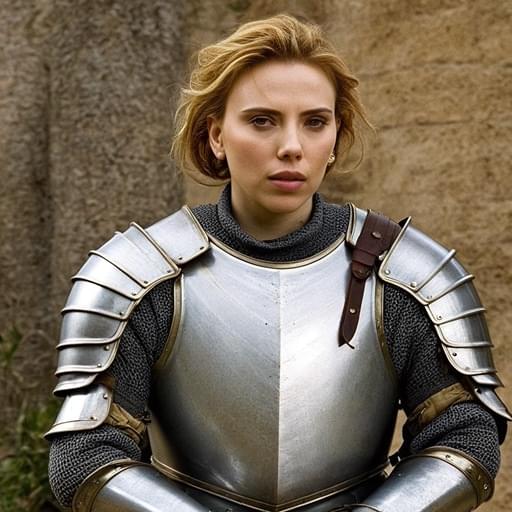}\hfill
        \includegraphics[width=.145\linewidth]{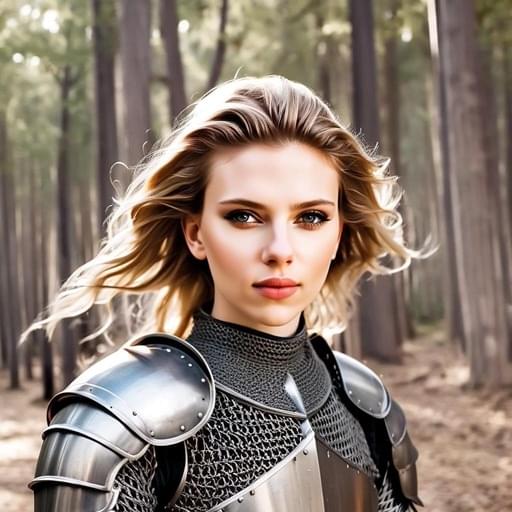}\hfill
        \includegraphics[width=.145\linewidth]{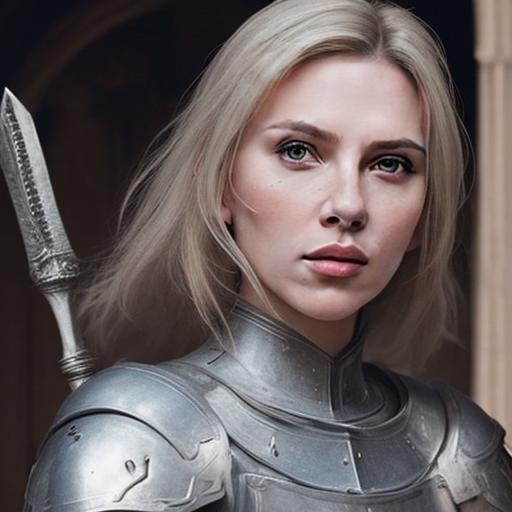}
    \end{subfigure}
    \caption{Qualitative comparison for face-centric personalization. See~\cref{fig:single_person_app,fig:single_person_app2,fig:single_person_app3,fig:single_person_app4} for more samples.}
    \vspace{-5pt}
    \label{fig:single_person}
\end{figure}

\subsection{Main Results}

\textbf{Face-centric personalization.}
\Cref{tab:single_person,fig:single_person} compare our method (denoted RectifID) with extensive baselines. Overall, our training-free approach achieves state-of-the-art performance in quantitative evaluations. Specifically, we observe that: (1) our SD 1.5-based implementation yields the highest identity similarity of all, and leads in prompt consistency among SD 1.x-based~methods. It is also computationally efficient, \eg taking less time than existing tuning-based methods, and outperforming the training-based IP-Adapter~\citep{ye2023ip} in a near inference time of 9 seconds vs. 2 seconds. (2) By simply replacing the base diffusion model with SD 2.1 at its default image size, our prompt consistency further surpasses SDXL~\citep{podell2024sdxl}-based models. Note,~however, that the rest of this paper still uses SD 1.5 for a fair comparison to SD 1.x-based baselines in various personalization tasks, excluding potential improvements from using better base models. (3) In general,\linebreak our method takes a big step towards bridging the substantial performance gap with training-based personalization methods by exploring the effectiveness of training-free classifier guidance.

For face-centric qualitative comparison in~\cref{fig:single_person}, our method remains advantageous as its generated images by the guidance of the face discriminator exhibit high identity consistency. In comparison, InstantID~\citep{wang2024instantid} delivers a near level of consistency by controlling face landmarks, but sometimes distorts the face shape (the first and third images) and contains much less natural variation. More generated samples are provided in~\cref{fig:single_person_app,fig:single_person_app2} in the appendix.

\begin{figure}[t]
    \centering
    \footnotesize
    \begin{subfigure}[t]{\linewidth}
        \makebox[.1\linewidth]{Input}
        \makebox[.13\linewidth]{}
        \makebox[.145\linewidth]{Textual Inversion$^*$}\hfill
        \makebox[.145\linewidth]{DreamBooth$^*$}\hfill
        \makebox[.145\linewidth]{BLIP-Diffusion}\hfill
        \makebox[.145\linewidth]{Emu2}\hfill
        \makebox[.145\linewidth]{RectifID}
    \end{subfigure}\\
    \scriptsize
    \begin{subfigure}[t]{\linewidth}
        \raisebox{.02\linewidth}{\includegraphics[width=.1\linewidth]{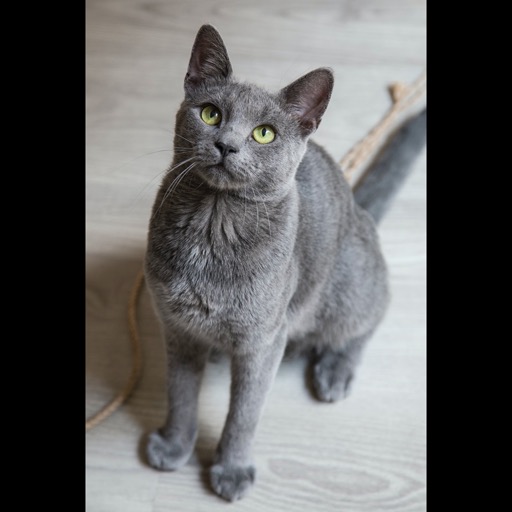}}
        \raisebox{.064\linewidth}{\makebox[.13\linewidth][c]{\begin{tabular}{c}in a firefighter\\outfit\end{tabular}}}
        \includegraphics[width=.145\linewidth]{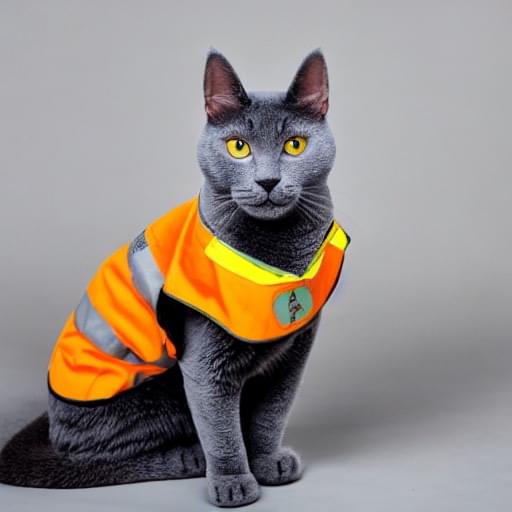}\hfill
        \includegraphics[width=.145\linewidth]{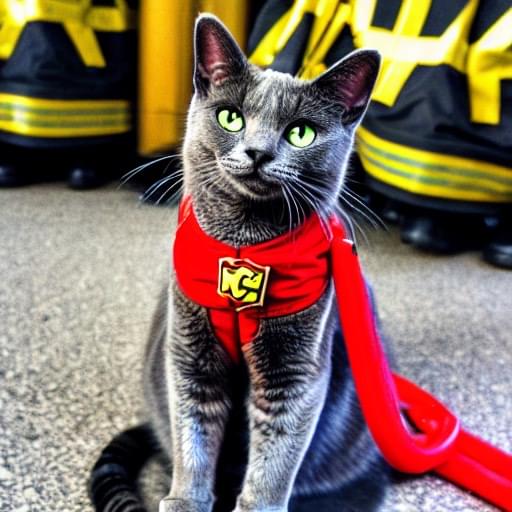}\hfill
        \includegraphics[width=.145\linewidth]{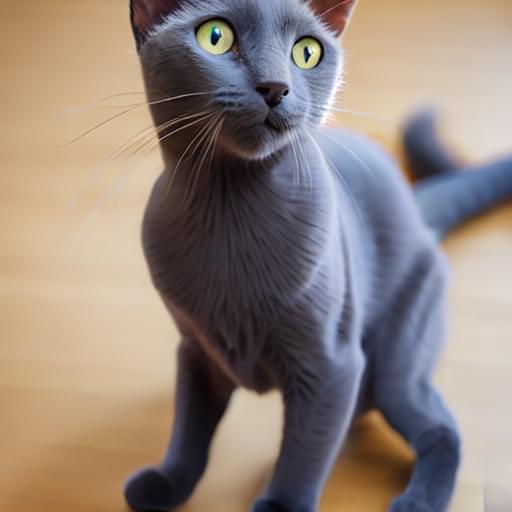}\hfill
        \includegraphics[width=.145\linewidth]{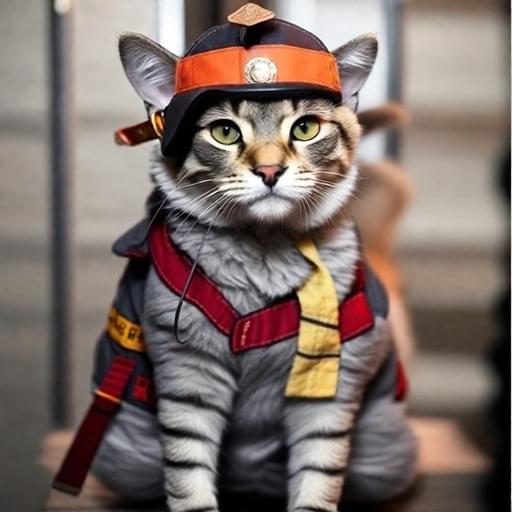}\hfill
        \includegraphics[width=.145\linewidth]{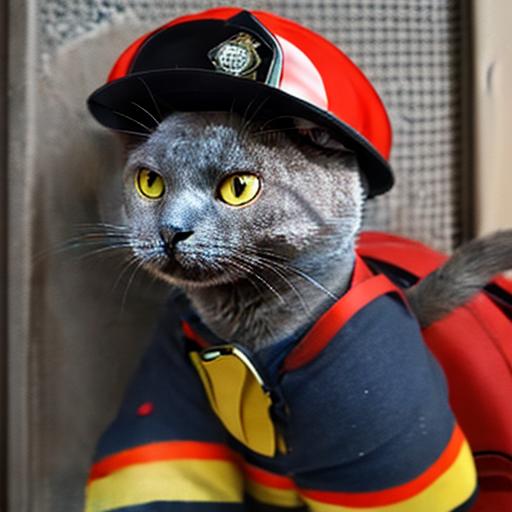}
    \end{subfigure}
    \begin{subfigure}[t]{\linewidth}
        \raisebox{.02\linewidth}{\includegraphics[width=.1\linewidth]{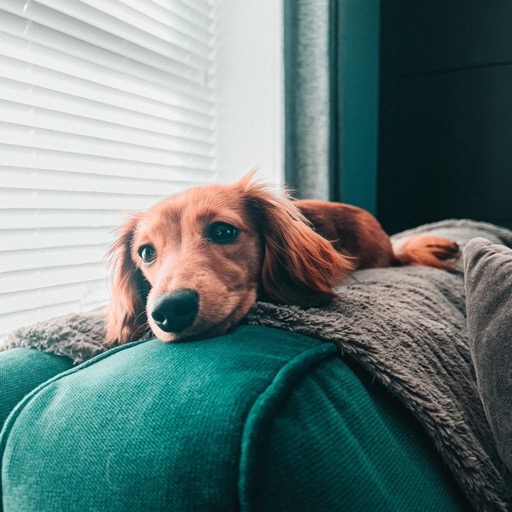}}
        \raisebox{.064\linewidth}{\makebox[.13\linewidth][c]{\begin{tabular}{c}on pink fabric\end{tabular}}}
        \includegraphics[width=.145\linewidth]{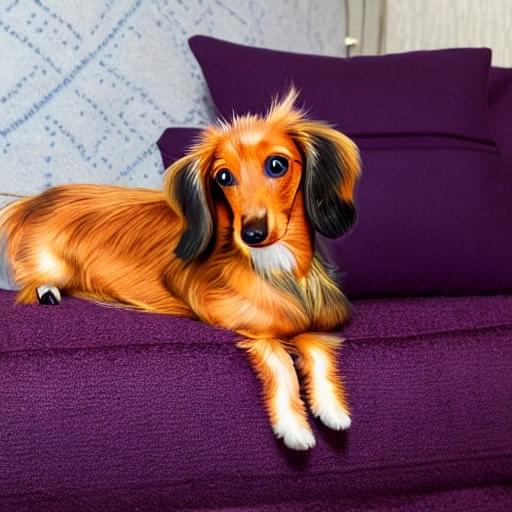}\hfill
        \includegraphics[width=.145\linewidth]{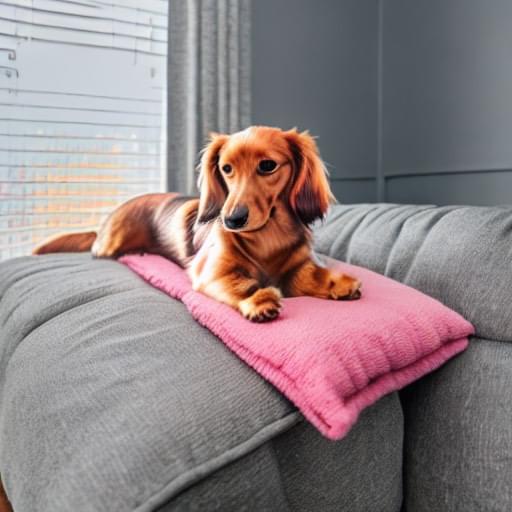}\hfill
        \includegraphics[width=.145\linewidth]{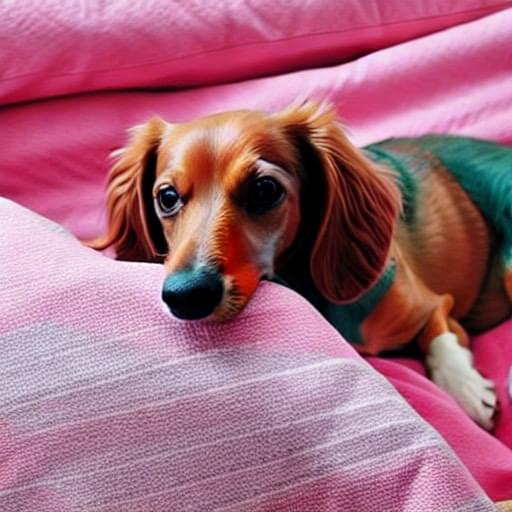}\hfill
        \includegraphics[width=.145\linewidth]{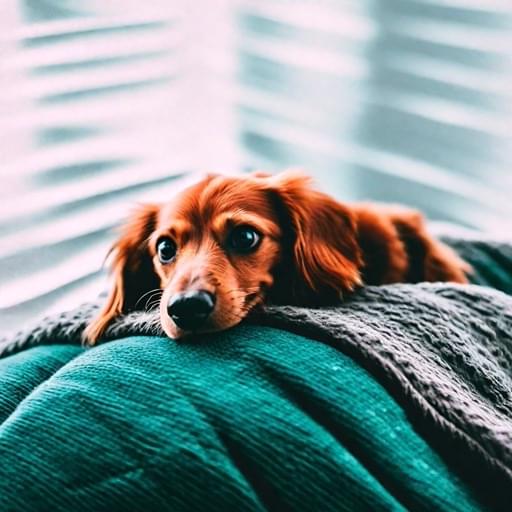}\hfill
        \includegraphics[width=.145\linewidth]{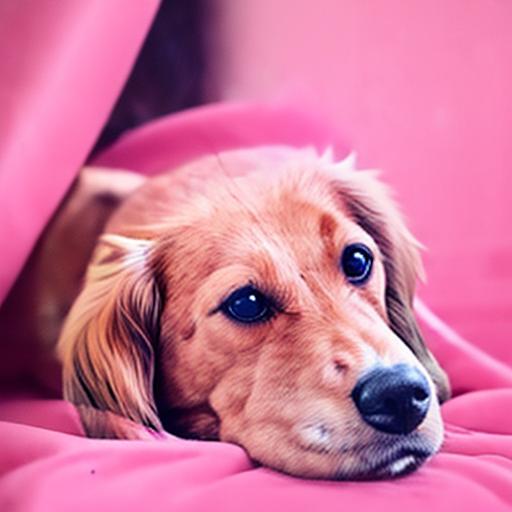}
    \end{subfigure}
    \begin{subfigure}[t]{\linewidth}
        \raisebox{.02\linewidth}{\includegraphics[width=.1\linewidth]{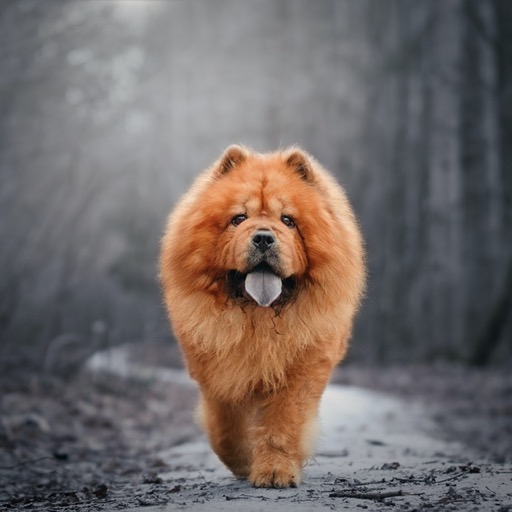}}
        \raisebox{.064\linewidth}{\makebox[.13\linewidth][c]{\begin{tabular}{c}in a purple\\wizard outfit\end{tabular}}}
        \includegraphics[width=.145\linewidth]{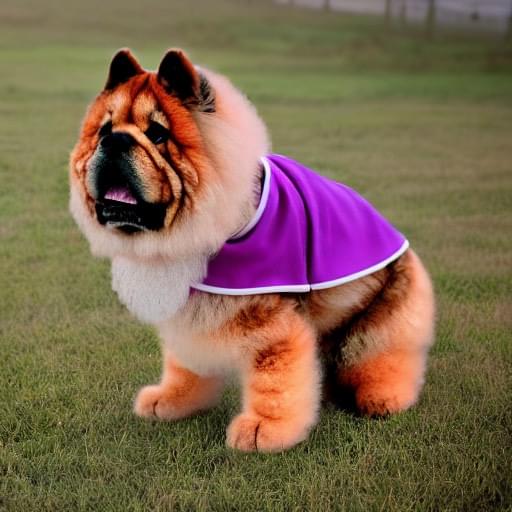}\hfill
        \includegraphics[width=.145\linewidth]{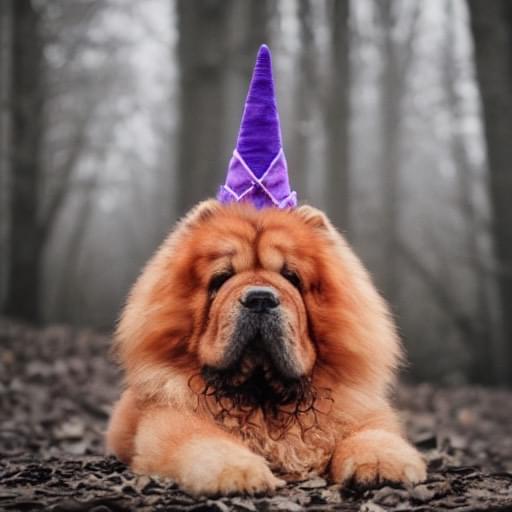}\hfill
        \includegraphics[width=.145\linewidth]{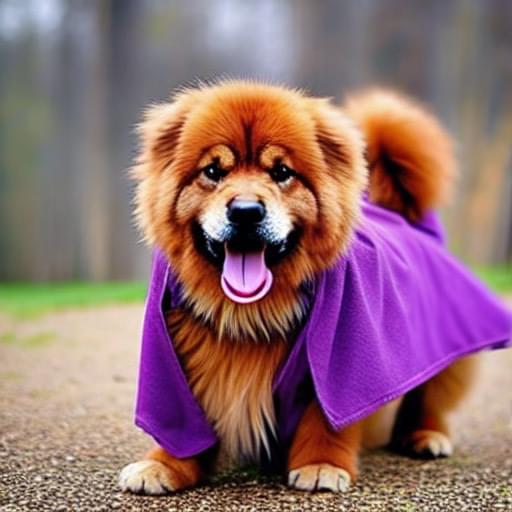}\hfill
        \includegraphics[width=.145\linewidth]{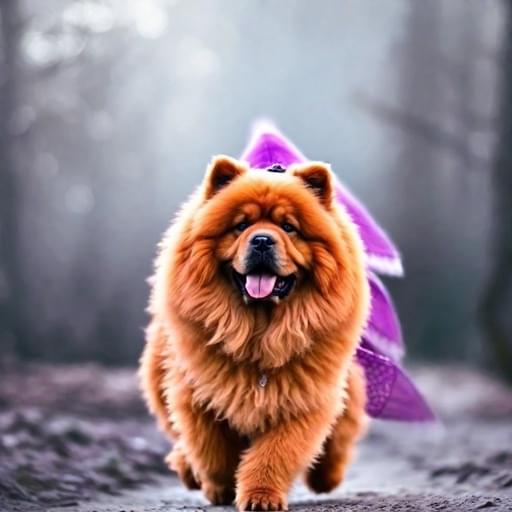}\hfill
        \includegraphics[width=.145\linewidth]{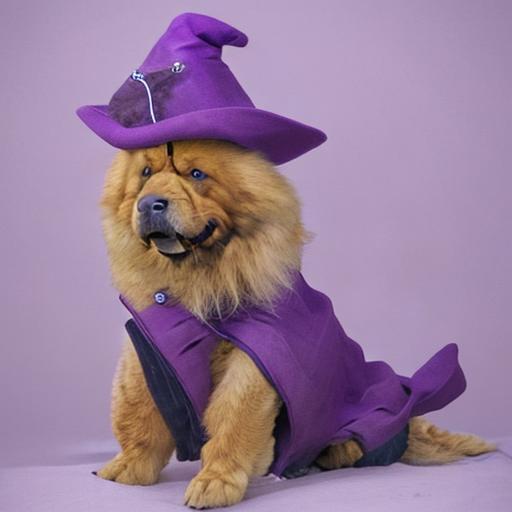}
    \end{subfigure}
    \begin{subfigure}[t]{\linewidth}
        \raisebox{.02\linewidth}{\includegraphics[width=.1\linewidth]{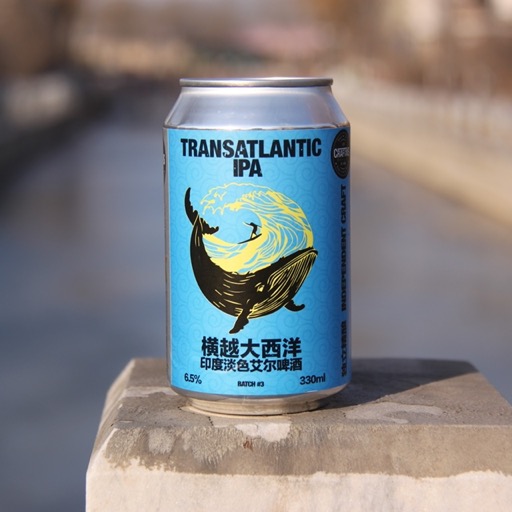}}
        \raisebox{.064\linewidth}{\makebox[.13\linewidth][c]{\begin{tabular}{c}in the jungle\end{tabular}}}
        \includegraphics[width=.145\linewidth]{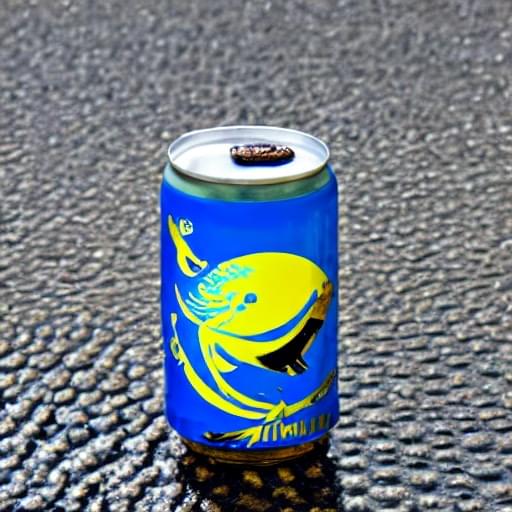}\hfill
        \includegraphics[width=.145\linewidth]{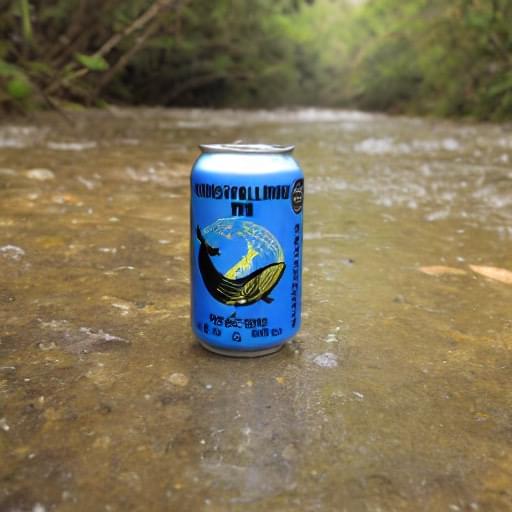}\hfill
        \includegraphics[width=.145\linewidth]{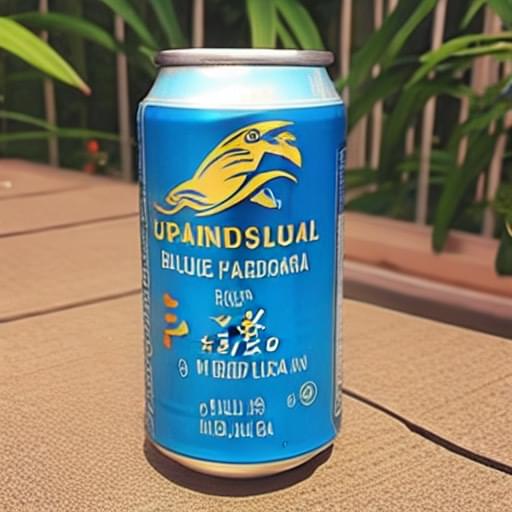}\hfill
        \includegraphics[width=.145\linewidth]{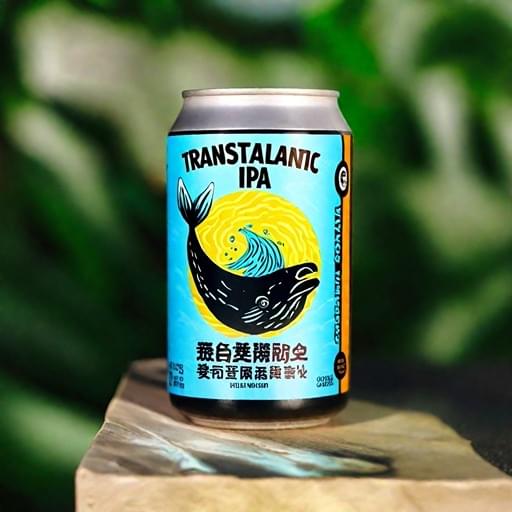}\hfill
        \includegraphics[width=.145\linewidth]{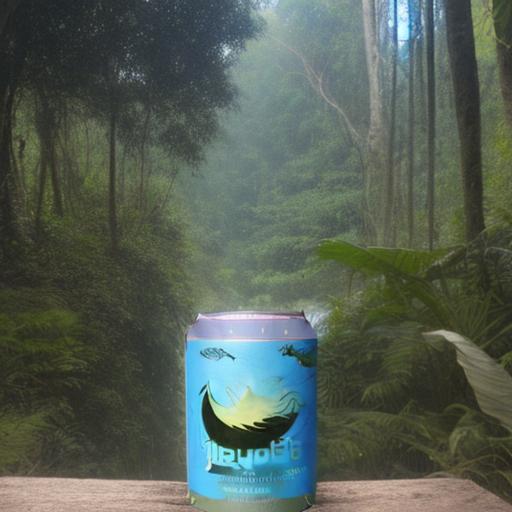}
    \end{subfigure}
    \caption{Qualitative comparison for subject-driven generation. $^*$ denotes finetuned with multiple images of the target subject to achieve sufficient identity consistency. See~\cref{fig:single_object_app} for more samples.}
    \label{fig:single_object}
    \vspace{-5pt}
\end{figure}

\begin{figure}[t]
    \centering
    \footnotesize
    \begin{subfigure}[t]{\linewidth}
        \makebox[.071\linewidth]{Input}
        \makebox[.12\linewidth]{}\hfill
        \makebox[.142\linewidth]{FastComposer}
        \makebox[.142\linewidth]{RectifID}\hfill\hfill
        \makebox[.071\linewidth]{Input}
        \makebox[.12\linewidth]{}\hfill
        \makebox[.142\linewidth]{Cones 2}
        \makebox[.142\linewidth]{RectifID}
    \end{subfigure}
    \scriptsize
    \begin{subfigure}[t]{\linewidth}
        \raisebox{.064\linewidth}{\parbox{.071\linewidth}{\setlength\lineskip{0pt}
            \includegraphics[width=\linewidth]{figs/ref_person/chalamet.jpg}
            \includegraphics[width=\linewidth]{figs/ref_person/miranda.jpg}
        }}
        \raisebox{.064\linewidth}{\makebox[.12\linewidth][c]{\begin{tabular}{c}enjoying at an\\amusement\\park\end{tabular}}}
        \includegraphics[width=.142\linewidth]{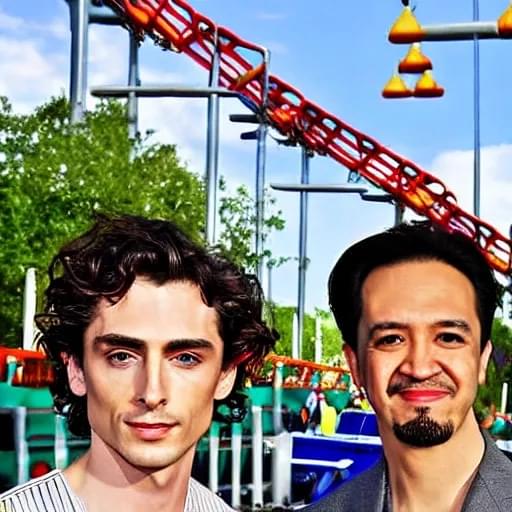}\hfill
        \includegraphics[width=.142\linewidth]{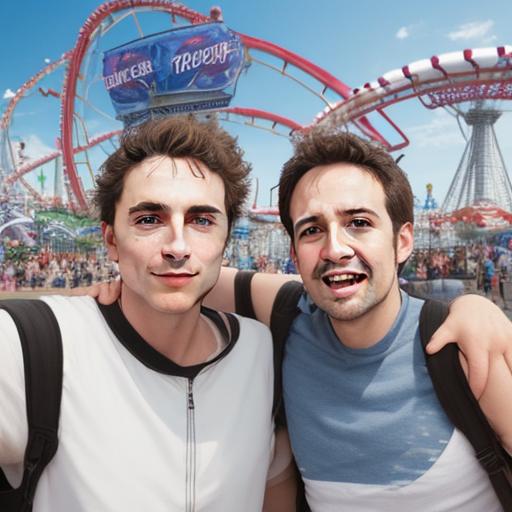}\hfill\hfill
        \raisebox{.064\linewidth}{\parbox{.071\linewidth}{\setlength\lineskip{0pt}
            \includegraphics[width=\linewidth]{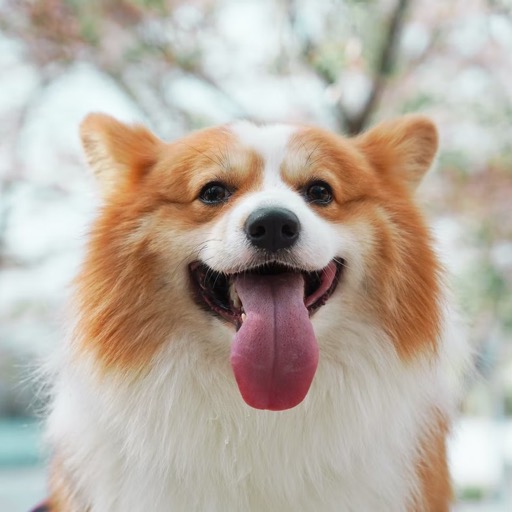}
            \includegraphics[width=\linewidth]{figs/ref_object/can_00.jpg}
        }}
        \raisebox{.064\linewidth}{\makebox[.12\linewidth][c]{\begin{tabular}{c}in a purple\\wizard outfit\end{tabular}}}
        \includegraphics[width=.142\linewidth]{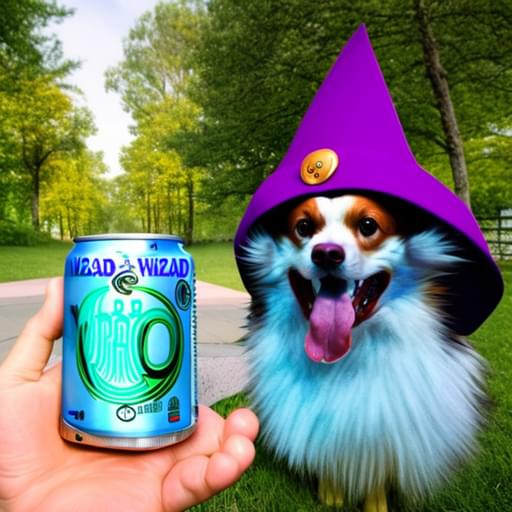}\hfill
        \includegraphics[width=.142\linewidth]{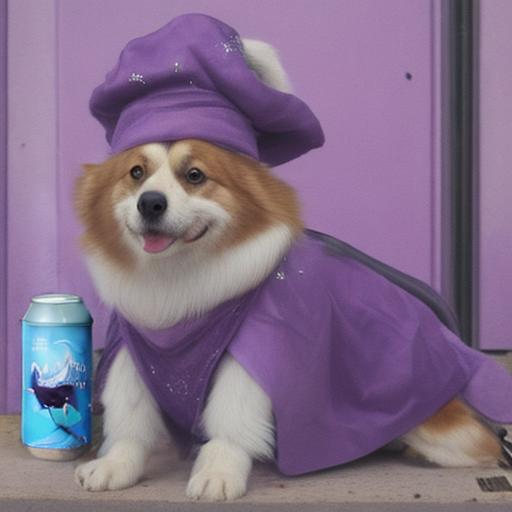}
    \end{subfigure}
    \begin{subfigure}[t]{\linewidth}
        \raisebox{.064\linewidth}{\parbox{.071\linewidth}{\setlength\lineskip{0pt}
            \includegraphics[width=\linewidth]{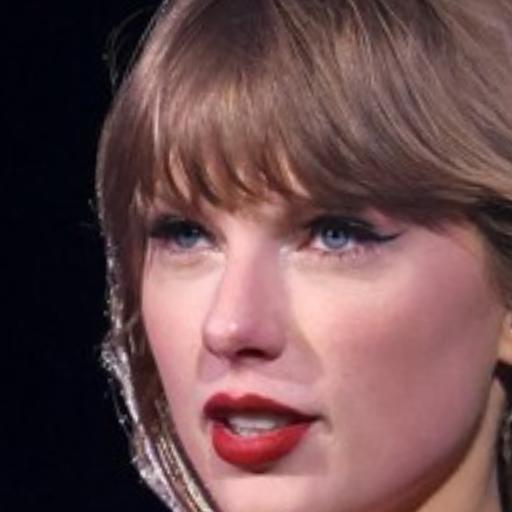}
            \includegraphics[width=\linewidth]{figs/ref_person/johansson.jpg}
        }}
        \raisebox{.064\linewidth}{\makebox[.12\linewidth][c]{\begin{tabular}{c}sailing near\\the Sydney\\Opera House\end{tabular}}}
        \includegraphics[width=.142\linewidth]{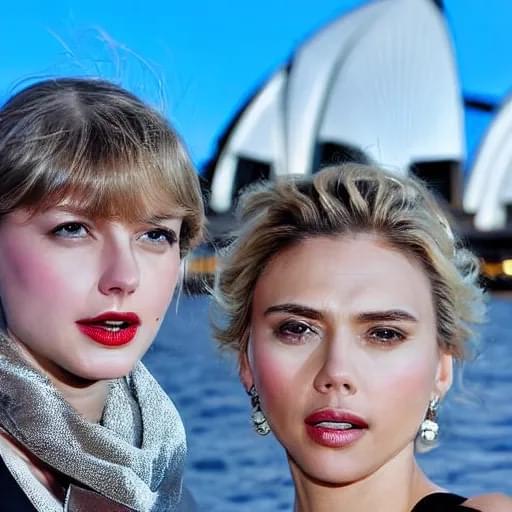}\hfill
        \includegraphics[width=.142\linewidth]{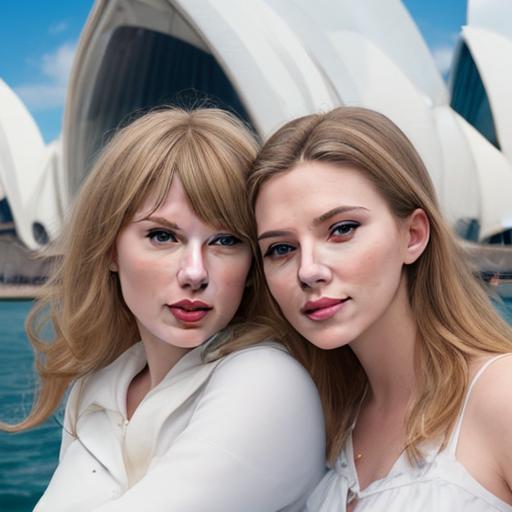}\hfill\hfill
        \raisebox{.064\linewidth}{\parbox{.071\linewidth}{\setlength\lineskip{0pt}
            \includegraphics[width=\linewidth]{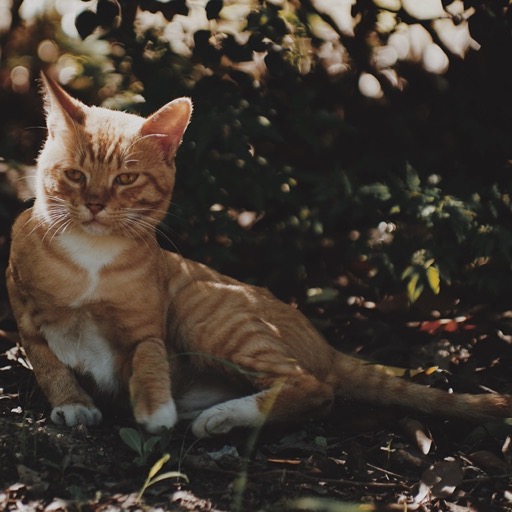}
            \includegraphics[width=\linewidth]{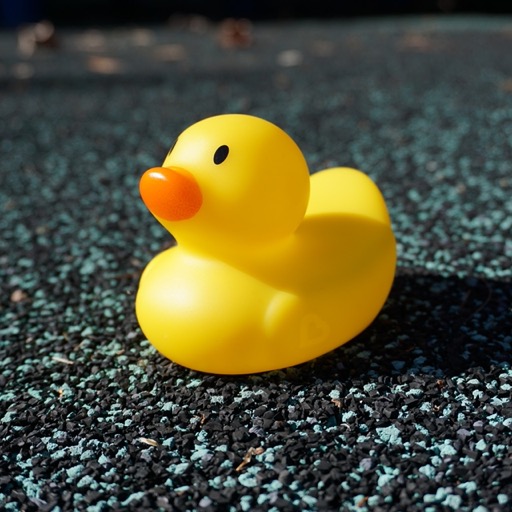}
        }}
        \raisebox{.064\linewidth}{\makebox[.12\linewidth][c]{\begin{tabular}{c}in the jungle\end{tabular}}}
        \includegraphics[width=.142\linewidth]{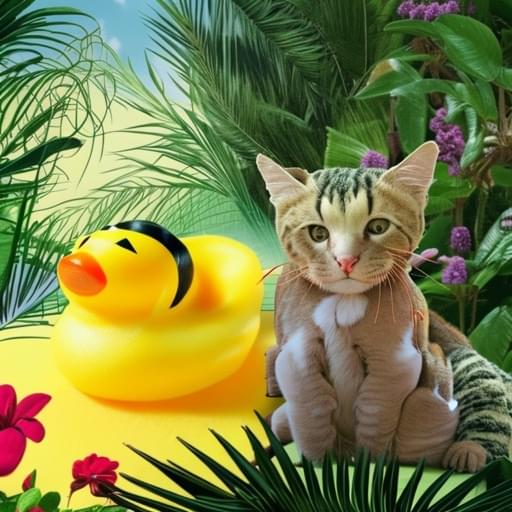}\hfill
        \includegraphics[width=.142\linewidth]{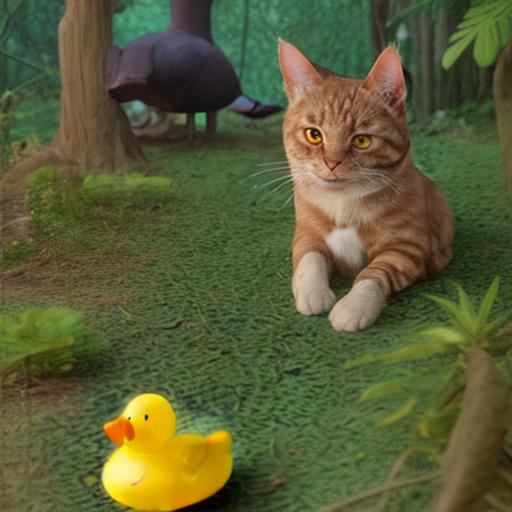}
    \end{subfigure}
    \caption{Qualitative comparison for multi-subject personalization. See~\cref{fig:multi_subject_app} for more samples.}
    \label{fig:multi_subject}
    \vspace{-5pt}
\end{figure}

\textbf{Subject-driven generation.}
Our approach is flexibly extended beyond human faces towards more subjects, including certain common animals and regularly shaped objects. To validate our flexibility, \cref{fig:single_object} qualitatively compares it on three cats or dogs and a regularly shaped can, where the images generated by our method achieve highly competitive identity and prompt consistency. In comparison, the state-of-the-art method Emu2~\citep{sun2024generative}, as a generalist multimodal large language model, yields high identity similarity largely by reconstructing the input image, which limits its usefulness. The tuning-based Textual Inversion~\citep{gal2023image} and DreamBooth~\citep{ruiz2023dreambooth} only~work well with multiple images and exhibit inferior prompt consistency due to finetuned model parameters or prompt embeddings. See~\cref{fig:single_object_app} in the appendix for additional results from more subjects.

\textbf{Multi-subject personalization.}
Our method can be further extended to multi-subject scenarios via a bipartite matching step. \Cref{fig:multi_subject} compares it to the domain experts FastComposer~\citep{xiao2023fastcomposer} and Cones 2~\citep{liu2023cones2} on composing multiple faces, live subjects and objects. As can be seen,\linebreak our method achieves overall advantageous identity consistency, in spite of differences in non-persistent attributes such as hairstyle. Image semantics and quality are also well preserved, as exemplified by the amusement park details in the first image, with some others even surpassing the SD 2.1-based specialized model Cones 2. More generated samples can be found in~\cref{fig:multi_subject_app} in the appendix.

\subsection{Ablation Study}
\label{sect:ablation}

\begin{figure}[t]
    \centering
    \footnotesize
    \begin{subfigure}[t]{\linewidth}
        \raisebox{.064\linewidth}{\makebox[.1\linewidth]{\begin{tabular}{@{}l}Gradient\\descent\\of noise\end{tabular}}}\hfill
        \includegraphics[width=.14\linewidth]{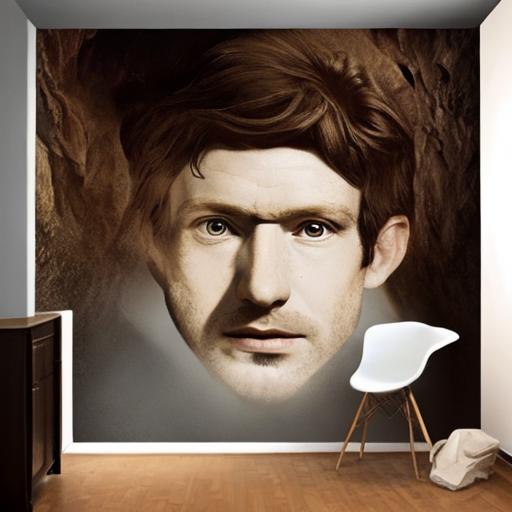}
        \includegraphics[width=.14\linewidth]{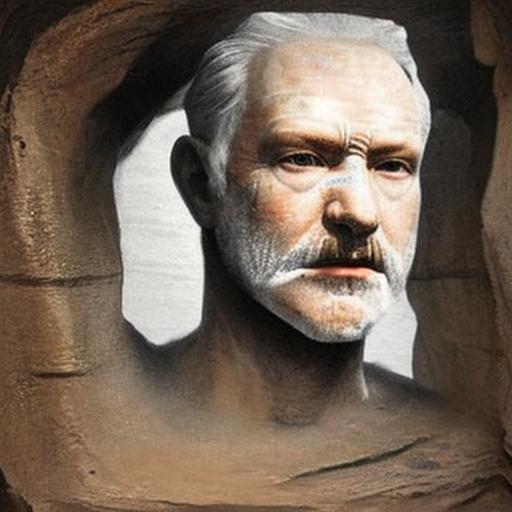}
        \includegraphics[width=.14\linewidth]{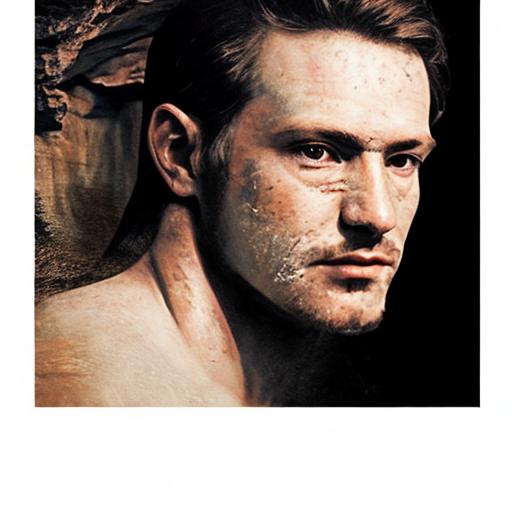}\hfill
        \includegraphics[width=.14\linewidth]{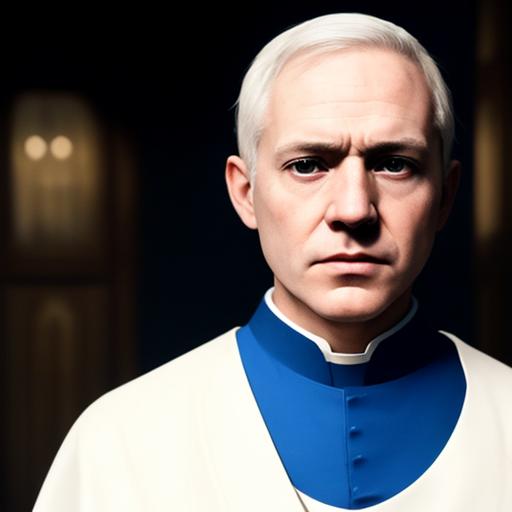}
        \includegraphics[width=.14\linewidth]{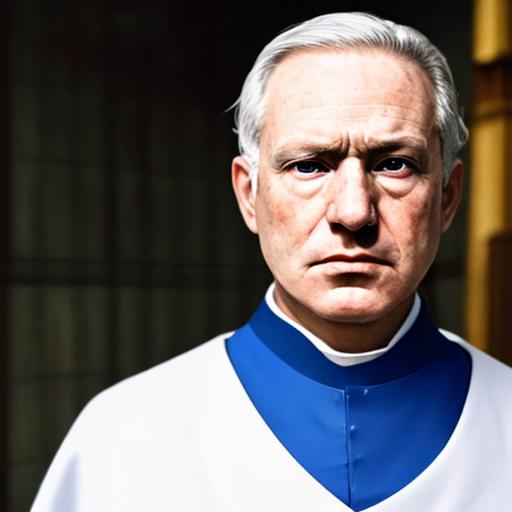}
        \includegraphics[width=.14\linewidth]{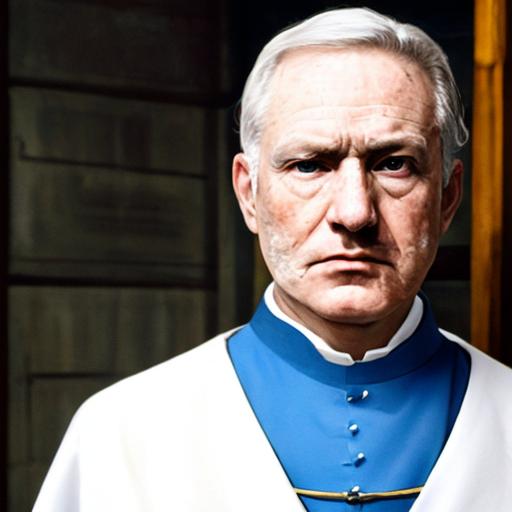}
    \end{subfigure}
    \begin{subfigure}[t]{\linewidth}
        \raisebox{.064\linewidth}{\makebox[.1\linewidth]{\begin{tabular}{@{}l}RectifID\\w/o\\anchor\end{tabular}}}\hfill
        \includegraphics[width=.14\linewidth]{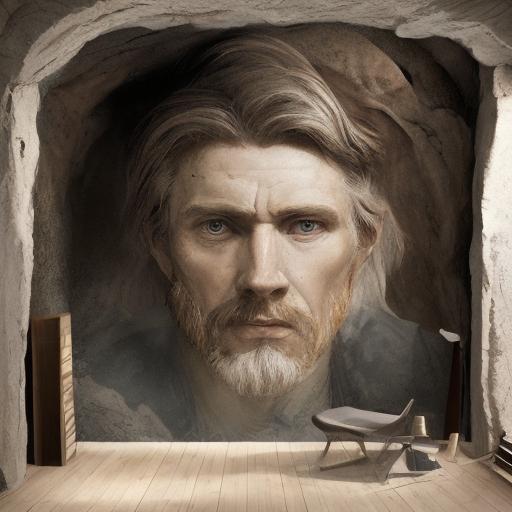}
        \includegraphics[width=.14\linewidth]{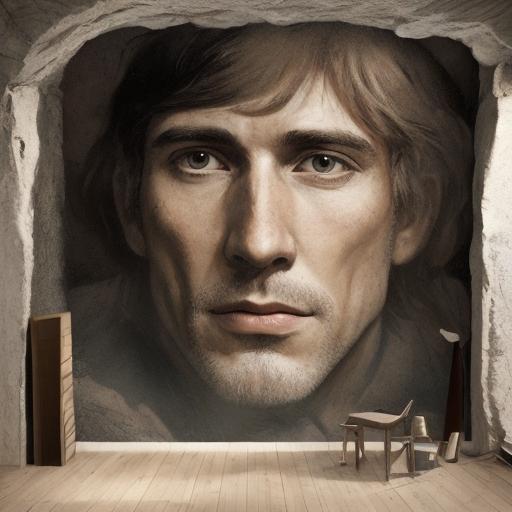}
        \includegraphics[width=.14\linewidth]{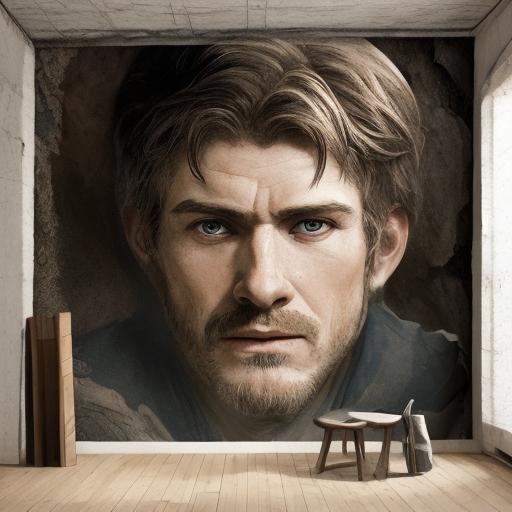}\hfill
        \includegraphics[width=.14\linewidth]{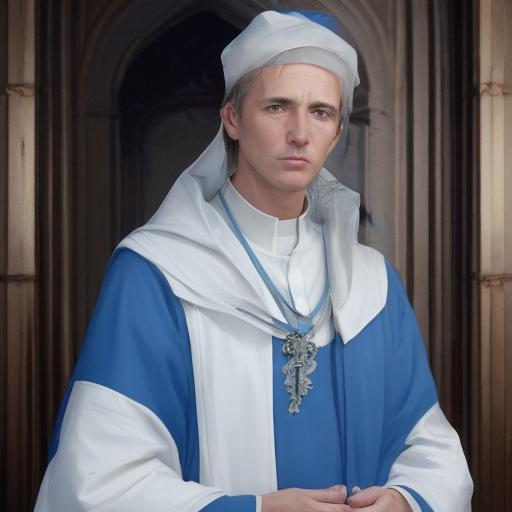}
        \includegraphics[width=.14\linewidth]{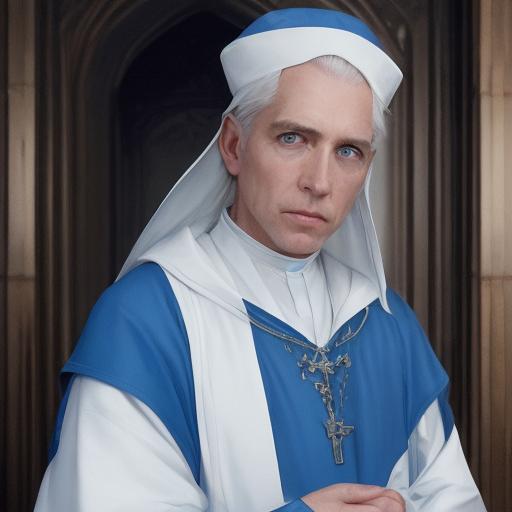}
        \includegraphics[width=.14\linewidth]{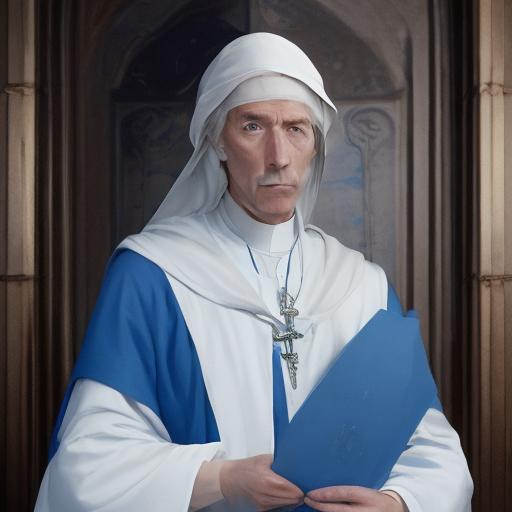}
    \end{subfigure}
    \begin{subfigure}[t]{\linewidth}
        \raisebox{.064\linewidth}{\makebox[.1\linewidth]{\begin{tabular}{@{}l}RectifID\end{tabular}}}\hfill
        \includegraphics[width=.14\linewidth]{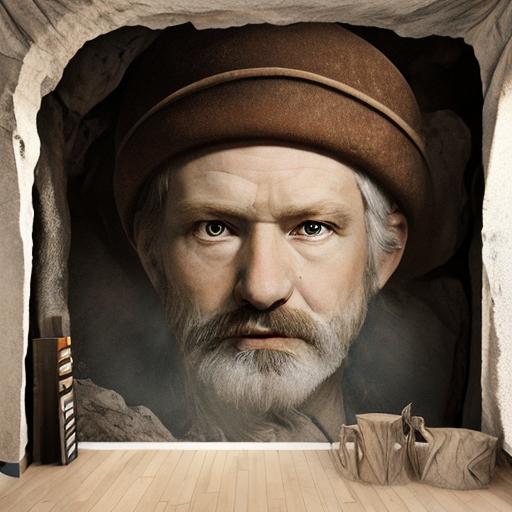}
        \includegraphics[width=.14\linewidth]{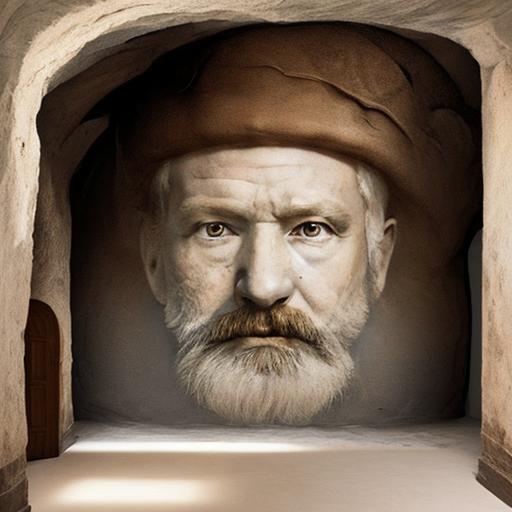}
        \includegraphics[width=.14\linewidth]{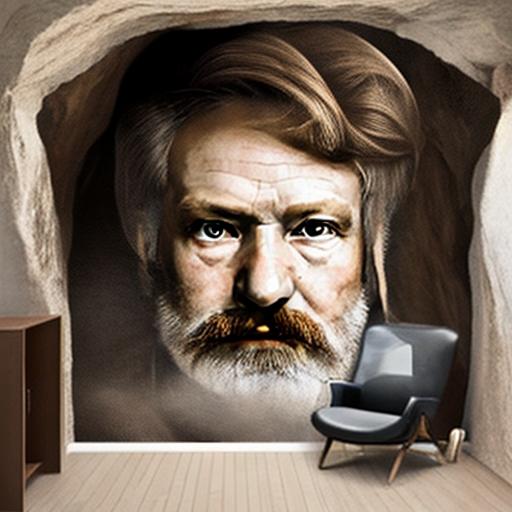}\hfill
        \includegraphics[width=.14\linewidth]{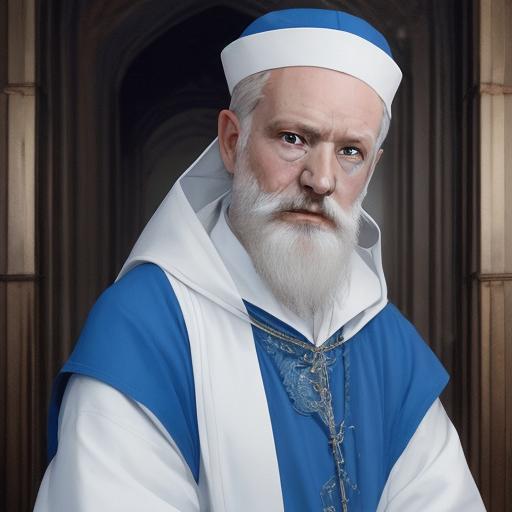}
        \includegraphics[width=.14\linewidth]{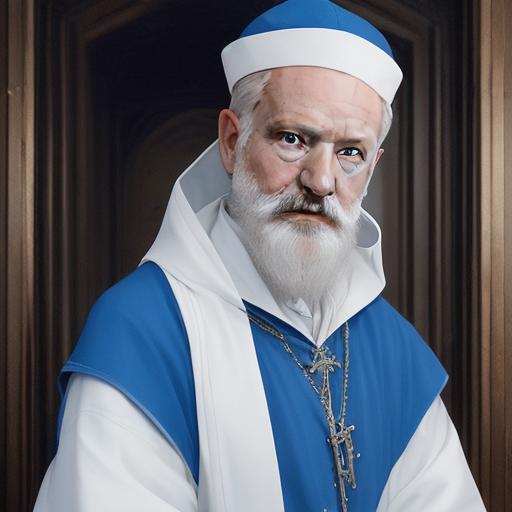}
        \includegraphics[width=.14\linewidth]{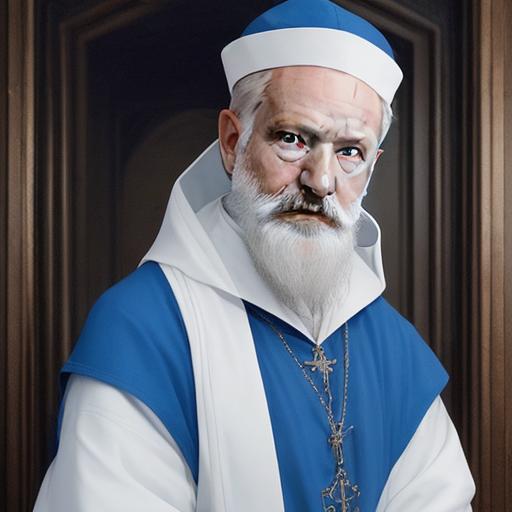}
    \end{subfigure}
    \begin{subfigure}[t]{\linewidth}
        \makebox[.1\linewidth]{}\hfill
        \makebox[.14\linewidth]{$s\times 0.5$}
        \makebox[.14\linewidth]{$s\times 1$}
        \makebox[.14\linewidth]{$s\times 2$}\hfill
        \makebox[.14\linewidth]{$N=20$}
        \makebox[.14\linewidth]{$N=50$}
        \makebox[.14\linewidth]{$N=100$}
    \end{subfigure}
    \caption{Comparison with alternative designs at varying guidance scale (or learning rate) and iterations. The prompts are ``cave mural depicting a person'' and ``a person as a priest in blue robes''. The base learning rate for gradient descent is 0.4, with momentum of 0.9 and an $\ell_2$ regularizer of 1.0.}
    \label{fig:ablation}
\end{figure}

\begin{wraptable}{r}{0.51\linewidth}
  \captionsetup{aboveskip=5pt, belowskip=-13pt}
  \caption{Quantitative comparison with alternative designs. The number of iterations is 100, and the remaining settings for gradient descent follow~\cref{fig:ablation}.}
  \label{tab:ablation}
  \centering
  \resizebox{\linewidth}{!}{
  \begin{tabular}{@{}lcc@{}}
    \toprule
    Method & Identity $\uparrow$ & Prompt $\uparrow$ \\
    \midrule
    Gradient descent of noise & 0.5249 & 0.2842 \\
    RectifID w/o anchor & 0.1158 & 0.2916 \\
    \cmidrule{1-3}
    RectifID & \textbf{0.5930} & \textbf{0.2933} \\
    \bottomrule
  \end{tabular}
  }
\end{wraptable}

To justify the effectiveness of our proposed classifier guidance,~\cref{fig:ablation,tab:ablation} compare it with two variants: the previously derived guidance without anchor, namely using~\cref{eq:eq0}, and a gradient descent method on the initial noise similar to DOODL~\citep{wallace2023end} and D-Flow~\citep{ben2024d}. The figure depicts that the gradient descent is unstable (left) and converges relatively slowly (right) despite using momentum and $\ell_2$ regularization. And its identity preservation is sensitive to the learning rate. Though our new fixed-point formulation allows for a more stable layout, the initial version fails to converge as the face feature keeps drifting. In contrast, our full method exhibits better stability (left) and faster convergence (right) by implicitly regularizing the flow trajectory to be close and straight. This is further supported by the quantitative comparison, where our method delivers better identity and prompt consistency than the alternatives. Further analysis for hyperparameter sensitivity is provided in~\cref{app:ablation}.

\subsection{Generalization}

\begin{figure}[t]
\captionsetup[subfigure]{skip=2pt, belowskip=2pt}
    \centering
    \footnotesize
    \begin{subfigure}[t]{\linewidth}
        \includegraphics[width=.195\linewidth]{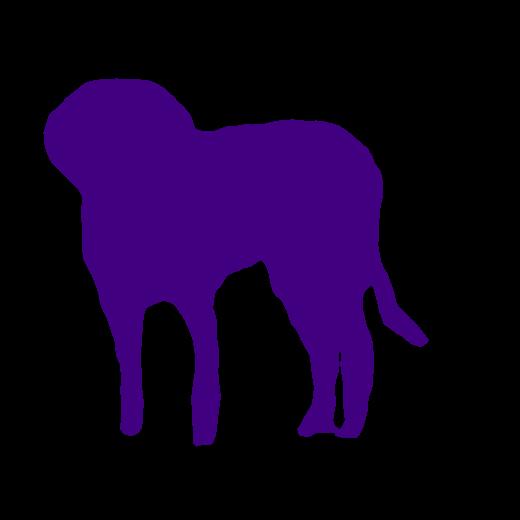}
        \includegraphics[width=.195\linewidth]{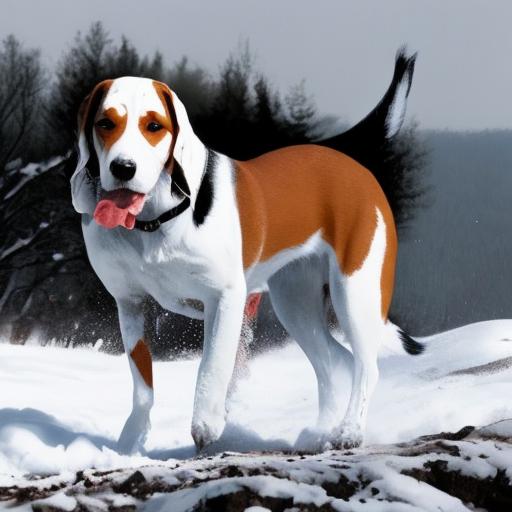}
        \includegraphics[width=.195\linewidth]{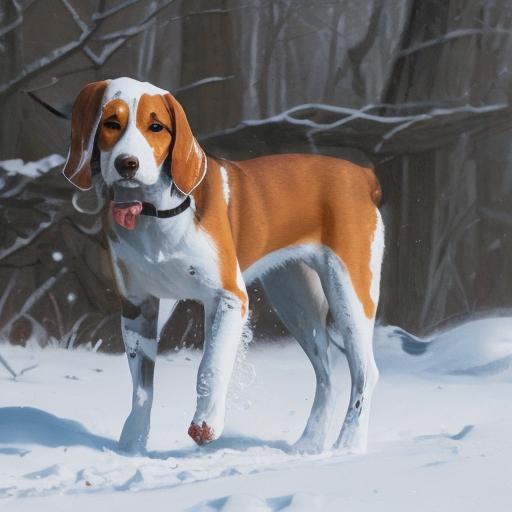}
        \includegraphics[width=.195\linewidth]{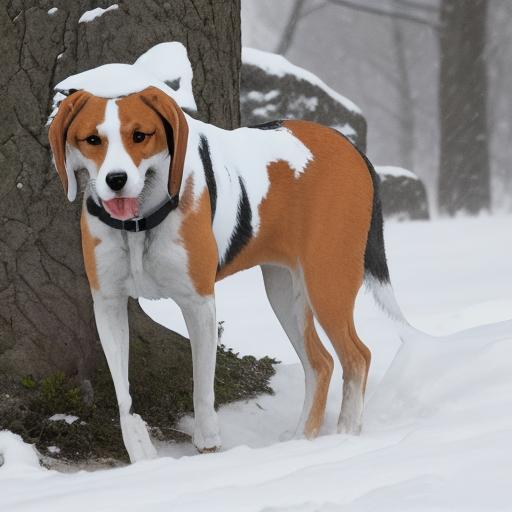}
        \includegraphics[width=.195\linewidth]{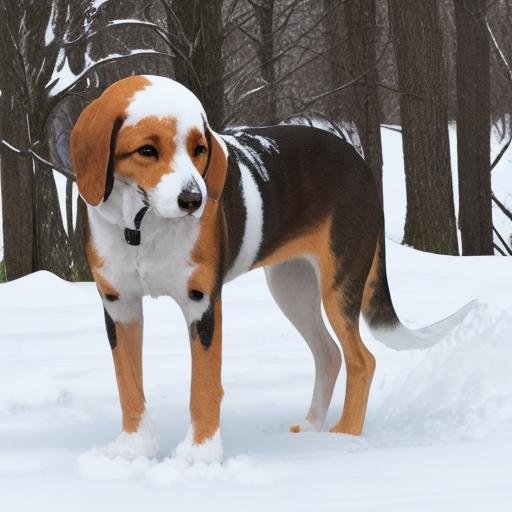}
        \subcaption{Segmentation map}
    \end{subfigure}
\captionsetup[subfigure]{skip=2pt, belowskip=-2pt}
    \begin{subfigure}[t]{.448\linewidth}
        \includegraphics[width=\linewidth]{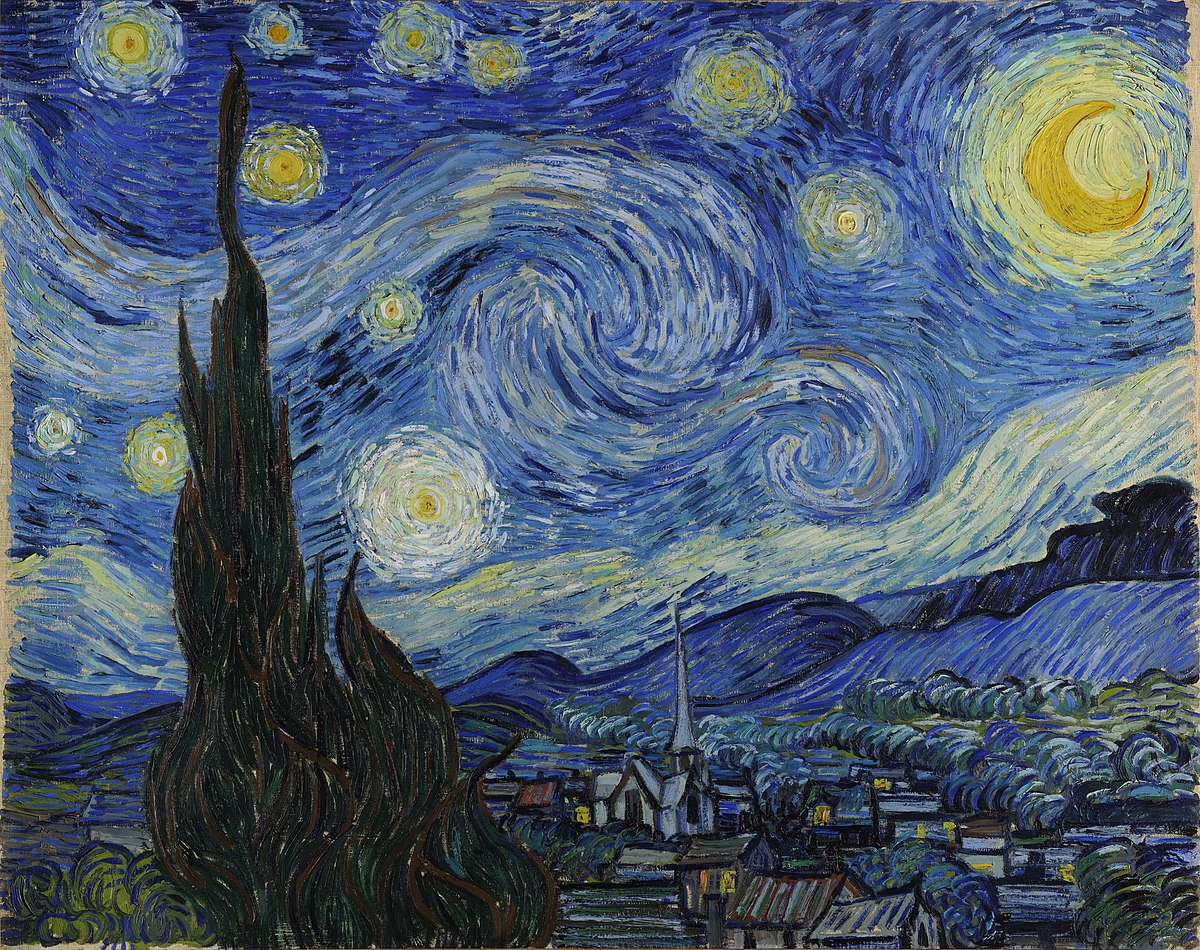}
    \end{subfigure}
    \begin{subfigure}[t]{.54\linewidth}
    \vspace{-.655\linewidth}
        \includegraphics[width=.325\linewidth]{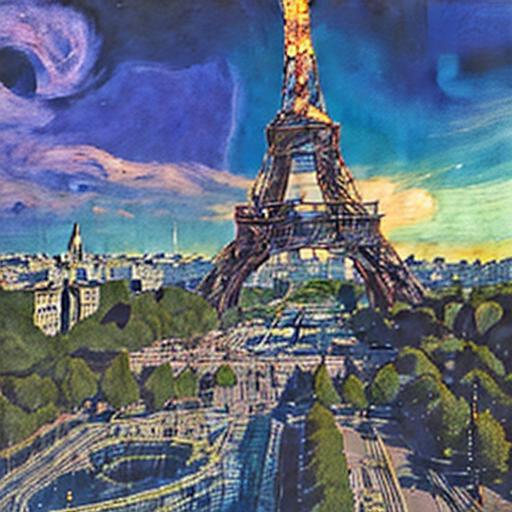}
        \includegraphics[width=.325\linewidth]{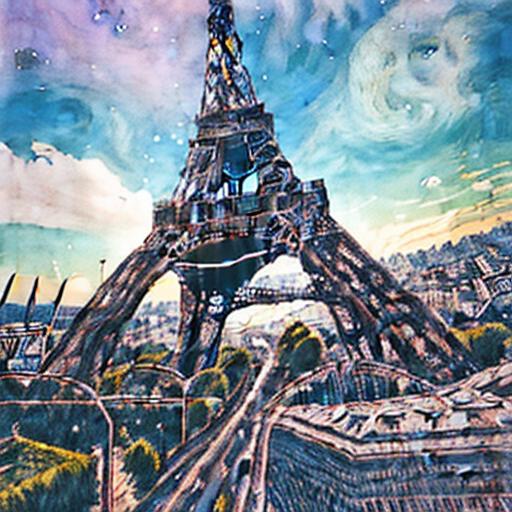}
        \includegraphics[width=.325\linewidth]{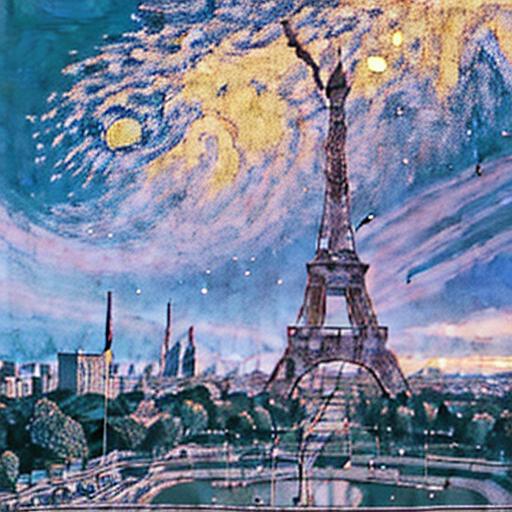}\\
        \includegraphics[width=.325\linewidth]{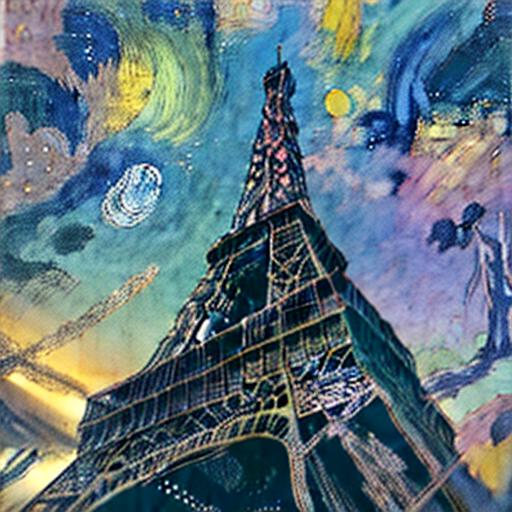}
        \includegraphics[width=.325\linewidth]{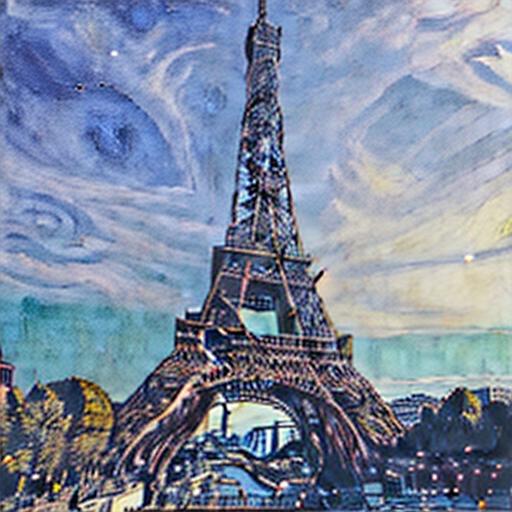}
        \includegraphics[width=.325\linewidth]{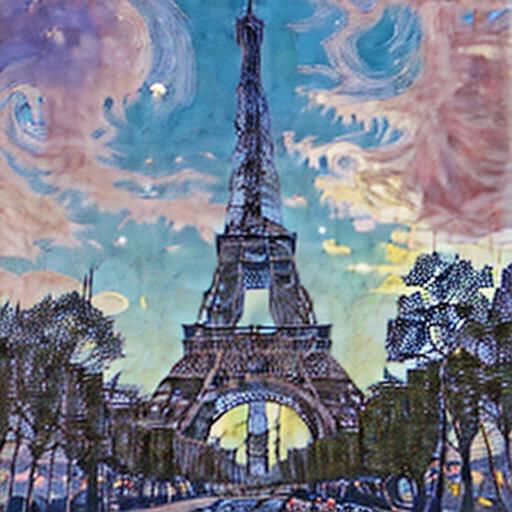}
    \end{subfigure}
    \begin{subfigure}[t]{\linewidth}
    \vspace{-9pt}
        \subcaption{Style transfer}
    \end{subfigure}
    \caption{Experimental results for more controllable generation tasks. The first column shows the guidance, and the rest are the generated results. Our method is extended to various controllable~generation tasks by incorporating the guidance functions from Universal Guidance~\citep{bansal2024universal}.}
    \label{fig:app_gen_uni}
\end{figure}

To validate the generalizability of our approach to broader application scenarios, we have extended it to more controllable generation tasks by directly using the guidance functions from Universal Guidance~\citep{bansal2024universal}. The experimental results under the guidance of segmentation map or style image are illustrated in~\cref{fig:app_gen_uni}. As shown, our classifier guidance can perform both tasks without any additional tuning, faithfully following the various forms of control signals provided by the user. This confirms the adaptability of our approach for various controllable generation tasks. Additional generalization analysis of our method for broader diffusion models is presented in~\cref{app:gen}.

\section{Conclusion}

This work presents a training-free personalized image generation method using anchored classifier guidance. It extends the applicability of the original classifier guidance based on two key findings: first, by developing on a rectified flow framework assuming ideal straightness, the classifier guidance can be transformed into a new fixed-point formulation involving only clean image-based discriminators; secondly, anchoring the flow trajectory to a reference trajectory greatly improves its solving stability. The derived anchored classifier guidance allows flexible reuse of existing image discriminators to improve identity consistency, as validated by extensive experiments on various personalized image generation tasks for human faces, live subjects, certain objects, and multiple subjects.

\textbf{Acknowledgement:} This research work is supported by National Key R\&D Program of China (2022ZD0160305), a research grant from China Tower Corporation Limited, Qinglonghu Laboratory, Beijing 100871 and a grant from Beijing Aerospace Automatic Control Institute.

\bibliographystyle{plainnat}
\bibliography{personalized}


\newpage
\appendix

\section{Detailed Derivations}
\label{app:deriv}

Most of the equations in our paper are accompanied by their derivations, except for~\cref{eq:eq1,eq:noise}. For the sake of completeness, their derivations are supplemented here.

\Cref{eq:eq1}: We substitute $t=1$ into~\cref{eq:rflow,eq:ref} to obtain:
\begin{equation}
\begin{aligned}
\hat{\bm z}_1 &= \bm{z}_0 + \hat{\bm v}(\hat{\bm z}_1,1)\\
&= \bm{z}_0 + \bm{v}(\bm{z}_1,1) + s\cdot\left[\nabla_{\bm{z}_0}{\bm{z}_1}\right]\nabla_{\hat{\bm z}_1}\log p(c|\hat{\bm z}_1)\\
&= \bm{z}_1 + s\cdot\left[\nabla_{\bm{z}_0}{\bm{z}_1}\right]\nabla_{\hat{\bm z}_1}\log p(c|\hat{\bm z}_1).
\end{aligned}
\end{equation}

\Cref{eq:noise}: By applying~\cref{eq:ref} twice and utilizing the straightness property of rectified flow, we have:
\begin{equation}
\begin{aligned}
\nabla_{\hat{\bm z}_t}\log p(c|\hat{\bm z}_t)&=1/s\cdot\left[\nabla_{\bm{z}_t}{\bm{z}_0}\right](\hat{\bm v}(\hat{\bm z}_t,t)-\bm{v}(\bm{z}_t,t))\\
&=1/s\cdot\left[\nabla_{\bm{z}_t}{\bm{z}_0}\right](\hat{\bm v}(\hat{\bm z}_1,1)-\bm{v}(\bm{z}_1,1))\\
&=\left[\nabla_{\bm{z}_t}{\bm{z}_0}\right]\left[\nabla_{\bm{z}_0}{\bm{z}_1}\right]\nabla_{\hat{\bm z}_1}\log p(c|\hat{\bm z}_1)\\
&=\left[\nabla_{\bm{z}_t}{\bm{z}_1}\right]\nabla_{\hat{\bm z}_1}\log p(c|\hat{\bm z}_1).
\end{aligned}
\end{equation}

\section{Related Work on Classifier Guidance}
\label{app:rel}

Since the proposal of classifier guidance~\citep{dhariwal2021diffusion} which uses a special noise-aware classifier as training-free guidance for diffusion models, many efforts have been made to extend its applicability to off-the-shelf loss guidance. They can be grouped into three categories:
(1) Early literature focuses on simpler objectives for linear inverse problems such as image super-resolution, deblurring, and inpainting~\citep{chung2022improving, chung2023diffusion, wang2023zero, zhu2023denoising, zhang2024flow}.
(2) These methods can be extended to more complex discriminators through various approximations. \citet{yu2023freedom,song2023loss} use Tweedie’s formula and Monte Carlo method, respectively, to estimate the integrated classifier guidance. \citet{bansal2024universal,he2024manifold} perform updates directly in the clean data space, with the latter imposing an additional manifold constraint. Similarly, Gaussian spherical constraint is explored in~\citet{yang2024guidance}. \citet{mou2024diffeditor} advance these techniques to more versatile editing tasks. The above methods are further unified in~\citet{ye2024tfg}.
(3) A recent line of work directly uses gradient descent with specific diffusion models. To enable their gradient computation, DiffusionCLIP~\citep{kim2022diffusionclip} relies on shortened ODE trajectories, while FlowGrad~\citep{liu2023flowgrad} adopts a non-uniform ODE discretization and decomposed gradient computation. DOODL~\citep{wallace2023end}, DNO~\citep{karunratanakul2024optimizing} and D-Flow~\citep{ben2024d} use invertible models or flow models to backpropagate gradient to the initial noise.\linebreak
Our proposed method is related to the third category, focusing on rectified flow whose approximation remains understudied. Moreover, it features a fixed-point formulation with a convergence guarantee for ideal rectified flow, which allows potentially better stability over the existing approaches, \eg gradient descent on initial noise, as empirically validated in~\cref{sect:ablation}.

Recent studies also explore the use of pre-trained classifiers to guide personalized image generation, but mostly during model training. DiffFace~\citep{kim2022diffface} and PhotoVerse~\citep{chen2023photoverse} directly apply a face discriminator to noised images to compute an identity loss. PortraitBooth~\citep{peng2024portraitbooth} improves loss quality by computing it at less noisy stages. More relevant to our work, LCM-Lookahead~\citep{gal2024lcm} and PuLID~\citep{guo2024pulid} utilize distilled diffusion models to generate images in few steps, allowing for direct gradient backpropagation of image-space losses. However, these personalization methods must first be trained on extensive face recognition data, \eg LAION-Face 50M~\citep{zheng2022general}. In comparison, we harness off-the-shelf face discriminators at test time based on the methodology of classifier guidance, enabling more flexible customization without training. And it can be generalized to other subjects by simply replacing the classifier.

\section{Experimental Settings}
\label{app:set}

\textbf{Method.}
We mainly experiment on the recently proposed piecewise rectified flow~\citep{yan2024perflow}. The model is finetuned from Stable Diffusion 1.5~\citep{rombach2022high} on the LAION-Aesthetic-5+ dataset~\citep{schuhmann2022laion} without special pre-training on human faces or subjects. We adopt a fixed image size of $512\times 512$, number of sampling step $K=4$, and a classifier-free guidance scale of $3.0$ during quantitative evaluation. The newly incorporated anchored classifier guidance uses a default guidance scale $s=1.0$ and number of iterations $N=100$. For qualitative studies, a few results are generated with a slightly different guidance scale $s=0.5$ or $2.0$ for better visual quality or identity consistency. But in general, our method is not very sensitive to these hyperparameters. In terms of computational and memory overhead, our method takes less than 0.5s per iteration on an NVIDIA A800 GPU and fits on consumer-grade GPUs such as NVIDIA RTX 4080, the latter of which may be further improved with gradient checkpointing or the techniques in~\citet{liu2023flowgrad}.

For face-centric personalization, our method is implemented with the antelopev2 model pack from the InsightFace library.\footnote{\url{https://github.com/deepinsight/insightface}} Specifically, it detects and crops the face regions with an SCRFD model~\citep{guo2022sample}, and then extracts face features using an ArcFace model~\citep{deng2019arcface} trained on Glint360K~\citep{an2021partial}, consistent with most personalization methods that use a face model~\citep{ye2023ip, wang2024instantid, gal2024lcm}. The resulting face feature is compared with the reference image to compute a cosine loss, which serves as the classifier guidance signal.

For subject-driven generation, we use an open-vocabulary object detector OWL-ViT~\citep{minderer2022simple} to detect the region of interest, and extract visual features with DINOv2~\citep{oquab2023dinov2}. The extracted feature is compared with the reference to calculate a cosine loss as the guidance signal. While this guidance already works well for various live subjects including multiple dogs and cats, we add an optional $\ell_1$ loss with a coefficient of 10.0 to help preserve the identity of certain objects, such as cans, vases and duck toys. The current implementation is still limited in not capturing the details of some irregularly shaped objects, \eg plush toys, and we expect to resolve this issue with an improved discriminator, or by combining with existing image prompt techniques~\citep{ye2023ip}.

For multi-subject personalization, we consider a simplified case of exactly two subjects and perform the following: first detect the two subjects, then enumerate all possible bipartite matches with the reference subjects to minimize the matching cost. For more complex scenarios, a possible workaround is to formulate it as an quadratic assignment problem and apply efficient solvers~\citep{tan2024learning}.

\textbf{Evaluation.}
For face-centric personalization, we follow~\citet{pang2024cross} in evaluating on the first 200 images in the CelebA-HQ dataset~\citep{liu2015deep, karras2018progressive} with 20 text prompts including 15 realistic prompts and 5 stylistic prompts. The evaluation process reuses the code from Celeb Basis~\citep{yuan2023inserting},\footnote{This is mentioned at~\url{https://github.com/lyuPang/CrossInitialization\#metrics}.} which first detects the face region using a PIPNet~\citep{jin2021pixel} with a threshold of 0.5 and then computes the cosine similarity on face features extracted by an ArcFace model~\citep{deng2019arcface}. It should be noted that our method adopts a different face detector and different alignment and cropping methods, so it does not overfit the evaluation protocol. For the baselines, in addition to those compared in~\citet{pang2024cross}, we also evaluate the recently proposed IP-Adapter~\citep{ye2023ip}, PhotoMaker~\citep{li2024photomaker}, and InstantID~\citep{wang2024instantid} on their recommended settings. The number of their sampling steps is set to 30 for a fair comparison. The checkpoint version of IP-Adapter is ip-adapter-full-face\_sd15. The image size is set to 512$\times$512 for IP-Adapter and 1024$\times$1024 for PhotoMaker and InstantID based on SDXL~\citep{podell2024sdxl}. In the qualitative analysis, we also include Celeb Basis~\citep{yuan2023inserting} as a baseline method.

For subject-driven generation, we perform a qualitative rather than a quantitative comparison on a subset of the DreamBooth dataset~\citep{ruiz2023dreambooth} due to the previously mentioned limitations. Nevertheless, many subjects are considered during the qualitative study, including 7 live subjects from two categories (cats and dogs) and 3 regularly shaped objects from different categories (cans, vases, and teapots). For the baselines, we incorporate Textual Inversion~\citep{gal2023image}, DreamBooth~\citep{ruiz2023dreambooth}, BLIP-Diffusion~\citep{li2023blip}, and Emu2~\citep{sun2024generative} for extensive comparison. Their diffusion models and hyperparameter settings follow the official or Diffusers implementation.\footnote{For example, we use \url{https://huggingface.co/docs/diffusers/training/text_inversion}.}

\textbf{Licenses.}
The piecewise rectified flow~\citep{yan2024perflow} used in the main experiments is released under the BSD-3-Clause License and the 2-rectified flow~\citep{liu2023flow,liu2024instaflow} used in~\cref{app:2rect} is released under the MIT License. The InsightFace library for face detection and recognition is released under the MIT License, while its pre-trained models are available for non-commercial research purposes only.
The OWL-ViT~\citep{minderer2022simple} and DINOv2~\citep{oquab2023dinov2} models for object detection and feature extraction are released under the Apache-2.0 License. For evaluation, the code of Celeb Basis~\citep{yuan2023inserting} is licensed under the MIT License. The CelebA-HQ~\citep{liu2015deep, karras2018progressive} and DreamBooth~\citep{ruiz2023dreambooth} datasets for quantitative and qualitative evaluation are released under the CC BY-NC 4.0 and CC-BY-4.0 licenses.

\section{Additional Results}

\subsection{Generalization}
\label{app:gen}

\begin{figure}[t]
\captionsetup[figure]{skip=2pt}
\captionsetup[subfigure]{skip=2pt, belowskip=2pt}
    \centering
    \footnotesize
    \begin{subfigure}[t]{\linewidth}
        \raisebox{.019\linewidth}{\includegraphics[width=.1\linewidth]{figs/ref_person/chalamet.jpg}}\hfill
        \includegraphics[width=.14\linewidth]{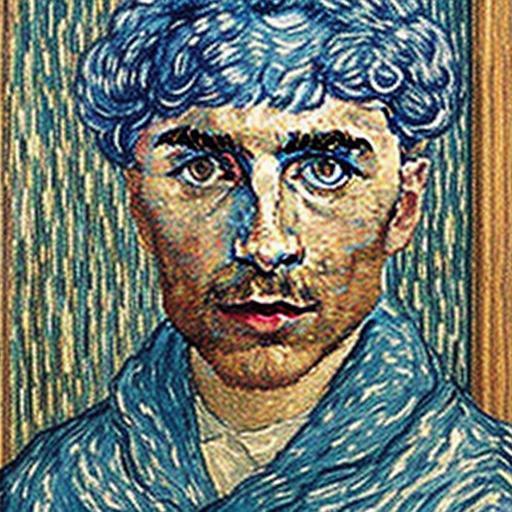}
        \includegraphics[width=.14\linewidth]{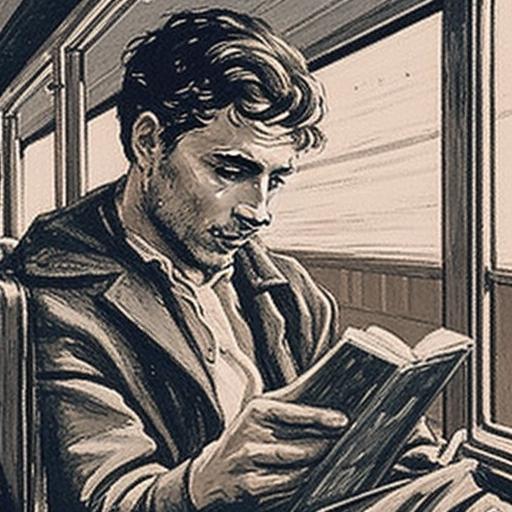}
        \includegraphics[width=.14\linewidth]{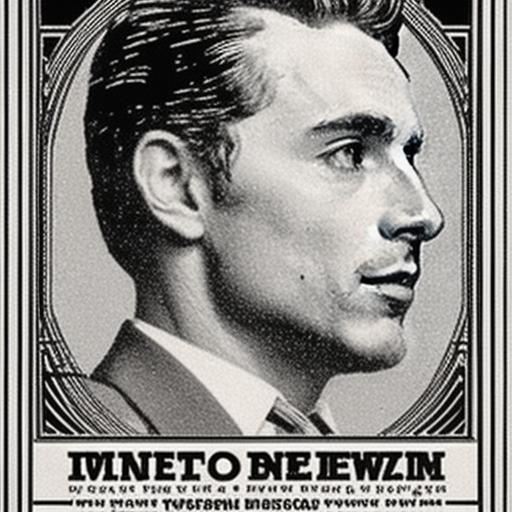}
        \includegraphics[width=.14\linewidth]{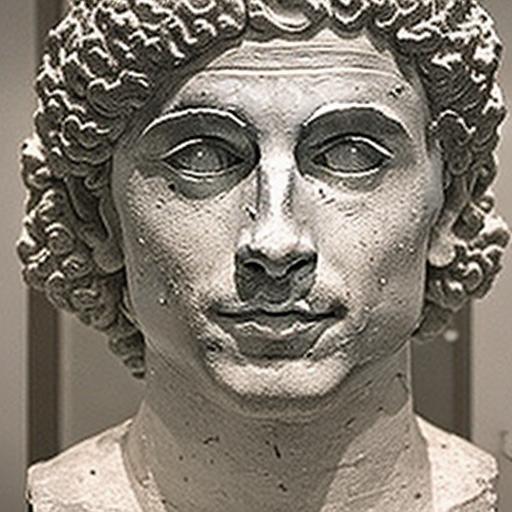}
        \includegraphics[width=.14\linewidth]{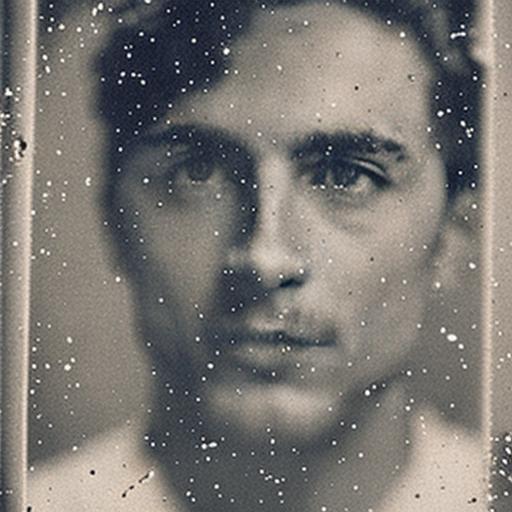}
        \includegraphics[width=.14\linewidth]{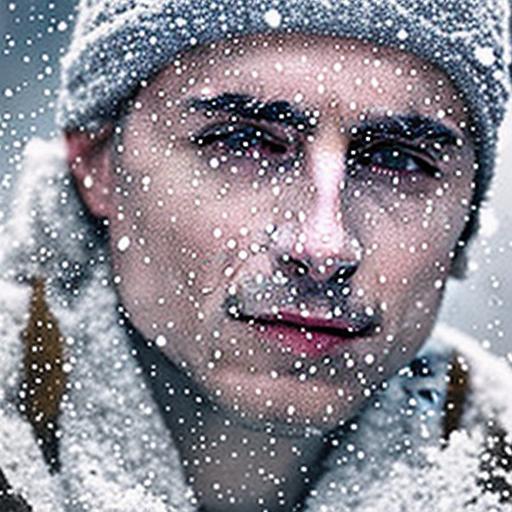}
    \end{subfigure}
    \begin{subfigure}[t]{\linewidth}
        \raisebox{.019\linewidth}{\includegraphics[width=.1\linewidth]{figs/ref_person/ferguson.jpg}}\hfill
        \includegraphics[width=.14\linewidth]{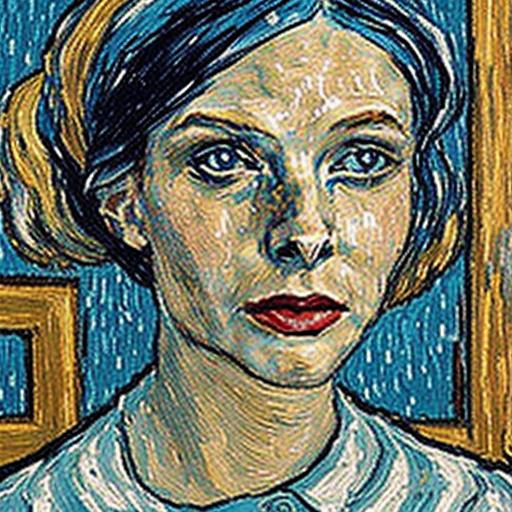}
        \includegraphics[width=.14\linewidth]{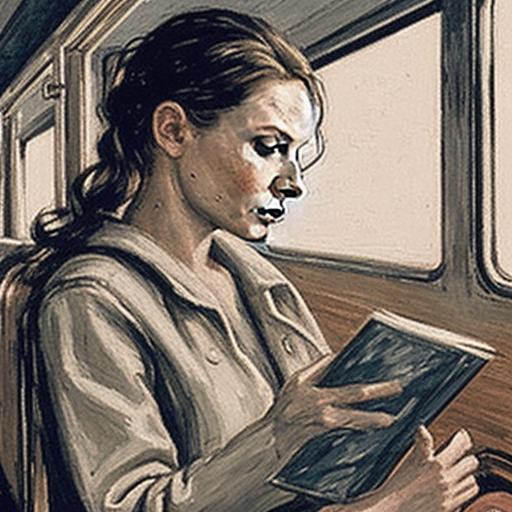}
        \includegraphics[width=.14\linewidth]{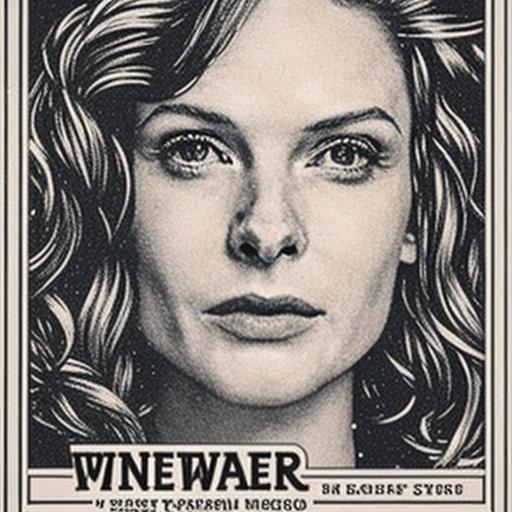}
        \includegraphics[width=.14\linewidth]{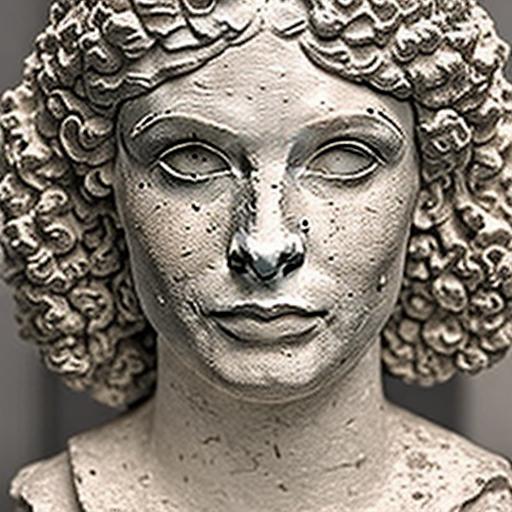}
        \includegraphics[width=.14\linewidth]{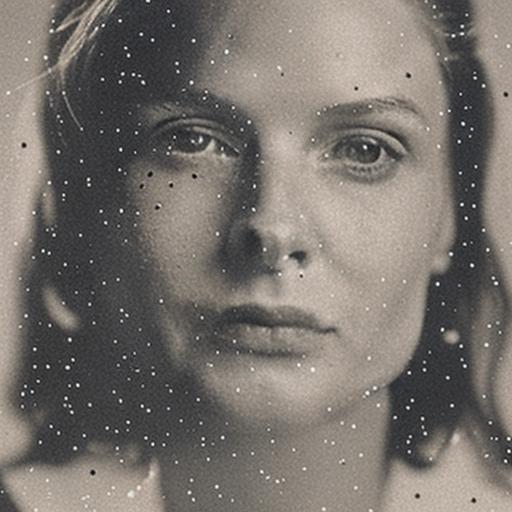}
        \includegraphics[width=.14\linewidth]{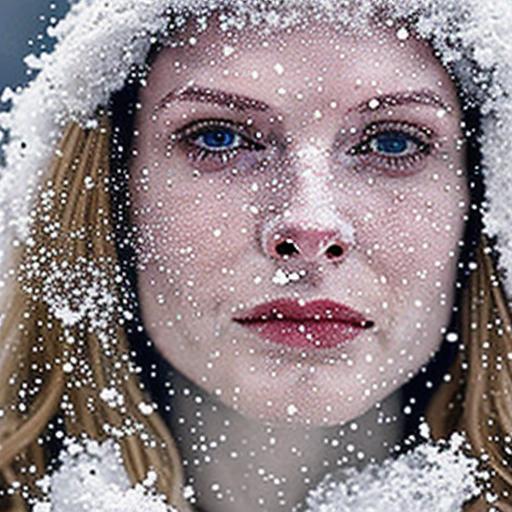}
    \end{subfigure}
    \begin{subfigure}[t]{\linewidth}
        \makebox[.1\linewidth]{Input}\hfill
        \makebox[.14\linewidth]{\begin{tabular}{c}painting in\\Van Gogh style\end{tabular}}
        \makebox[.14\linewidth]{\begin{tabular}{c}reading on\\the train\end{tabular}}
        \makebox[.14\linewidth]{\begin{tabular}{c}concert poster\end{tabular}}
        \makebox[.14\linewidth]{\begin{tabular}{c}Greek\\sculpture\end{tabular}}
        \makebox[.14\linewidth]{\begin{tabular}{c}faded film\end{tabular}}
        \makebox[.14\linewidth]{\begin{tabular}{c}in the snow\end{tabular}}
        \subcaption{SD-Turbo}
    \end{subfigure}
\captionsetup[subfigure]{skip=2pt, belowskip=-2pt}
    \begin{subfigure}[t]{\linewidth}
        \raisebox{.019\linewidth}{\includegraphics[width=.1\linewidth]{figs/ref_person/johansson.jpg}}\hfill
        \includegraphics[width=.14\linewidth]{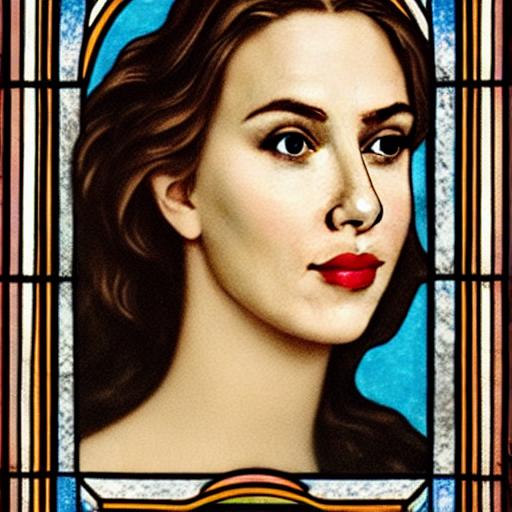}
        \includegraphics[width=.14\linewidth]{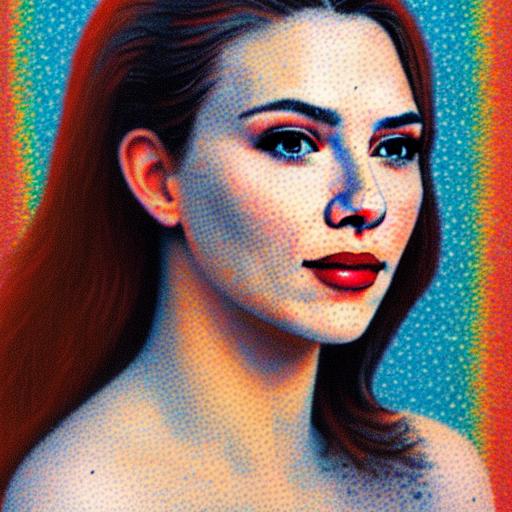}
        \includegraphics[width=.14\linewidth]{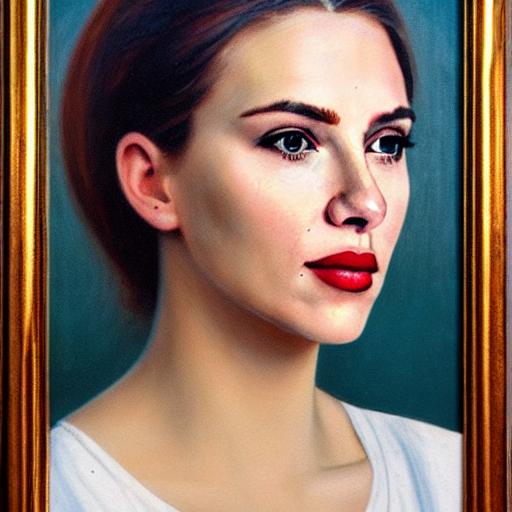}
        \includegraphics[width=.14\linewidth]{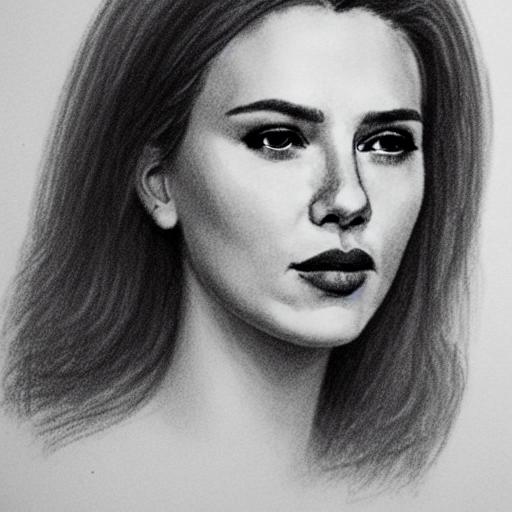}
        \includegraphics[width=.14\linewidth]{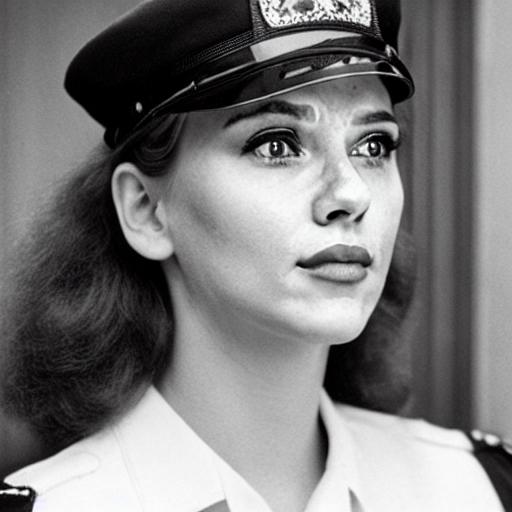}
        \includegraphics[width=.14\linewidth]{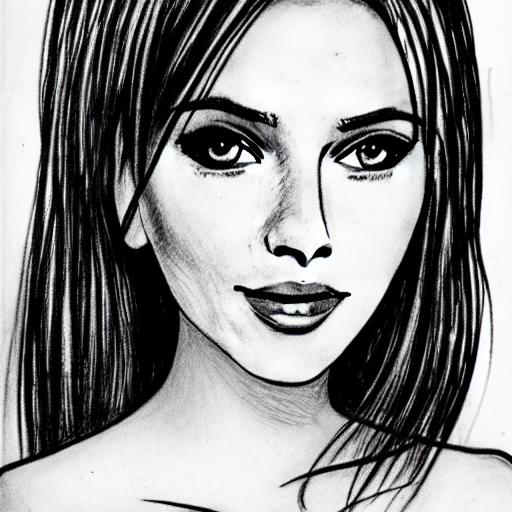}
    \end{subfigure}
    \begin{subfigure}[t]{\linewidth}
        \raisebox{.019\linewidth}{\includegraphics[width=.1\linewidth]{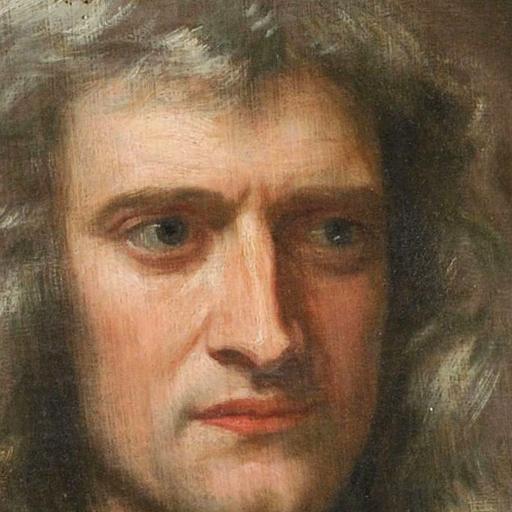}}\hfill
        \includegraphics[width=.14\linewidth]{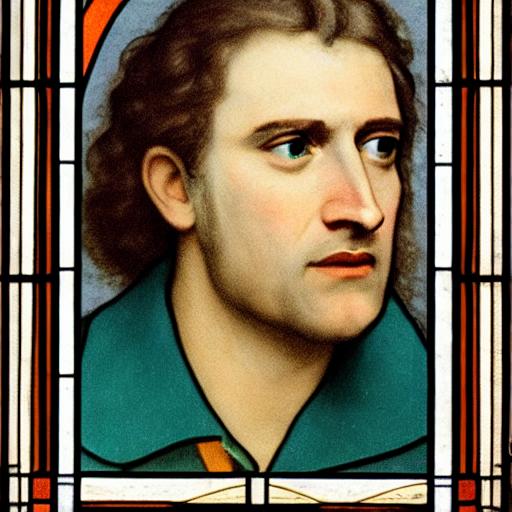}
        \includegraphics[width=.14\linewidth]{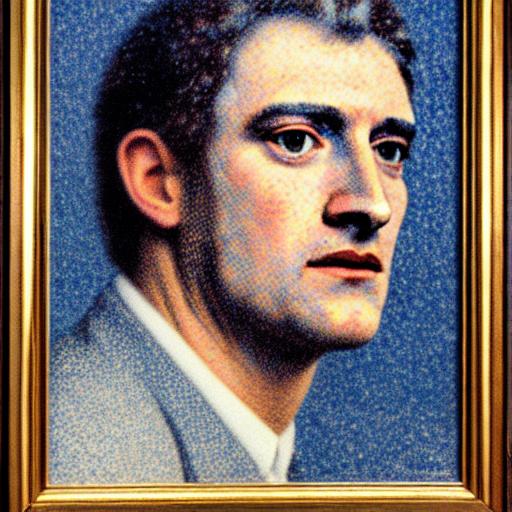}
        \includegraphics[width=.14\linewidth]{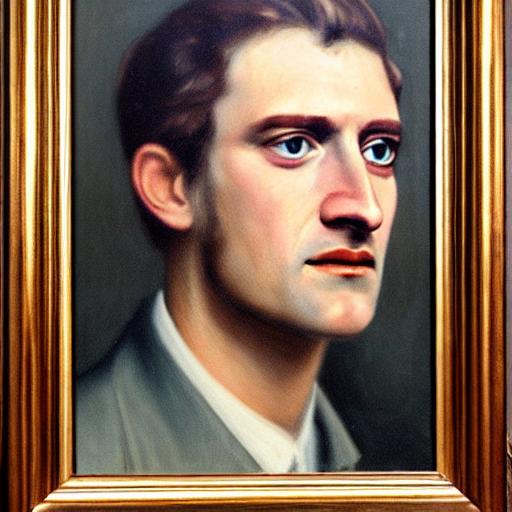}
        \includegraphics[width=.14\linewidth]{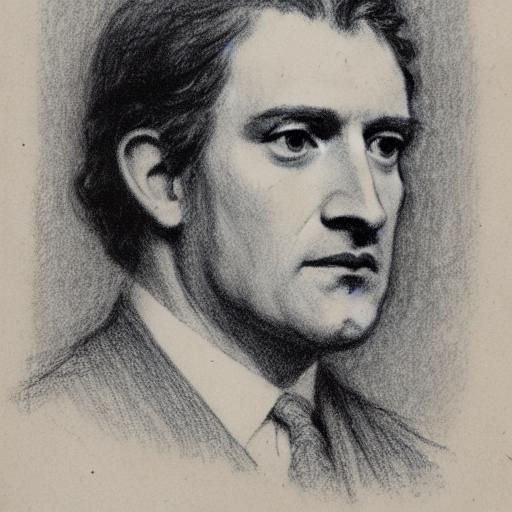}
        \includegraphics[width=.14\linewidth]{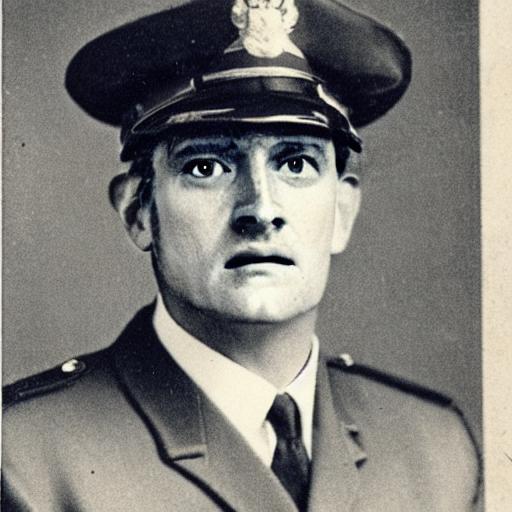}
        \includegraphics[width=.14\linewidth]{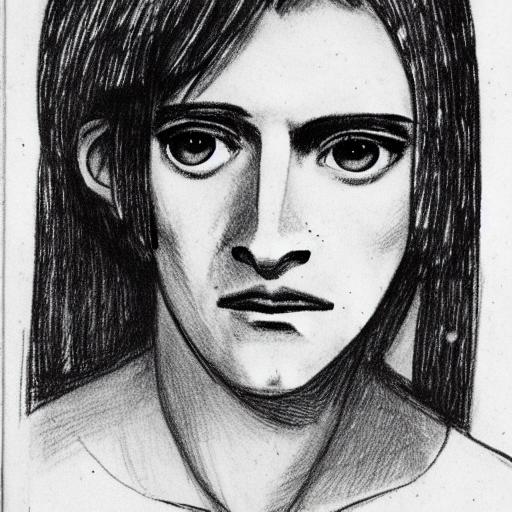}
    \end{subfigure}
    \begin{subfigure}[t]{\linewidth}
        \makebox[.1\linewidth]{Input}\hfill
        \makebox[.14\linewidth]{\begin{tabular}{c}stained glass\\window\end{tabular}}
        \makebox[.14\linewidth]{\begin{tabular}{c}pointillism\\painting\end{tabular}}
        \makebox[.14\linewidth]{\begin{tabular}{c}oil painting\end{tabular}}
        \makebox[.14\linewidth]{\begin{tabular}{c}sketch\end{tabular}}
        \makebox[.14\linewidth]{\begin{tabular}{c}in a police\\outfit\end{tabular}}
        \makebox[.14\linewidth]{\begin{tabular}{c}manga\\drawing\end{tabular}}
        \subcaption{Phased consistency model}
    \end{subfigure}
    \caption{Generalization to few-step diffusion models, including SD-Turbo~\citep{sauer2024adversarial} and phased consistency model~\citep{wang2024phased}, both distilled from SD and using 4 sampling steps in inference. The results show that our method is effective for personalizing broader diffusion models.}
    \label{fig:app_gen_dm}
\end{figure}

While our classifier guidance is derived based on rectified flow, the same idea can be generalized to some few-step diffusion models by assuming straightness of their trajectories within each time step. We empirically demonstrate this in~\cref{fig:app_gen_dm} with two popular few-step diffusion models, SD-Turbo~\citep{sauer2024adversarial} and phased consistency model~\citep{wang2024phased}. As the results indicate, our method effectively personalizes these diffusion models to generate identity-preserving images. We will continue to explore this approach for other generative models in future research.

\subsection{More Visualizations}
\label{app:vis}

More examples of our generated image are provided in~\cref{fig:single_person_app,fig:single_person_app2,fig:single_person_app3,fig:single_person_app4,fig:single_object_app,fig:multi_subject_app}. For face-centric personalized image generation, it is shown that our method can follow a variety of text prompts to generate both realistic or stylistic images while preserving the user-specified identity from a diverse group of people. Although there exist minor differences in the person's age and hairstyles, the face looks very similar to the reference image. In particular, the method demonstrates good generalizability among different piecewise rectified flows based on SDXL~\citep{podell2024sdxl} and SD 1.5~\citep{rombach2022high}. For subject-driven generation, new results are presented from additional subject types, including different breeds of cats and dogs, and some regularly shaped objects such as vases, demonstrating the flexibility of our approach across different use cases. For multi-subject generation, it naturally blends multiple human faces or live subjects into a single image while maintaining the visual quality and semantics, revealing a wider range of potentially interesting applications. Overall, our method demonstrates to be effective and flexible for various personalization tasks.

\subsection{Ablation Study}
\label{app:ablation}

In addition to the ablation experiments in the main paper, we perform a sensitivity analysis of the two hyperparameters in our method, namely the guidance scale $s$ and the number of iterations $N$. Specifically, we study the effect of different $s$ under a fixed $N=20$ and then the effect of different $N$ under a fixed $s=1.0$. The results are summarized in~\cref{fig:ablation_app}. As can be observed, increasing both hyperparameters from 0 leads to a significant improvement in identity consistency, confirming the effectiveness of our proposed classifier guidance. Also, the performance is stable over a fairly wide range of hyperparameters, indicating that our approach is not very sensitive to hyperparameters. Furthermore, we find that the use of a small classifier guidance scale is actually beneficial for prompt consistency, possibly because it enhances the visual features, as demonstrated in~\cref{fig:flow}.

\subsection{Experiments with 2-Rectified Flow}
\label{app:2rect}

\Cref{fig:single_person_2rflow,fig:single_object_2rflow} present additional qualitative results on a vanilla 2-rectified flow~\citep{liu2023flow, liu2024instaflow} finetuned on Stable Diffusion 1.4~\citep{rombach2022high}. As can be seen, our method continues to deliver satisfactory identity preservation when moving to a different model, despite a noticeable drop in the generation quality. Concretely, it integrates target subjects with some quite challenging prompts, such as a person swimming or getting a haircut, while showing very little interference with the original background, \eg jungles and cityscapes. These results clearly validate the effectiveness of our proposed method in alternative rectified flow models.

Note that we also experimented with $K=1$ on a single-step InstaFlow~\citep{liu2024instaflow} distilled from this 2-rectified flow, but found that it tended to converge to slightly distorted images. This may be attributed to the larger modeling error inherent in InstaFlow's distillation process, which reduces the effectiveness of our approach assuming each flow trajectory segment is well-trained and straight.

\section{Broader Impacts and Limitations}

\textbf{Broader impacts.}
The proposed method can be integrated with the emerging rectified flow-based models to enhance identity preservation and versatile control over existing AI art production pipelines. However, as a training-free personalization technique, it may increase the risk of image faking and have negative societal effects. Some immediate remedies include text-based prompt filters and AI-generated image detection, but it remains an open problem for a more principled solution, for which we advocate further research on data watermarking and model unlearning as potential mitigations.
To~further clarify it, we provide a more detailed explanation of these mitigations below:
\begin{itemize}[nosep,topsep=0pt,leftmargin=*]
    \item Prompt filtering, model unlearning: Since our method keeps the diffusion model intact, existing techniques for regulating diffusion models can be applied seamlessly, including prompt filters or unlearning methods. The former can be applied explicitly like the text filters in SD models, or implicitly via CFG as in~\citet{schramowski2023safe}. The latter approaches involve finetuning the diffusion model to remove the ability to generate harmful content~\citep{gandikota2023erasing, kumari2023ablating}.
    \item Data watermarking: To prevent misuse of personal images, one could add a protective watermark to their images~\citep{van2023anti, liu2024countering}. With this watermark or perturbation, the image can no longer be learned by common personalization methods. However, it is unclear how robust these watermarks are to training-free methods such as ours. An alternative watermarking scheme is to embed special watermark to the images generated by our proposed model, which would be invisible to the users yet identifiable by us (i.e., the tech provider). Images with such watermarks will be marked as being artificial.
    \item AI-generated image detection: As a post-hoc safety measure, it helps to distinguish fake images generated by the attackers from real images. Beyond above watermark-based scheme, more sophisticated data-driven methods have attracted increasing interest from the AI community. Despite that current methods still lack accuracy, we believe that developing reliable and widely available AI-generated image detectors is an important research direction.
\end{itemize}

\textbf{Limitations.} Our theoretical guarantee is limited to ideal rectified flow and cannot be generalized to more complex flow-based models. Empirically, anchoring the new flow trajectory to a reference trajectory only proves effective for faces, live subjects and certain regularly shaped objects, and remains insufficient for many objects with large structural variations, \eg plush toys. Furthermore, while our method is training-free, its inference time has yet to match several training-based~baselines, which may be addressed by applying more advanced numerical solvers to the derived problem.

Another important issue is the lack of pre-trained discriminators.
To address this in the short term, we suggest first training a specialized discriminator and then applying our classifier guidance. There are two reasons for doing this instead of finetuning the generator directly: (1) training/finetuning a discriminator is usually more efficient and stable than training/finetuning a generator; (2) it can take full advantage of domain images that have no captions or even labels by using standard contrastive learning loss.
In the future, scaling up vision-language models may be a general solution for these domains. The current models such as GPT-4V~\citep{openai2023gptv} have demonstrated certain generalizability across visual understanding tasks. As they continue to improve in generalizability and robustness, they will become a viable source for guiding diffusion models in new domains.

\begin{figure}[t]
    \centering
    \footnotesize
    \begin{subfigure}[t]{\linewidth}
        \raisebox{.019\linewidth}{\includegraphics[width=.1\linewidth]{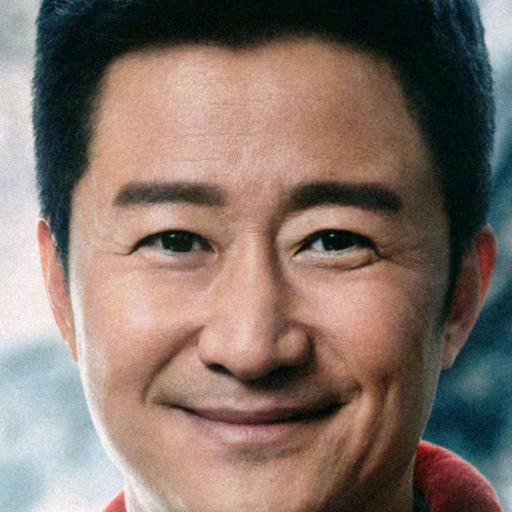}}\hfill
        \includegraphics[width=.14\linewidth]{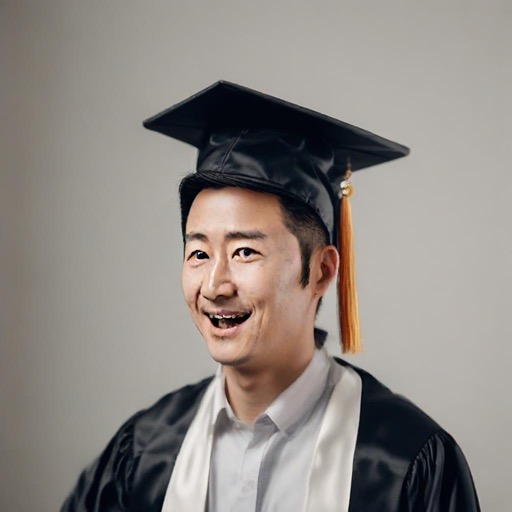}
        \includegraphics[width=.14\linewidth]{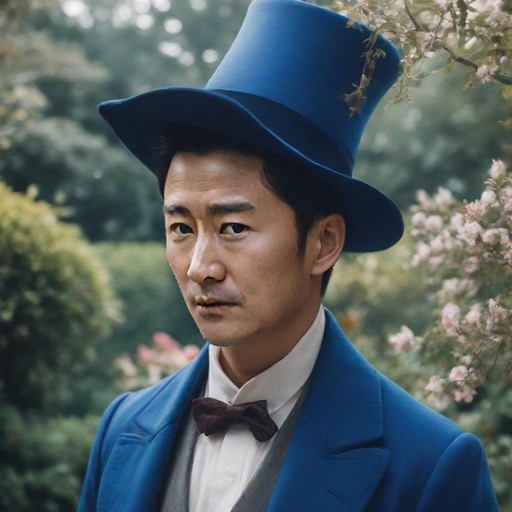}
        \includegraphics[width=.14\linewidth]{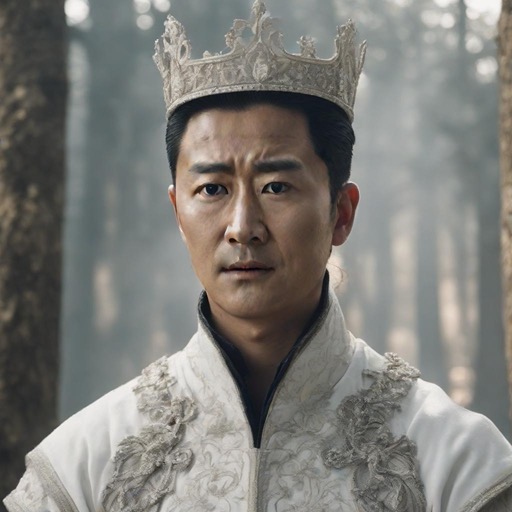}
        \includegraphics[width=.14\linewidth]{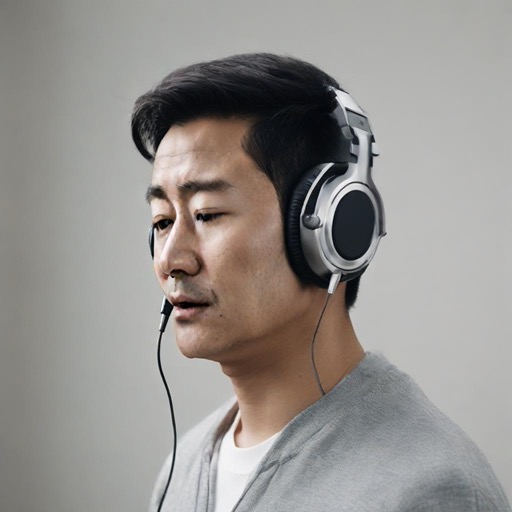}
        \includegraphics[width=.14\linewidth]{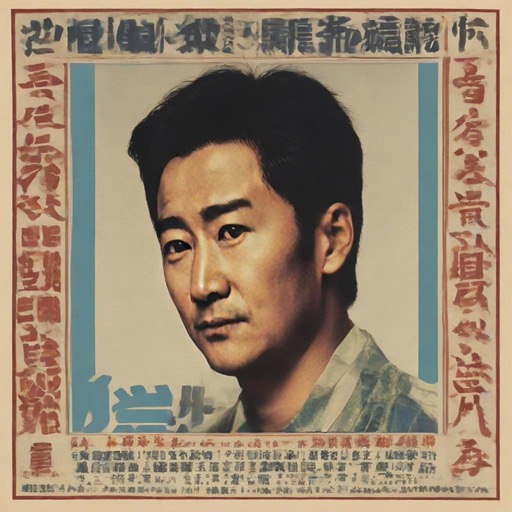}
        \includegraphics[width=.14\linewidth]{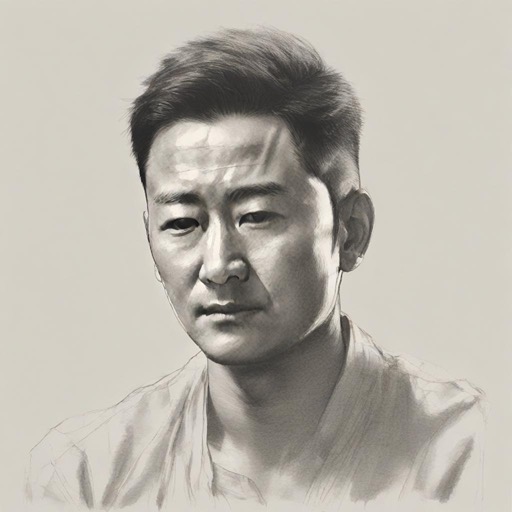}
    \end{subfigure}
    \begin{subfigure}[t]{\linewidth}
        \makebox[.1\linewidth]{Input}\hfill
        \makebox[.14\linewidth]{\begin{tabular}{c}graduating after\\finishing PhD\end{tabular}}
        \makebox[.14\linewidth]{\begin{tabular}{c}wear a magician\\hat and a blue\\coat in a garden\end{tabular}}
        \makebox[.14\linewidth]{\begin{tabular}{c}as White\\Queen\end{tabular}}
        \makebox[.14\linewidth]{\begin{tabular}{c}wearing\\headphones\end{tabular}}
        \makebox[.14\linewidth]{\begin{tabular}{c}concert poster\end{tabular}}
        \makebox[.14\linewidth]{\begin{tabular}{c}pencil\\drawing\end{tabular}}
    \end{subfigure}
    \begin{subfigure}[t]{\linewidth}
        \raisebox{.019\linewidth}{\includegraphics[width=.1\linewidth]{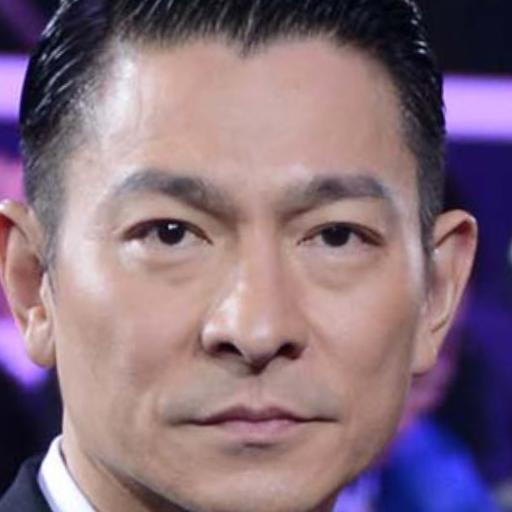}}\hfill
        \includegraphics[width=.14\linewidth]{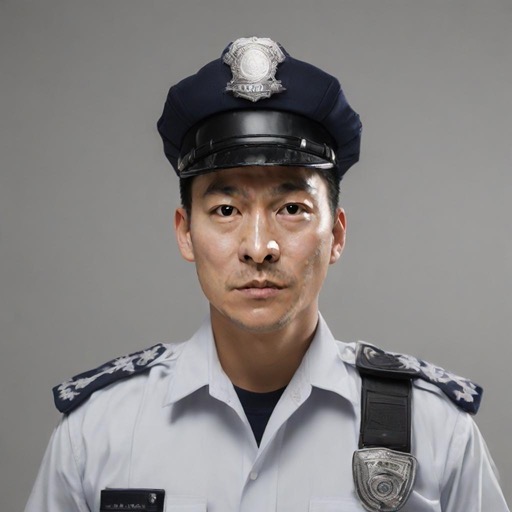}
        \includegraphics[width=.14\linewidth]{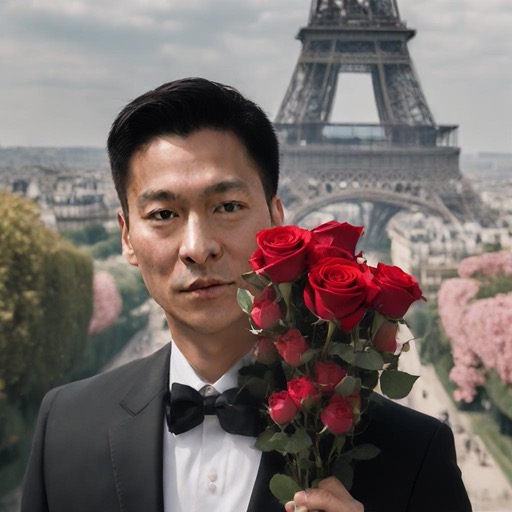}
        \includegraphics[width=.14\linewidth]{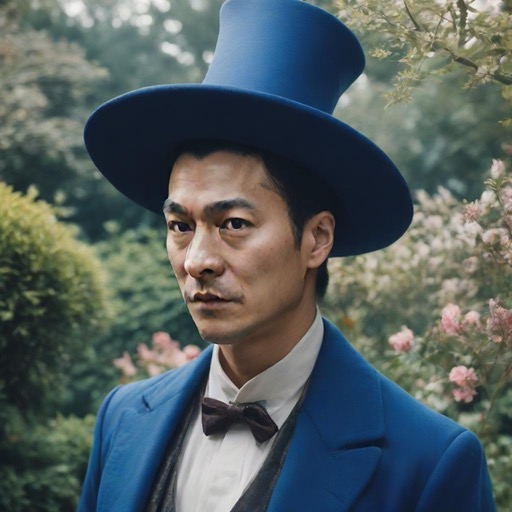}
        \includegraphics[width=.14\linewidth]{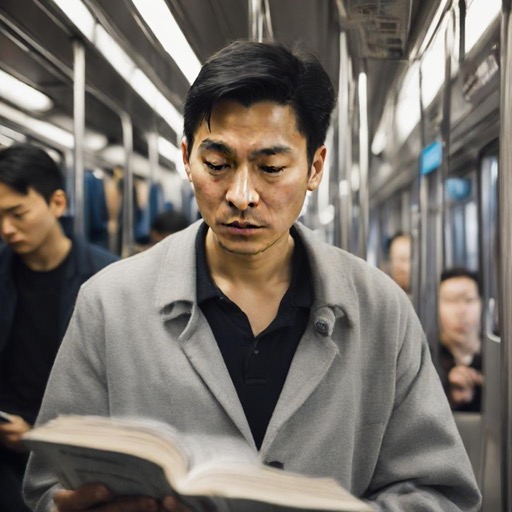}
        \includegraphics[width=.14\linewidth]{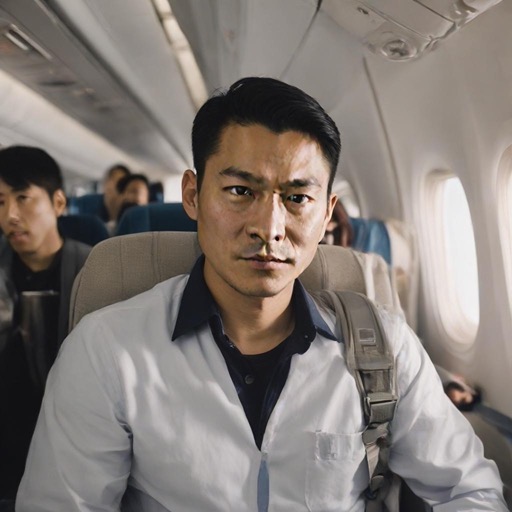}
        \includegraphics[width=.14\linewidth]{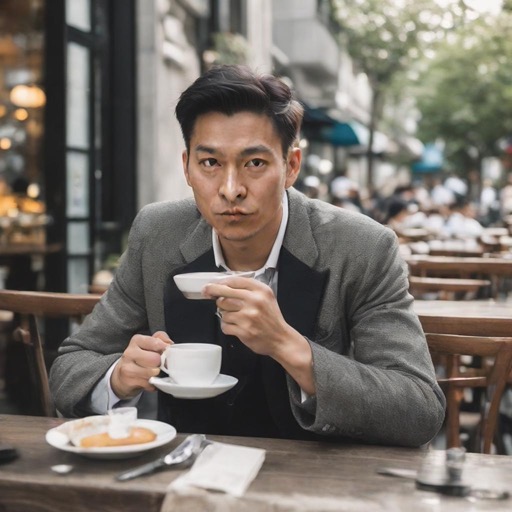}
    \end{subfigure}
    \begin{subfigure}[t]{\linewidth}
        \makebox[.1\linewidth]{Input}\hfill
        \makebox[.14\linewidth]{\begin{tabular}{c}in a police\\outfit\end{tabular}}
        \makebox[.14\linewidth]{\begin{tabular}{c}holding roses\\in front of the\\Eiffel Tower\end{tabular}}
        \makebox[.14\linewidth]{\begin{tabular}{c}wear a magician\\hat and a blue\\coat in a garden\end{tabular}}
        \makebox[.14\linewidth]{\begin{tabular}{c}reading on\\the train\end{tabular}}
        \makebox[.14\linewidth]{\begin{tabular}{c}buckled in his\\seat on a plane\end{tabular}}
        \makebox[.14\linewidth]{\begin{tabular}{c}sipping coffee\\at a cafe\end{tabular}}
    \end{subfigure}
    \begin{subfigure}[t]{\linewidth}
        \raisebox{.019\linewidth}{\includegraphics[width=.1\linewidth]{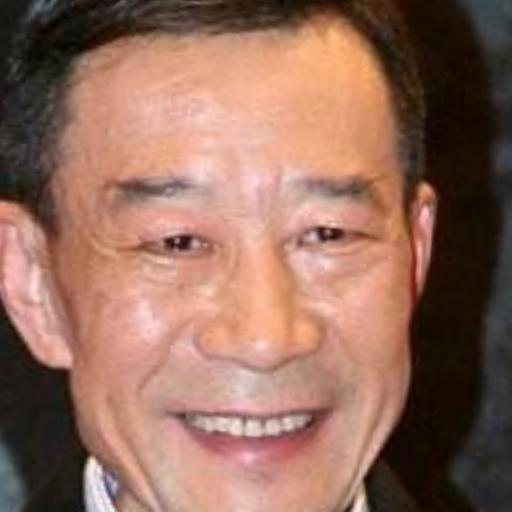}}\hfill
        \includegraphics[width=.14\linewidth]{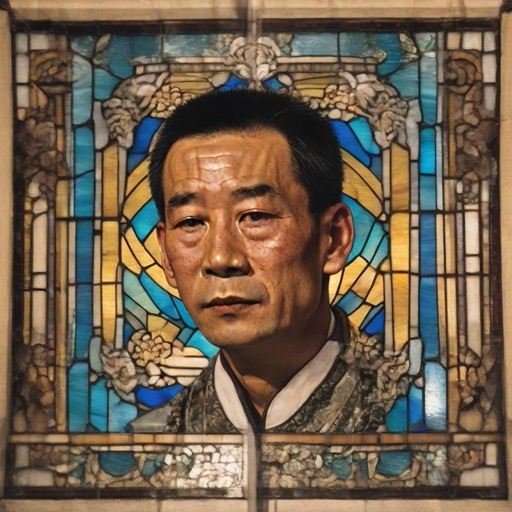}
        \includegraphics[width=.14\linewidth]{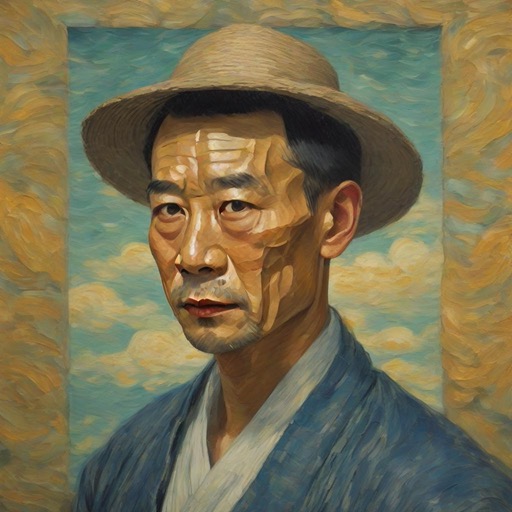}
        \includegraphics[width=.14\linewidth]{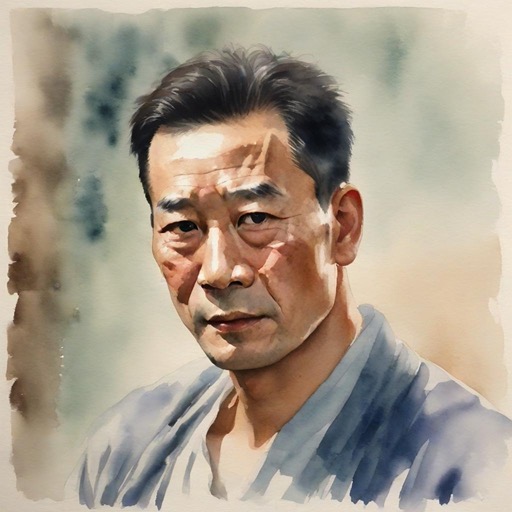}
        \includegraphics[width=.14\linewidth]{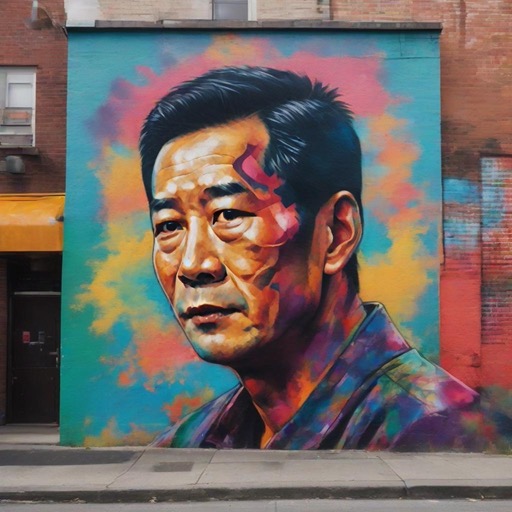}
        \includegraphics[width=.14\linewidth]{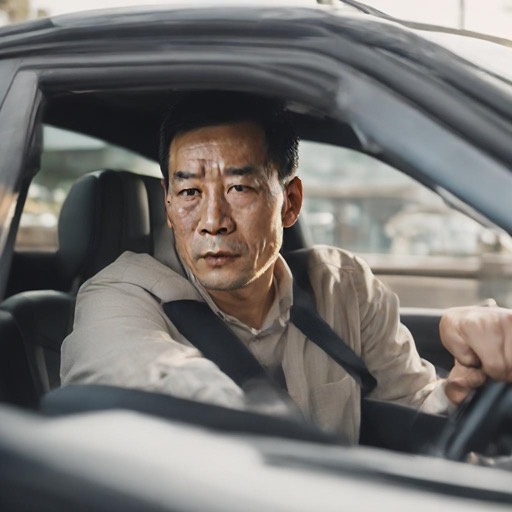}
        \includegraphics[width=.14\linewidth]{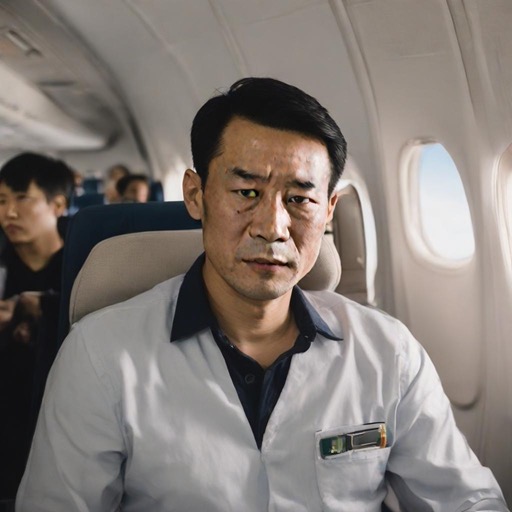}
    \end{subfigure}
    \begin{subfigure}[t]{\linewidth}
        \makebox[.1\linewidth]{Input}\hfill
        \makebox[.14\linewidth]{\begin{tabular}{c}stained glass\\window\end{tabular}}
        \makebox[.14\linewidth]{\begin{tabular}{c}painting in\\Van Gogh style\end{tabular}}
        \makebox[.14\linewidth]{\begin{tabular}{c}watercolor\\painting\end{tabular}}
        \makebox[.14\linewidth]{\begin{tabular}{c}colorful mural\\on a street wall\end{tabular}}
        \makebox[.14\linewidth]{\begin{tabular}{c}driving a car\end{tabular}}
        \makebox[.14\linewidth]{\begin{tabular}{c}buckled in his\\seat on a plane\end{tabular}}
    \end{subfigure}
    \begin{subfigure}[t]{\linewidth}
        \raisebox{.019\linewidth}{\includegraphics[width=.1\linewidth]{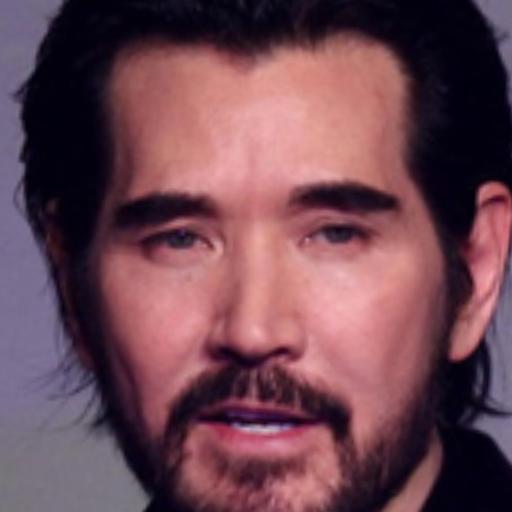}}\hfill
        \includegraphics[width=.14\linewidth]{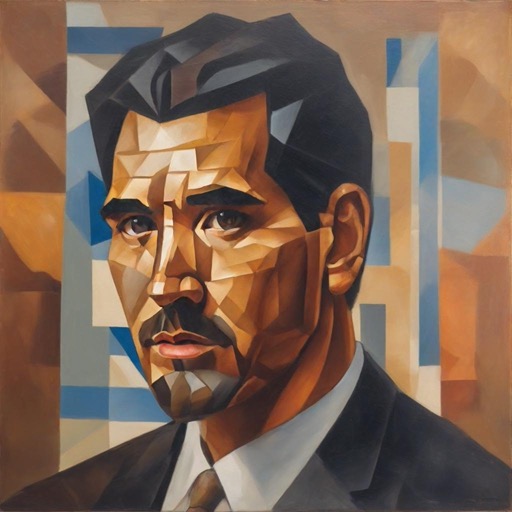}
        \includegraphics[width=.14\linewidth]{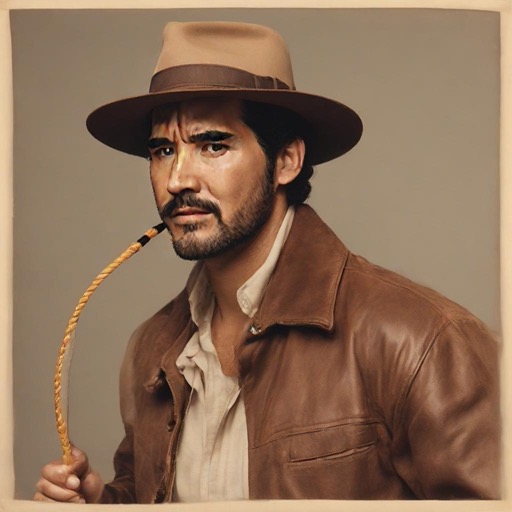}
        \includegraphics[width=.14\linewidth]{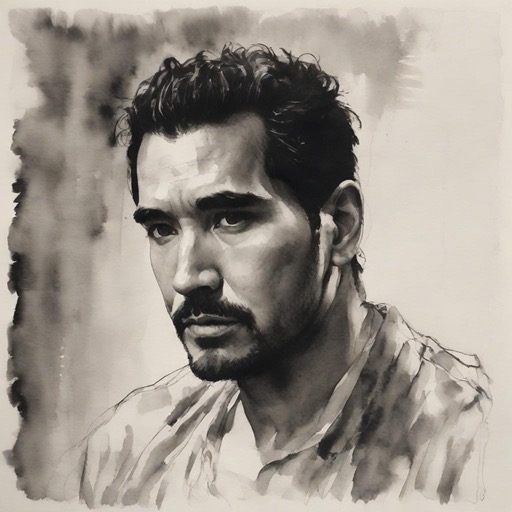}
        \includegraphics[width=.14\linewidth]{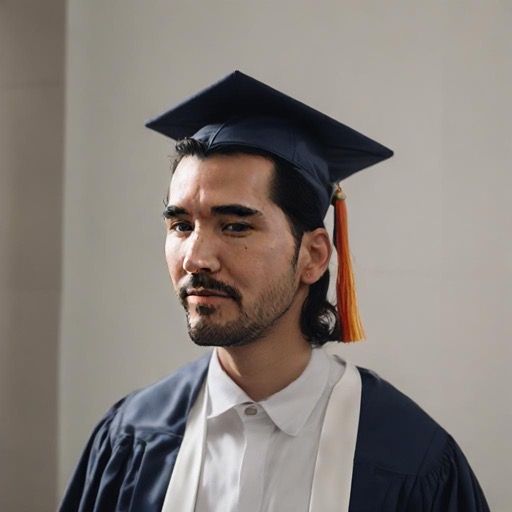}
        \includegraphics[width=.14\linewidth]{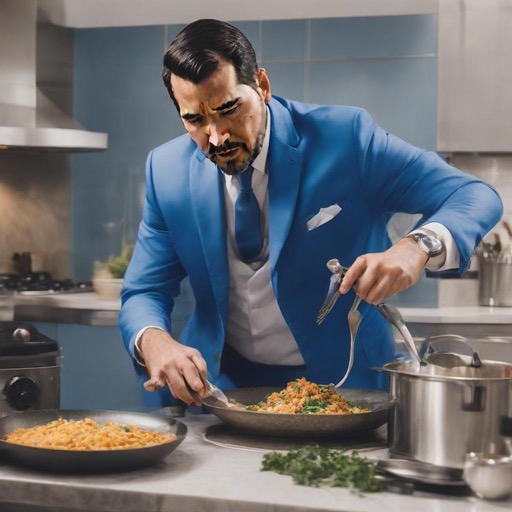}
        \includegraphics[width=.14\linewidth]{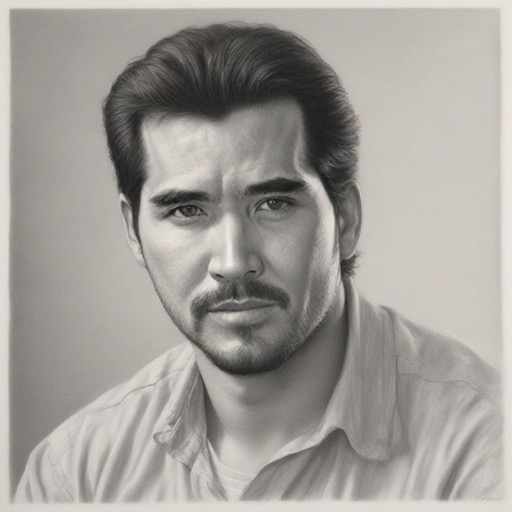}
    \end{subfigure}
    \begin{subfigure}[t]{\linewidth}
        \makebox[.1\linewidth]{Input}\hfill
        \makebox[.14\linewidth]{\begin{tabular}{c}cubism\\painting\end{tabular}}
        \makebox[.14\linewidth]{\begin{tabular}{c}wearing a brown\\jacket and a hat,\\holding a whip\end{tabular}}
        \makebox[.14\linewidth]{\begin{tabular}{c}ink painting\end{tabular}}
        \makebox[.14\linewidth]{\begin{tabular}{c}graduating after\\finishing PhD\end{tabular}}
        \makebox[.14\linewidth]{\begin{tabular}{c}cooking a\\gourmet meal\\in blue suit\end{tabular}}
        \makebox[.14\linewidth]{\begin{tabular}{c}pencil\\drawing\end{tabular}}
    \end{subfigure}
    \begin{subfigure}[t]{\linewidth}
        \raisebox{.019\linewidth}{\includegraphics[width=.1\linewidth]{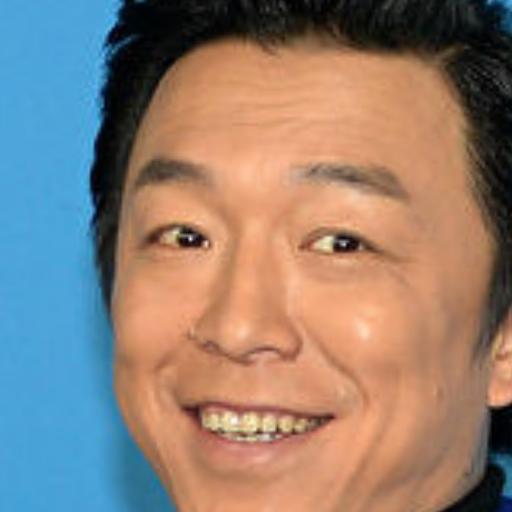}}\hfill
        \includegraphics[width=.14\linewidth]{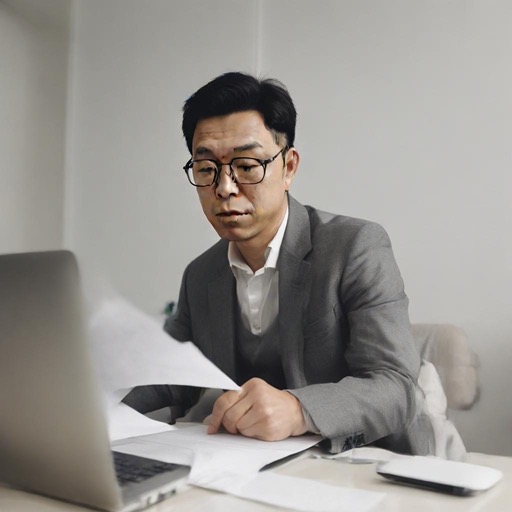}
        \includegraphics[width=.14\linewidth]{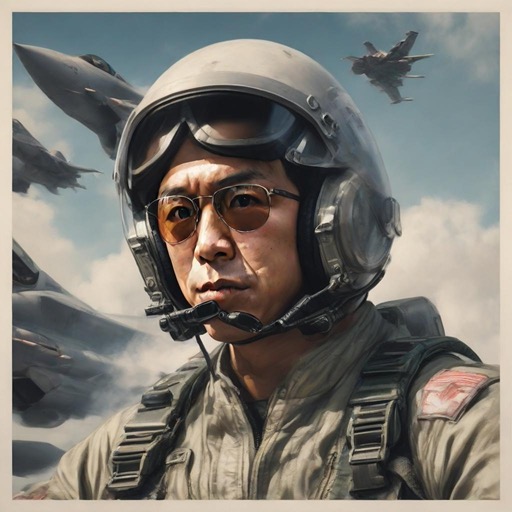}
        \includegraphics[width=.14\linewidth]{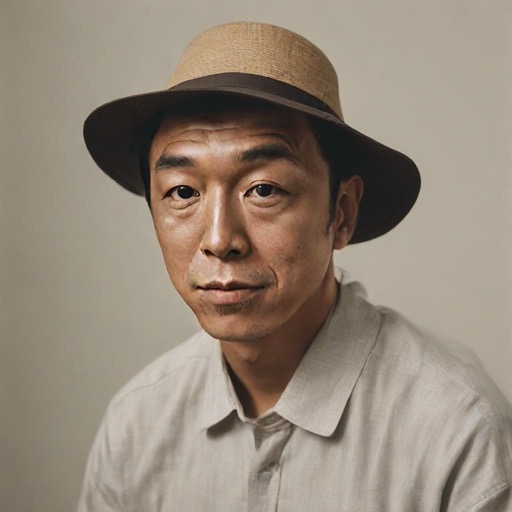}
        \includegraphics[width=.14\linewidth]{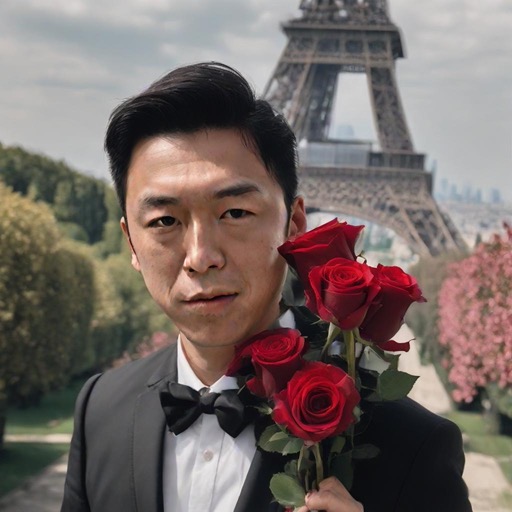}
        \includegraphics[width=.14\linewidth]{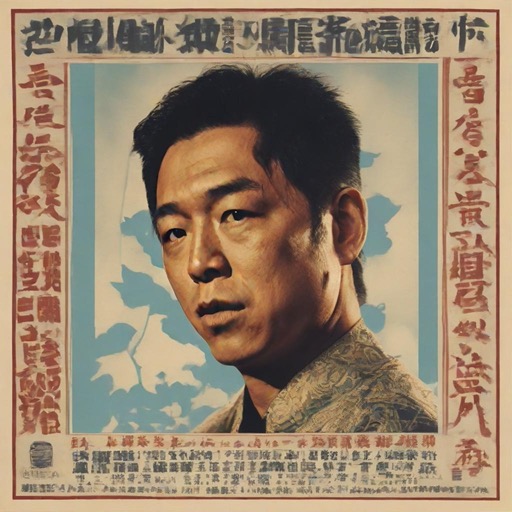}
        \includegraphics[width=.14\linewidth]{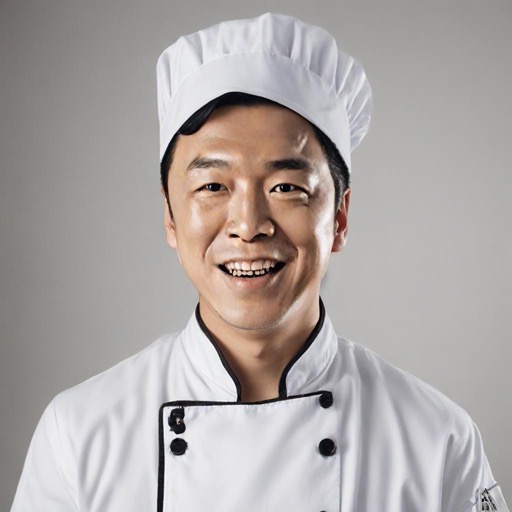}
    \end{subfigure}
    \begin{subfigure}[t]{\linewidth}
        \makebox[.1\linewidth]{Input}\hfill
        \makebox[.14\linewidth]{\begin{tabular}{c}typing a paper\\on a laptop\end{tabular}}
        \makebox[.14\linewidth]{\begin{tabular}{c}piloting a\\fighter jet\end{tabular}}
        \makebox[.14\linewidth]{\begin{tabular}{c}wearing a hat\end{tabular}}
        \makebox[.14\linewidth]{\begin{tabular}{c}holding roses\\in front of the\\Eiffel Tower\end{tabular}}
        \makebox[.14\linewidth]{\begin{tabular}{c}concert poster\end{tabular}}
        \makebox[.14\linewidth]{\begin{tabular}{c}in a chef outfit\end{tabular}}
    \end{subfigure}
    \begin{subfigure}[t]{\linewidth}
        \raisebox{.019\linewidth}{\includegraphics[width=.1\linewidth]{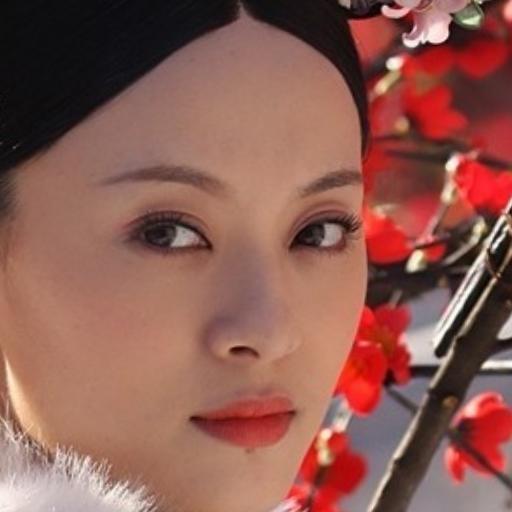}}\hfill
        \includegraphics[width=.14\linewidth]{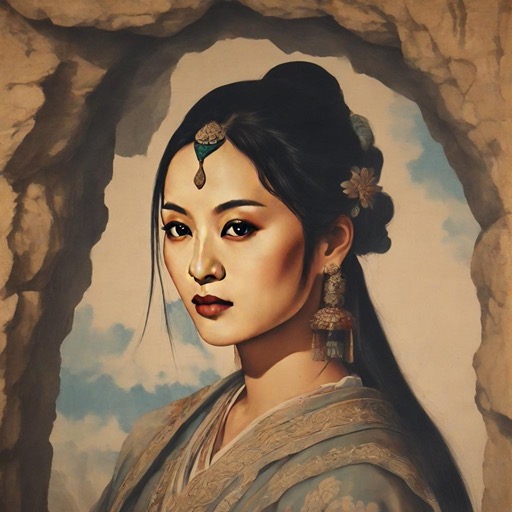}
        \includegraphics[width=.14\linewidth]{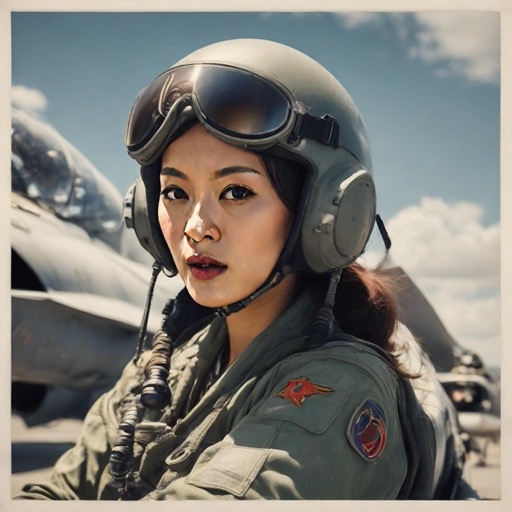}
        \includegraphics[width=.14\linewidth]{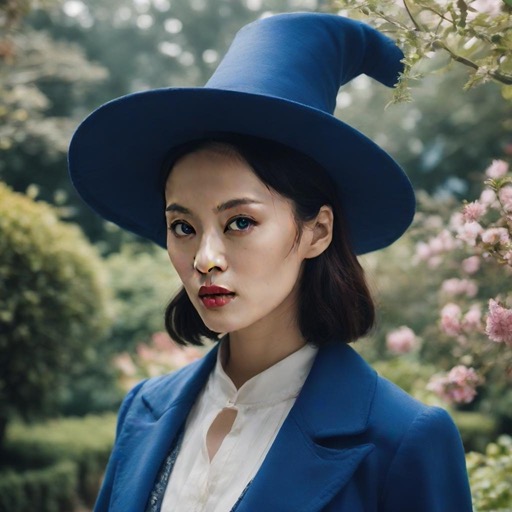}
        \includegraphics[width=.14\linewidth]{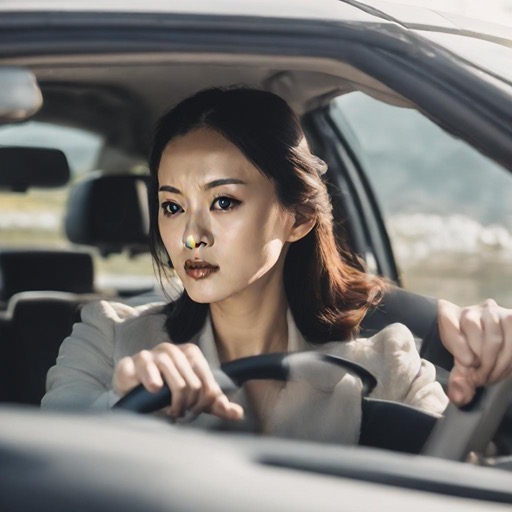}
        \includegraphics[width=.14\linewidth]{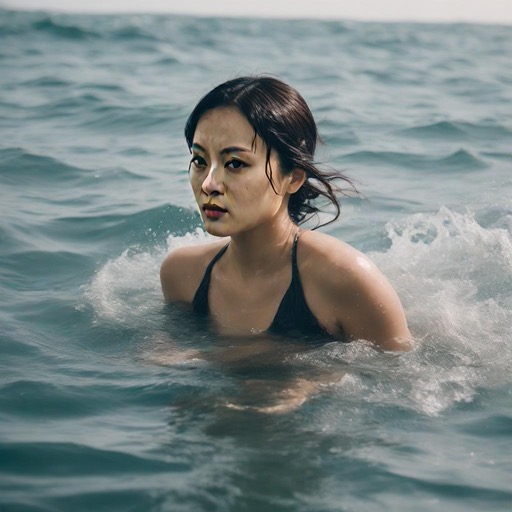}
        \includegraphics[width=.14\linewidth]{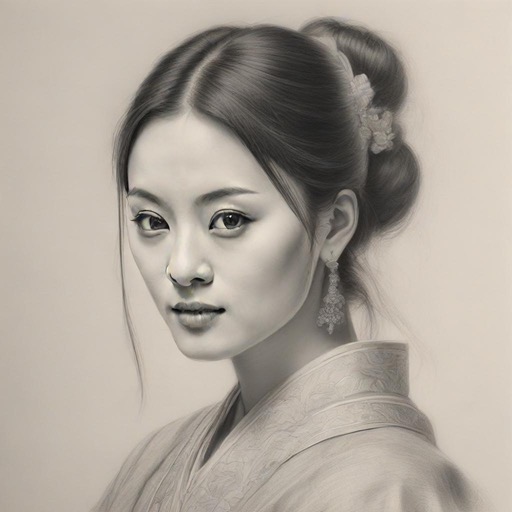}
    \end{subfigure}
    \begin{subfigure}[t]{\linewidth}
        \makebox[.1\linewidth]{Input}\hfill
        \makebox[.14\linewidth]{\begin{tabular}{c}cave mural\end{tabular}}
        \makebox[.14\linewidth]{\begin{tabular}{c}piloting a\\fighter jet\end{tabular}}
        \makebox[.14\linewidth]{\begin{tabular}{c}wear a magician\\hat and a blue\\coat in a garden\end{tabular}}
        \makebox[.14\linewidth]{\begin{tabular}{c}driving a car\end{tabular}}
        \makebox[.14\linewidth]{\begin{tabular}{c}swimming\end{tabular}}
        \makebox[.14\linewidth]{\begin{tabular}{c}pencil\\drawing\end{tabular}}
    \end{subfigure}
    \begin{subfigure}[t]{\linewidth}
        \raisebox{.019\linewidth}{\includegraphics[width=.1\linewidth]{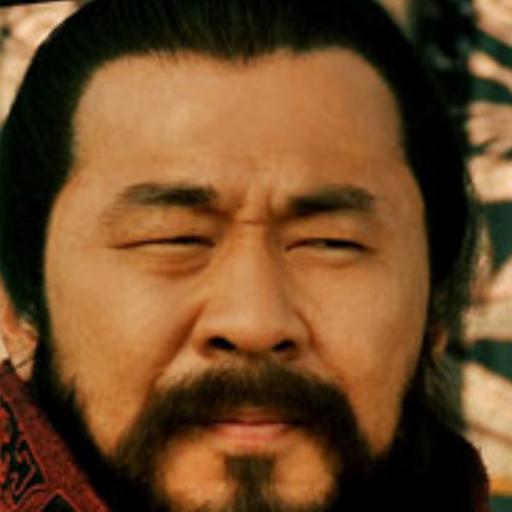}}\hfill
        \includegraphics[width=.14\linewidth]{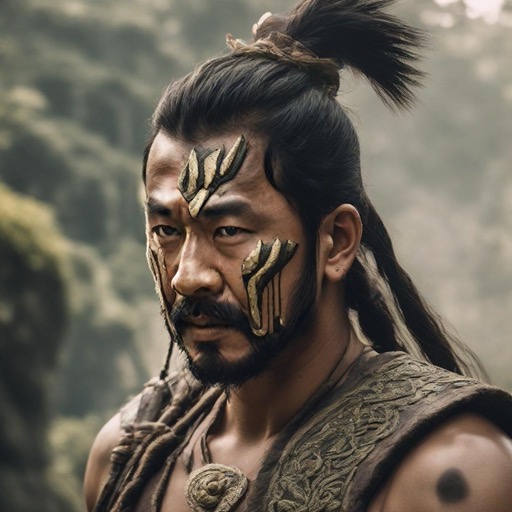}
        \includegraphics[width=.14\linewidth]{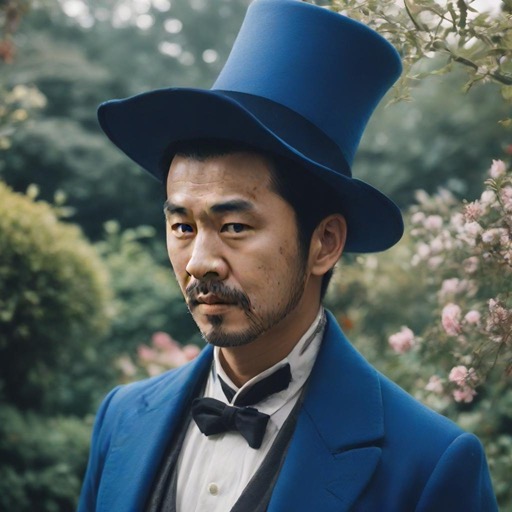}
        \includegraphics[width=.14\linewidth]{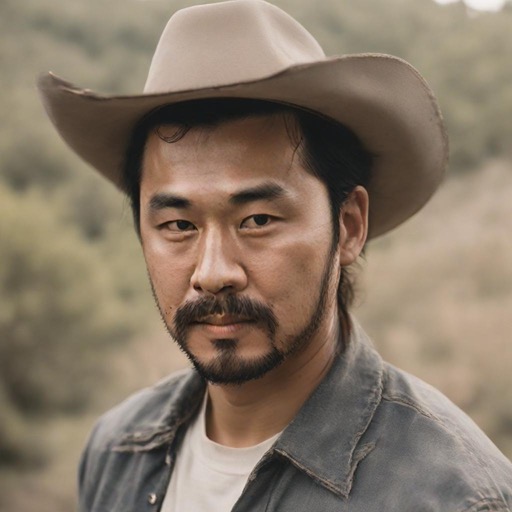}
        \includegraphics[width=.14\linewidth]{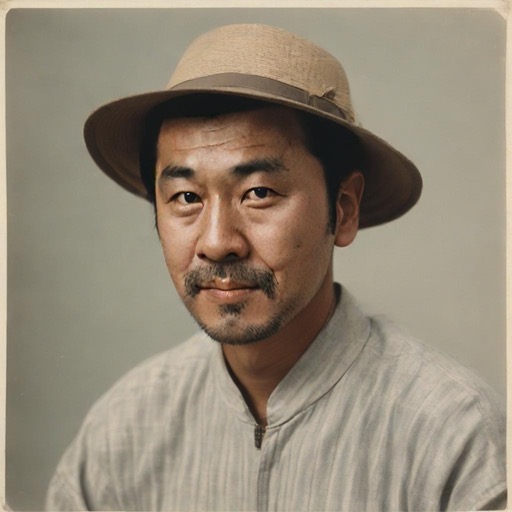}
        \includegraphics[width=.14\linewidth]{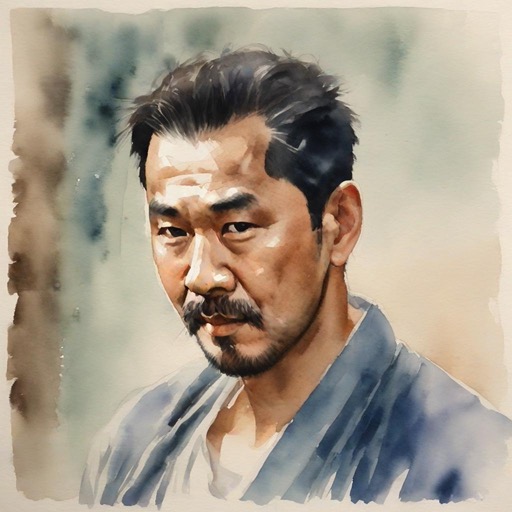}
        \includegraphics[width=.14\linewidth]{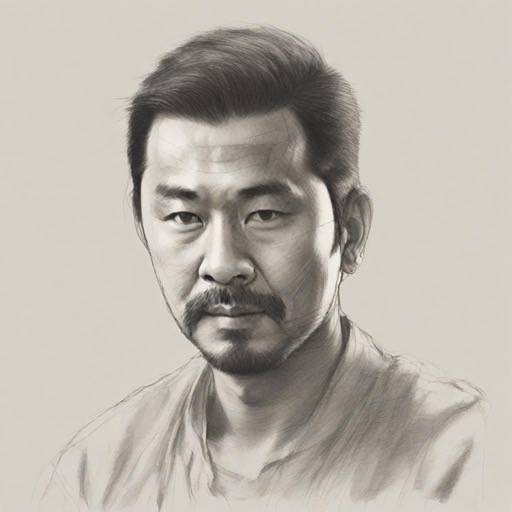}
    \end{subfigure}
    \begin{subfigure}[t]{\linewidth}
        \makebox[.1\linewidth]{Input}\hfill
        \makebox[.14\linewidth]{\begin{tabular}{c}as an amazon\\warrior\end{tabular}}
        \makebox[.14\linewidth]{\begin{tabular}{c}wear a magician\\hat and a blue\\coat in a garden\end{tabular}}
        \makebox[.14\linewidth]{\begin{tabular}{c}in a cowboy\\hat\end{tabular}}
        \makebox[.14\linewidth]{\begin{tabular}{c}wearing a hat\end{tabular}}
        \makebox[.14\linewidth]{\begin{tabular}{c}watercolor\\painting\end{tabular}}
        \makebox[.14\linewidth]{\begin{tabular}{c}pencil\\drawing\end{tabular}}
    \end{subfigure}
    \caption{Additional face-centric personalization results with piecewise rectified flow~\citep{yan2024perflow} based on SDXL~\citep{podell2024sdxl}. Our method is compatible with more advanced base models and provides sophisticated personalization results.}
    \label{fig:single_person_app3}
\end{figure}

\begin{figure}[t]
    \centering
    \footnotesize
    \begin{subfigure}[t]{\linewidth}
        \raisebox{.019\linewidth}{\includegraphics[width=.1\linewidth]{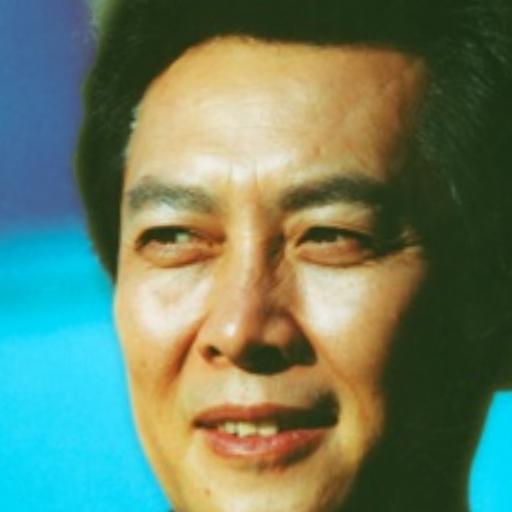}}\hfill
        \includegraphics[width=.14\linewidth]{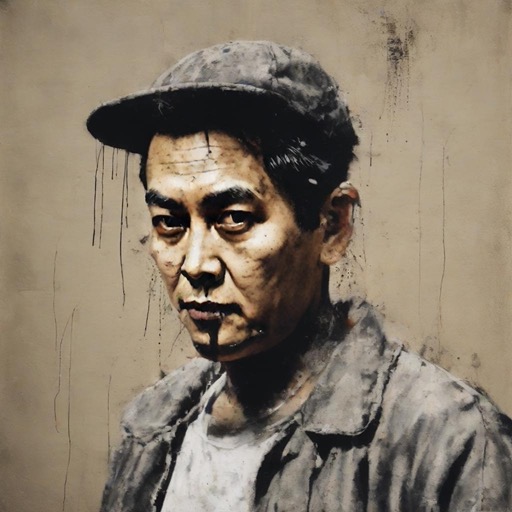}
        \includegraphics[width=.14\linewidth]{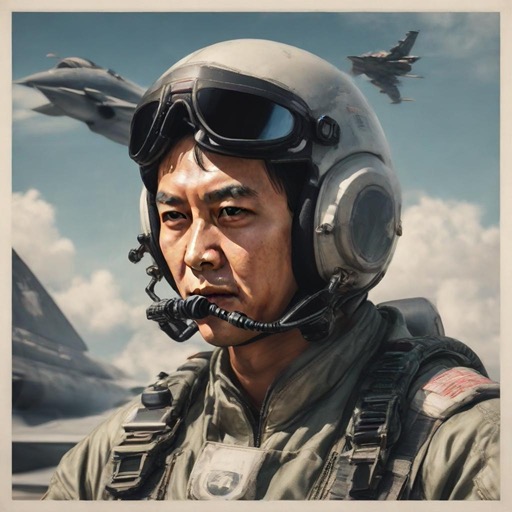}
        \includegraphics[width=.14\linewidth]{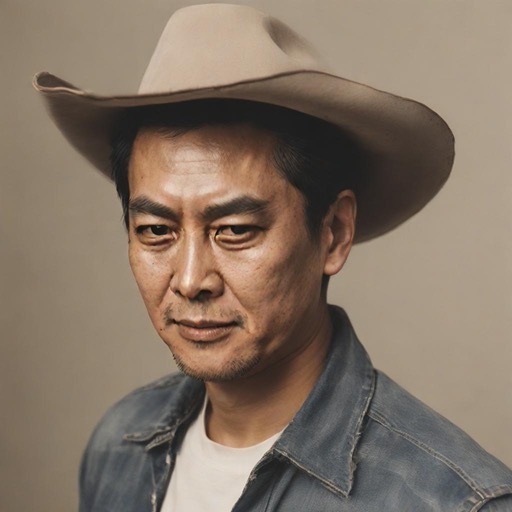}
        \includegraphics[width=.14\linewidth]{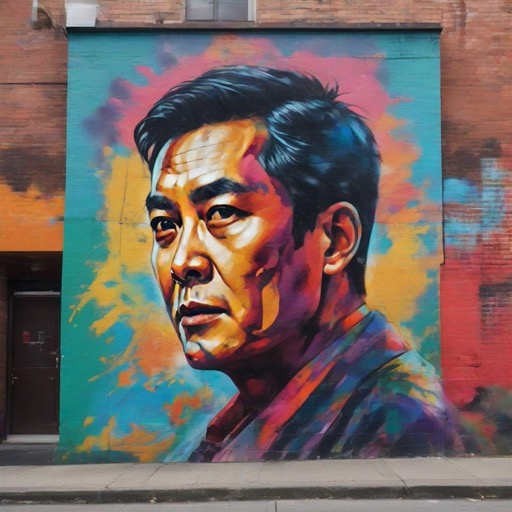}
        \includegraphics[width=.14\linewidth]{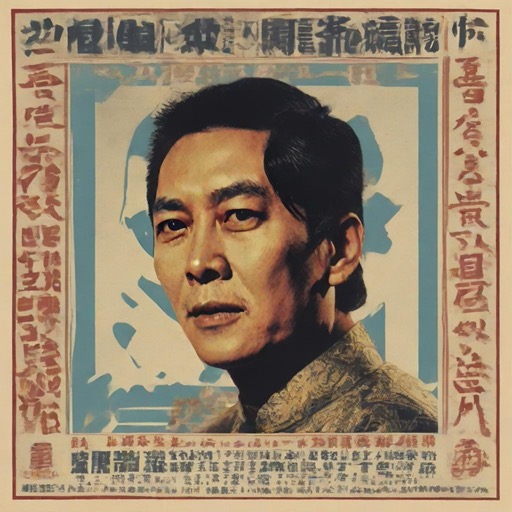}
        \includegraphics[width=.14\linewidth]{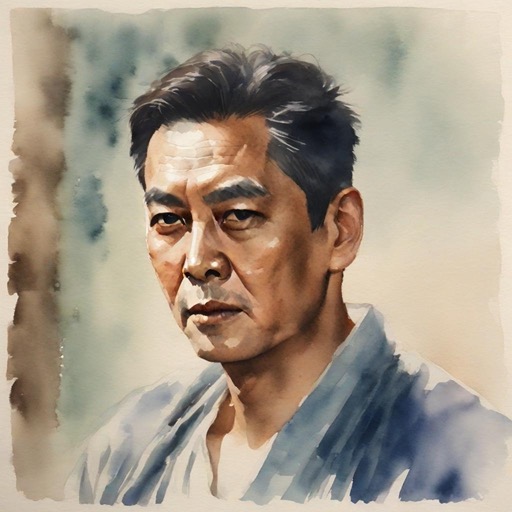}
    \end{subfigure}
    \begin{subfigure}[t]{\linewidth}
        \makebox[.1\linewidth]{Input}\hfill
        \makebox[.14\linewidth]{\begin{tabular}{c}Banksy art\end{tabular}}
        \makebox[.14\linewidth]{\begin{tabular}{c}piloting a\\fighter jet\end{tabular}}
        \makebox[.14\linewidth]{\begin{tabular}{c}in a cowboy\\hat\end{tabular}}
        \makebox[.14\linewidth]{\begin{tabular}{c}colorful mural\\on a street wall\end{tabular}}
        \makebox[.14\linewidth]{\begin{tabular}{c}concert poster\end{tabular}}
        \makebox[.14\linewidth]{\begin{tabular}{c}watercolor\\painting\end{tabular}}
    \end{subfigure}
    \begin{subfigure}[t]{\linewidth}
        \raisebox{.019\linewidth}{\includegraphics[width=.1\linewidth]{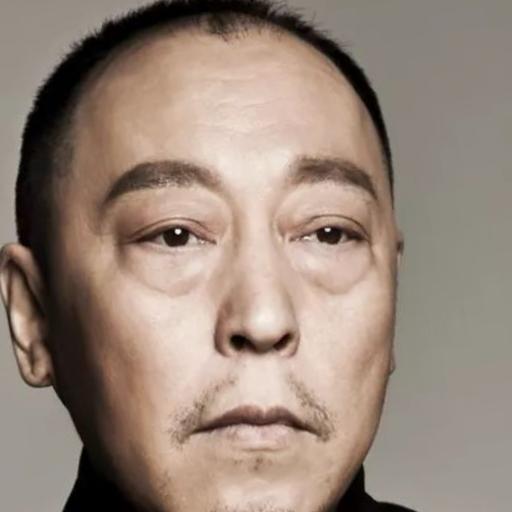}}\hfill
        \includegraphics[width=.14\linewidth]{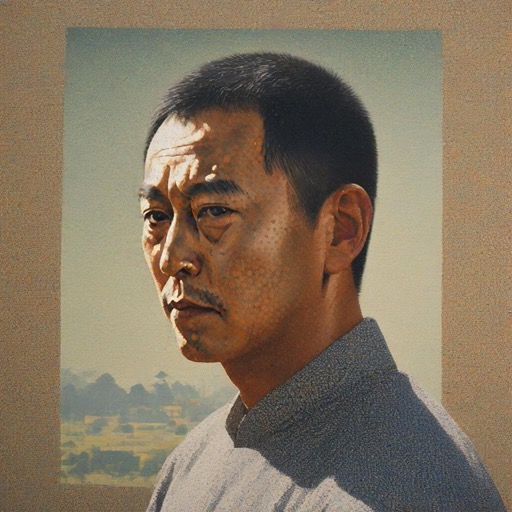}
        \includegraphics[width=.14\linewidth]{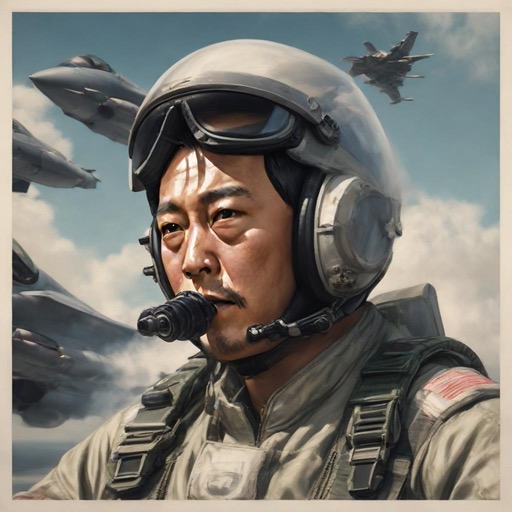}
        \includegraphics[width=.14\linewidth]{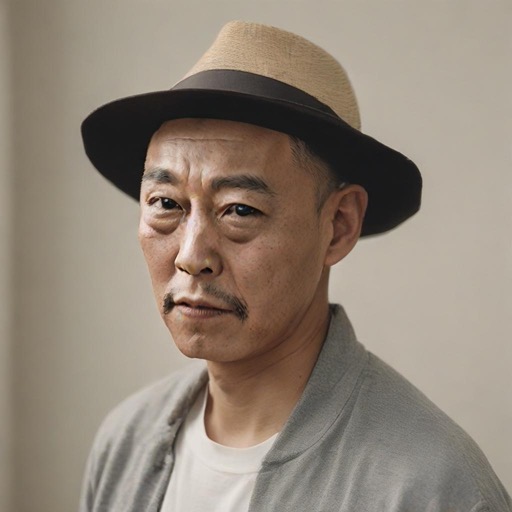}
        \includegraphics[width=.14\linewidth]{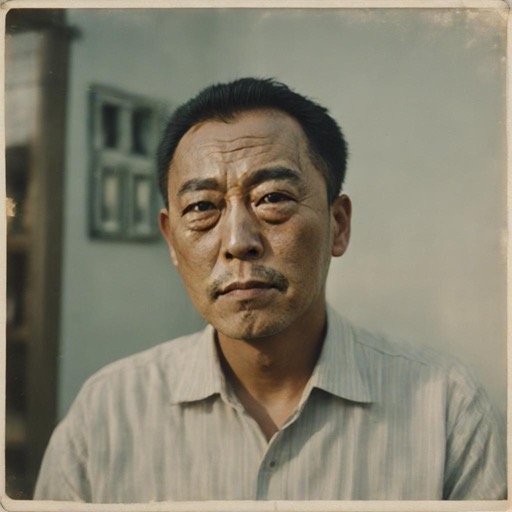}
        \includegraphics[width=.14\linewidth]{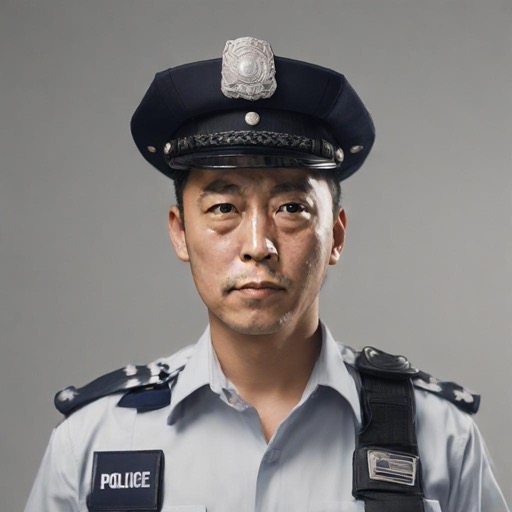}
        \includegraphics[width=.14\linewidth]{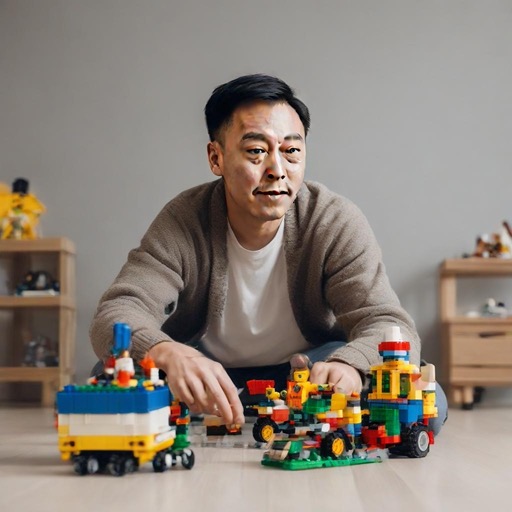}
    \end{subfigure}
    \begin{subfigure}[t]{\linewidth}
        \makebox[.1\linewidth]{Input}\hfill
        \makebox[.14\linewidth]{\begin{tabular}{c}pointillism\\painting\end{tabular}}
        \makebox[.14\linewidth]{\begin{tabular}{c}piloting a\\fighter jet\end{tabular}}
        \makebox[.14\linewidth]{\begin{tabular}{c}wearing a hat\end{tabular}}
        \makebox[.14\linewidth]{\begin{tabular}{c}faded film\end{tabular}}
        \makebox[.14\linewidth]{\begin{tabular}{c}in a police\\outfit\end{tabular}}
        \makebox[.14\linewidth]{\begin{tabular}{c}playing the\\LEGO toys\end{tabular}}
    \end{subfigure}
    \begin{subfigure}[t]{\linewidth}
        \raisebox{.019\linewidth}{\includegraphics[width=.1\linewidth]{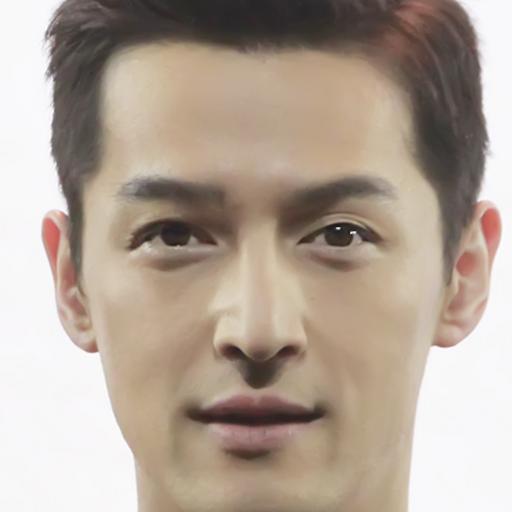}}\hfill
        \includegraphics[width=.14\linewidth]{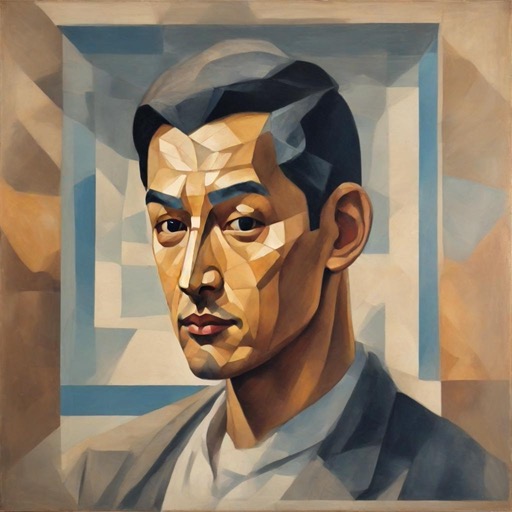}
        \includegraphics[width=.14\linewidth]{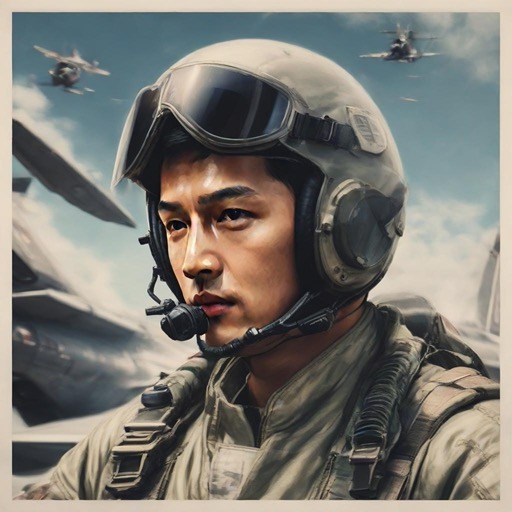}
        \includegraphics[width=.14\linewidth]{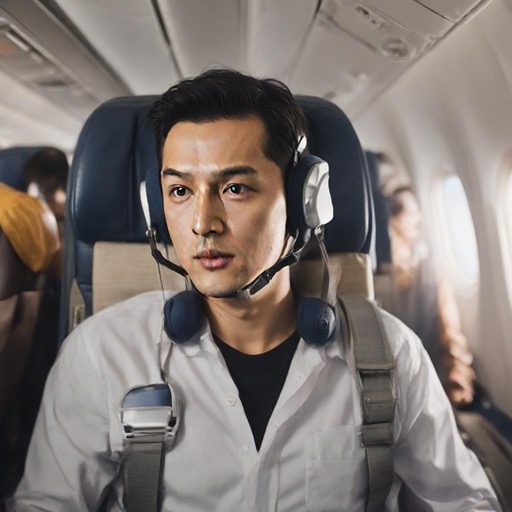}
        \includegraphics[width=.14\linewidth]{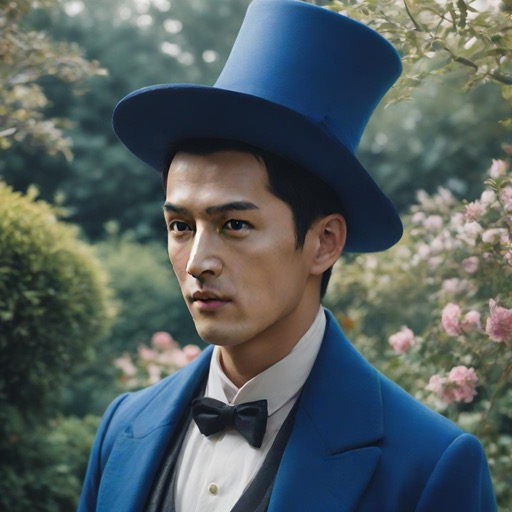}
        \includegraphics[width=.14\linewidth]{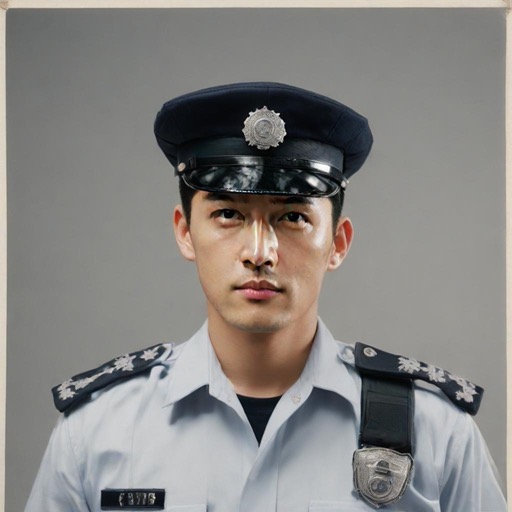}
        \includegraphics[width=.14\linewidth]{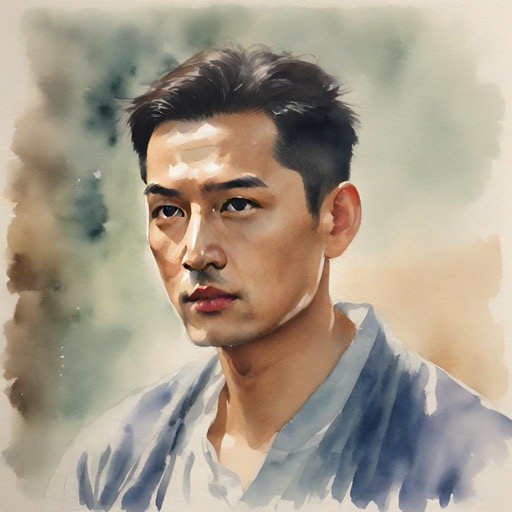}
    \end{subfigure}
    \begin{subfigure}[t]{\linewidth}
        \makebox[.1\linewidth]{Input}\hfill
        \makebox[.14\linewidth]{\begin{tabular}{c}cubism\\painting\end{tabular}}
        \makebox[.14\linewidth]{\begin{tabular}{c}piloting a\\fighter jet\end{tabular}}
        \makebox[.14\linewidth]{\begin{tabular}{c}buckled in his\\seat on a plane\end{tabular}}
        \makebox[.14\linewidth]{\begin{tabular}{c}wear a magician\\hat and a blue\\coat in a garden\end{tabular}}
        \makebox[.14\linewidth]{\begin{tabular}{c}in a police\\outfit\end{tabular}}
        \makebox[.14\linewidth]{\begin{tabular}{c}watercolor\\painting\end{tabular}}
    \end{subfigure}
    \begin{subfigure}[t]{\linewidth}
        \raisebox{.019\linewidth}{\includegraphics[width=.1\linewidth]{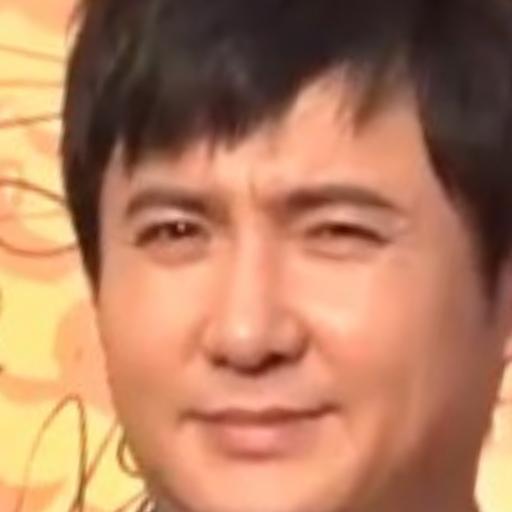}}\hfill
        \includegraphics[width=.14\linewidth]{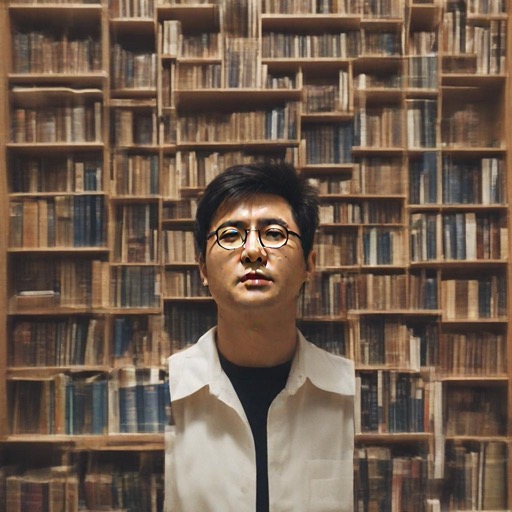}
        \includegraphics[width=.14\linewidth]{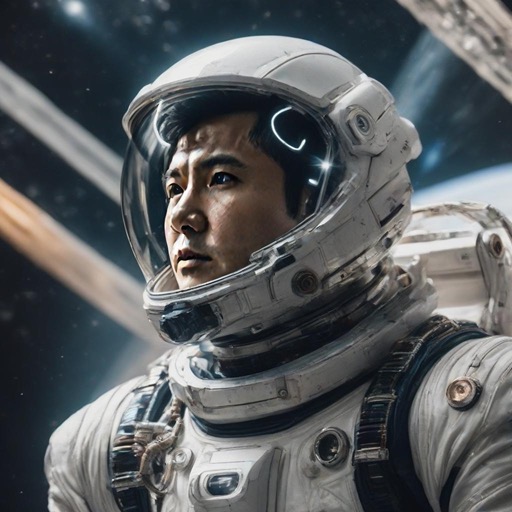}
        \includegraphics[width=.14\linewidth]{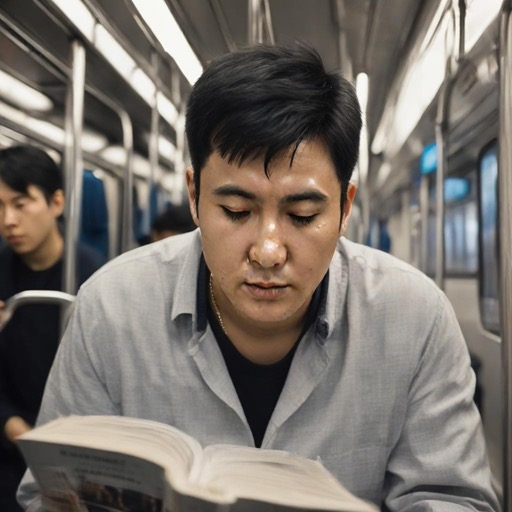}
        \includegraphics[width=.14\linewidth]{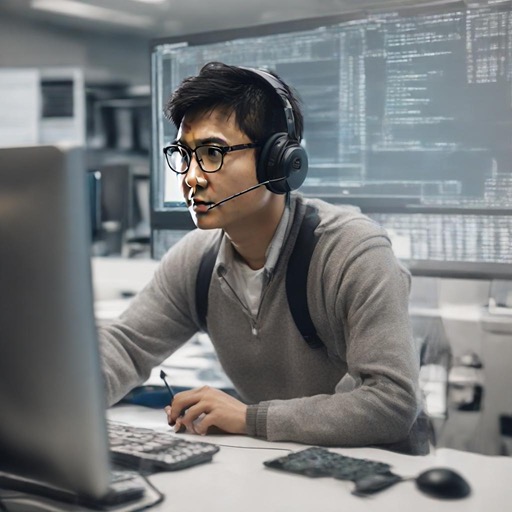}
        \includegraphics[width=.14\linewidth]{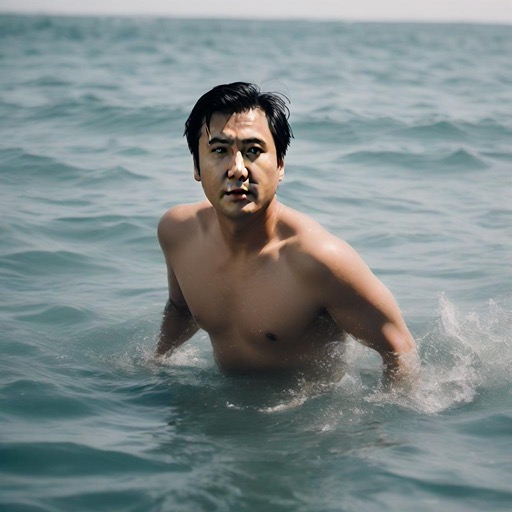}
        \includegraphics[width=.14\linewidth]{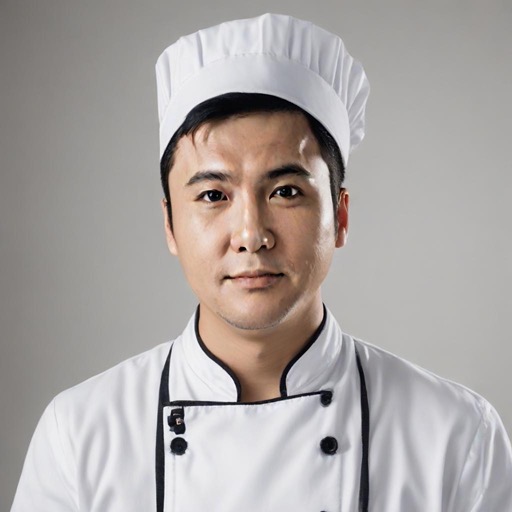}
    \end{subfigure}
    \begin{subfigure}[t]{\linewidth}
        \makebox[.1\linewidth]{Input}\hfill
        \makebox[.14\linewidth]{\begin{tabular}{c}surrounded by\\bookshelves\end{tabular}}
        \makebox[.14\linewidth]{\begin{tabular}{c}wearing a\\scifi spacesuit\\in space\end{tabular}}
        \makebox[.14\linewidth]{\begin{tabular}{c}reading on\\the train\end{tabular}}
        \makebox[.14\linewidth]{\begin{tabular}{c}coding in front\\of a computer\end{tabular}}
        \makebox[.14\linewidth]{\begin{tabular}{c}swimming\end{tabular}}
        \makebox[.14\linewidth]{\begin{tabular}{c}in a chef outfit\end{tabular}}
    \end{subfigure}
    \begin{subfigure}[t]{\linewidth}
        \raisebox{.019\linewidth}{\includegraphics[width=.1\linewidth]{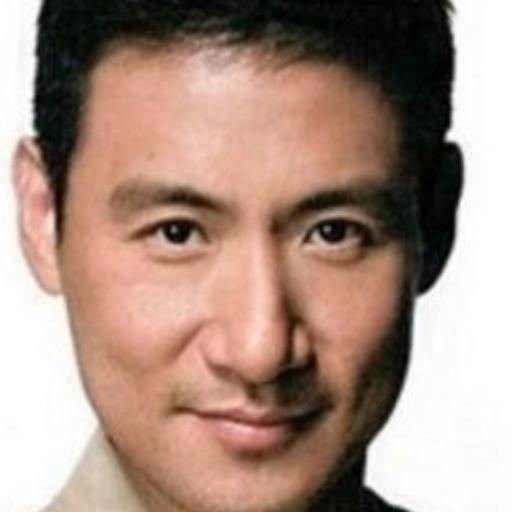}}\hfill
        \includegraphics[width=.14\linewidth]{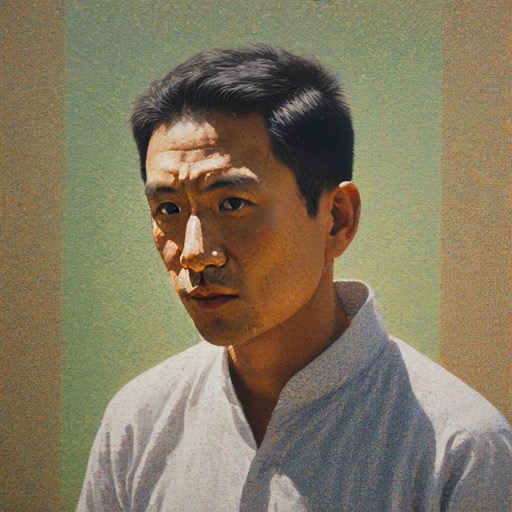}
        \includegraphics[width=.14\linewidth]{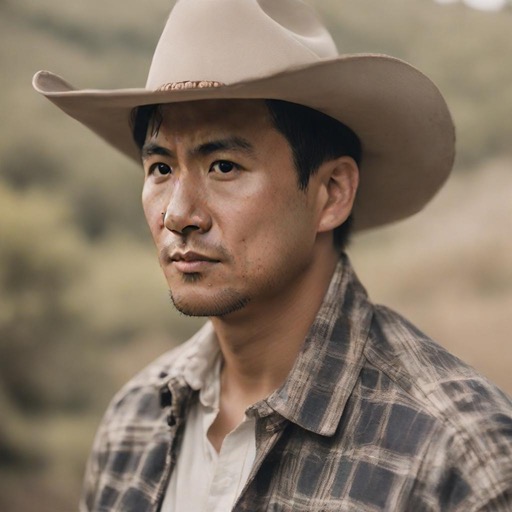}
        \includegraphics[width=.14\linewidth]{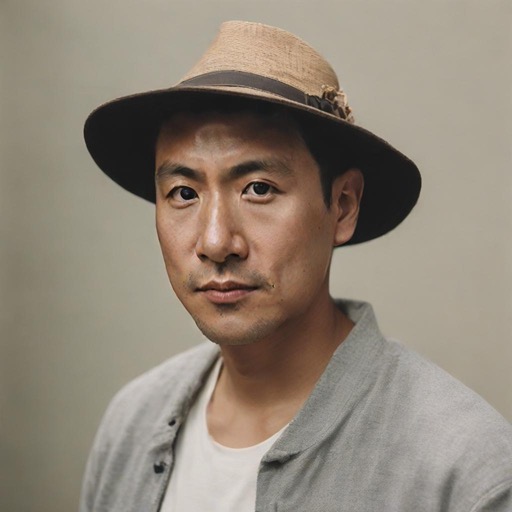}
        \includegraphics[width=.14\linewidth]{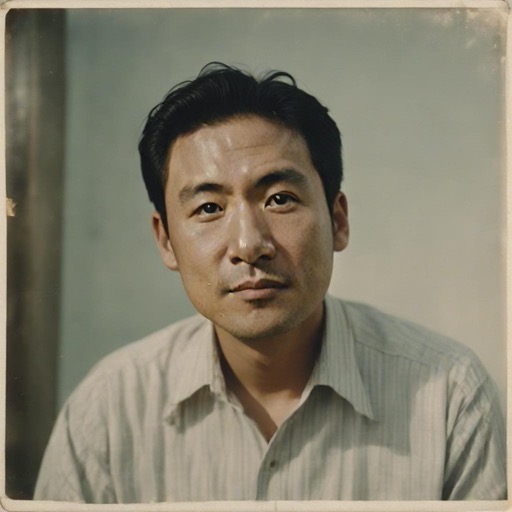}
        \includegraphics[width=.14\linewidth]{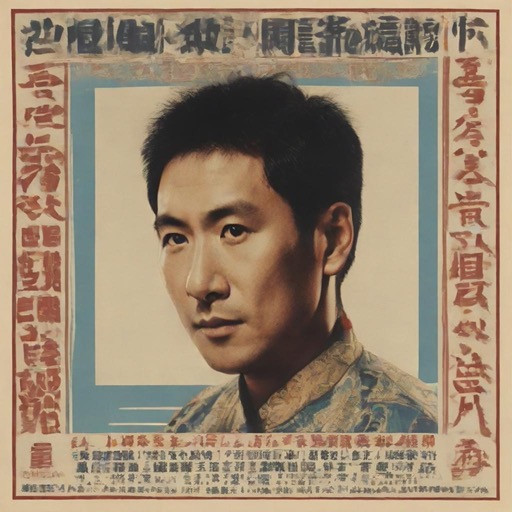}
        \includegraphics[width=.14\linewidth]{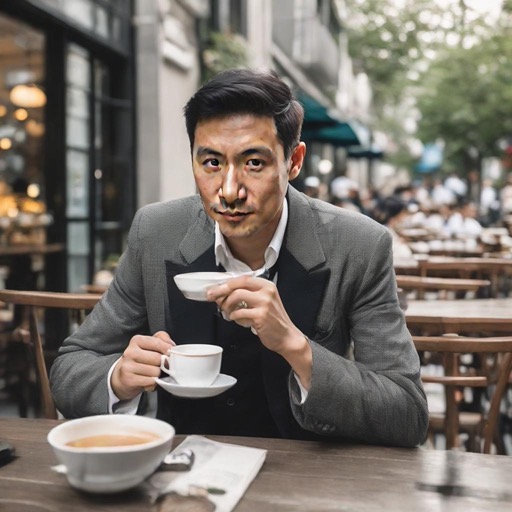}
    \end{subfigure}
    \begin{subfigure}[t]{\linewidth}
        \makebox[.1\linewidth]{Input}\hfill
        \makebox[.14\linewidth]{\begin{tabular}{c}pointillism\\painting\end{tabular}}
        \makebox[.14\linewidth]{\begin{tabular}{c}in a cowboy\\hat\end{tabular}}
        \makebox[.14\linewidth]{\begin{tabular}{c}wearing a hat\end{tabular}}
        \makebox[.14\linewidth]{\begin{tabular}{c}faded film\end{tabular}}
        \makebox[.14\linewidth]{\begin{tabular}{c}concert poster\end{tabular}}
        \makebox[.14\linewidth]{\begin{tabular}{c}sipping coffee\\at a cafe\end{tabular}}
    \end{subfigure}
    \begin{subfigure}[t]{\linewidth}
        \raisebox{.019\linewidth}{\includegraphics[width=.1\linewidth]{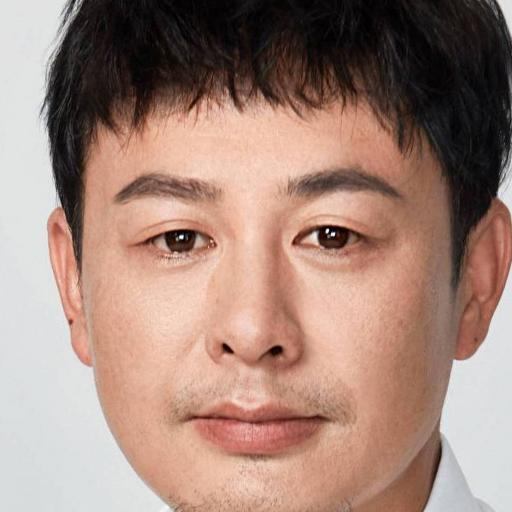}}\hfill
        \includegraphics[width=.14\linewidth]{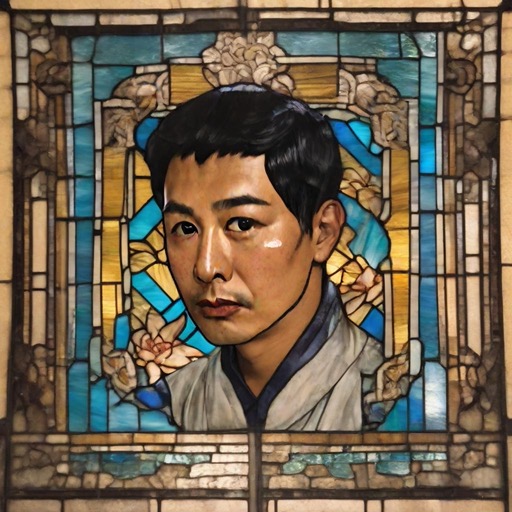}
        \includegraphics[width=.14\linewidth]{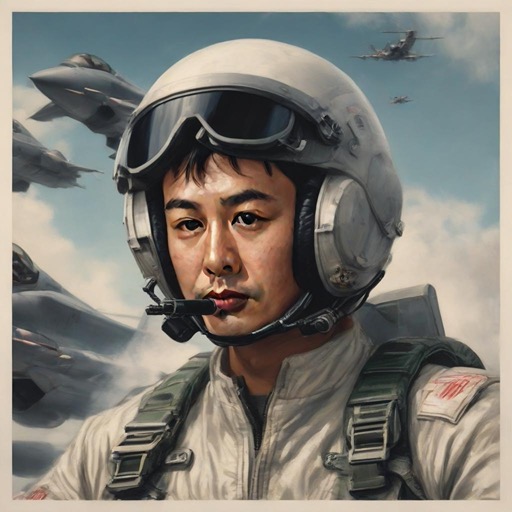}
        \includegraphics[width=.14\linewidth]{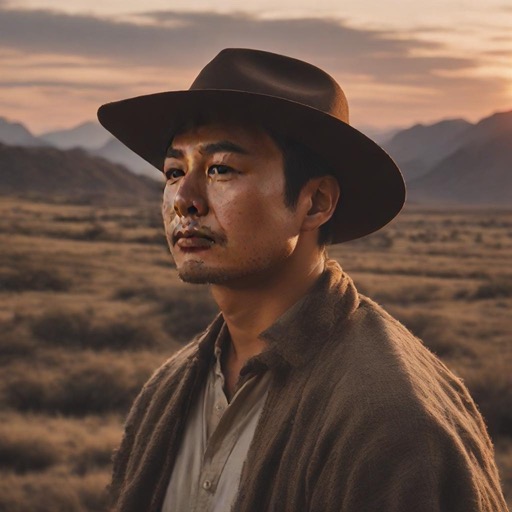}
        \includegraphics[width=.14\linewidth]{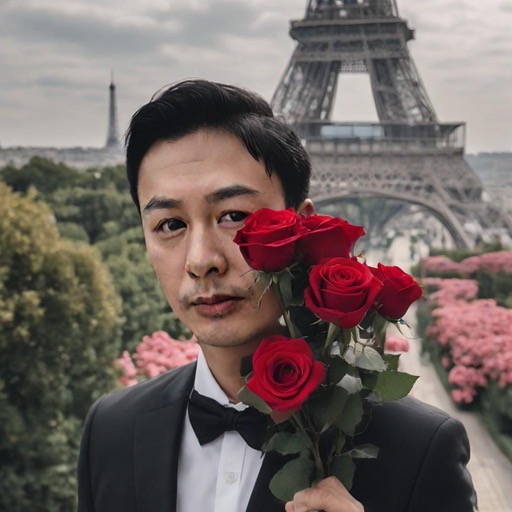}
        \includegraphics[width=.14\linewidth]{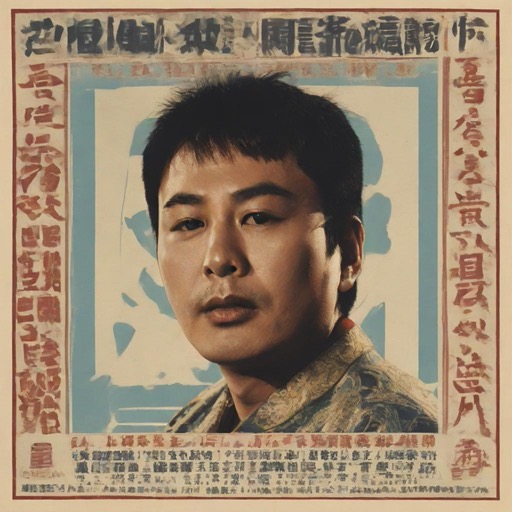}
        \includegraphics[width=.14\linewidth]{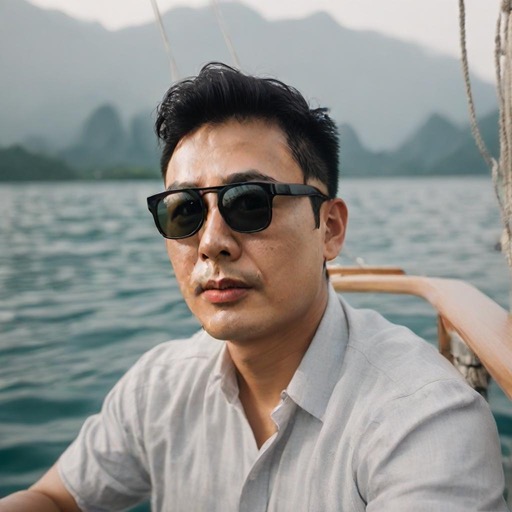}
    \end{subfigure}
    \begin{subfigure}[t]{\linewidth}
        \makebox[.1\linewidth]{Input}\hfill
        \makebox[.14\linewidth]{\begin{tabular}{c}stained glass\\window\end{tabular}}
        \makebox[.14\linewidth]{\begin{tabular}{c}piloting a\\fighter jet\end{tabular}}
        \makebox[.14\linewidth]{\begin{tabular}{c}western vibes,\\sunset, rugged\\landscape\end{tabular}}
        \makebox[.14\linewidth]{\begin{tabular}{c}holding roses\\in front of the\\Eiffel Tower\end{tabular}}
        \makebox[.14\linewidth]{\begin{tabular}{c}concert poster\end{tabular}}
        \makebox[.14\linewidth]{\begin{tabular}{c}wearing a\\sunglass\\on a boat\end{tabular}}
    \end{subfigure}
    \begin{subfigure}[t]{\linewidth}
        \raisebox{.019\linewidth}{\includegraphics[width=.1\linewidth]{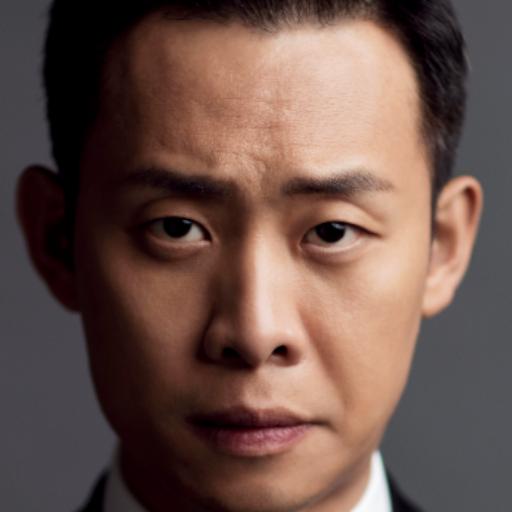}}\hfill
        \includegraphics[width=.14\linewidth]{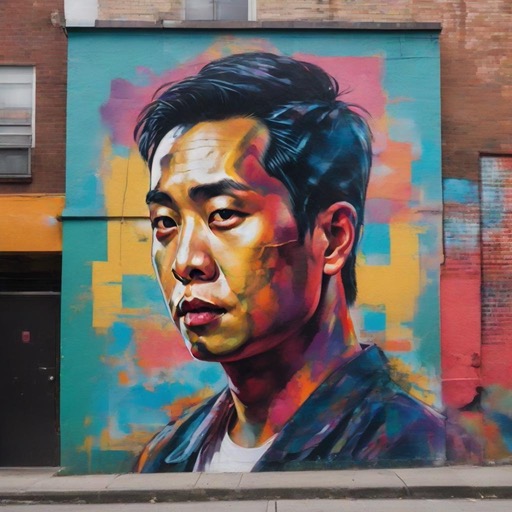}
        \includegraphics[width=.14\linewidth]{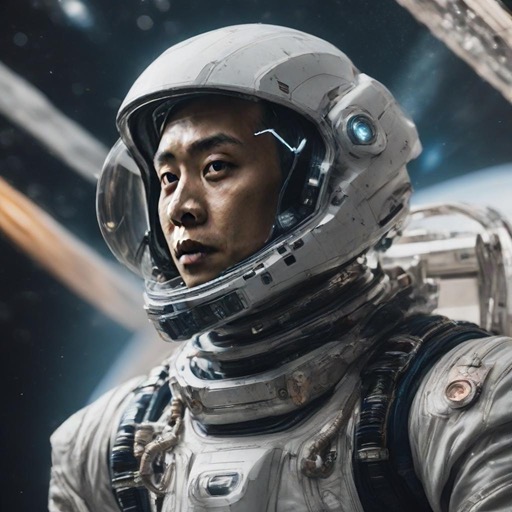}
        \includegraphics[width=.14\linewidth]{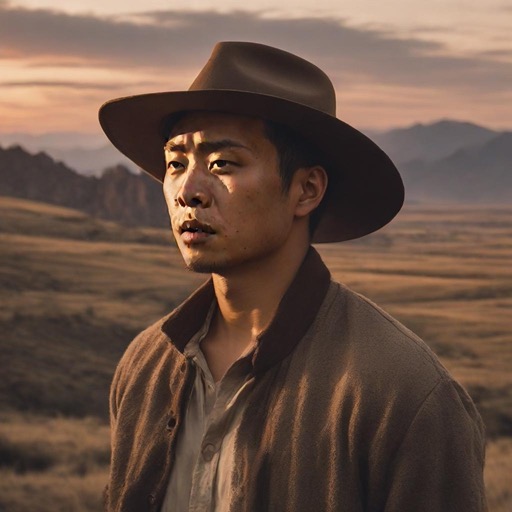}
        \includegraphics[width=.14\linewidth]{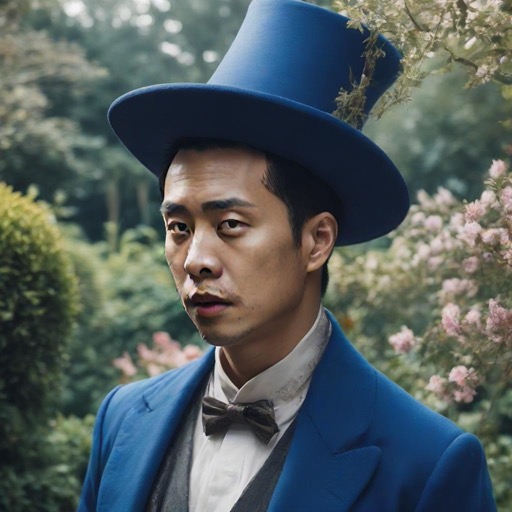}
        \includegraphics[width=.14\linewidth]{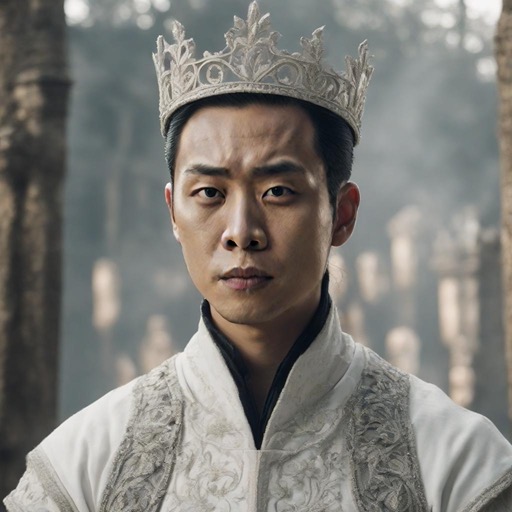}
        \includegraphics[width=.14\linewidth]{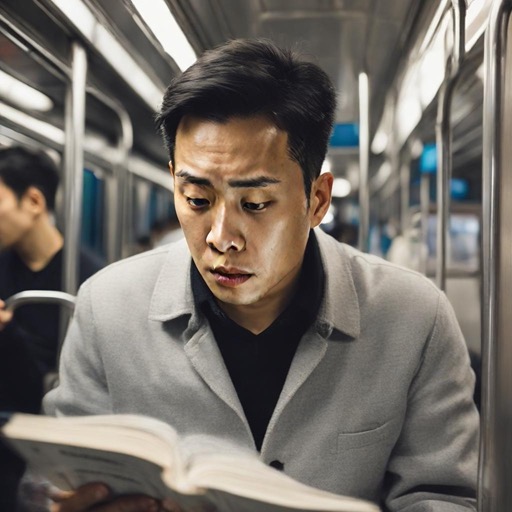}
    \end{subfigure}
    \begin{subfigure}[t]{\linewidth}
        \makebox[.1\linewidth]{Input}\hfill
        \makebox[.14\linewidth]{\begin{tabular}{c}colorful mural\\on a street wall\end{tabular}}
        \makebox[.14\linewidth]{\begin{tabular}{c}wearing a\\scifi spacesuit\\in space\end{tabular}}
        \makebox[.14\linewidth]{\begin{tabular}{c}western vibes,\\sunset, rugged\\landscape\end{tabular}}
        \makebox[.14\linewidth]{\begin{tabular}{c}wear a magician\\hat and a blue\\coat in a garden\end{tabular}}
        \makebox[.14\linewidth]{\begin{tabular}{c}as White\\Queen\end{tabular}}
        \makebox[.14\linewidth]{\begin{tabular}{c}reading on\\the train\end{tabular}}
    \end{subfigure}
    \caption{Additional face-centric personalization results with piecewise rectified flow~\citep{yan2024perflow} based on SDXL~\citep{podell2024sdxl}. Our method is compatible with more advanced base models and provides sophisticated personalization results.}
    \label{fig:single_person_app4}
\end{figure}

\begin{figure}[t]
    \centering
    \footnotesize
    \begin{subfigure}[t]{\linewidth}
        \raisebox{.019\linewidth}{\includegraphics[width=.1\linewidth]{figs/ref_person/chalamet.jpg}}\hfill
        \includegraphics[width=.14\linewidth]{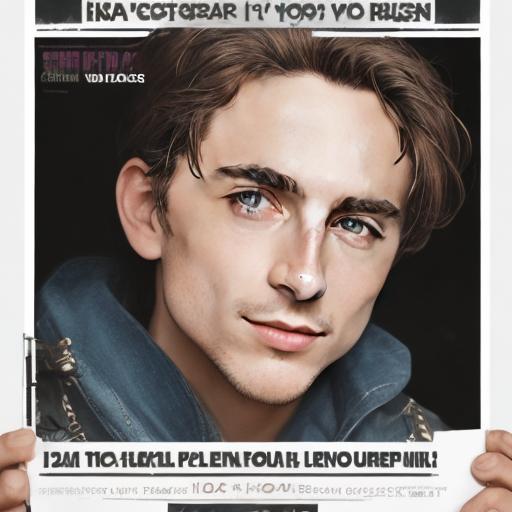}
        \includegraphics[width=.14\linewidth]{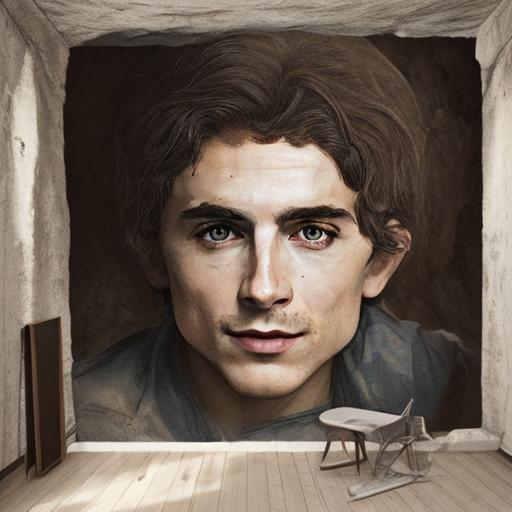}
        \includegraphics[width=.14\linewidth]{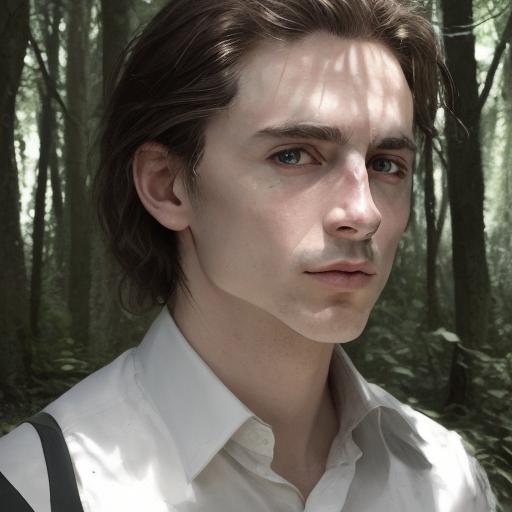}
        \includegraphics[width=.14\linewidth]{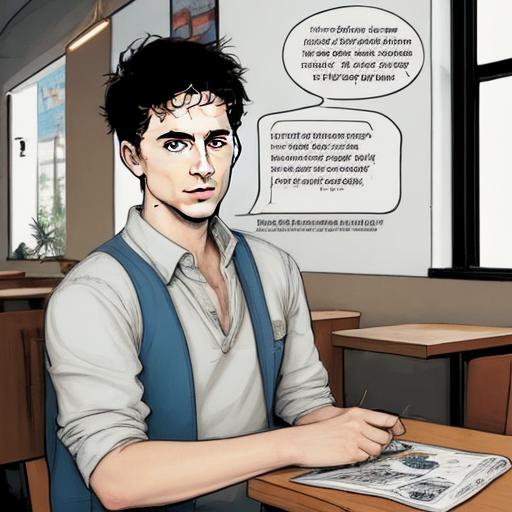}
        \includegraphics[width=.14\linewidth]{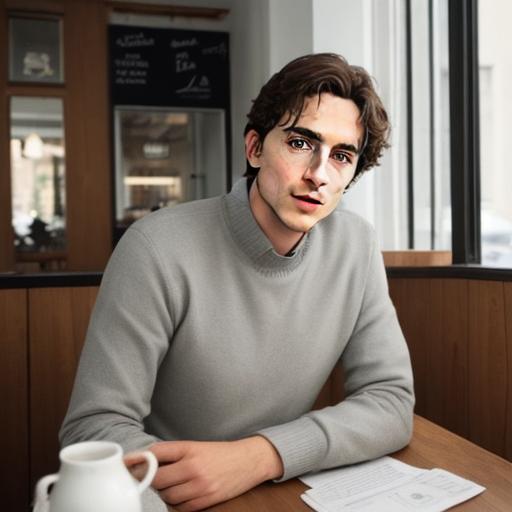}
        \includegraphics[width=.14\linewidth]{figs/perflow/single_person/chalamet_hat.jpg}
    \end{subfigure}
    \begin{subfigure}[t]{\linewidth}
        \makebox[.1\linewidth]{Input}\hfill
        \makebox[.14\linewidth]{\begin{tabular}{c}concert poster\end{tabular}}
        \makebox[.14\linewidth]{\begin{tabular}{c}cave mural\end{tabular}}
        \makebox[.14\linewidth]{\begin{tabular}{c}in the jungle\end{tabular}}
        \makebox[.14\linewidth]{\begin{tabular}{c}sitting in the\\cafe, comic\end{tabular}}
        \makebox[.14\linewidth]{\begin{tabular}{c}sitting in a\\cozy cafe\end{tabular}}
        \makebox[.14\linewidth]{\begin{tabular}{c}in a cowboy\\hat\end{tabular}}
    \end{subfigure}
    \begin{subfigure}[t]{\linewidth}
        \hfill
        \includegraphics[width=.14\linewidth]{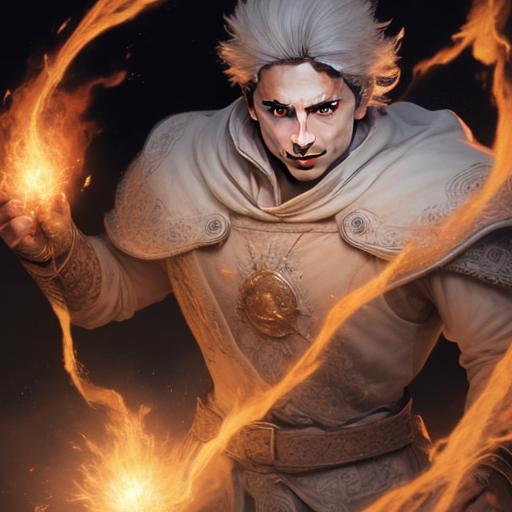}
        \includegraphics[width=.14\linewidth]{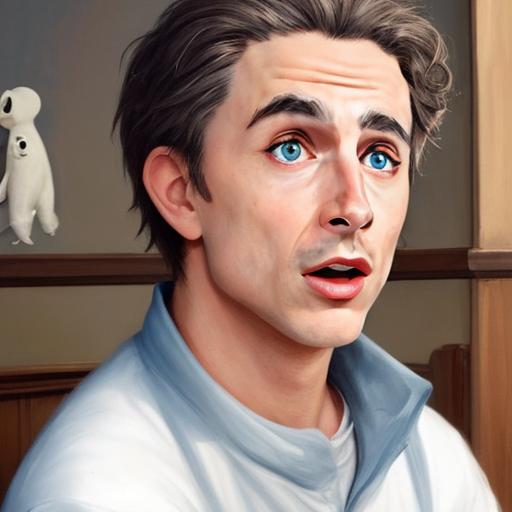}
        \includegraphics[width=.14\linewidth]{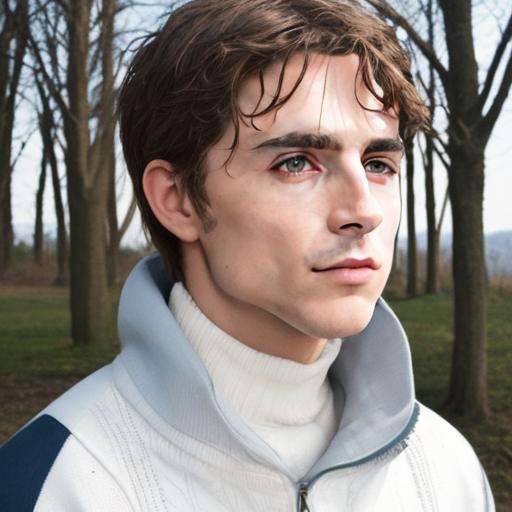}
        \includegraphics[width=.14\linewidth]{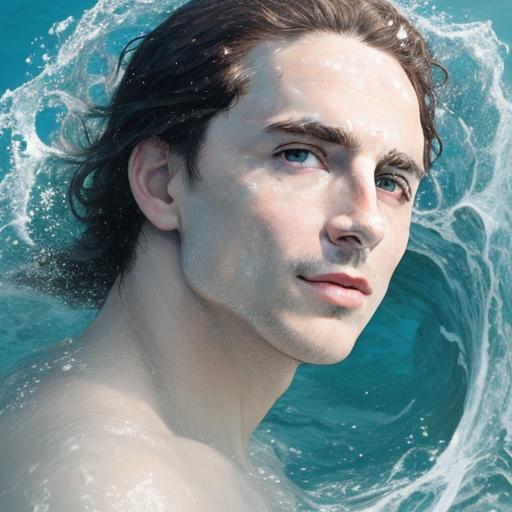}
        \includegraphics[width=.14\linewidth]{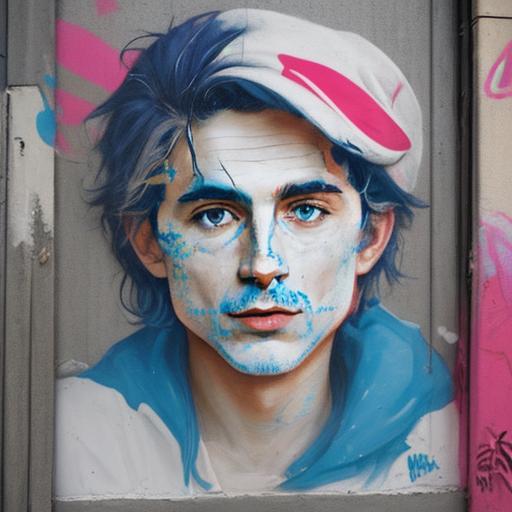}
        \includegraphics[width=.14\linewidth]{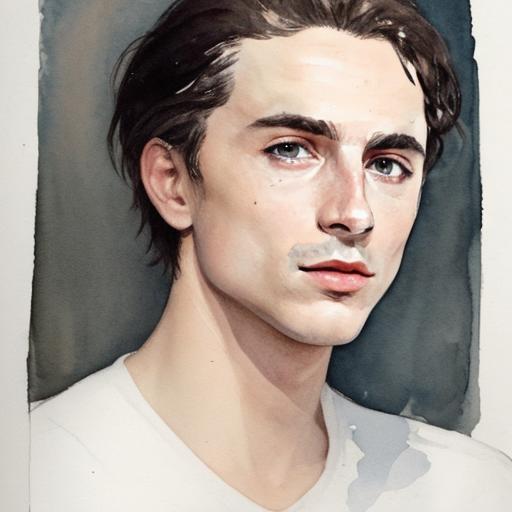}
    \end{subfigure}
    \begin{subfigure}[t]{\linewidth}
        \hfill
        \makebox[.14\linewidth]{\begin{tabular}{c}casting a fire\\ball, digital art\end{tabular}}
        \makebox[.14\linewidth]{\begin{tabular}{c}in Pixar style,\\surprised\end{tabular}}
        \makebox[.14\linewidth]{\begin{tabular}{c}wear a sweater\\outdoors\end{tabular}}
        \makebox[.14\linewidth]{\begin{tabular}{c}swimming\end{tabular}}
        \makebox[.14\linewidth]{\begin{tabular}{c}colorful\\graffiti\end{tabular}}
        \makebox[.14\linewidth]{\begin{tabular}{c}watercolor\\painting\end{tabular}}
    \end{subfigure}
    \begin{subfigure}[t]{\linewidth}
        \raisebox{.019\linewidth}{\includegraphics[width=.1\linewidth]{figs/ref_person/ferguson.jpg}}\hfill
        \includegraphics[width=.14\linewidth]{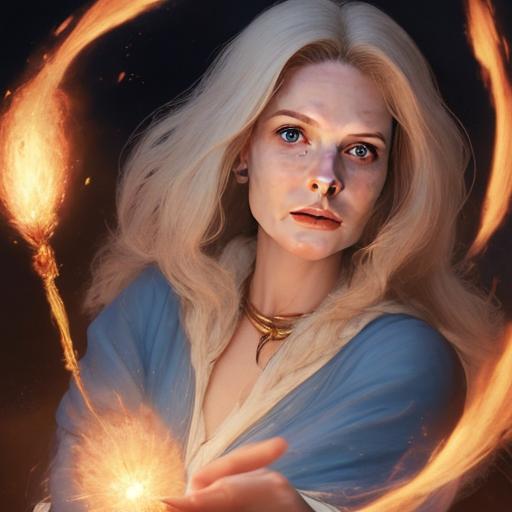}
        \includegraphics[width=.14\linewidth]{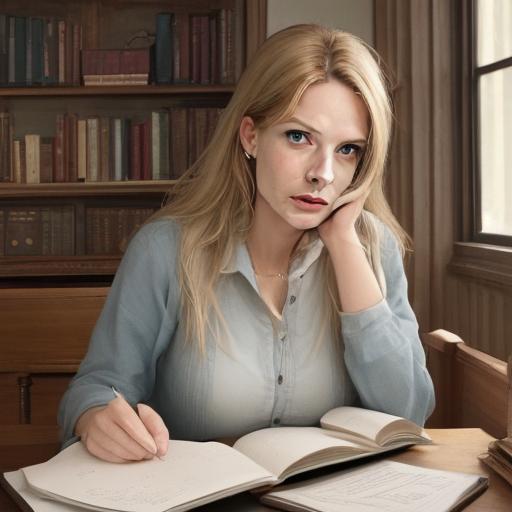}
        \includegraphics[width=.14\linewidth]{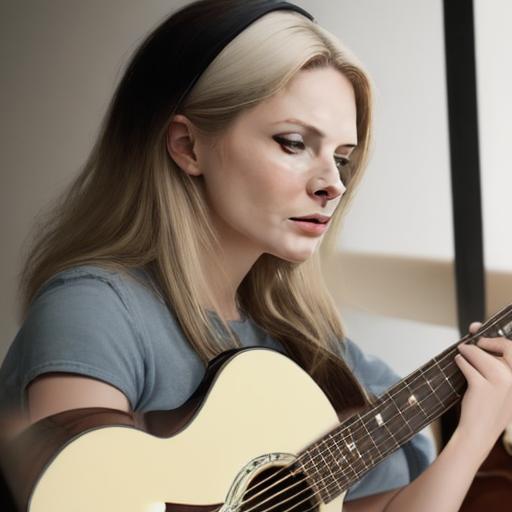}
        \includegraphics[width=.14\linewidth]{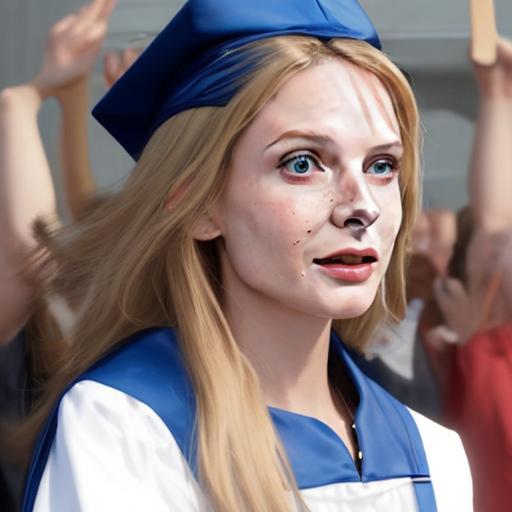}
        \includegraphics[width=.14\linewidth]{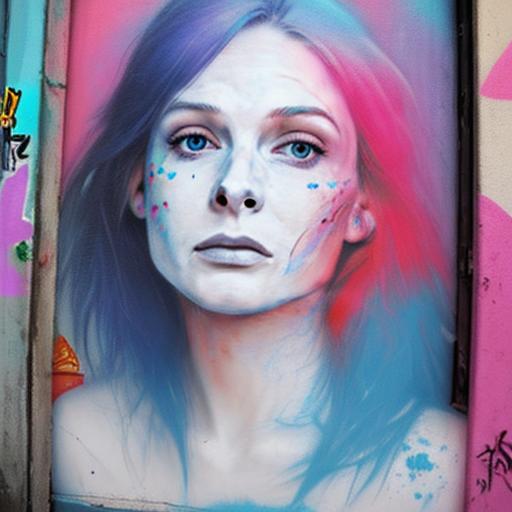}
        \includegraphics[width=.14\linewidth]{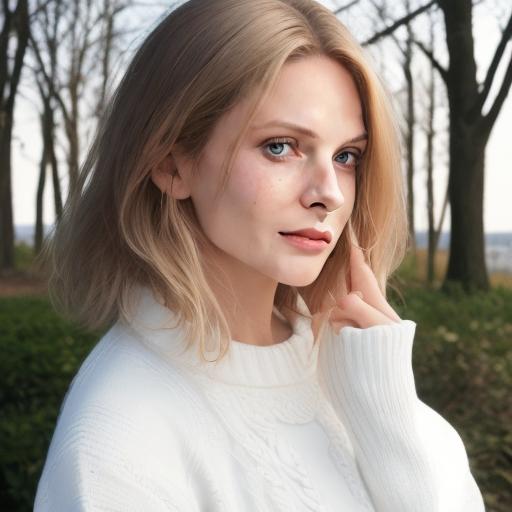}
    \end{subfigure}
    \begin{subfigure}[t]{\linewidth}
        \makebox[.1\linewidth]{Input}\hfill
        \makebox[.14\linewidth]{\begin{tabular}{c}casting a fire\\ball, digital art\end{tabular}}
        \makebox[.14\linewidth]{\begin{tabular}{c}writing a novel\\in a library\end{tabular}}
        \makebox[.14\linewidth]{\begin{tabular}{c}playing the\\guitar\end{tabular}}
        \makebox[.14\linewidth]{\begin{tabular}{c}graduating after\\finishing PhD\end{tabular}}
        \makebox[.14\linewidth]{\begin{tabular}{c}colorful\\graffiti\end{tabular}}
        \makebox[.14\linewidth]{\begin{tabular}{c}wear a sweater\\outdoors\end{tabular}}
    \end{subfigure}
    \begin{subfigure}[t]{\linewidth}
        \hfill
        \includegraphics[width=.14\linewidth]{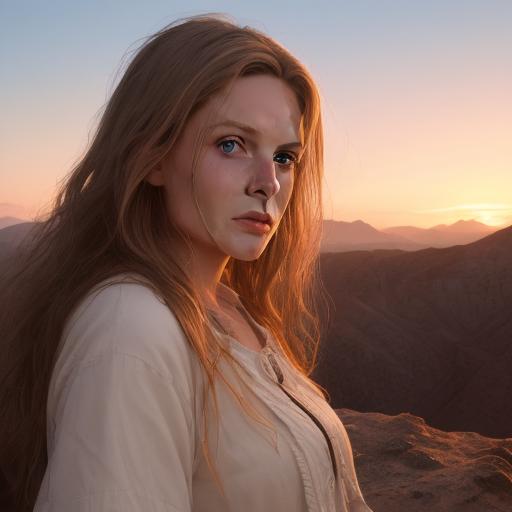}
        \includegraphics[width=.14\linewidth]{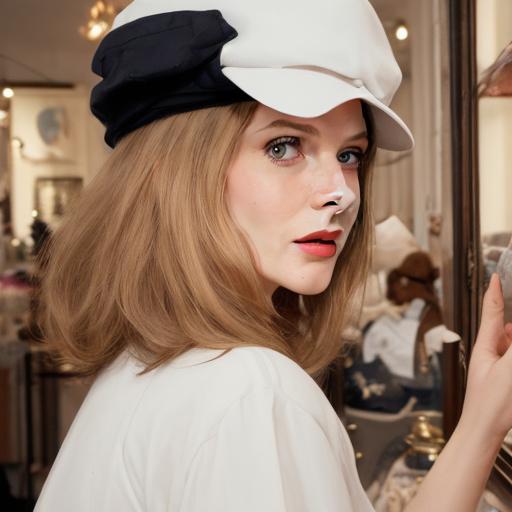}
        \includegraphics[width=.14\linewidth]{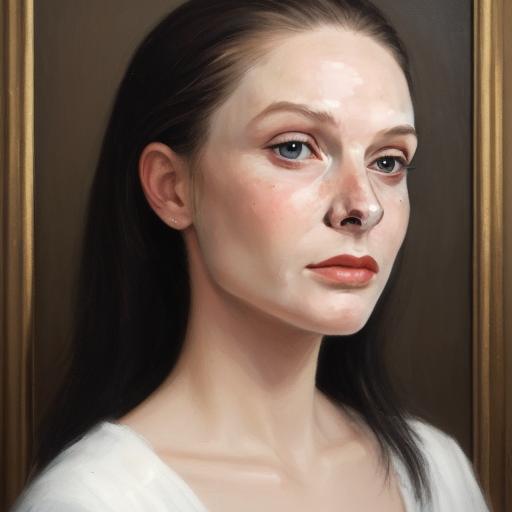}
        \includegraphics[width=.14\linewidth]{figs/perflow/single_person/ferguson_colorful.jpg}
        \includegraphics[width=.14\linewidth]{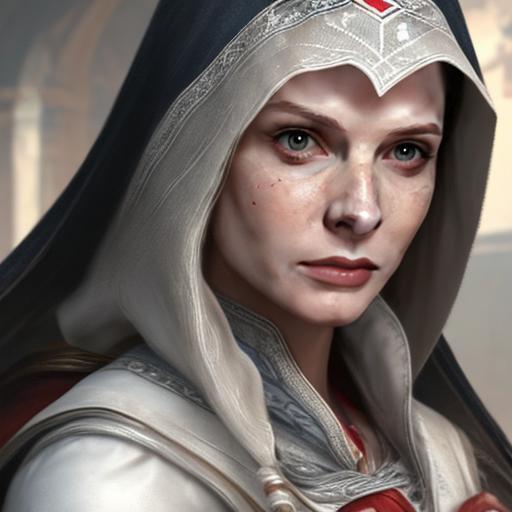}
        \includegraphics[width=.14\linewidth]{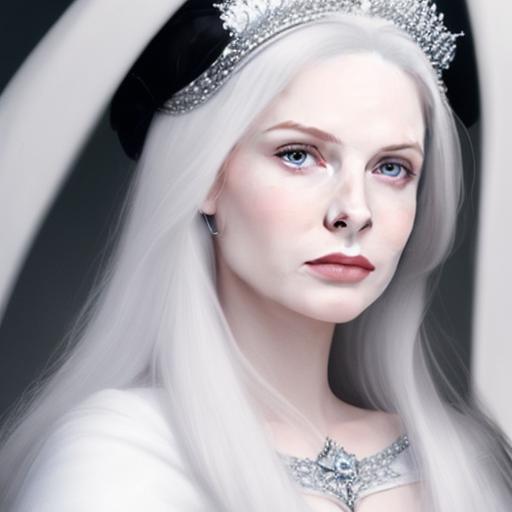}
    \end{subfigure}
    \begin{subfigure}[t]{\linewidth}
        \hfill
        \makebox[.14\linewidth]{\begin{tabular}{c}western vibes,\\sunset, rugged\\landscape\end{tabular}}
        \makebox[.14\linewidth]{\begin{tabular}{c}trying on hats\\in a vintage\\boutique\end{tabular}}
        \makebox[.14\linewidth]{\begin{tabular}{c}oil painting\end{tabular}}
        \makebox[.14\linewidth]{\begin{tabular}{c}colorful mural\\on a street wall\end{tabular}}
        \makebox[.14\linewidth]{\begin{tabular}{c}in assassin's\\creed\end{tabular}}
        \makebox[.14\linewidth]{\begin{tabular}{c}as White\\Queen\end{tabular}}
    \end{subfigure}
    \begin{subfigure}[t]{\linewidth}
        \raisebox{.019\linewidth}{\includegraphics[width=.1\linewidth]{figs/ref_person/miranda.jpg}}\hfill
        \includegraphics[width=.14\linewidth]{figs/perflow/single_person/miranda_western.jpg}
        \includegraphics[width=.14\linewidth]{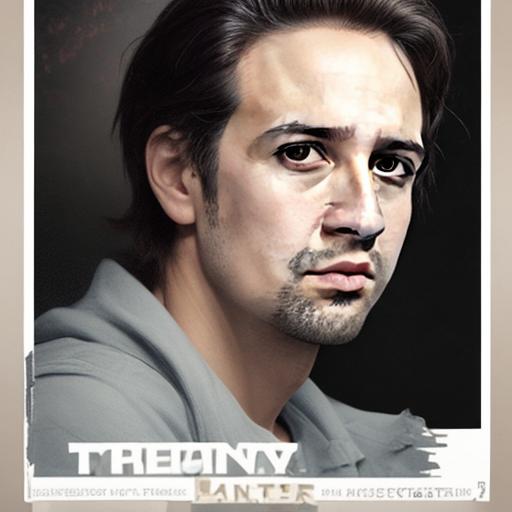}
        \includegraphics[width=.14\linewidth]{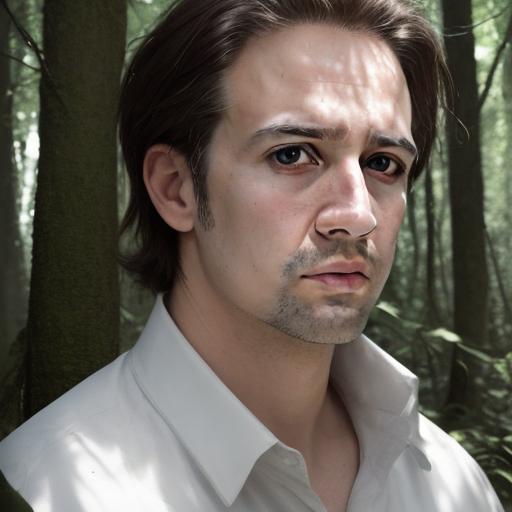}
        \includegraphics[width=.14\linewidth]{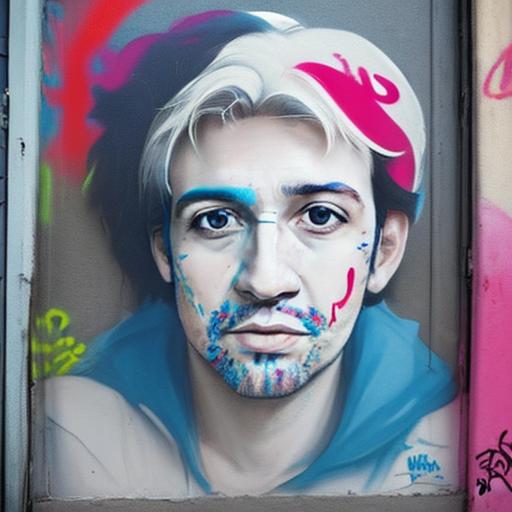}
        \includegraphics[width=.14\linewidth]{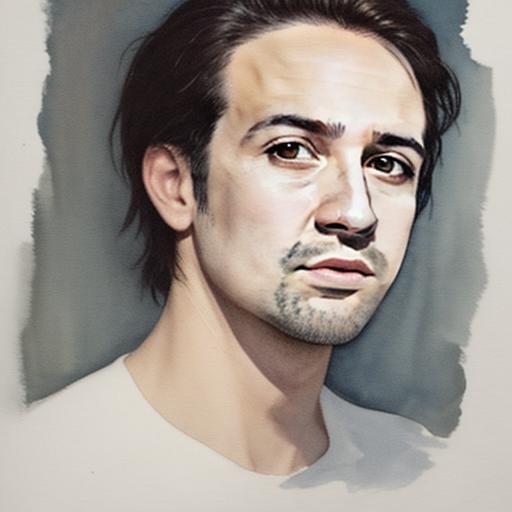}
        \includegraphics[width=.14\linewidth]{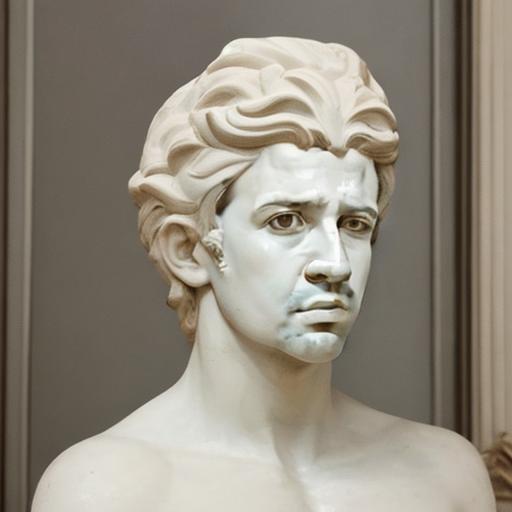}
    \end{subfigure}
    \begin{subfigure}[t]{\linewidth}
        \makebox[.1\linewidth]{Input}\hfill
        \makebox[.14\linewidth]{\begin{tabular}{c}western vibes,\\sunset, rugged\\landscape\end{tabular}}
        \makebox[.14\linewidth]{\begin{tabular}{c}concert poster\end{tabular}}
        \makebox[.14\linewidth]{\begin{tabular}{c}in the jungle\end{tabular}}
        \makebox[.14\linewidth]{\begin{tabular}{c}colorful\\graffiti\end{tabular}}
        \makebox[.14\linewidth]{\begin{tabular}{c}watercolor\\painting\end{tabular}}
        \makebox[.14\linewidth]{\begin{tabular}{c}Greek\\sculpture\end{tabular}}
    \end{subfigure}
    \begin{subfigure}[t]{\linewidth}
        \raisebox{.019\linewidth}{\includegraphics[width=.1\linewidth]{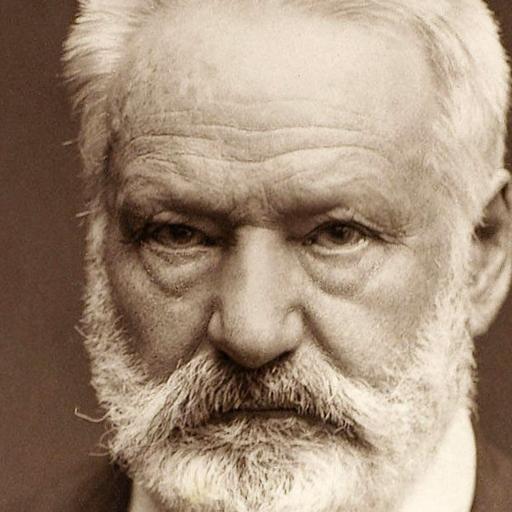}}\hfill
        \includegraphics[width=.14\linewidth]{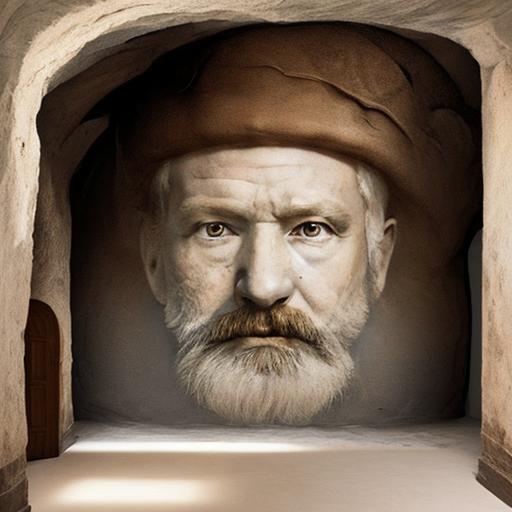}
        \includegraphics[width=.14\linewidth]{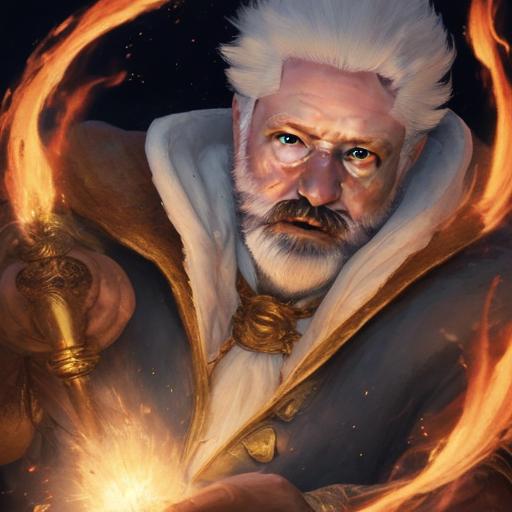}
        \includegraphics[width=.14\linewidth]{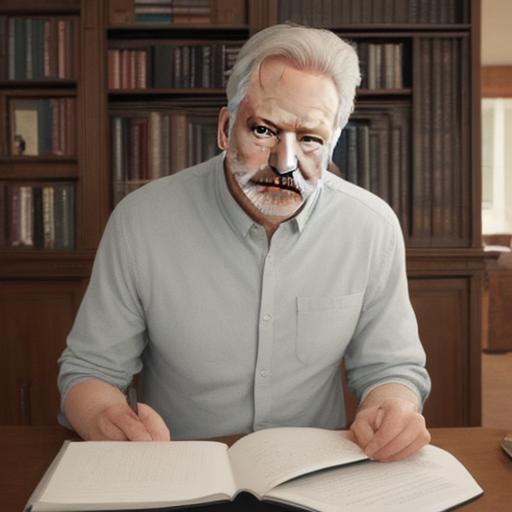}
        \includegraphics[width=.14\linewidth]{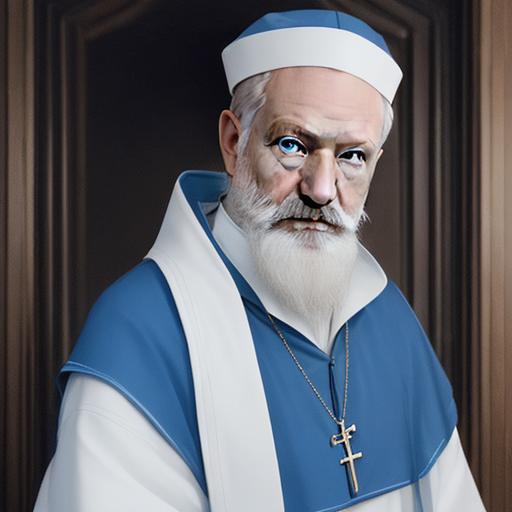}
        \includegraphics[width=.14\linewidth]{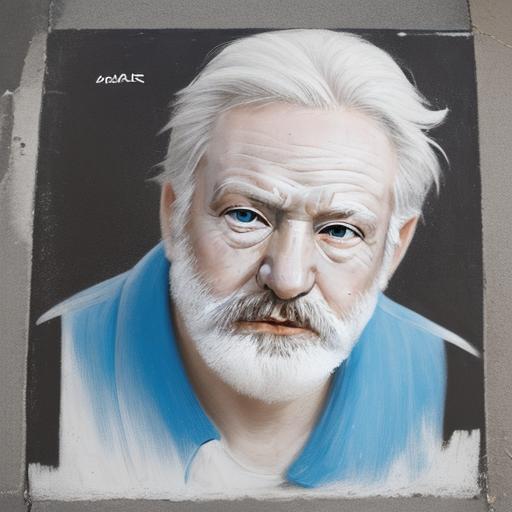}
        \includegraphics[width=.14\linewidth]{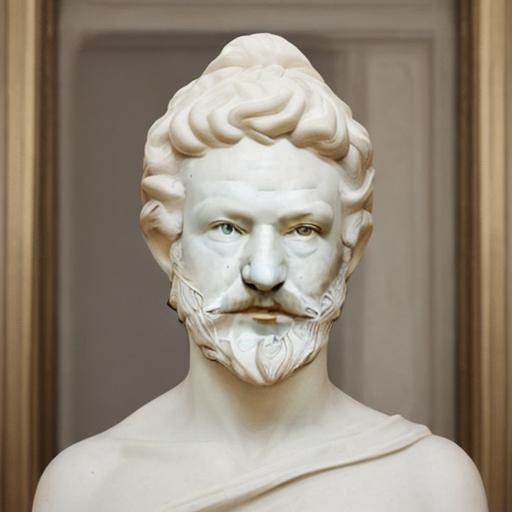}
    \end{subfigure}
    \begin{subfigure}[t]{\linewidth}
        \makebox[.1\linewidth]{Input}\hfill
        \makebox[.14\linewidth]{\begin{tabular}{c}cave mural\end{tabular}}
        \makebox[.14\linewidth]{\begin{tabular}{c}casting a fire\\ball, digital art\end{tabular}}
        \makebox[.14\linewidth]{\begin{tabular}{c}writing a novel\\in a library\end{tabular}}
        \makebox[.14\linewidth]{\begin{tabular}{c}as a priest in\\blue robes\end{tabular}}
        \makebox[.14\linewidth]{\begin{tabular}{c}chalk art on a\\sidewalk\end{tabular}}
        \makebox[.14\linewidth]{\begin{tabular}{c}Greek\\sculpture\end{tabular}}
    \end{subfigure}
    \begin{subfigure}[t]{\linewidth}
        \raisebox{.019\linewidth}{\includegraphics[width=.1\linewidth]{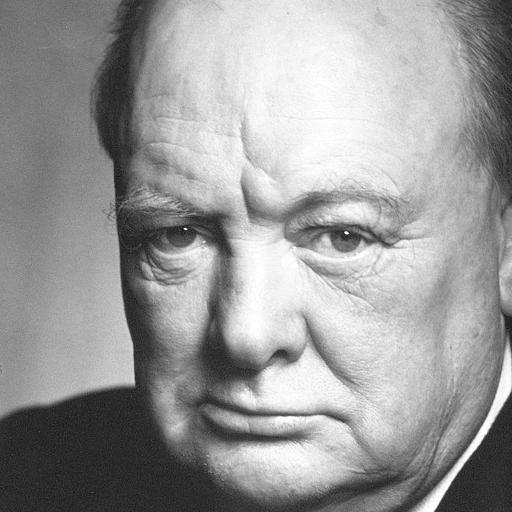}}\hfill
        \includegraphics[width=.14\linewidth]{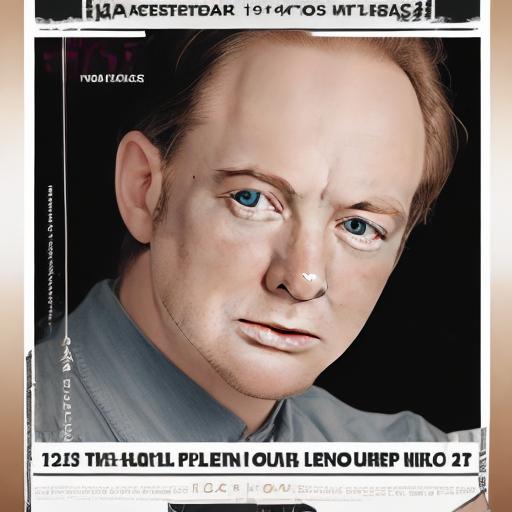}
        \includegraphics[width=.14\linewidth]{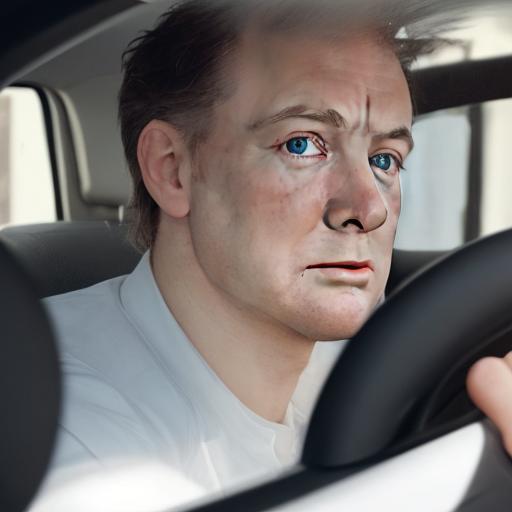}
        \includegraphics[width=.14\linewidth]{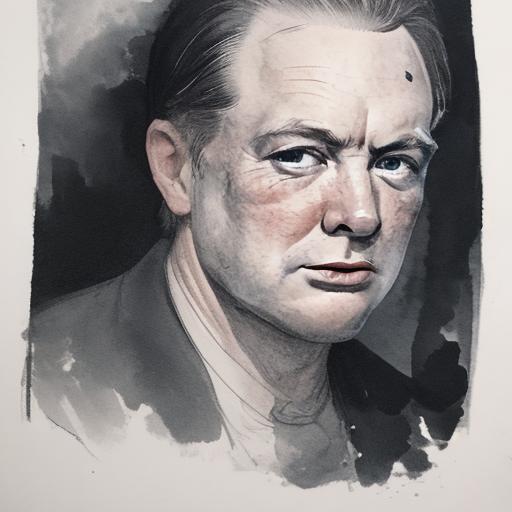}
        \includegraphics[width=.14\linewidth]{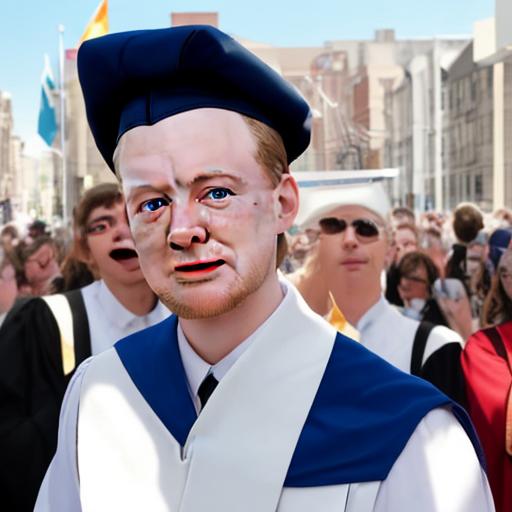}
        \includegraphics[width=.14\linewidth]{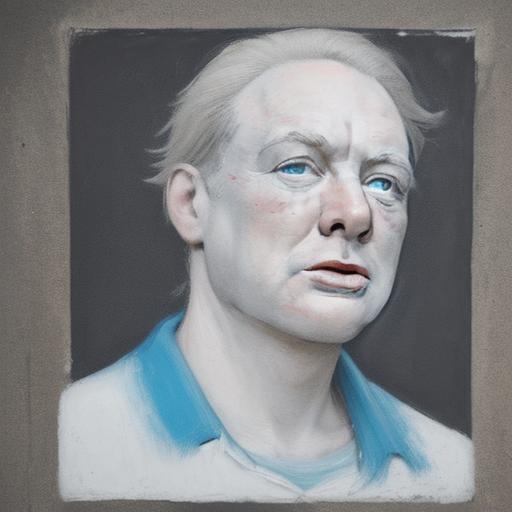}
        \includegraphics[width=.14\linewidth]{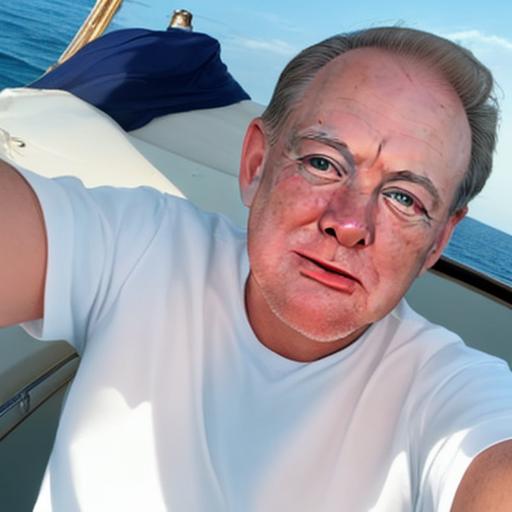}
    \end{subfigure}
    \begin{subfigure}[t]{\linewidth}
        \makebox[.1\linewidth]{Input}\hfill
        \makebox[.14\linewidth]{\begin{tabular}{c}concert poster\end{tabular}}
        \makebox[.14\linewidth]{\begin{tabular}{c}driving a car\end{tabular}}
        \makebox[.14\linewidth]{\begin{tabular}{c}ink painting\end{tabular}}
        \makebox[.14\linewidth]{\begin{tabular}{c}graduating after\\finishing PhD\end{tabular}}
        \makebox[.14\linewidth]{\begin{tabular}{c}chalk art on a\\sidewalk\end{tabular}}
        \makebox[.14\linewidth]{\begin{tabular}{c}selfie on a\\yacht\end{tabular}}
    \end{subfigure}
    \caption{Additional face-centric personalization results with piecewise rectified flow~\citep{yan2024perflow}, which is based on Stable Diffusion 1.5~\citep{rombach2022high}. Our method achieves high identity consistency. See~\cref{fig:single_person_2rflow} for results on the vanilla 2-rectified flow~\citep{liu2023flow, liu2024instaflow}.}
    \label{fig:single_person_app}
\end{figure}

\begin{figure}[t]
    \centering
    \footnotesize
    \begin{subfigure}[t]{\linewidth}
        \raisebox{.019\linewidth}{\includegraphics[width=.1\linewidth]{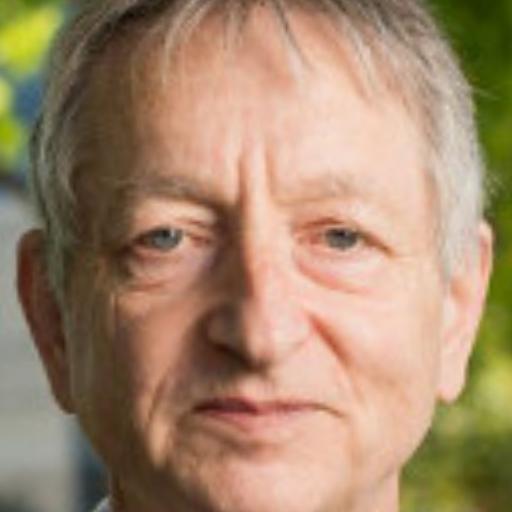}}\hfill
        \includegraphics[width=.14\linewidth]{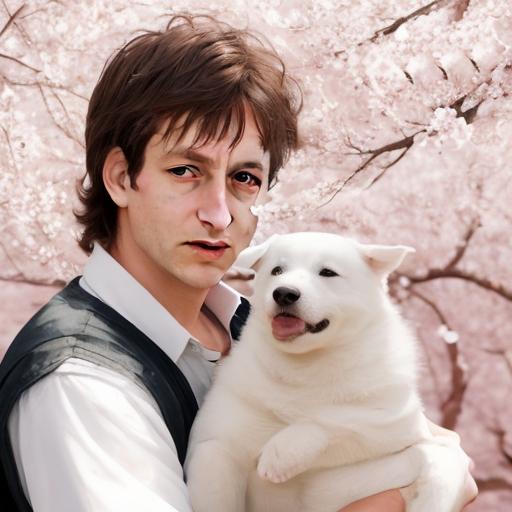}
        \includegraphics[width=.14\linewidth]{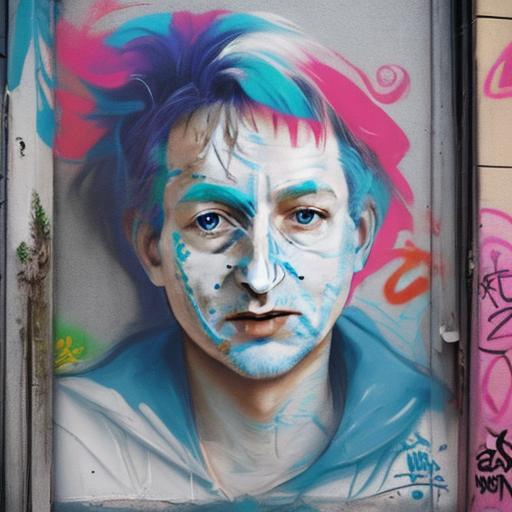}
        \includegraphics[width=.14\linewidth]{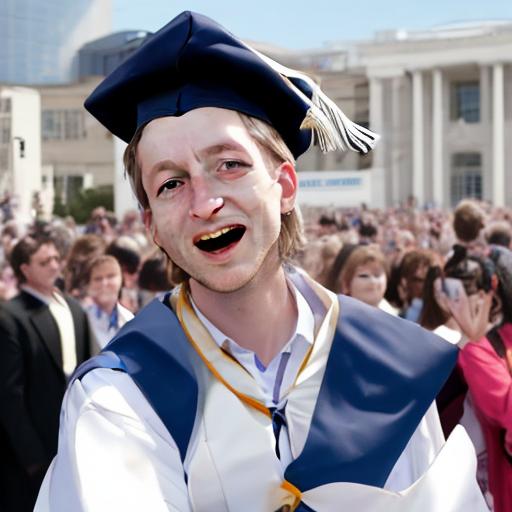}
        \includegraphics[width=.14\linewidth]{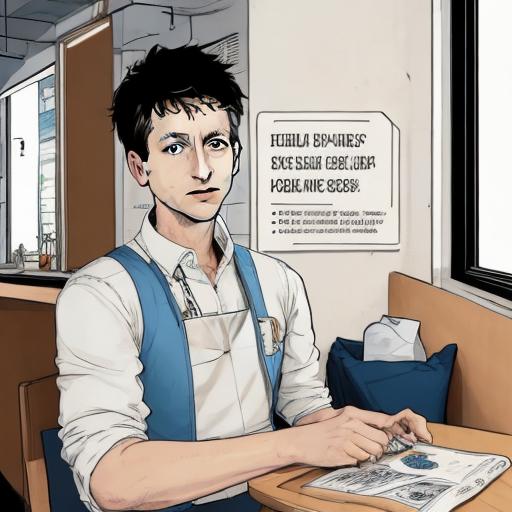}
        \includegraphics[width=.14\linewidth]{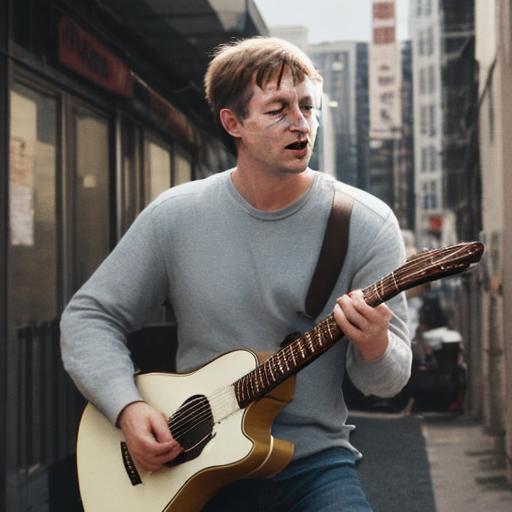}
        \includegraphics[width=.14\linewidth]{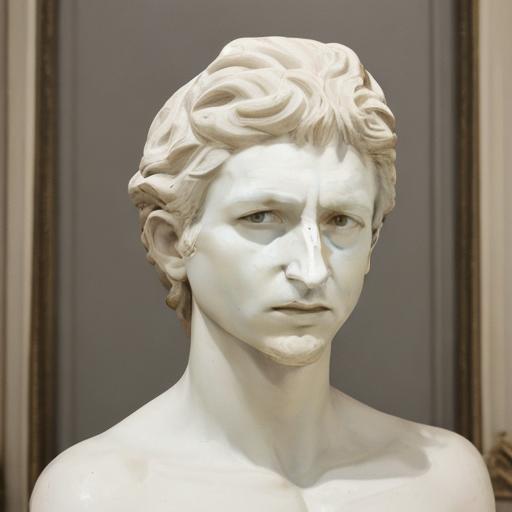}
    \end{subfigure}
    \begin{subfigure}[t]{\linewidth}
        \makebox[.1\linewidth]{Input}\hfill
        \makebox[.14\linewidth]{\begin{tabular}{c}holding a dog in\\cherry blossoms\end{tabular}}
        \makebox[.14\linewidth]{\begin{tabular}{c}colorful\\graffiti\end{tabular}}
        \makebox[.14\linewidth]{\begin{tabular}{c}graduating after\\finishing PhD\end{tabular}}
        \makebox[.14\linewidth]{\begin{tabular}{c}sitting in the\\cafe, comic\end{tabular}}
        \makebox[.14\linewidth]{\begin{tabular}{c}playing guitar\\in urban\end{tabular}}
        \makebox[.14\linewidth]{\begin{tabular}{c}Greek\\sculpture\end{tabular}}
    \end{subfigure}
    \begin{subfigure}[t]{\linewidth}
        \raisebox{.019\linewidth}{\includegraphics[width=.1\linewidth]{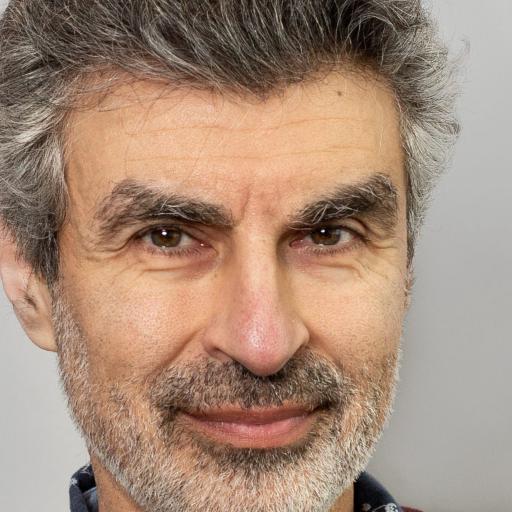}}\hfill
        \includegraphics[width=.14\linewidth]{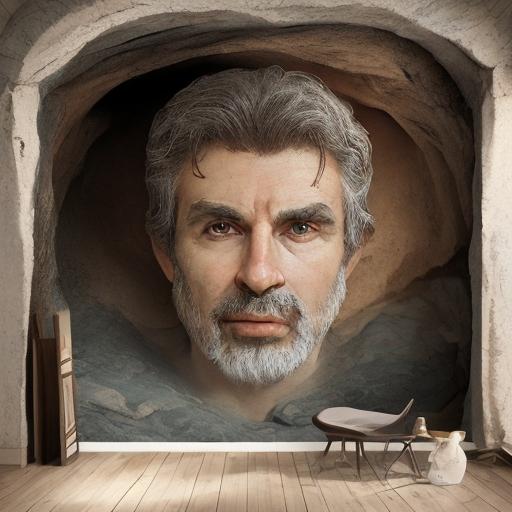}
        \includegraphics[width=.14\linewidth]{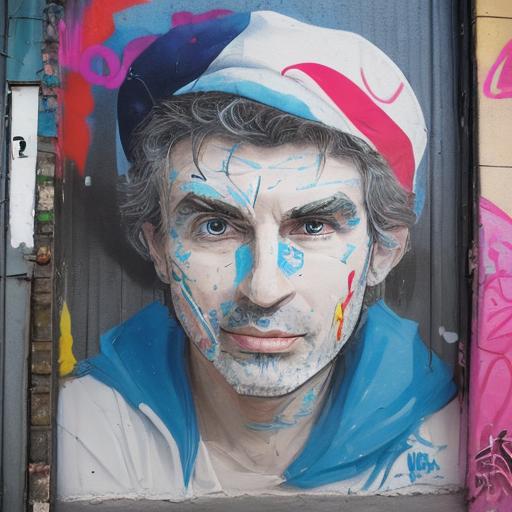}
        \includegraphics[width=.14\linewidth]{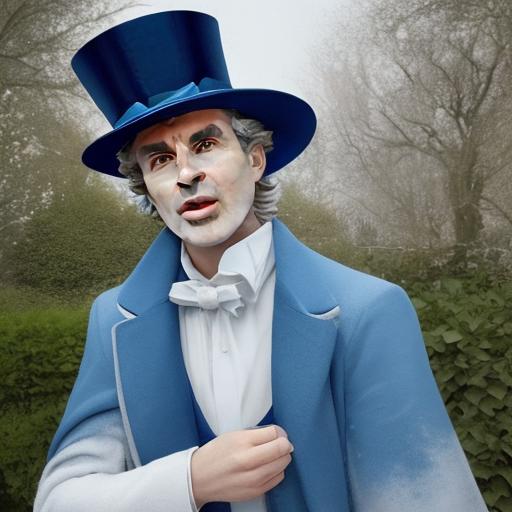}
        \includegraphics[width=.14\linewidth]{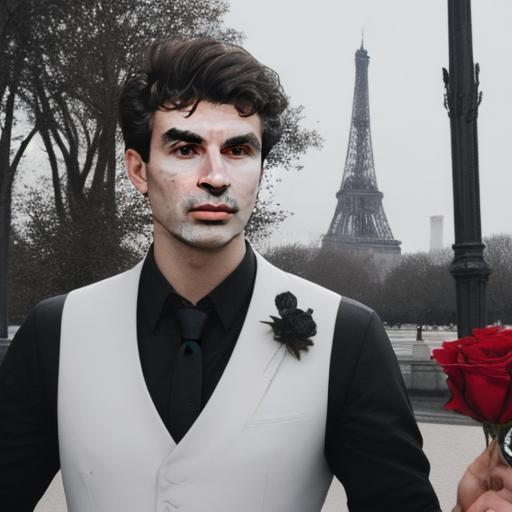}
        \includegraphics[width=.14\linewidth]{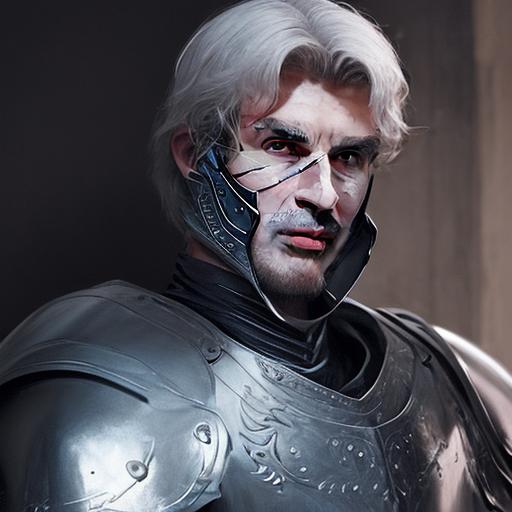}
        \includegraphics[width=.14\linewidth]{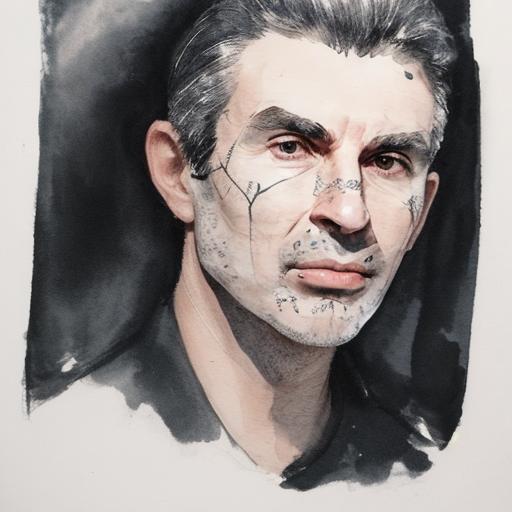}
    \end{subfigure}
    \begin{subfigure}[t]{\linewidth}
        \makebox[.1\linewidth]{Input}\hfill
        \makebox[.14\linewidth]{\begin{tabular}{c}cave mural\end{tabular}}
        \makebox[.14\linewidth]{\begin{tabular}{c}colorful\\graffiti\end{tabular}}
        \makebox[.14\linewidth]{\begin{tabular}{c}wear a magician\\hat and a blue\\coat in a garden\end{tabular}}
        \makebox[.14\linewidth]{\begin{tabular}{c}holding roses\\in front of the\\Eiffel Tower\end{tabular}}
        \makebox[.14\linewidth]{\begin{tabular}{c}as a knight in\\plate armor\end{tabular}}
        \makebox[.14\linewidth]{\begin{tabular}{c}ink painting\end{tabular}}
    \end{subfigure}
    \begin{subfigure}[t]{\linewidth}
        \raisebox{.019\linewidth}{\includegraphics[width=.1\linewidth]{figs/ref_person/swift.jpg}}\hfill
        \includegraphics[width=.14\linewidth]{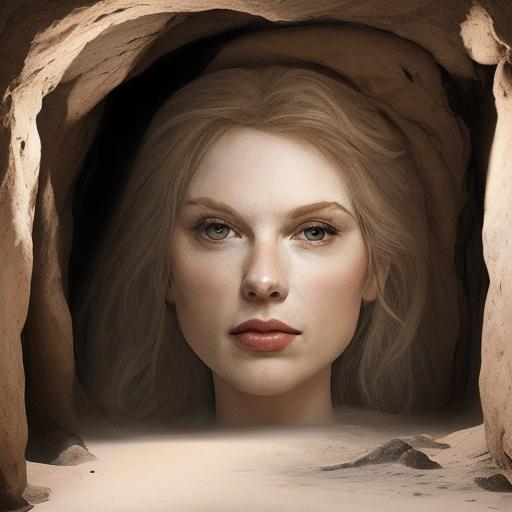}
        \includegraphics[width=.14\linewidth]{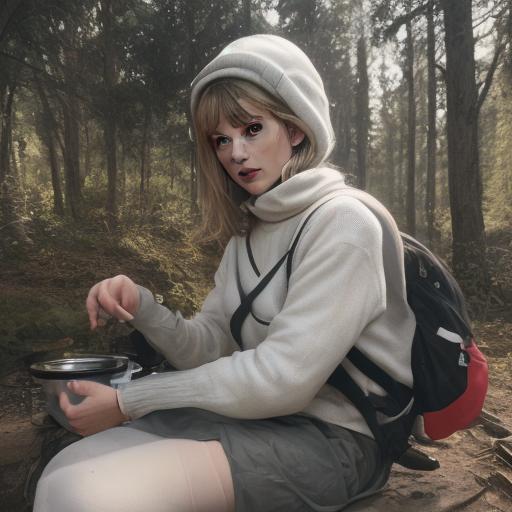}
        \includegraphics[width=.14\linewidth]{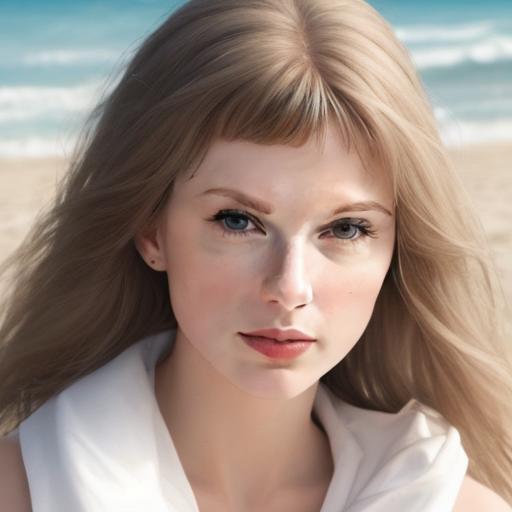}
        \includegraphics[width=.14\linewidth]{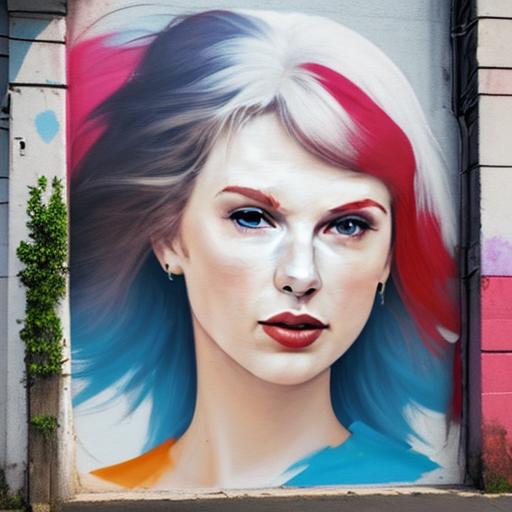}
        \includegraphics[width=.14\linewidth]{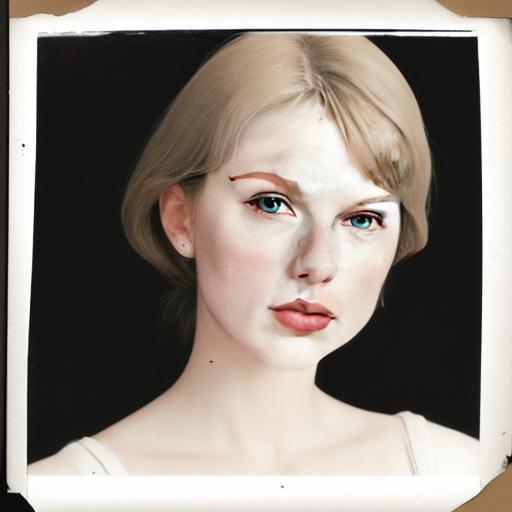}
        \includegraphics[width=.14\linewidth]{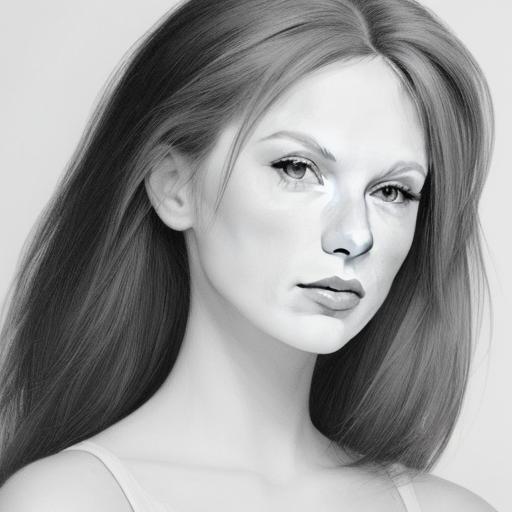}
    \end{subfigure}
    \begin{subfigure}[t]{\linewidth}
        \makebox[.1\linewidth]{Input}\hfill
        \makebox[.14\linewidth]{\begin{tabular}{c}cave mural\end{tabular}}
        \makebox[.14\linewidth]{\begin{tabular}{c}camping in the\\outdoors\end{tabular}}
        \makebox[.14\linewidth]{\begin{tabular}{c}on the beach\end{tabular}}
        \makebox[.14\linewidth]{\begin{tabular}{c}colorful mural\\on a street wall\end{tabular}}
        \makebox[.14\linewidth]{\begin{tabular}{c}faded film\end{tabular}}
        \makebox[.14\linewidth]{\begin{tabular}{c}line art\end{tabular}}
    \end{subfigure}
    \begin{subfigure}[t]{\linewidth}
        \raisebox{.019\linewidth}{\includegraphics[width=.1\linewidth]{figs/ref_person/johansson.jpg}}\hfill
        \includegraphics[width=.14\linewidth]{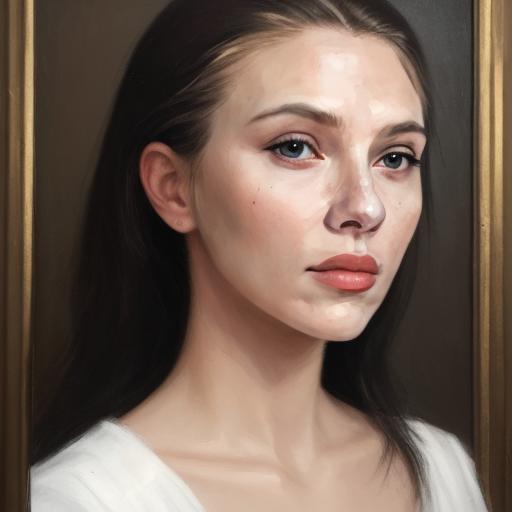}
        \includegraphics[width=.14\linewidth]{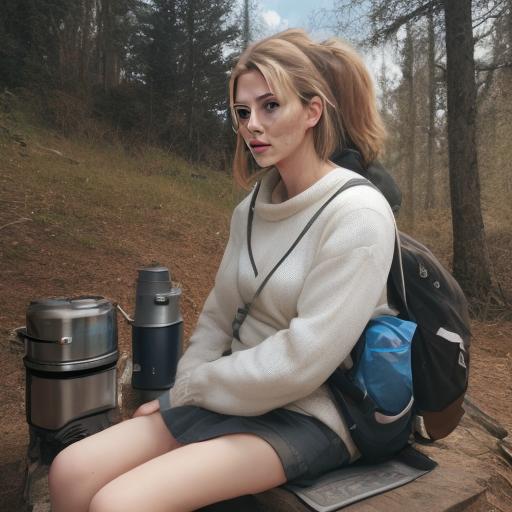}
        \includegraphics[width=.14\linewidth]{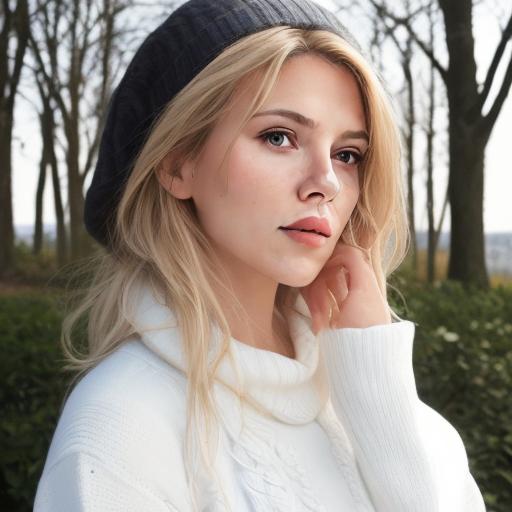}
        \includegraphics[width=.14\linewidth]{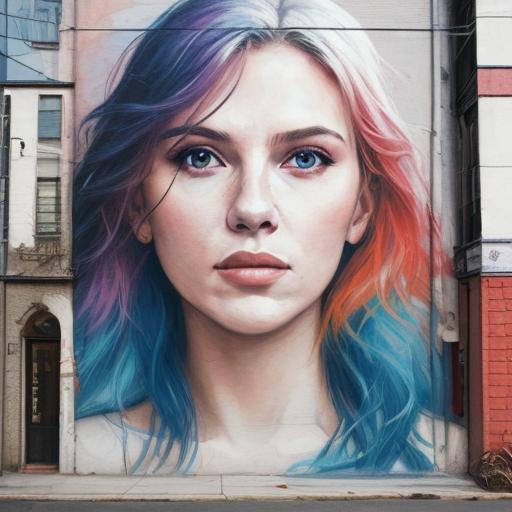}
        \includegraphics[width=.14\linewidth]{figs/perflow/single_person/johansson_knight.jpg}
        \includegraphics[width=.14\linewidth]{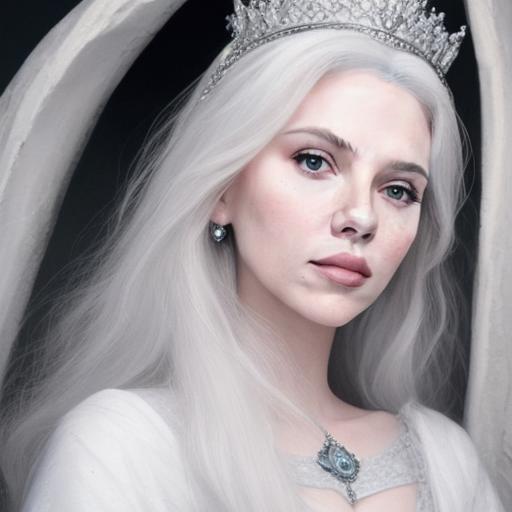}
    \end{subfigure}
    \begin{subfigure}[t]{\linewidth}
        \makebox[.1\linewidth]{Input}\hfill
        \makebox[.14\linewidth]{\begin{tabular}{c}oil painting\end{tabular}}
        \makebox[.14\linewidth]{\begin{tabular}{c}camping in the\\outdoors\end{tabular}}
        \makebox[.14\linewidth]{\begin{tabular}{c}wear a sweater\\outdoors\end{tabular}}
        \makebox[.14\linewidth]{\begin{tabular}{c}colorful mural\\on a street wall\end{tabular}}
        \makebox[.14\linewidth]{\begin{tabular}{c}as a knight in\\plate armor\end{tabular}}
        \makebox[.14\linewidth]{\begin{tabular}{c}as White\\Queen\end{tabular}}
    \end{subfigure}
    \begin{subfigure}[t]{\linewidth}
        \raisebox{.019\linewidth}{\includegraphics[width=.1\linewidth]{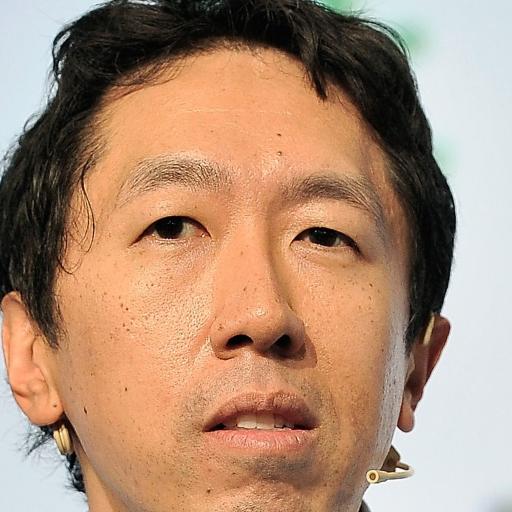}}\hfill
        \includegraphics[width=.14\linewidth]{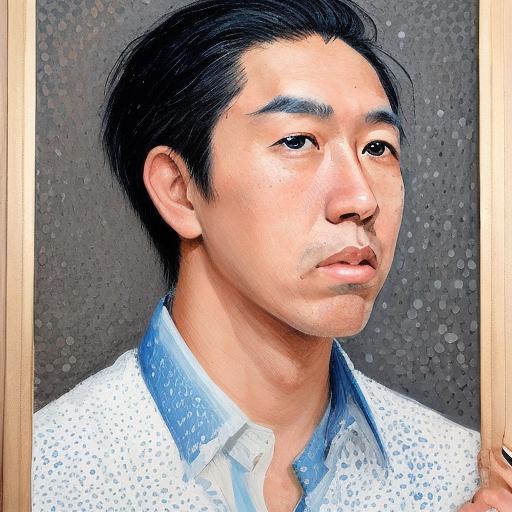}
        \includegraphics[width=.14\linewidth]{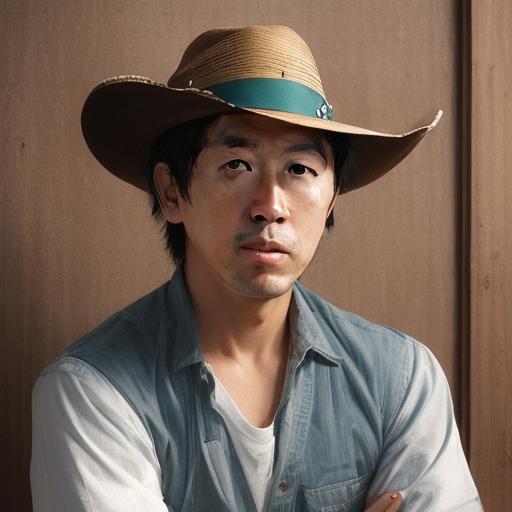}
        \includegraphics[width=.14\linewidth]{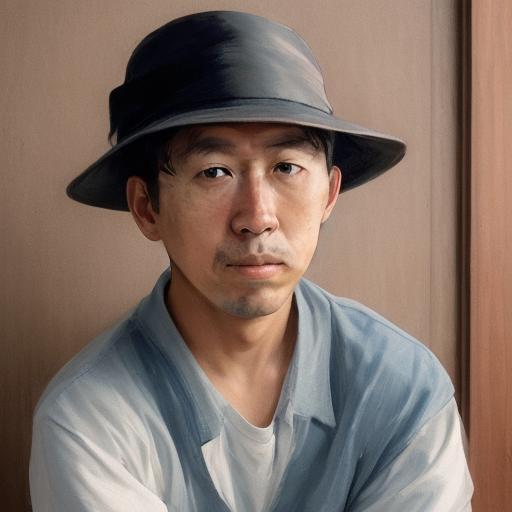}
        \includegraphics[width=.14\linewidth]{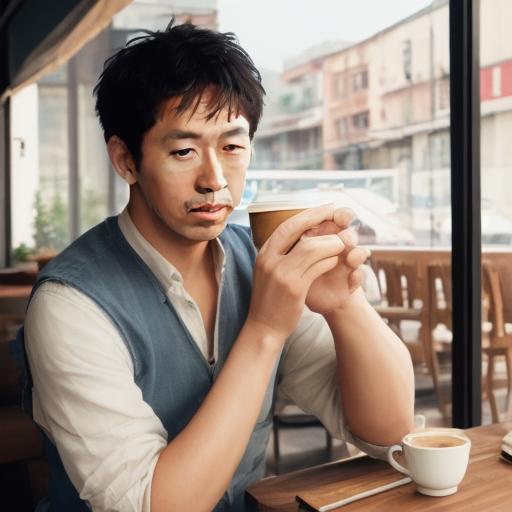}
        \includegraphics[width=.14\linewidth]{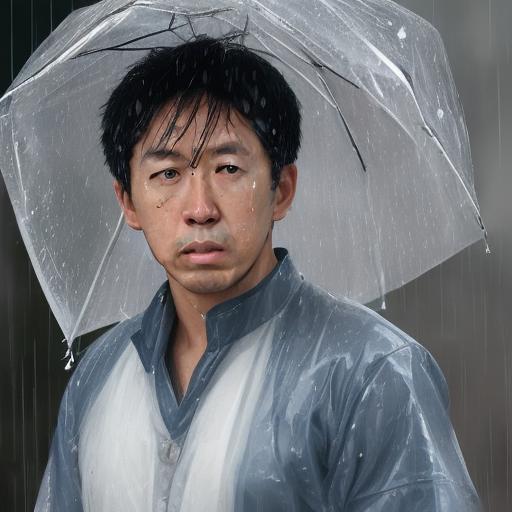}
        \includegraphics[width=.14\linewidth]{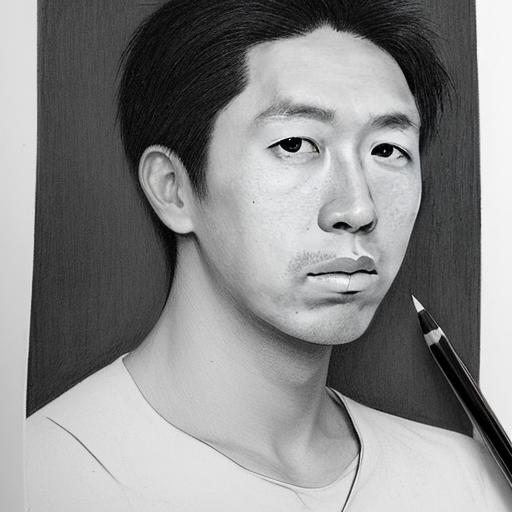}
    \end{subfigure}
    \begin{subfigure}[t]{\linewidth}
        \makebox[.1\linewidth]{Input}\hfill
        \makebox[.14\linewidth]{\begin{tabular}{c}pointillism\\painting\end{tabular}}
        \makebox[.14\linewidth]{\begin{tabular}{c}in a cowboy\\hat\end{tabular}}
        \makebox[.14\linewidth]{\begin{tabular}{c}wearing a hat\end{tabular}}
        \makebox[.14\linewidth]{\begin{tabular}{c}sipping coffee\\at a cafe\end{tabular}}
        \makebox[.14\linewidth]{\begin{tabular}{c}in the rain\end{tabular}}
        \makebox[.14\linewidth]{\begin{tabular}{c}pencil\\drawing\end{tabular}}
    \end{subfigure}
    \begin{subfigure}[t]{\linewidth}
        \raisebox{.019\linewidth}{\includegraphics[width=.1\linewidth]{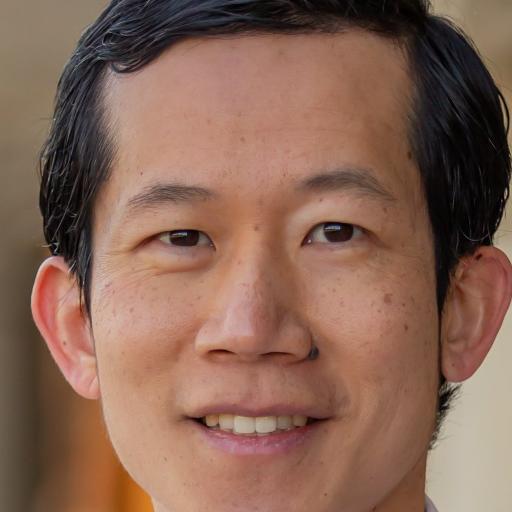}}\hfill
        \includegraphics[width=.14\linewidth]{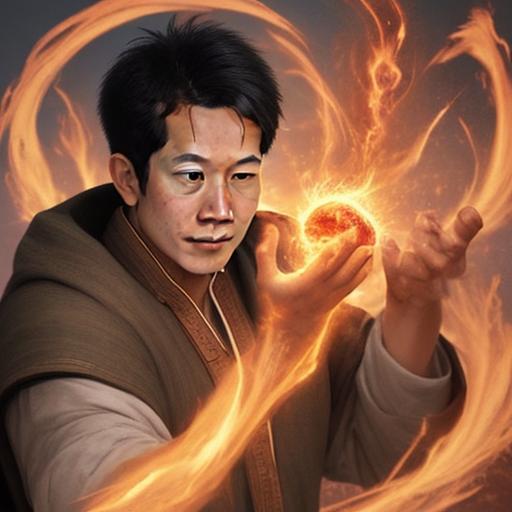}
        \includegraphics[width=.14\linewidth]{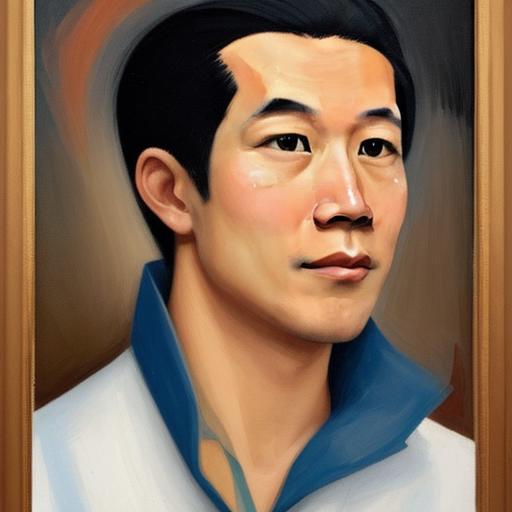}
        \includegraphics[width=.14\linewidth]{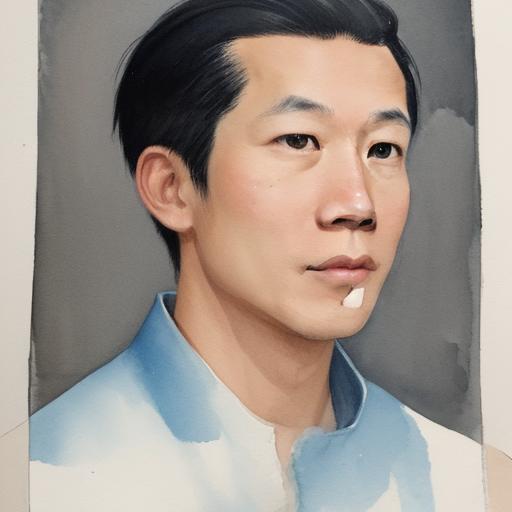}
        \includegraphics[width=.14\linewidth]{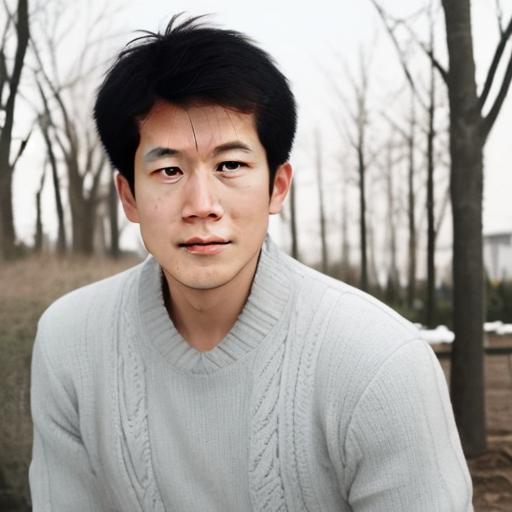}
        \includegraphics[width=.14\linewidth]{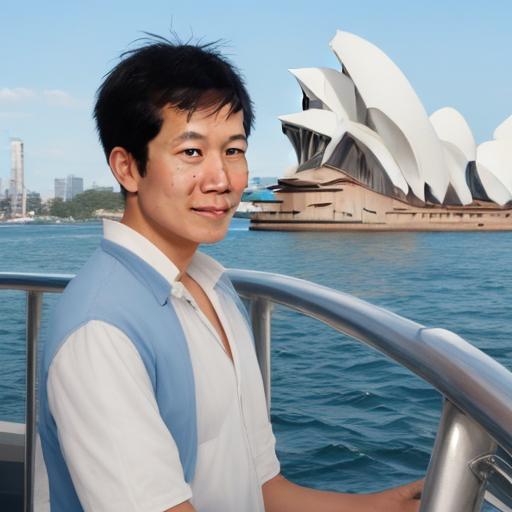}
        \includegraphics[width=.14\linewidth]{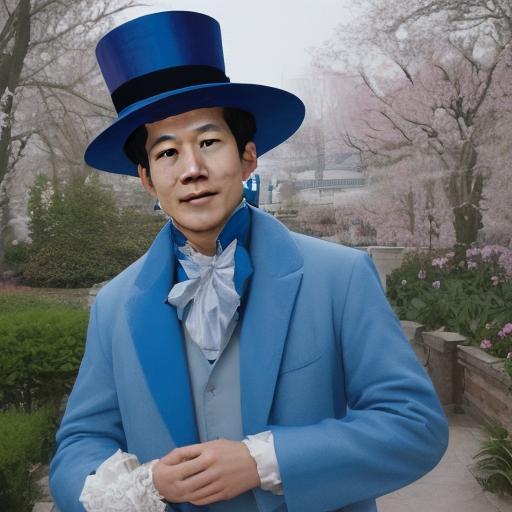}
    \end{subfigure}
    \begin{subfigure}[t]{\linewidth}
        \makebox[.1\linewidth]{Input}\hfill
        \makebox[.14\linewidth]{\begin{tabular}{c}casting a fire\\ball, digital art\end{tabular}}
        \makebox[.14\linewidth]{\begin{tabular}{c}cubism\\painting\end{tabular}}
        \makebox[.14\linewidth]{\begin{tabular}{c}watercolor\\painting\end{tabular}}
        \makebox[.14\linewidth]{\begin{tabular}{c}wear a sweater\\outdoors\end{tabular}}
        \makebox[.14\linewidth]{\begin{tabular}{c}sailing near\\the Sydney\\Opera House\end{tabular}}
        \makebox[.14\linewidth]{\begin{tabular}{c}wear a magician\\hat and a blue\\coat in a garden\end{tabular}}
    \end{subfigure}
    \begin{subfigure}[t]{\linewidth}
        \raisebox{.019\linewidth}{\includegraphics[width=.1\linewidth]{figs/ref_person/newton.jpg}}\hfill
        \includegraphics[width=.14\linewidth]{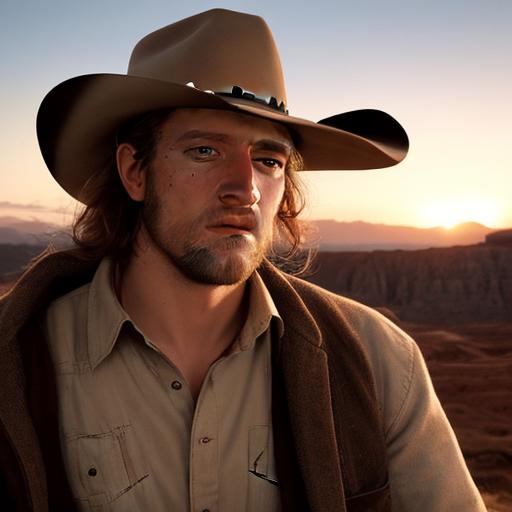}
        \includegraphics[width=.14\linewidth]{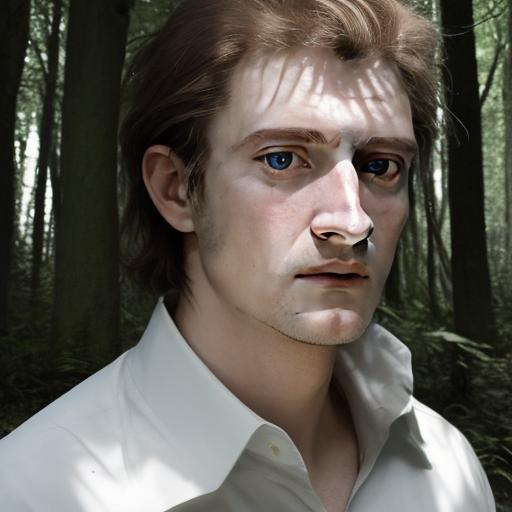}
        \includegraphics[width=.14\linewidth]{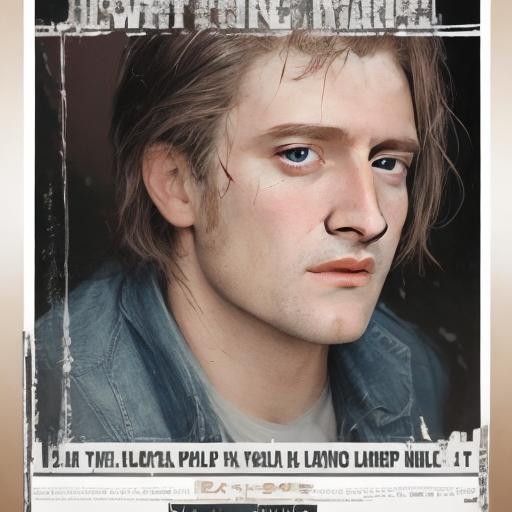}
        \includegraphics[width=.14\linewidth]{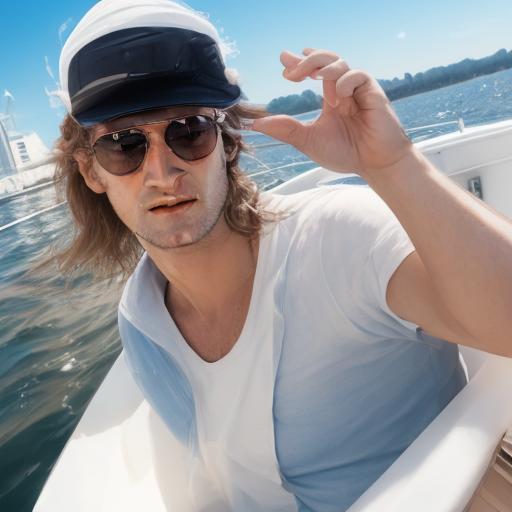}
        \includegraphics[width=.14\linewidth]{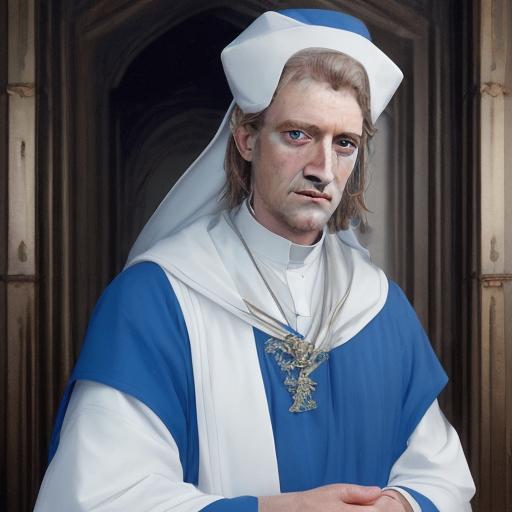}
        \includegraphics[width=.14\linewidth]{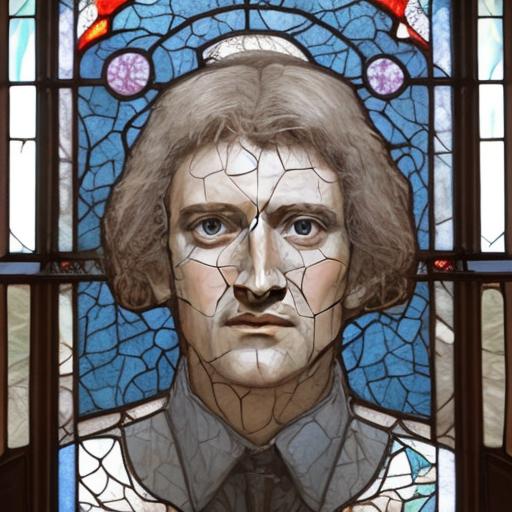}
    \end{subfigure}
    \begin{subfigure}[t]{\linewidth}
        \makebox[.1\linewidth]{Input}\hfill
        \makebox[.14\linewidth]{\begin{tabular}{c}western vibes,\\sunset, rugged\\landscape\end{tabular}}
        \makebox[.14\linewidth]{\begin{tabular}{c}in the jungle\end{tabular}}
        \makebox[.14\linewidth]{\begin{tabular}{c}concert poster\end{tabular}}
        \makebox[.14\linewidth]{\begin{tabular}{c}wearing a\\sunglass\\on a boat\end{tabular}}
        \makebox[.14\linewidth]{\begin{tabular}{c}as a priest in\\blue robes\end{tabular}}
        \makebox[.14\linewidth]{\begin{tabular}{c}stained glass\\window\end{tabular}}
    \end{subfigure}
    \caption{Additional face-centric personalization results with piecewise rectified flow~\citep{yan2024perflow}, which is based on Stable Diffusion 1.5~\citep{rombach2022high}. Our method achieves high identity consistency. See~\cref{fig:single_person_2rflow} for results on the vanilla 2-rectified flow~\citep{liu2023flow, liu2024instaflow}.}
    \label{fig:single_person_app2}
\end{figure}

\begin{figure}[t]
    \centering
    \footnotesize
    \begin{subfigure}[t]{\linewidth}
        \raisebox{.019\linewidth}{\includegraphics[width=.1\linewidth]{figs/ref_object/cat_00.jpg}}\hfill
        \includegraphics[width=.14\linewidth]{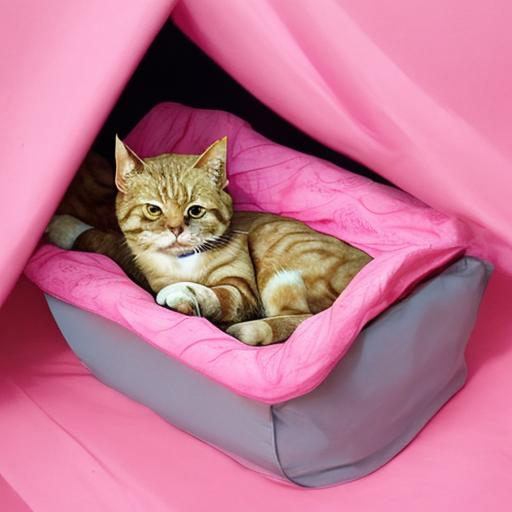}
        \includegraphics[width=.14\linewidth]{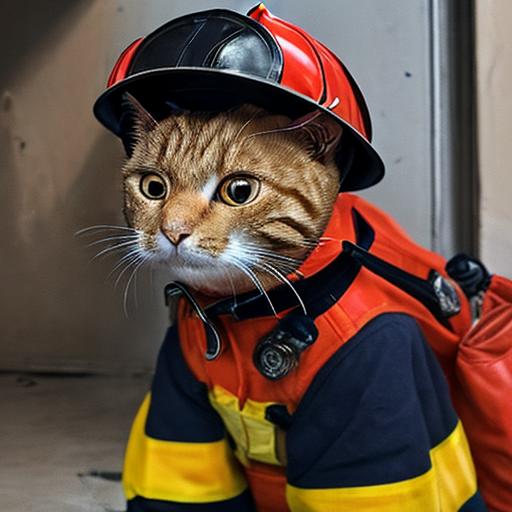}
        \includegraphics[width=.14\linewidth]{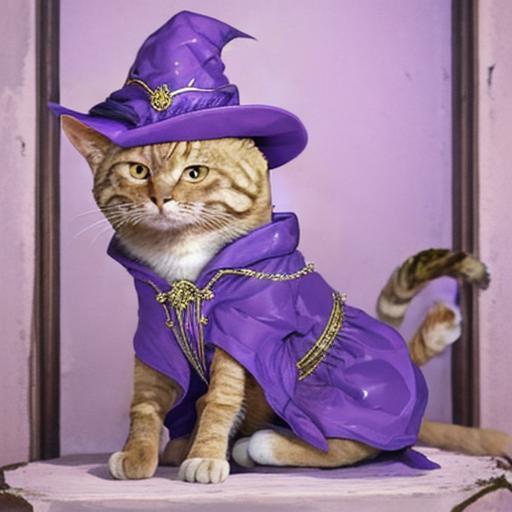}
        \includegraphics[width=.14\linewidth]{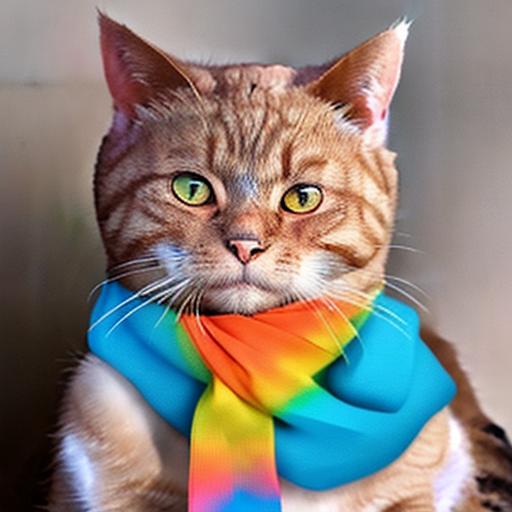}
        \includegraphics[width=.14\linewidth]{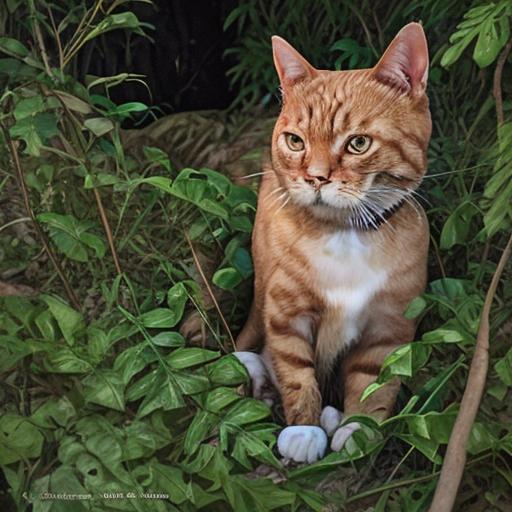}
        \includegraphics[width=.14\linewidth]{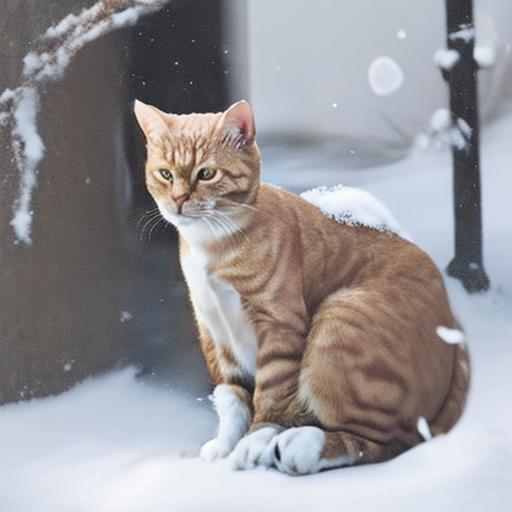}
    \end{subfigure}
    \begin{subfigure}[t]{\linewidth}
        \makebox[.1\linewidth]{Input}\hfill
        \makebox[.14\linewidth]{\begin{tabular}{c}on pink fabric\end{tabular}}
        \makebox[.14\linewidth]{\begin{tabular}{c}in a firefighter\\outfit\end{tabular}}
        \makebox[.14\linewidth]{\begin{tabular}{c}in a purple\\wizard outfit\end{tabular}}
        \makebox[.14\linewidth]{\begin{tabular}{c}wearing a\\rainbow scarf\end{tabular}}
        \makebox[.14\linewidth]{\begin{tabular}{c}in the jungle\end{tabular}}
        \makebox[.14\linewidth]{\begin{tabular}{c}in the snow\end{tabular}}
    \end{subfigure}
    \begin{subfigure}[t]{\linewidth}
        \raisebox{.019\linewidth}{\includegraphics[width=.1\linewidth]{figs/ref_object/dog2_00.jpg}}\hfill
        \includegraphics[width=.14\linewidth]{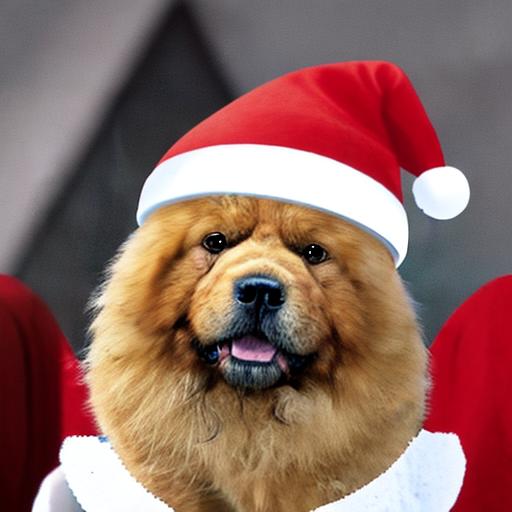}
        \includegraphics[width=.14\linewidth]{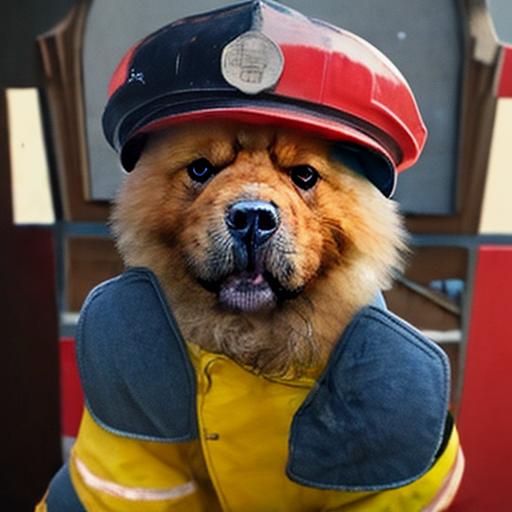}
        \includegraphics[width=.14\linewidth]{figs/perflow/single_object/dog2_wizard.jpg}
        \includegraphics[width=.14\linewidth]{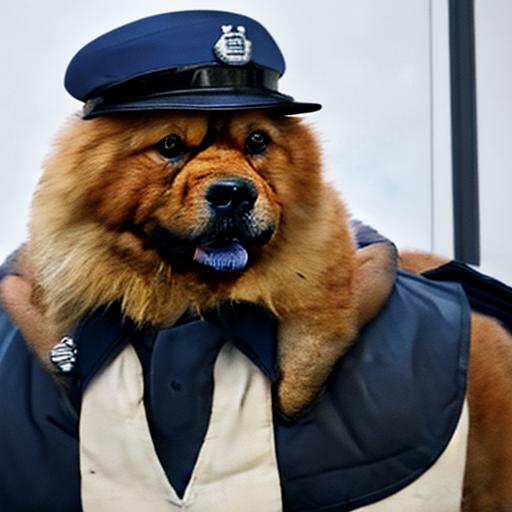}
        \includegraphics[width=.14\linewidth]{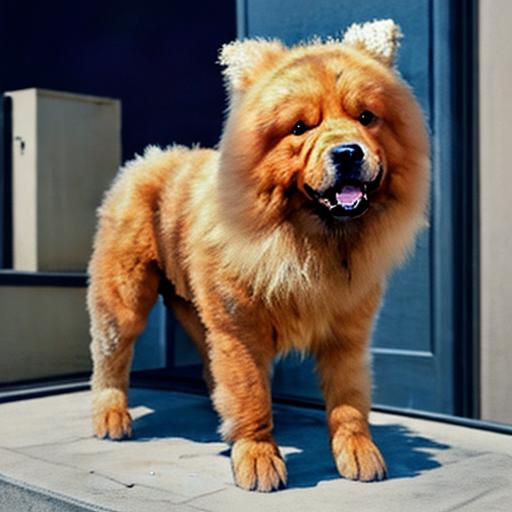}
        \includegraphics[width=.14\linewidth]{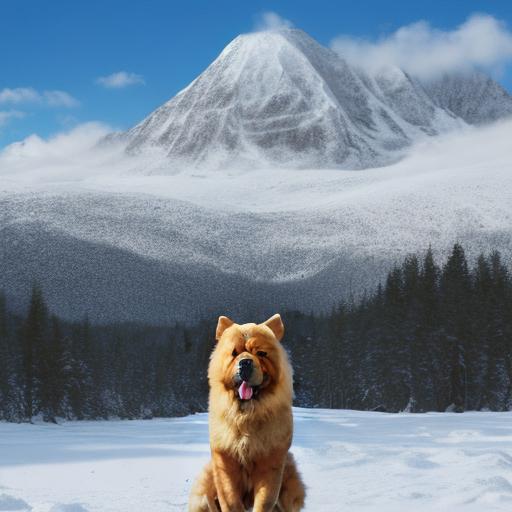}
    \end{subfigure}
    \begin{subfigure}[t]{\linewidth}
        \makebox[.1\linewidth]{Input}\hfill
        \makebox[.14\linewidth]{\begin{tabular}{c}wearing a\\santa hat\end{tabular}}
        \makebox[.14\linewidth]{\begin{tabular}{c}in a firefighter\\outfit\end{tabular}}
        \makebox[.14\linewidth]{\begin{tabular}{c}in a purple\\wizard outfit\end{tabular}}
        \makebox[.14\linewidth]{\begin{tabular}{c}in a police\\outfit\end{tabular}}
        \makebox[.14\linewidth]{\begin{tabular}{c}shiny\end{tabular}}
        \makebox[.14\linewidth]{\begin{tabular}{c}in front of a\\mountain\end{tabular}}
    \end{subfigure}
    \begin{subfigure}[t]{\linewidth}
        \raisebox{.019\linewidth}{\includegraphics[width=.1\linewidth]{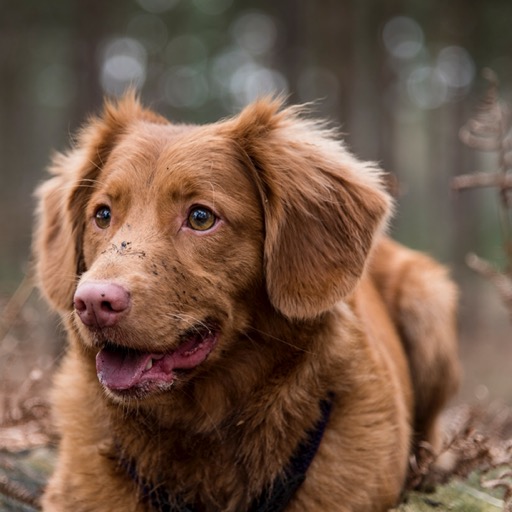}}\hfill
        \includegraphics[width=.14\linewidth]{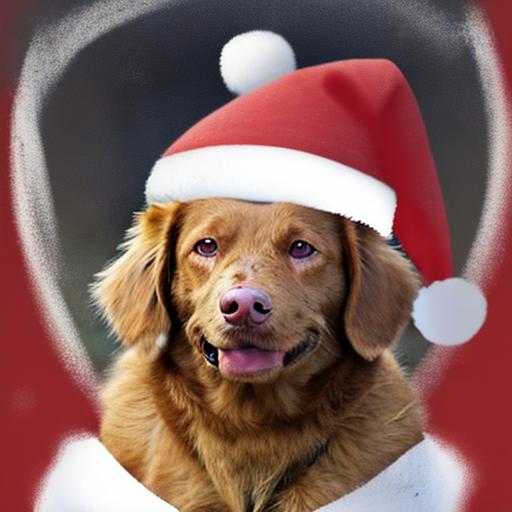}
        \includegraphics[width=.14\linewidth]{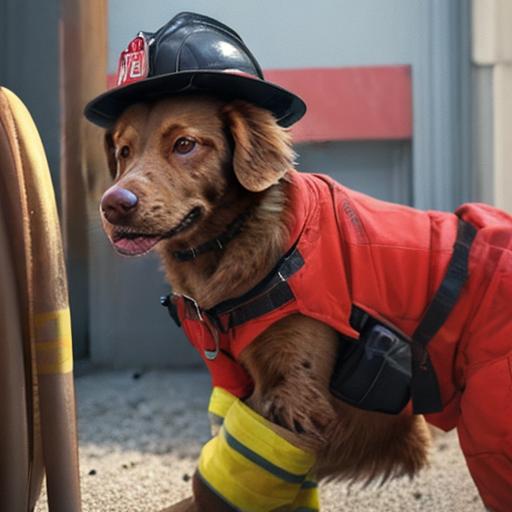}
        \includegraphics[width=.14\linewidth]{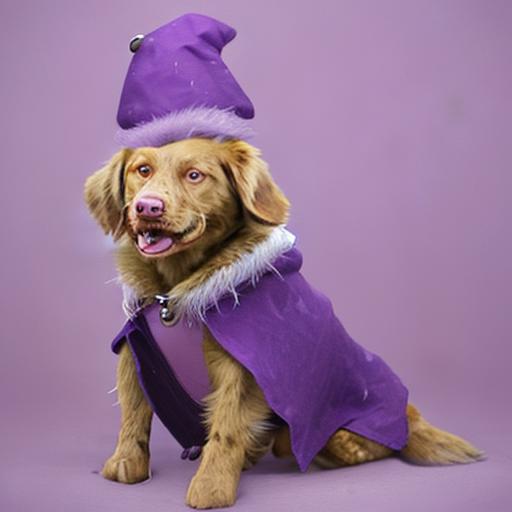}
        \includegraphics[width=.14\linewidth]{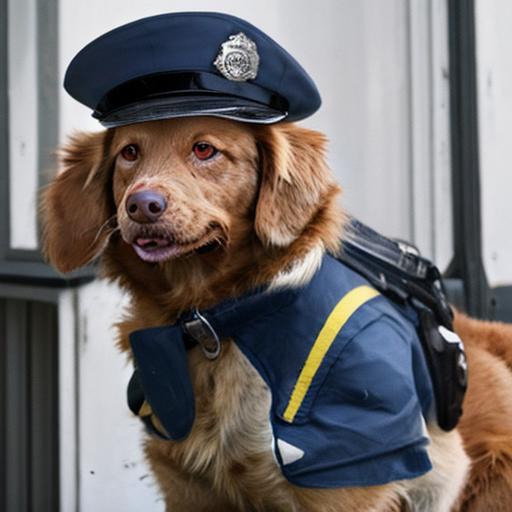}
        \includegraphics[width=.14\linewidth]{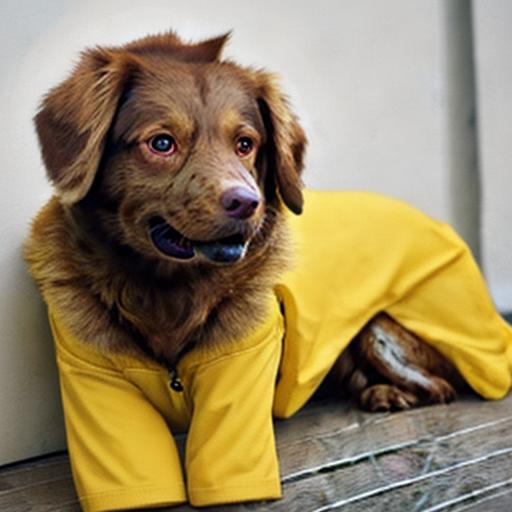}
        \includegraphics[width=.14\linewidth]{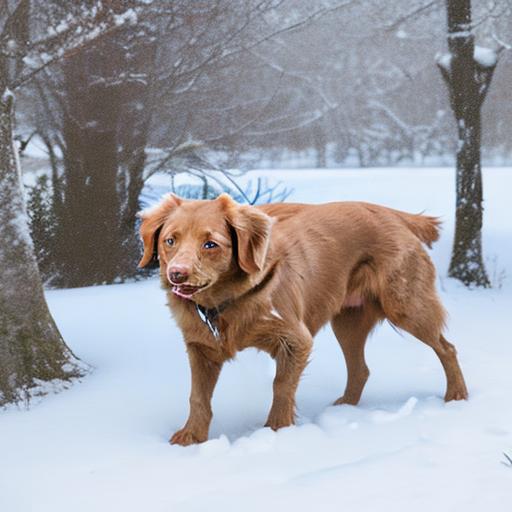}
    \end{subfigure}
    \begin{subfigure}[t]{\linewidth}
        \makebox[.1\linewidth]{Input}\hfill
        \makebox[.14\linewidth]{\begin{tabular}{c}wearing a\\santa hat\end{tabular}}
        \makebox[.14\linewidth]{\begin{tabular}{c}in a firefighter\\outfit\end{tabular}}
        \makebox[.14\linewidth]{\begin{tabular}{c}in a purple\\wizard outfit\end{tabular}}
        \makebox[.14\linewidth]{\begin{tabular}{c}in a police\\outfit\end{tabular}}
        \makebox[.14\linewidth]{\begin{tabular}{c}wearing a\\yellow shirt\end{tabular}}
        \makebox[.14\linewidth]{\begin{tabular}{c}in the snow\end{tabular}}
    \end{subfigure}
    \begin{subfigure}[t]{\linewidth}
        \raisebox{.019\linewidth}{\includegraphics[width=.1\linewidth]{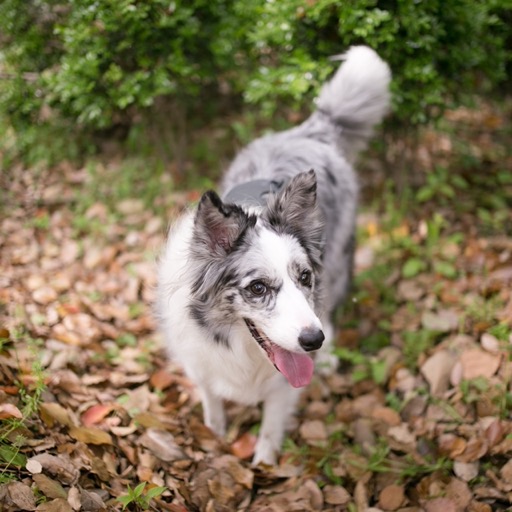}}\hfill
        \includegraphics[width=.14\linewidth]{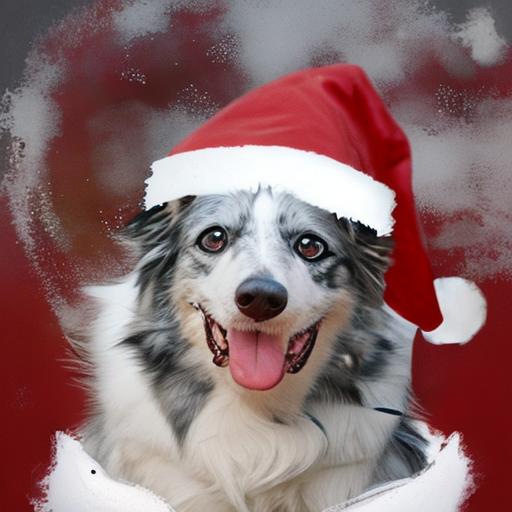}
        \includegraphics[width=.14\linewidth]{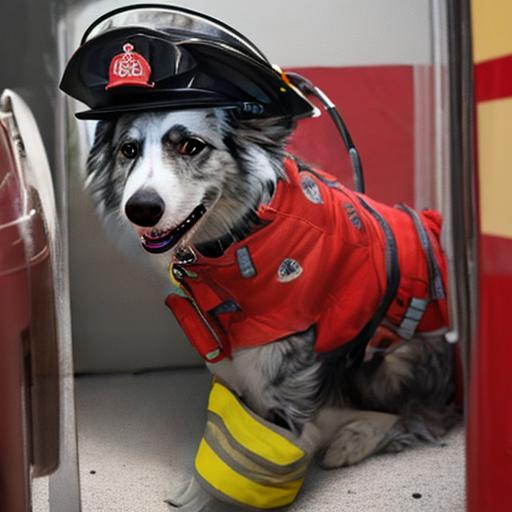}
        \includegraphics[width=.14\linewidth]{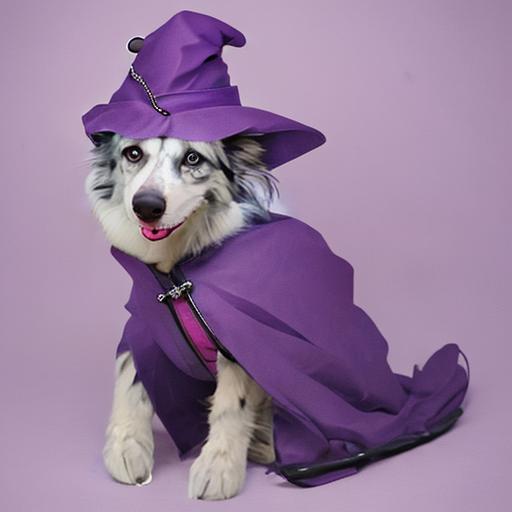}
        \includegraphics[width=.14\linewidth]{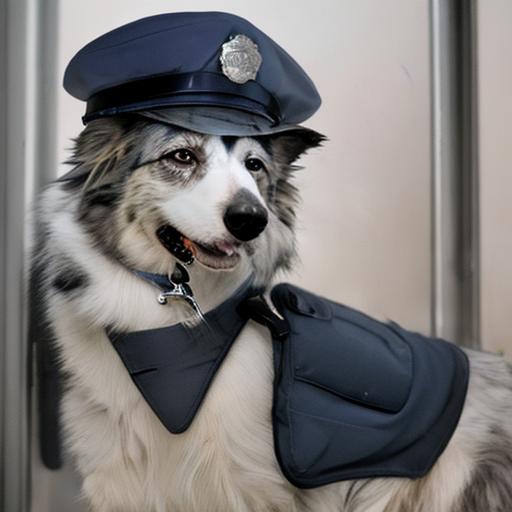}
        \includegraphics[width=.14\linewidth]{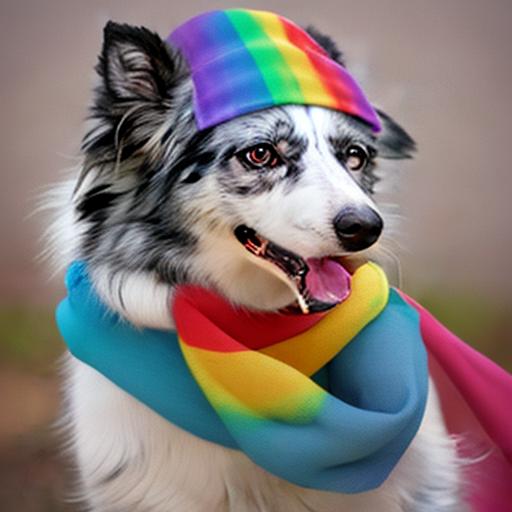}
        \includegraphics[width=.14\linewidth]{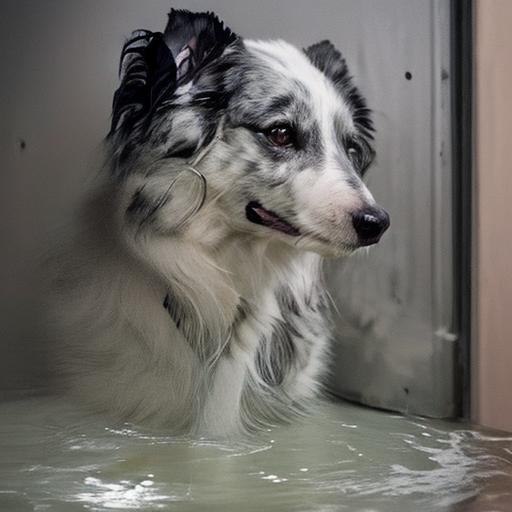}
    \end{subfigure}
    \begin{subfigure}[t]{\linewidth}
        \makebox[.1\linewidth]{Input}\hfill
        \makebox[.14\linewidth]{\begin{tabular}{c}wearing a\\santa hat\end{tabular}}
        \makebox[.14\linewidth]{\begin{tabular}{c}in a firefighter\\outfit\end{tabular}}
        \makebox[.14\linewidth]{\begin{tabular}{c}in a purple\\wizard outfit\end{tabular}}
        \makebox[.14\linewidth]{\begin{tabular}{c}in a police\\outfit\end{tabular}}
        \makebox[.14\linewidth]{\begin{tabular}{c}wearing a\\rainbow scarf\end{tabular}}
        \makebox[.14\linewidth]{\begin{tabular}{c}wet\end{tabular}}
    \end{subfigure}
    \begin{subfigure}[t]{\linewidth}
        \raisebox{.019\linewidth}{\includegraphics[width=.1\linewidth]{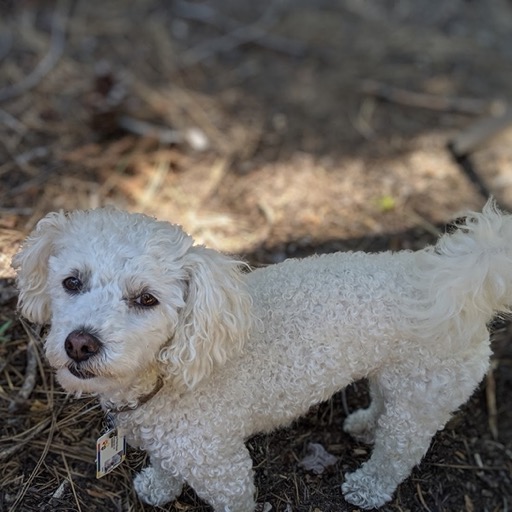}}\hfill
        \includegraphics[width=.14\linewidth]{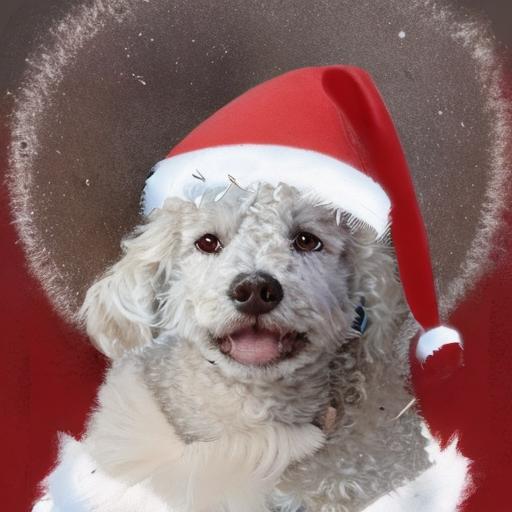}
        \includegraphics[width=.14\linewidth]{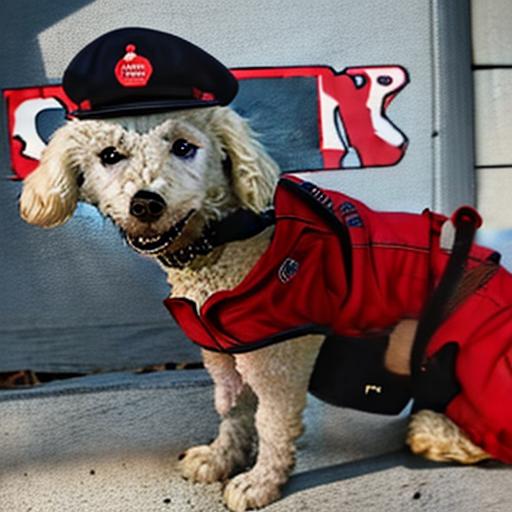}
        \includegraphics[width=.14\linewidth]{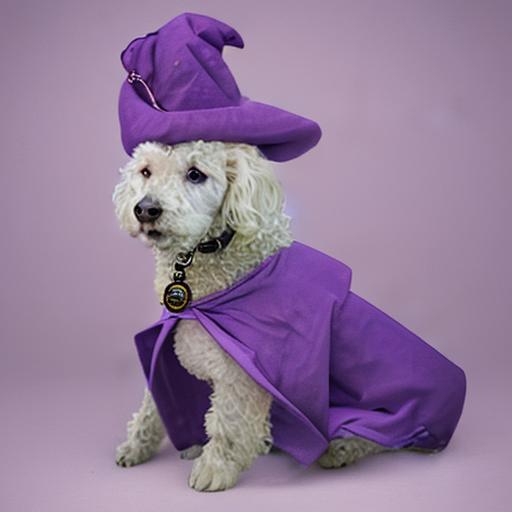}
        \includegraphics[width=.14\linewidth]{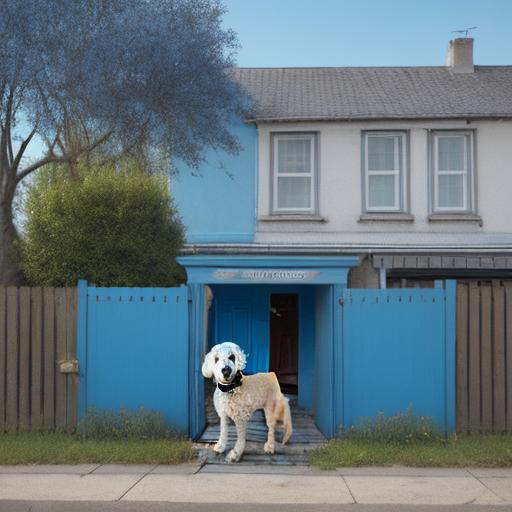}
        \includegraphics[width=.14\linewidth]{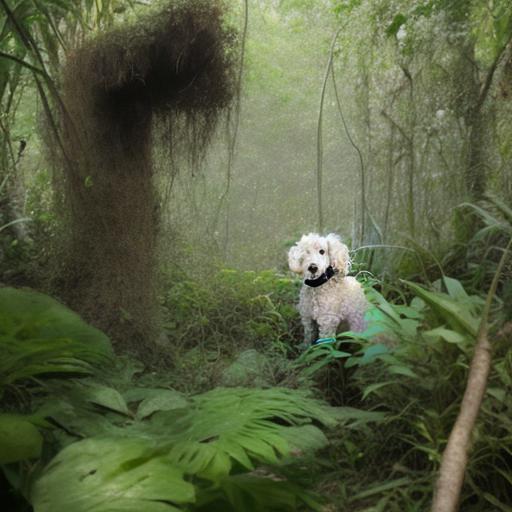}
        \includegraphics[width=.14\linewidth]{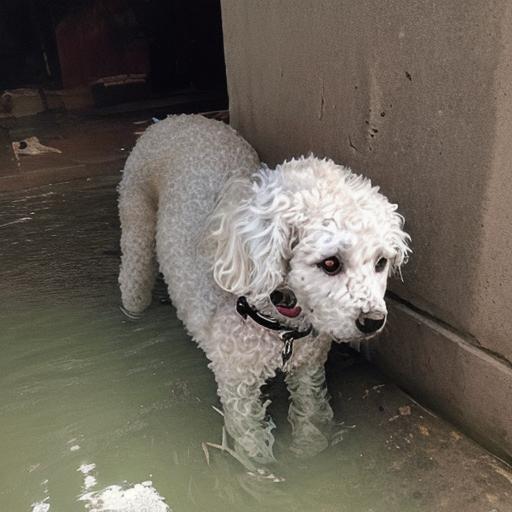}
    \end{subfigure}
    \begin{subfigure}[t]{\linewidth}
        \makebox[.1\linewidth]{Input}\hfill
        \makebox[.14\linewidth]{\begin{tabular}{c}wearing a\\santa hat\end{tabular}}
        \makebox[.14\linewidth]{\begin{tabular}{c}in a firefighter\\outfit\end{tabular}}
        \makebox[.14\linewidth]{\begin{tabular}{c}in a purple\\wizard outfit\end{tabular}}
        \makebox[.14\linewidth]{\begin{tabular}{c}in front of a\\blue house\end{tabular}}
        \makebox[.14\linewidth]{\begin{tabular}{c}in the jungle\end{tabular}}
        \makebox[.14\linewidth]{\begin{tabular}{c}wet\end{tabular}}
    \end{subfigure}
    \begin{subfigure}[t]{\linewidth}
        \raisebox{.019\linewidth}{\includegraphics[width=.1\linewidth]{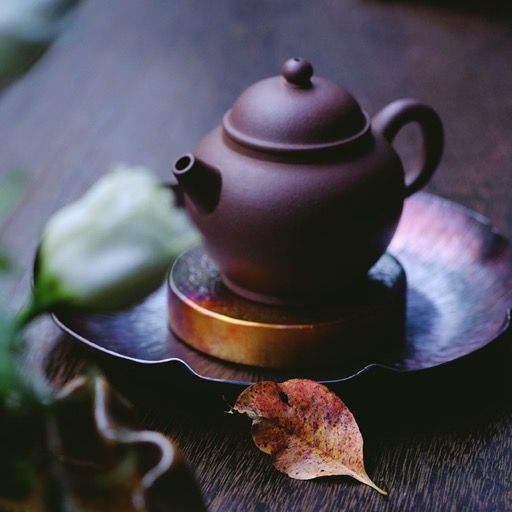}}\hfill
        \includegraphics[width=.14\linewidth]{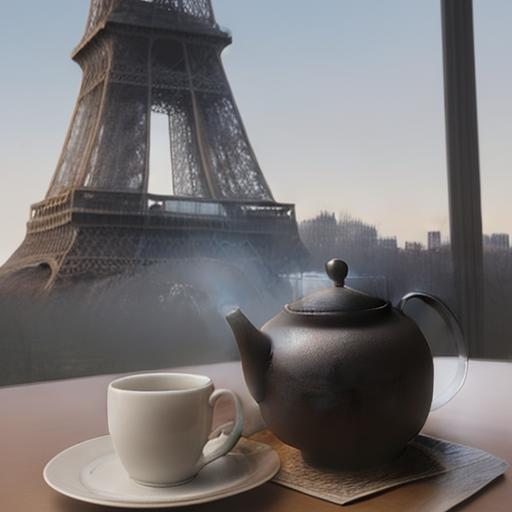}
        \includegraphics[width=.14\linewidth]{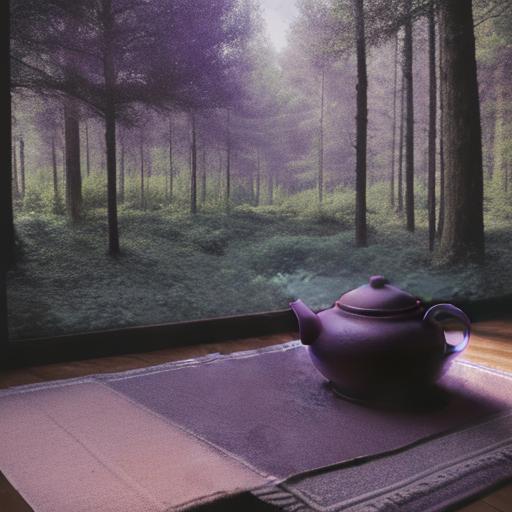}
        \includegraphics[width=.14\linewidth]{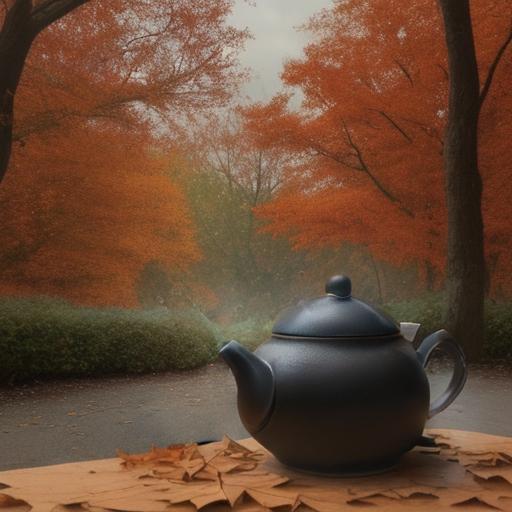}
        \includegraphics[width=.14\linewidth]{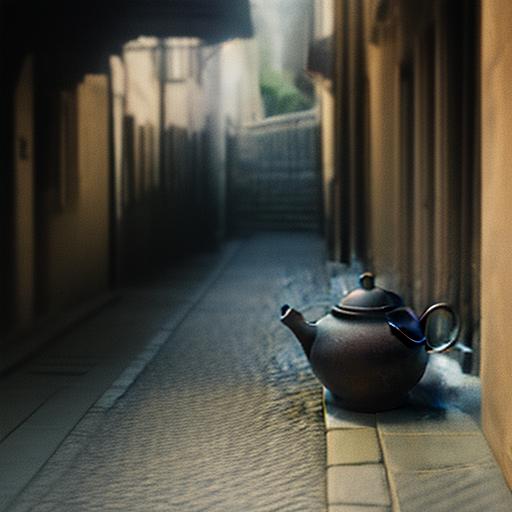}
        \includegraphics[width=.14\linewidth]{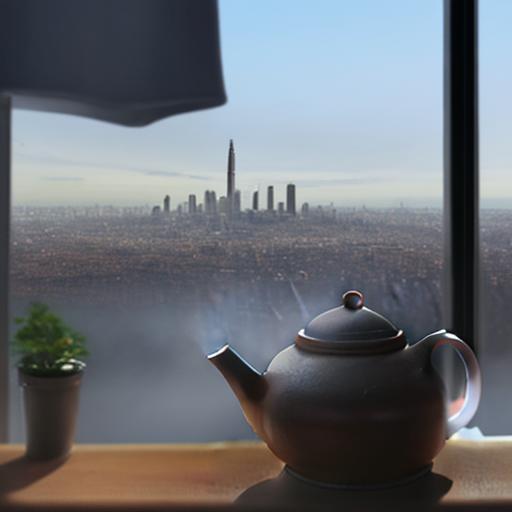}
        \includegraphics[width=.14\linewidth]{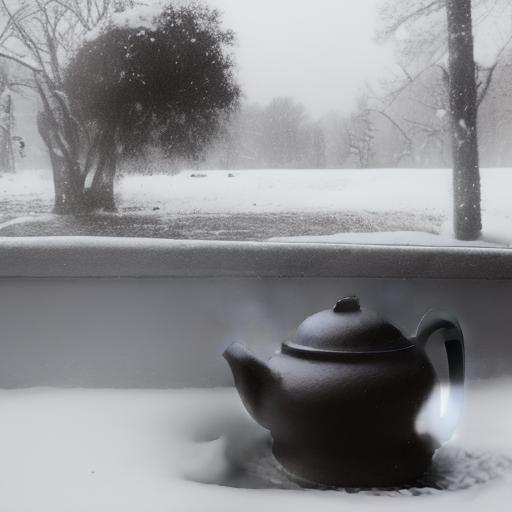}
    \end{subfigure}
    \begin{subfigure}[t]{\linewidth}
        \makebox[.1\linewidth]{Input}\hfill
        \makebox[.14\linewidth]{\begin{tabular}{c}in front of the\\Eiffel Tower\end{tabular}}
        \makebox[.14\linewidth]{\begin{tabular}{c}on a purple rug\\in the forest\end{tabular}}
        \makebox[.14\linewidth]{\begin{tabular}{c}with autumn\\leaves\end{tabular}}
        \makebox[.14\linewidth]{\begin{tabular}{c}on a cobblestone\\street\end{tabular}}
        \makebox[.14\linewidth]{\begin{tabular}{c}in front of a\\city\end{tabular}}
        \makebox[.14\linewidth]{\begin{tabular}{c}in the snow\end{tabular}}
    \end{subfigure}
    \begin{subfigure}[t]{\linewidth}
        \raisebox{.019\linewidth}{\includegraphics[width=.1\linewidth]{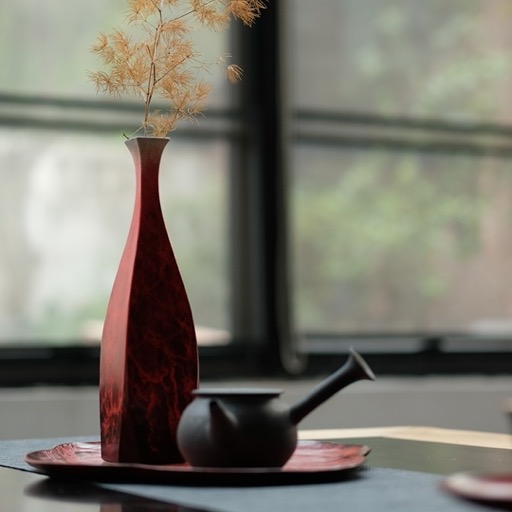}}\hfill
        \includegraphics[width=.14\linewidth]{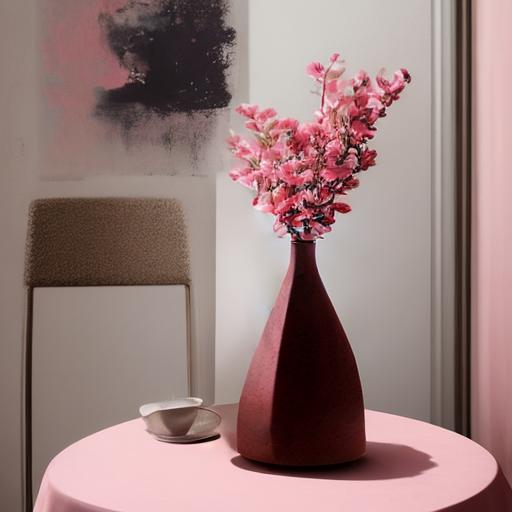}
        \includegraphics[width=.14\linewidth]{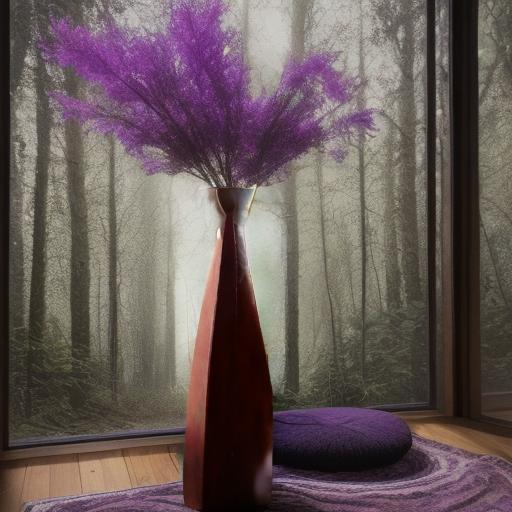}
        \includegraphics[width=.14\linewidth]{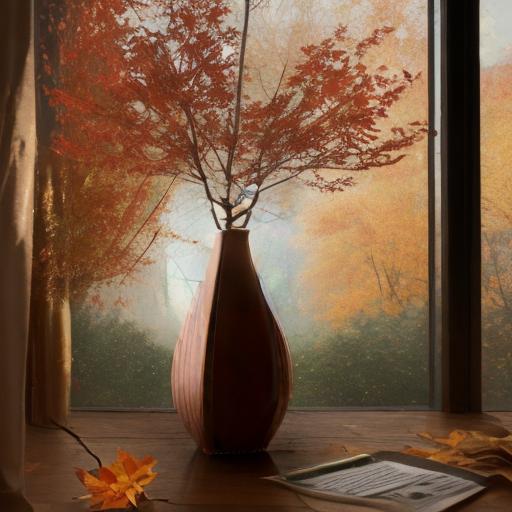}
        \includegraphics[width=.14\linewidth]{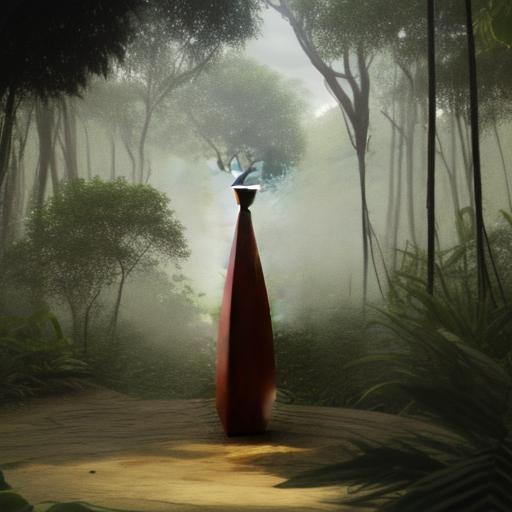}
        \includegraphics[width=.14\linewidth]{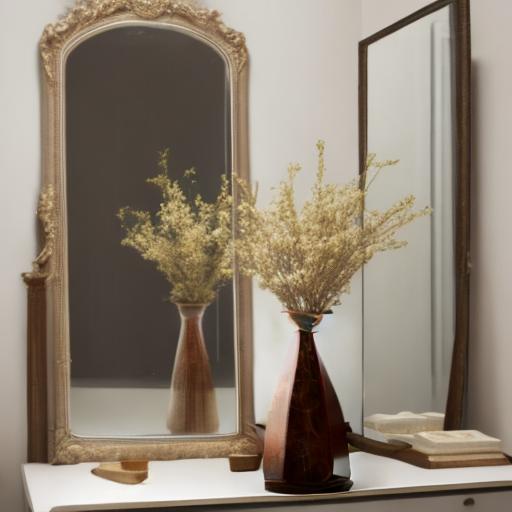}
        \includegraphics[width=.14\linewidth]{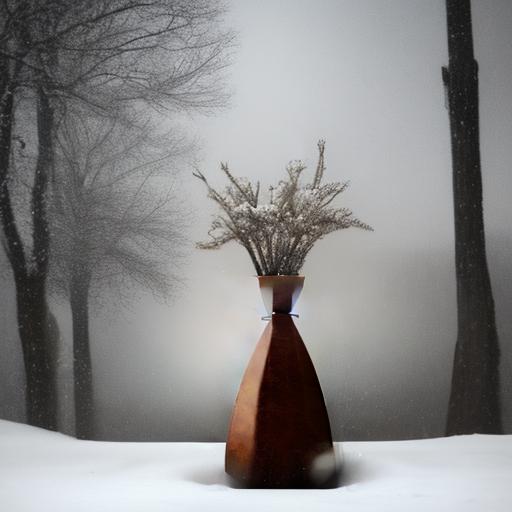}
    \end{subfigure}
    \begin{subfigure}[t]{\linewidth}
        \makebox[.1\linewidth]{Input}\hfill
        \makebox[.14\linewidth]{\begin{tabular}{c}on pink fabric\end{tabular}}
        \makebox[.14\linewidth]{\begin{tabular}{c}on a purple rug\\in the forest\end{tabular}}
        \makebox[.14\linewidth]{\begin{tabular}{c}with autumn\\leaves\end{tabular}}
        \makebox[.14\linewidth]{\begin{tabular}{c}in the jungle\end{tabular}}
        \makebox[.14\linewidth]{\begin{tabular}{c}on a mirror\end{tabular}}
        \makebox[.14\linewidth]{\begin{tabular}{c}in the snow\end{tabular}}
    \end{subfigure}
    \caption{Additional subject-driven generation results with piecewise rectified flow~\citep{yan2024perflow}, which is based on Stable Diffusion 1.5~\citep{rombach2022high}. Our approach preserves the identity of both live subjects and some regularly shaped objects. Please see~\cref{fig:single_object_2rflow} for more examples using the vanilla 2-rectified flow~\citep{liu2023flow, liu2024instaflow}.}
    \label{fig:single_object_app}
\end{figure}

\begin{figure}[t]
    \centering
    \footnotesize
    \begin{subfigure}[t]{\linewidth}
        \raisebox{.064\linewidth}{\parbox{.07\linewidth}{\setlength\lineskip{0pt}
            \includegraphics[width=\linewidth]{figs/ref_person/hinton.jpg}
            \includegraphics[width=\linewidth]{figs/ref_person/bengio.jpg}
        }}\hfill
        \includegraphics[width=.14\linewidth]{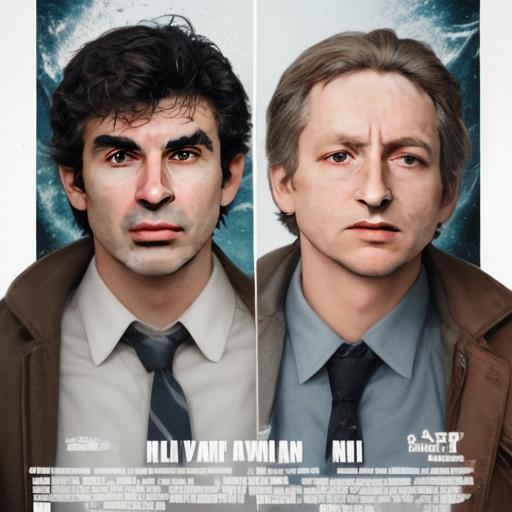}
        \includegraphics[width=.14\linewidth]{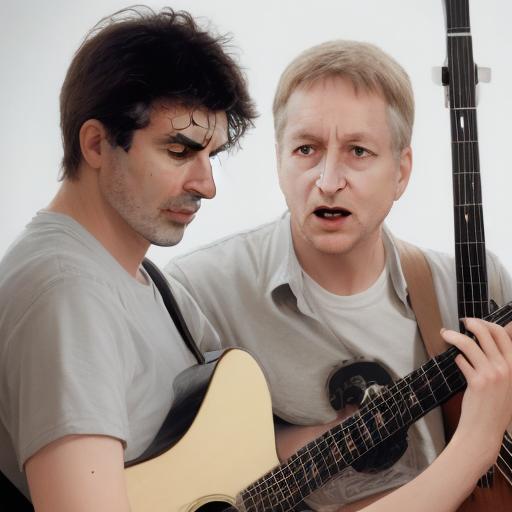}
        \includegraphics[width=.14\linewidth]{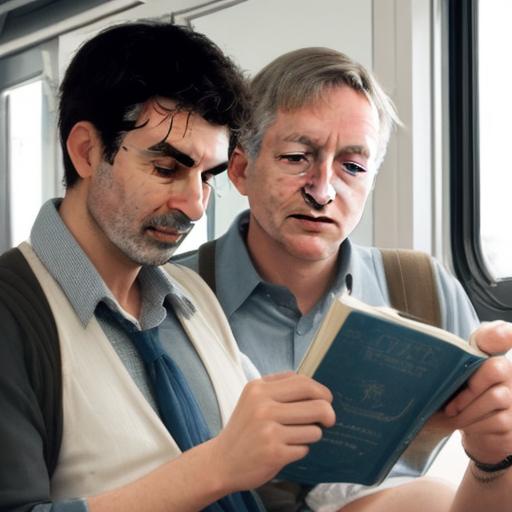}
        \includegraphics[width=.14\linewidth]{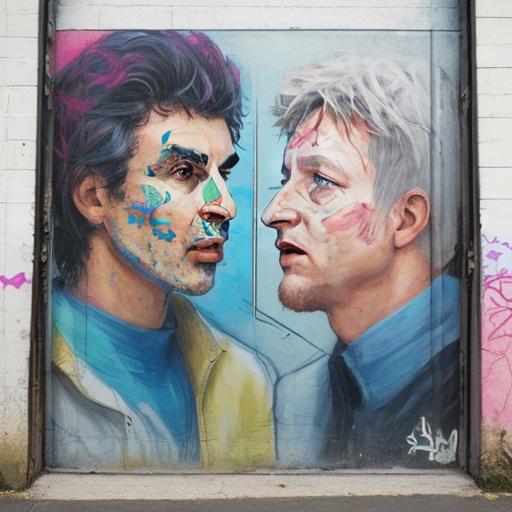}
        \includegraphics[width=.14\linewidth]{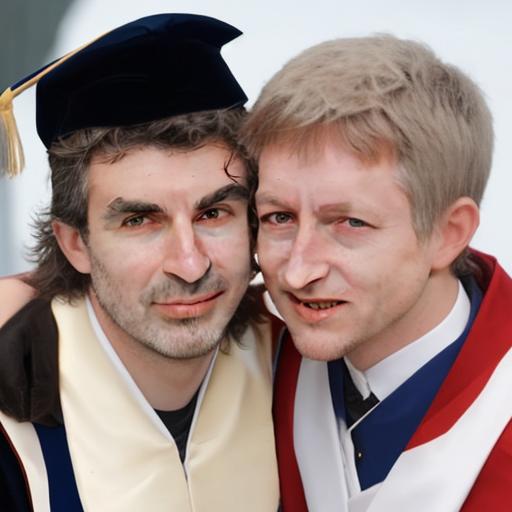}
        \includegraphics[width=.14\linewidth]{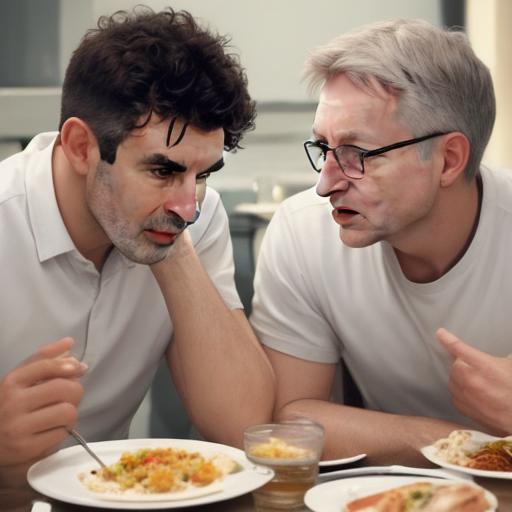}
    \end{subfigure}
    \begin{subfigure}[t]{\linewidth}
        \makebox[.07\linewidth]{Input}\hfill
        \makebox[.14\linewidth]{\begin{tabular}{c}movie poster\end{tabular}}
        \makebox[.14\linewidth]{\begin{tabular}{c}playing guitar\end{tabular}}
        \makebox[.14\linewidth]{\begin{tabular}{c}reading on\\the train\end{tabular}}
        \makebox[.14\linewidth]{\begin{tabular}{c}colorful mural\\on a street wal\end{tabular}}
        \makebox[.14\linewidth]{\begin{tabular}{c}graduating after\\finishing PhD\end{tabular}}
        \makebox[.14\linewidth]{\begin{tabular}{c}having dinner\\together\end{tabular}}
    \end{subfigure}
    \begin{subfigure}[t]{\linewidth}
        \raisebox{.064\linewidth}{\parbox{.07\linewidth}{\setlength\lineskip{0pt}
            \includegraphics[width=\linewidth]{figs/ref_person/chalamet.jpg}
            \includegraphics[width=\linewidth]{figs/ref_person/miranda.jpg}
        }}\hfill
        \includegraphics[width=.14\linewidth]{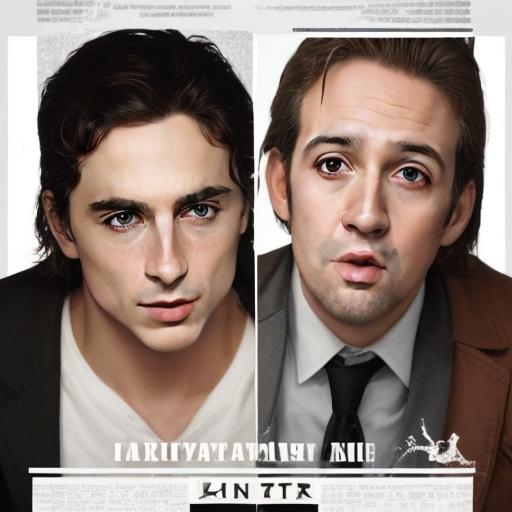}
        \includegraphics[width=.14\linewidth]{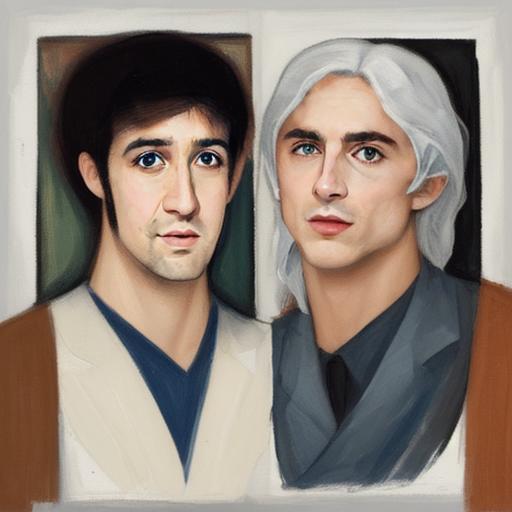}
        \includegraphics[width=.14\linewidth]{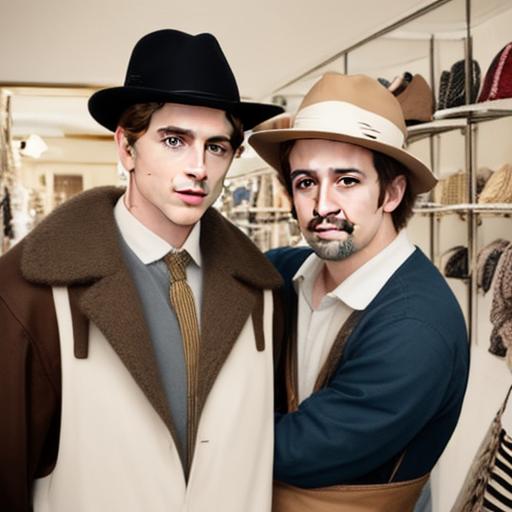}
        \includegraphics[width=.14\linewidth]{figs/perflow/multi_person/chalamet_miranda_amusement.jpg}
        \includegraphics[width=.14\linewidth]{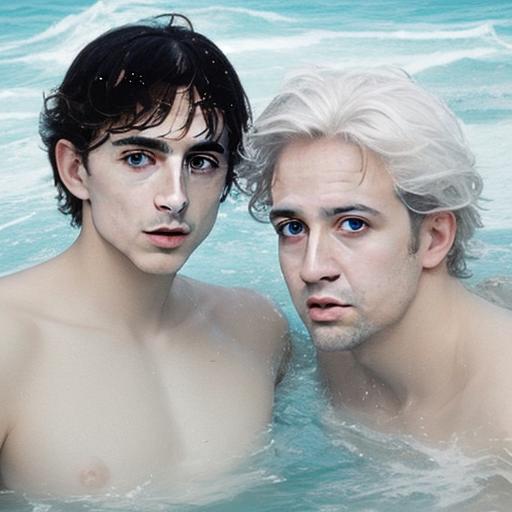}
        \includegraphics[width=.14\linewidth]{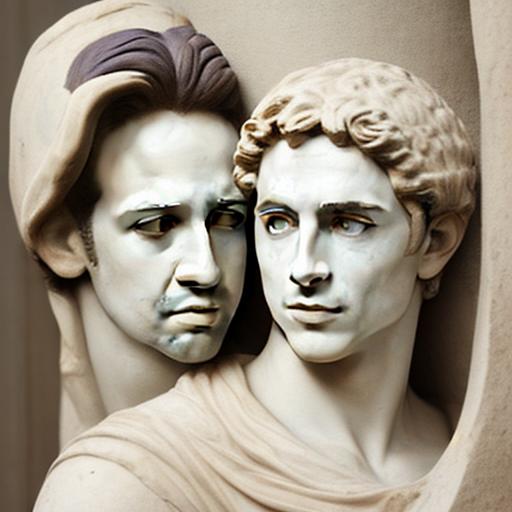}
    \end{subfigure}
    \begin{subfigure}[t]{\linewidth}
        \makebox[.07\linewidth]{Input}\hfill
        \makebox[.14\linewidth]{\begin{tabular}{c}movie poster\end{tabular}}
        \makebox[.14\linewidth]{\begin{tabular}{c}cubism\\painting\end{tabular}}
        \makebox[.14\linewidth]{\begin{tabular}{c}trying on hats\\in a vintage\\boutique\end{tabular}}
        \makebox[.14\linewidth]{\begin{tabular}{c}enjoying at an\\amusement\\park\end{tabular}}
        \makebox[.14\linewidth]{\begin{tabular}{c}swimming\end{tabular}}
        \makebox[.14\linewidth]{\begin{tabular}{c}Greek\\sculpture\end{tabular}}
    \end{subfigure}
    \begin{subfigure}[t]{\linewidth}
        \raisebox{.064\linewidth}{\parbox{.07\linewidth}{\setlength\lineskip{0pt}
            \includegraphics[width=\linewidth]{figs/ref_person/swift.jpg}
            \includegraphics[width=\linewidth]{figs/ref_person/johansson.jpg}
        }}\hfill
        \includegraphics[width=.14\linewidth]{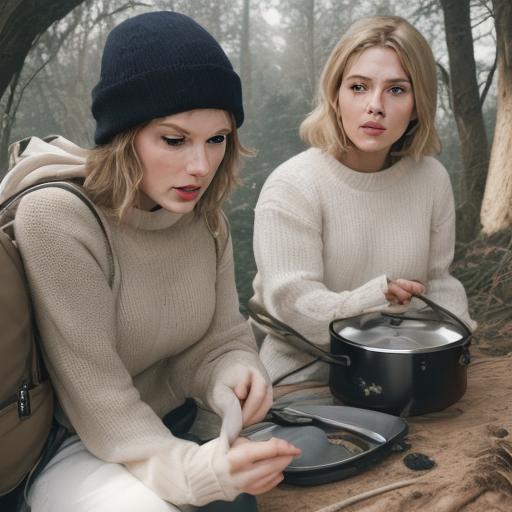}
        \includegraphics[width=.14\linewidth]{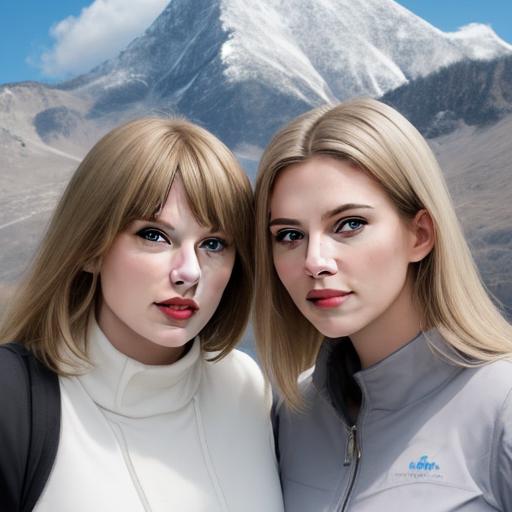}
        \includegraphics[width=.14\linewidth]{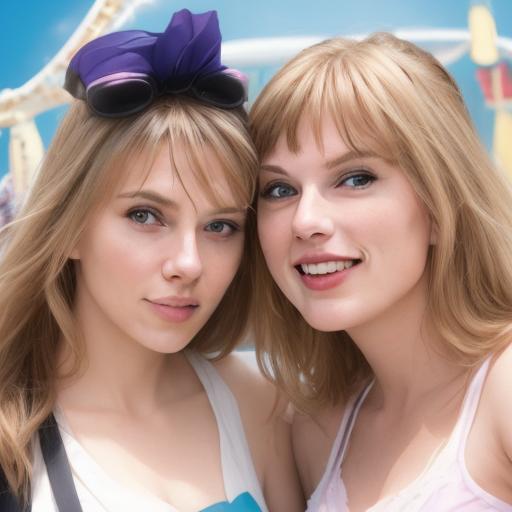}
        \includegraphics[width=.14\linewidth]{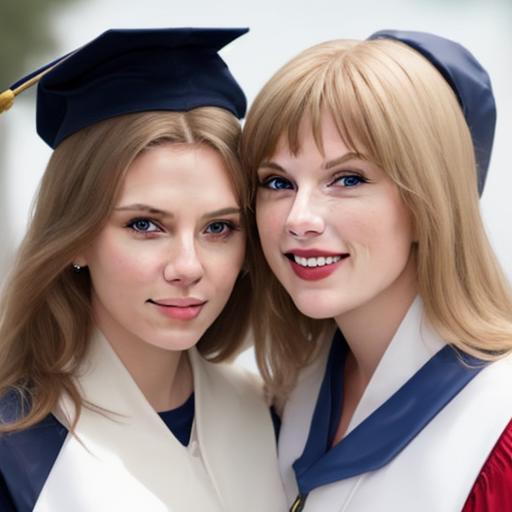}
        \includegraphics[width=.14\linewidth]{figs/perflow/multi_person/swift_johansson_sydney.jpg}
        \includegraphics[width=.14\linewidth]{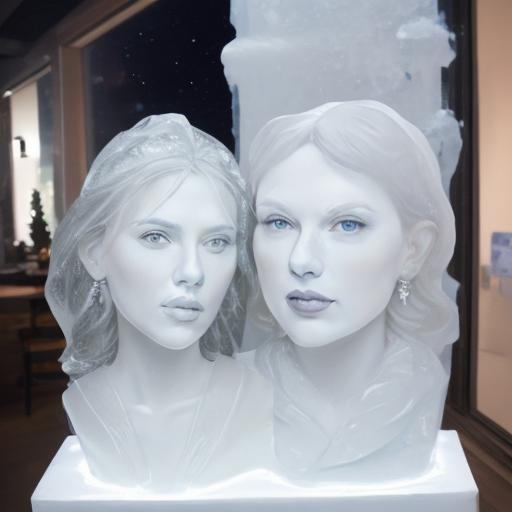}
    \end{subfigure}
    \begin{subfigure}[t]{\linewidth}
        \makebox[.07\linewidth]{Input}\hfill
        \makebox[.14\linewidth]{\begin{tabular}{c}camping in the\\outdoors\end{tabular}}
        \makebox[.14\linewidth]{\begin{tabular}{c}in front of a\\mountain\end{tabular}}
        \makebox[.14\linewidth]{\begin{tabular}{c}enjoying at an\\amusement\\park\end{tabular}}
        \makebox[.14\linewidth]{\begin{tabular}{c}graduating after\\finishing PhD\end{tabular}}
        \makebox[.14\linewidth]{\begin{tabular}{c}sailing near\\the Sydney\\Opera House\end{tabular}}
        \makebox[.14\linewidth]{\begin{tabular}{c}ice sculpture\end{tabular}}
    \end{subfigure}
    \begin{subfigure}[t]{\linewidth}
        \raisebox{.064\linewidth}{\parbox{.07\linewidth}{\setlength\lineskip{0pt}
            \includegraphics[width=\linewidth]{figs/ref_object/dog8_00.jpg}
            \includegraphics[width=\linewidth]{figs/ref_object/dog2_00.jpg}
        }}\hfill
        \includegraphics[width=.14\linewidth]{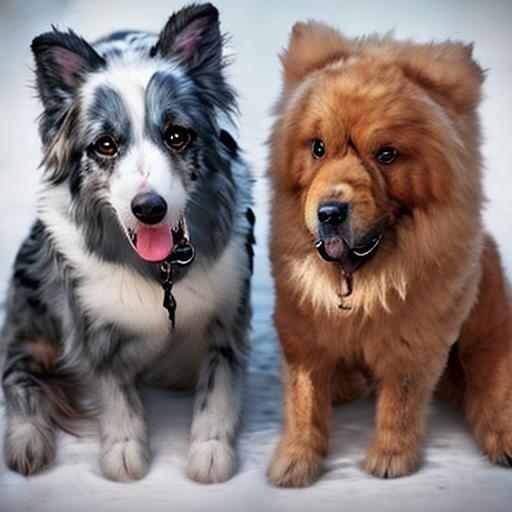}
        \includegraphics[width=.14\linewidth]{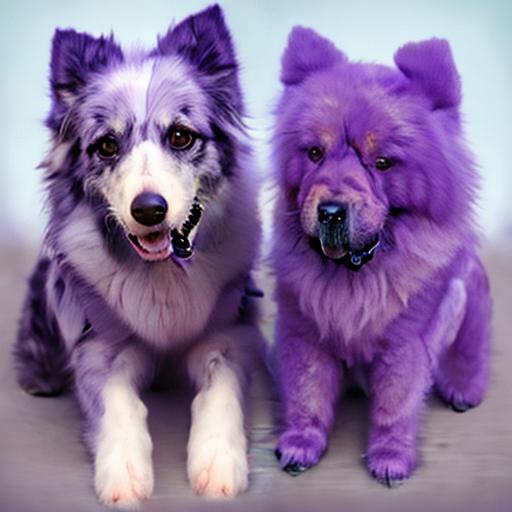}
        \includegraphics[width=.14\linewidth]{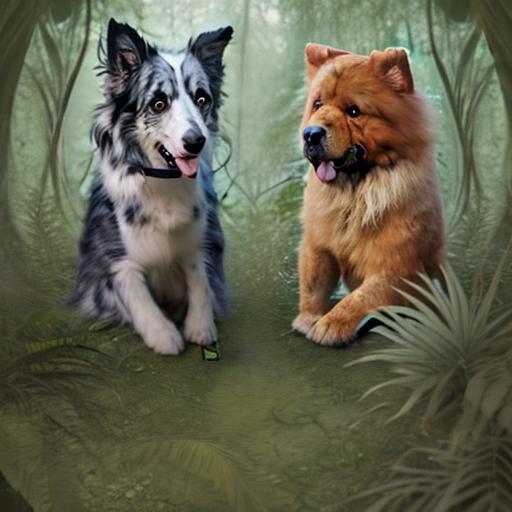}
        \includegraphics[width=.14\linewidth]{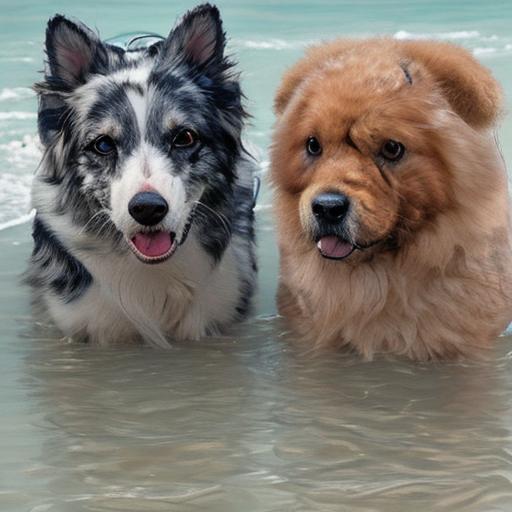}
        \includegraphics[width=.14\linewidth]{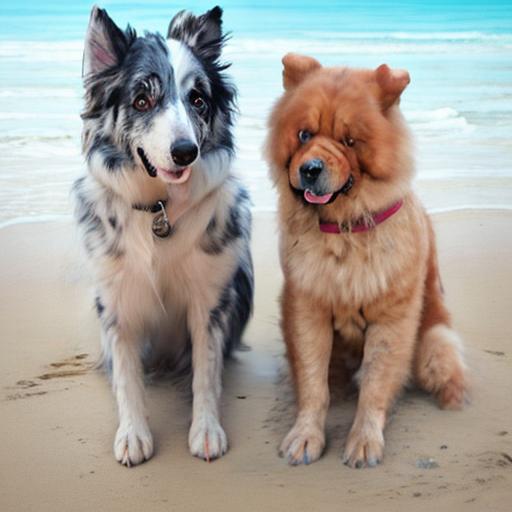}
        \includegraphics[width=.14\linewidth]{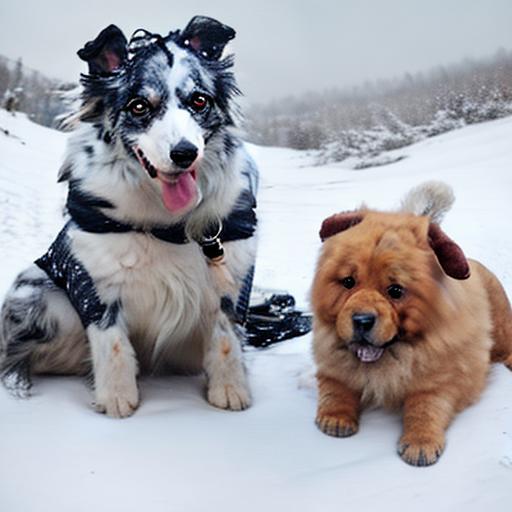}
    \end{subfigure}
    \begin{subfigure}[t]{\linewidth}
        \makebox[.07\linewidth]{Input}\hfill
        \makebox[.14\linewidth]{\begin{tabular}{c}shiny\end{tabular}}
        \makebox[.14\linewidth]{\begin{tabular}{c}purple\end{tabular}}
        \makebox[.14\linewidth]{\begin{tabular}{c}in the jungle\end{tabular}}
        \makebox[.14\linewidth]{\begin{tabular}{c}wet\end{tabular}}
        \makebox[.14\linewidth]{\begin{tabular}{c}on the beach\end{tabular}}
        \makebox[.14\linewidth]{\begin{tabular}{c}in the snow\end{tabular}}
    \end{subfigure}
    \begin{subfigure}[t]{\linewidth}
        \raisebox{.064\linewidth}{\parbox{.07\linewidth}{\setlength\lineskip{0pt}
            \includegraphics[width=\linewidth]{figs/ref_object/dog8_00.jpg}
            \includegraphics[width=\linewidth]{figs/ref_object/dog5_00.jpg}
        }}\hfill
        \includegraphics[width=.14\linewidth]{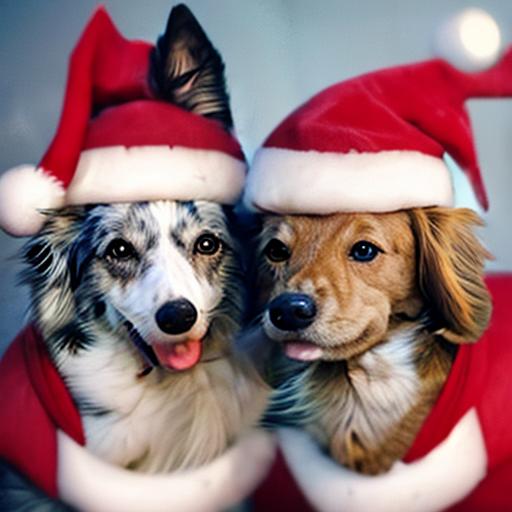}
        \includegraphics[width=.14\linewidth]{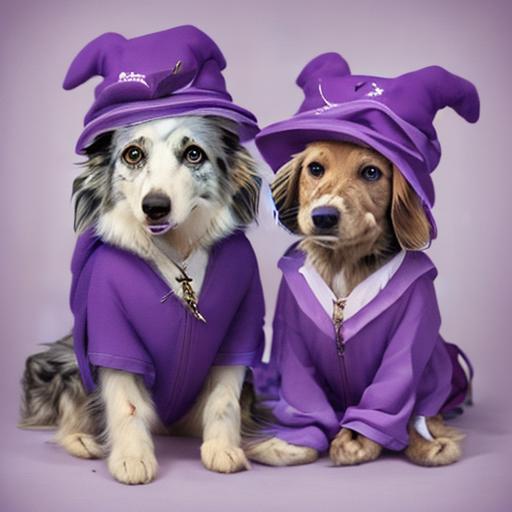}
        \includegraphics[width=.14\linewidth]{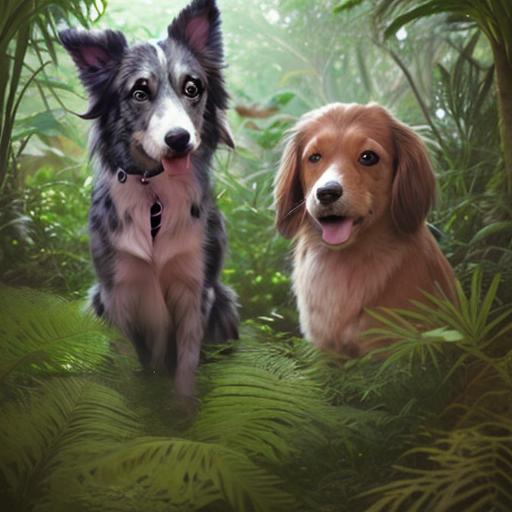}
        \includegraphics[width=.14\linewidth]{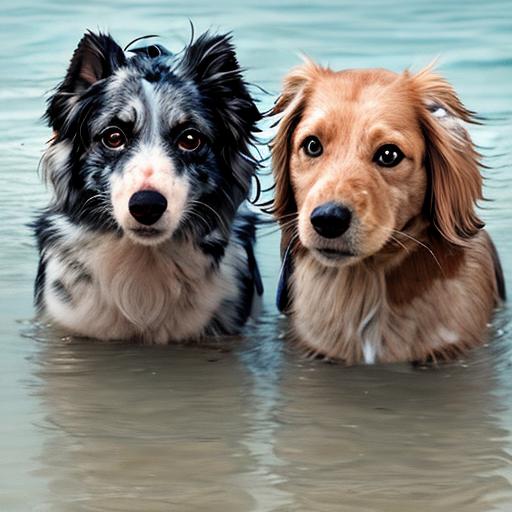}
        \includegraphics[width=.14\linewidth]{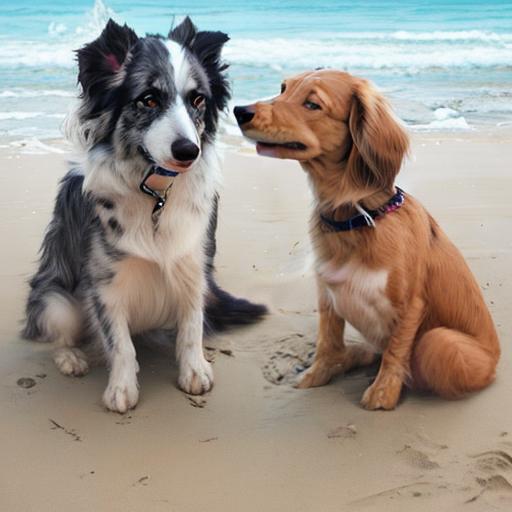}
        \includegraphics[width=.14\linewidth]{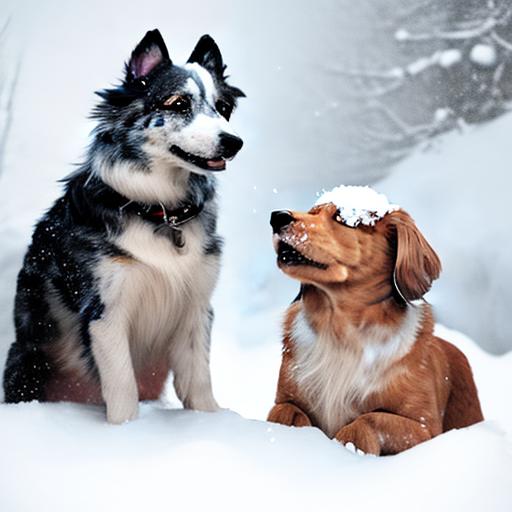}
    \end{subfigure}
    \begin{subfigure}[t]{\linewidth}
        \makebox[.07\linewidth]{Input}\hfill
        \makebox[.14\linewidth]{\begin{tabular}{c}wearing a\\santa hat\end{tabular}}
        \makebox[.14\linewidth]{\begin{tabular}{c}in a purple\\wizard outfit\end{tabular}}
        \makebox[.14\linewidth]{\begin{tabular}{c}in the jungle\end{tabular}}
        \makebox[.14\linewidth]{\begin{tabular}{c}wet\end{tabular}}
        \makebox[.14\linewidth]{\begin{tabular}{c}on the beach\end{tabular}}
        \makebox[.14\linewidth]{\begin{tabular}{c}in the snow\end{tabular}}
    \end{subfigure}
    \caption{Additional multi-subject personalization results with piecewise rectified flow~\citep{yan2024perflow}, which is based on Stable Diffusion 1.5~\citep{rombach2022high}. Our approach can naturally compose multiple subjects into the generated image while preserving their identities.}
    \label{fig:multi_subject_app}
\end{figure}

\begin{figure}[t]
    \centering
    \includegraphics[width=\linewidth]{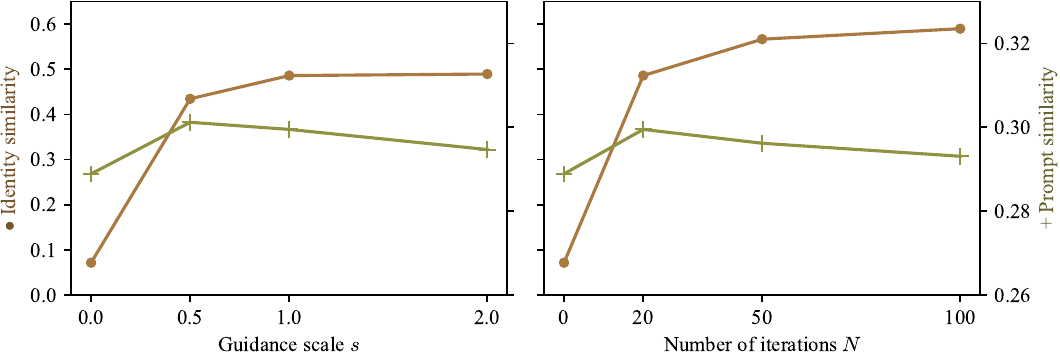}\hfill
    \caption{Ablation study of hyperparameters. Left: ablation study of guidance scale $s$ under $N=20$. Right: ablation study of the number of iterations $N$ under $s=1.0$. Our method remains effective over a reasonably wide range of hyperparameters.}
    \label{fig:ablation_app}
\end{figure}

\begin{figure}[t]
    \centering
    \footnotesize
    \begin{subfigure}[t]{\linewidth}
        \raisebox{.019\linewidth}{\includegraphics[width=.1\linewidth]{figs/ref_person/hinton.jpg}}\hfill
        \includegraphics[width=.14\linewidth]{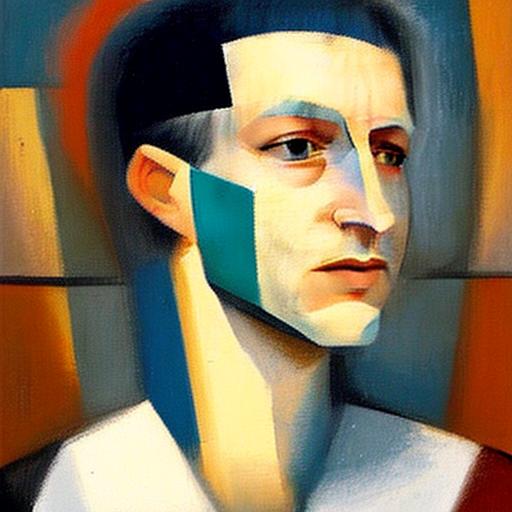}
        \includegraphics[width=.14\linewidth]{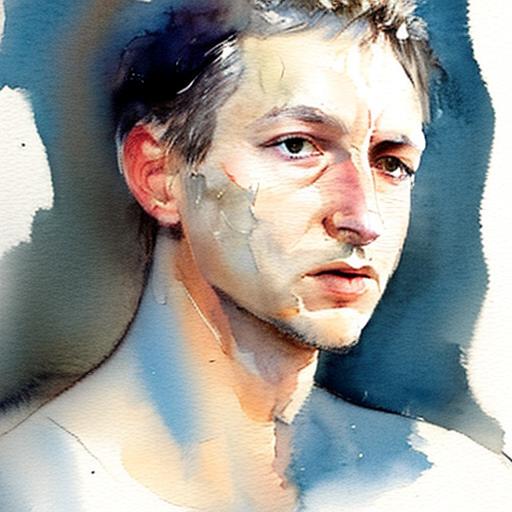}
        \includegraphics[width=.14\linewidth]{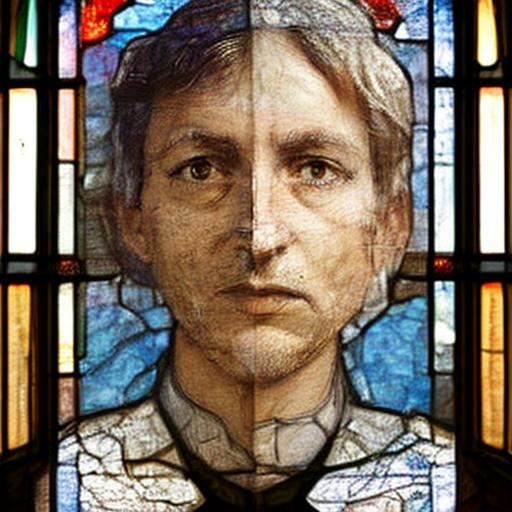}
        \includegraphics[width=.14\linewidth]{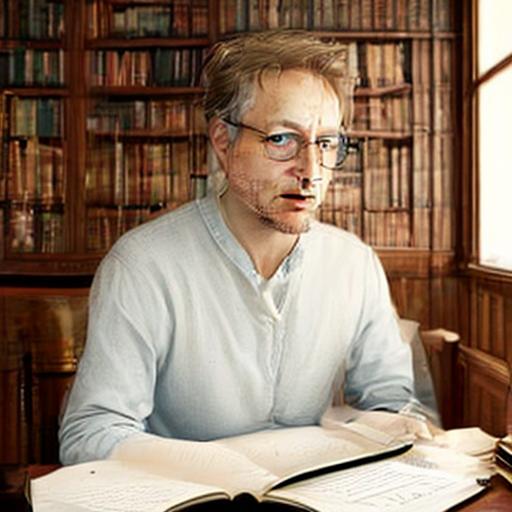}
        \includegraphics[width=.14\linewidth]{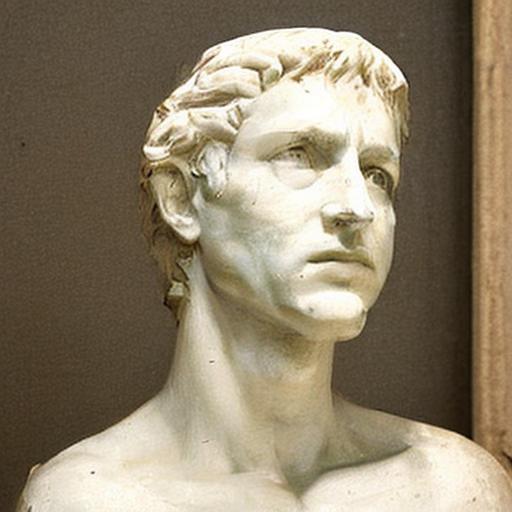}
        \includegraphics[width=.14\linewidth]{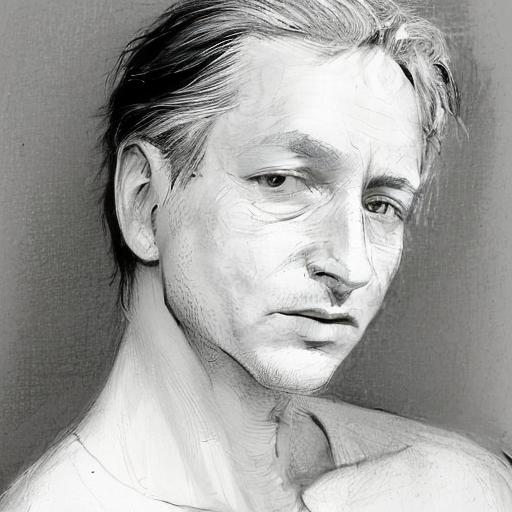}
    \end{subfigure}
    \begin{subfigure}[t]{\linewidth}
        \makebox[.1\linewidth]{Input}\hfill
        \makebox[.14\linewidth]{\begin{tabular}{c}cubism\\painting\end{tabular}}
        \makebox[.14\linewidth]{\begin{tabular}{c}watercolor\\painting\end{tabular}}
        \makebox[.14\linewidth]{\begin{tabular}{c}stained glass\\window\end{tabular}}
        \makebox[.14\linewidth]{\begin{tabular}{c}writing a novel\\in a home library\end{tabular}}
        \makebox[.14\linewidth]{\begin{tabular}{c}Greek\\sculpture\end{tabular}}
        \makebox[.14\linewidth]{\begin{tabular}{c}line art\end{tabular}}
    \end{subfigure}
    \begin{subfigure}[t]{\linewidth}
        \raisebox{.019\linewidth}{\includegraphics[width=.1\linewidth]{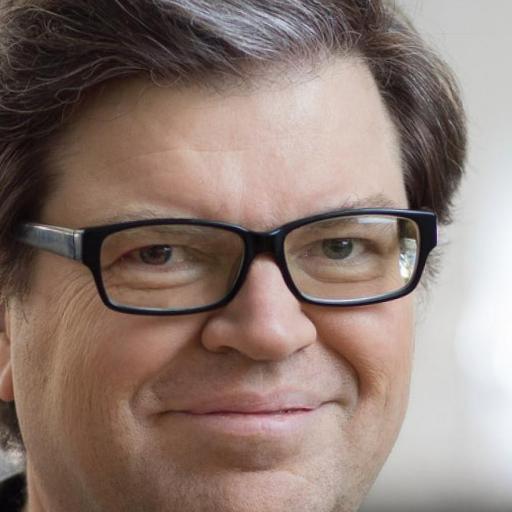}}\hfill
        \includegraphics[width=.14\linewidth]{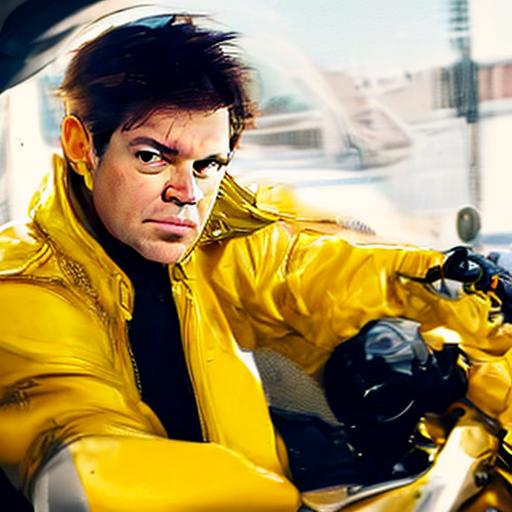}
        \includegraphics[width=.14\linewidth]{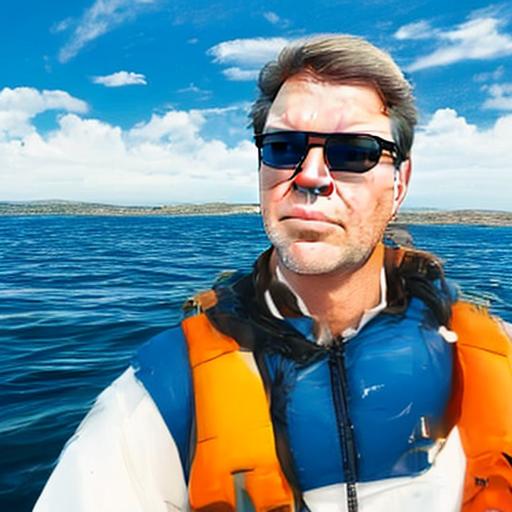}
        \includegraphics[width=.14\linewidth]{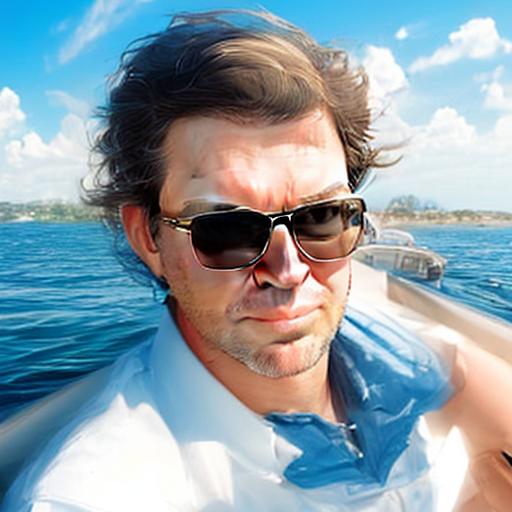}
        \includegraphics[width=.14\linewidth]{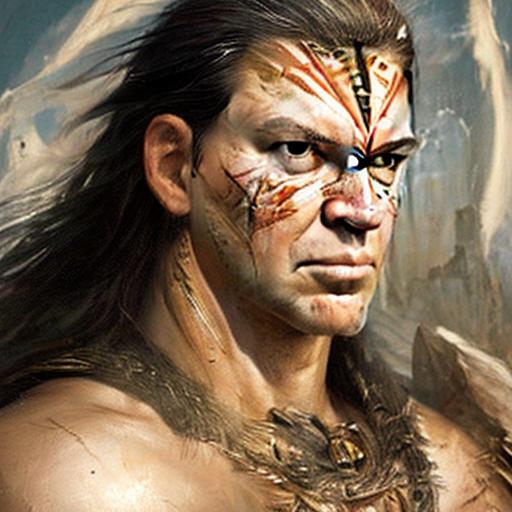}
        \includegraphics[width=.14\linewidth]{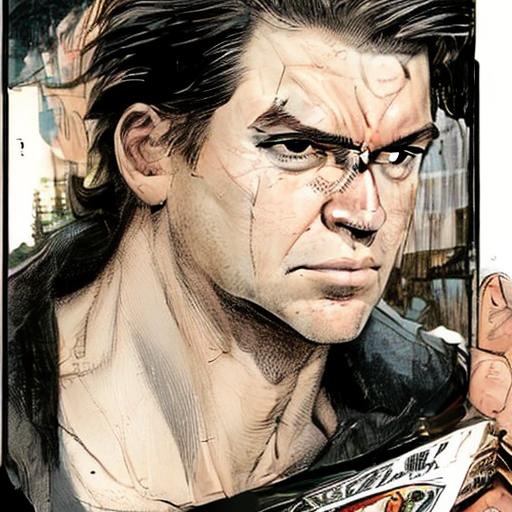}
        \includegraphics[width=.14\linewidth]{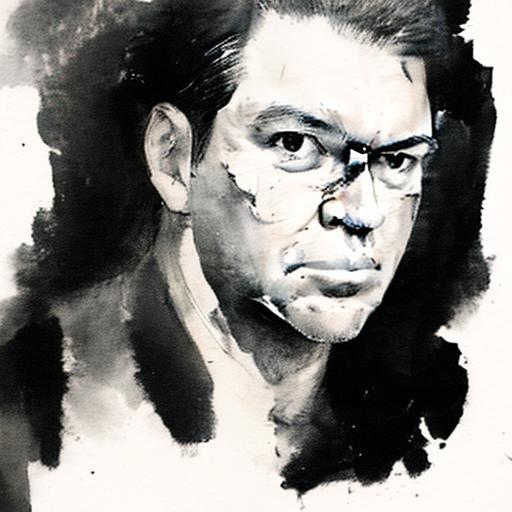}
    \end{subfigure}
    \begin{subfigure}[t]{\linewidth}
        \makebox[.1\linewidth]{Input}\hfill
        \makebox[.14\linewidth]{\begin{tabular}{c}driving a\\motorbike in\\yellow jacket\end{tabular}}
        \makebox[.14\linewidth]{\begin{tabular}{c}wearing a\\sunglass and\\a life jacket\end{tabular}}
        \makebox[.14\linewidth]{\begin{tabular}{c}wearing a\\sunglass\\on a boat\end{tabular}}
        \makebox[.14\linewidth]{\begin{tabular}{c}as an amazon\\warrior\end{tabular}}
        \makebox[.14\linewidth]{\begin{tabular}{c}in a comic\\book\end{tabular}}
        \makebox[.14\linewidth]{\begin{tabular}{c}ink painting\end{tabular}}
    \end{subfigure}
    \begin{subfigure}[t]{\linewidth}
        \raisebox{.019\linewidth}{\includegraphics[width=.1\linewidth]{figs/ref_person/bengio.jpg}}\hfill
        \includegraphics[width=.14\linewidth]{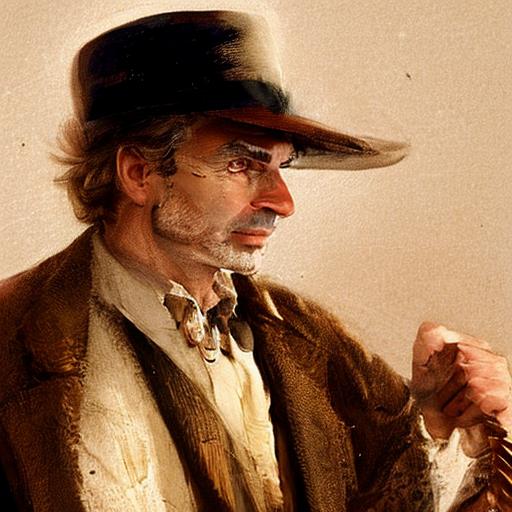}
        \includegraphics[width=.14\linewidth]{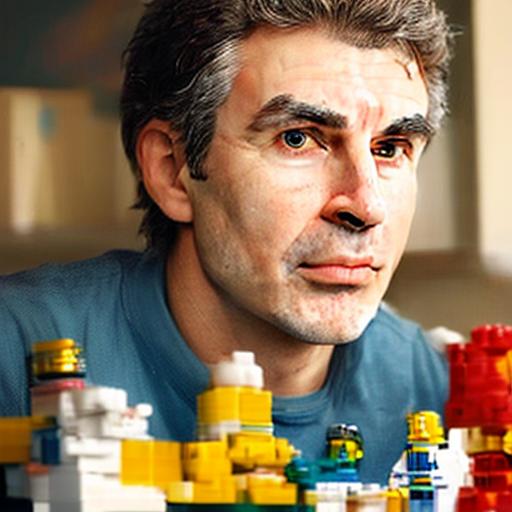}
        \includegraphics[width=.14\linewidth]{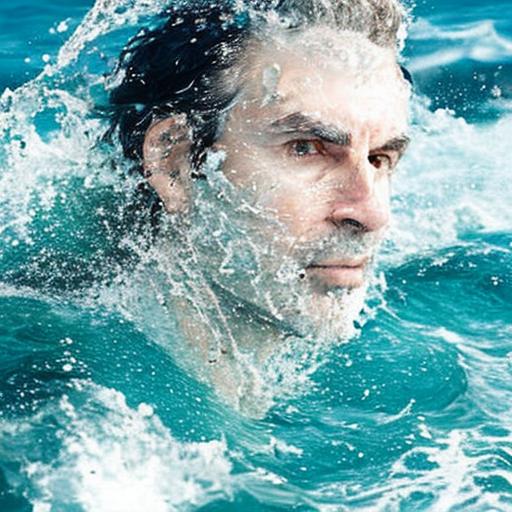}
        \includegraphics[width=.14\linewidth]{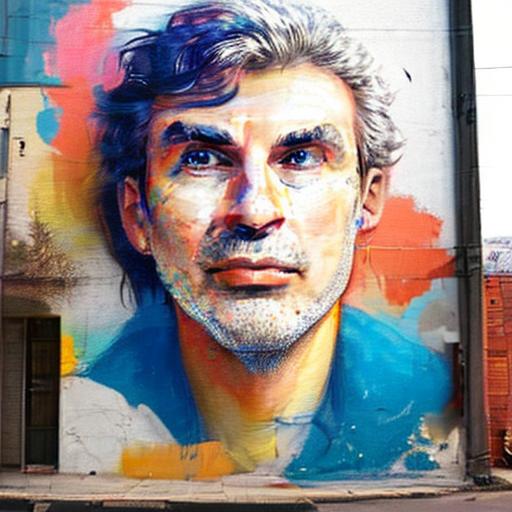}
        \includegraphics[width=.14\linewidth]{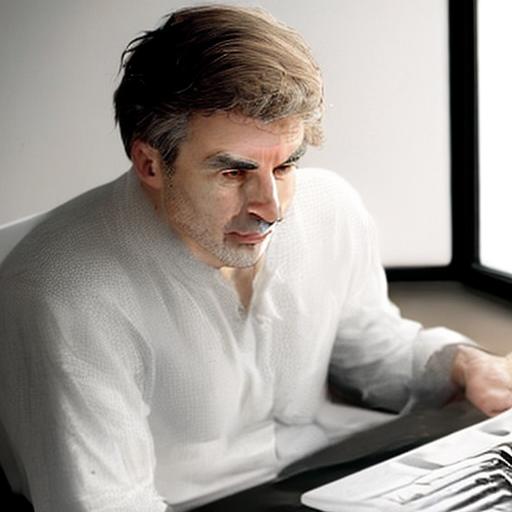}
        \includegraphics[width=.14\linewidth]{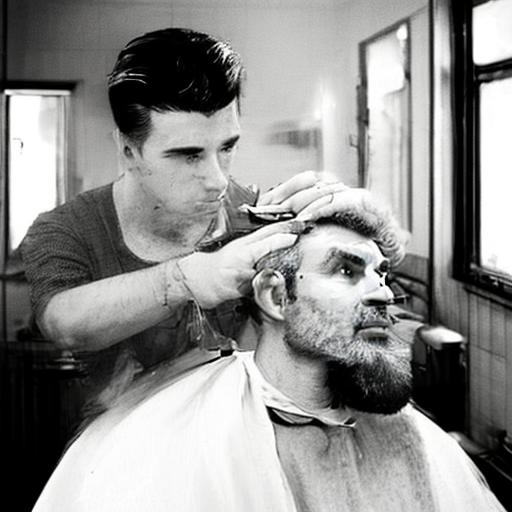}
    \end{subfigure}
    \begin{subfigure}[t]{\linewidth}
        \makebox[.1\linewidth]{Input}\hfill
        \makebox[.14\linewidth]{\begin{tabular}{c}wearing a brown\\jacket and a hat,\\holding a whip\end{tabular}}
        \makebox[.14\linewidth]{\begin{tabular}{c}playing the\\LEGO toys \end{tabular}}
        \makebox[.14\linewidth]{\begin{tabular}{c}swimming\end{tabular}}
        \makebox[.14\linewidth]{\begin{tabular}{c}colorful mural\\on a street wall\end{tabular}}
        \makebox[.14\linewidth]{\begin{tabular}{c}typing a paper\\on a laptop\end{tabular}}
        \makebox[.14\linewidth]{\begin{tabular}{c}having a haircut\\in a classic\\barbershop\end{tabular}}
    \end{subfigure}
    \begin{subfigure}[t]{\linewidth}
        \raisebox{.019\linewidth}{\includegraphics[width=.1\linewidth]{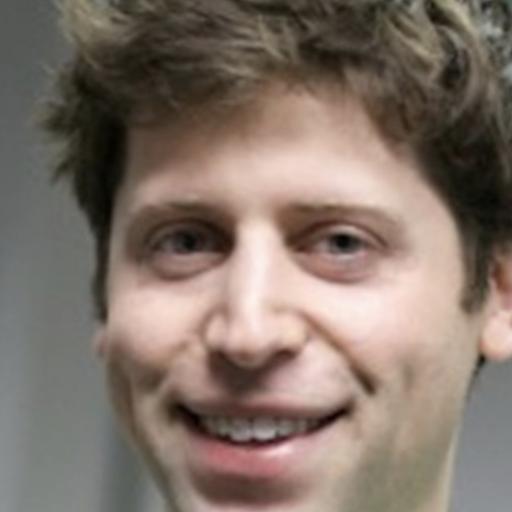}}\hfill
        \includegraphics[width=.14\linewidth]{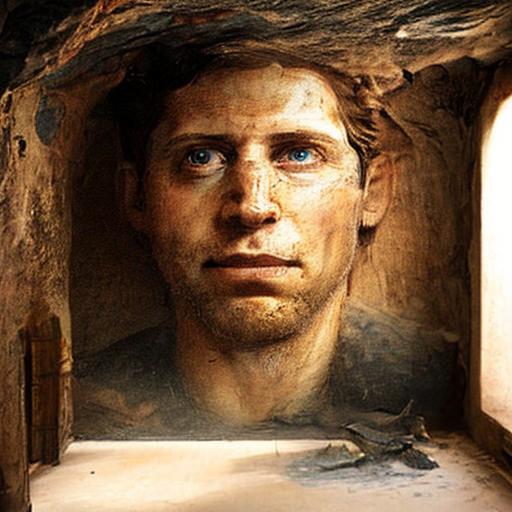}
        \includegraphics[width=.14\linewidth]{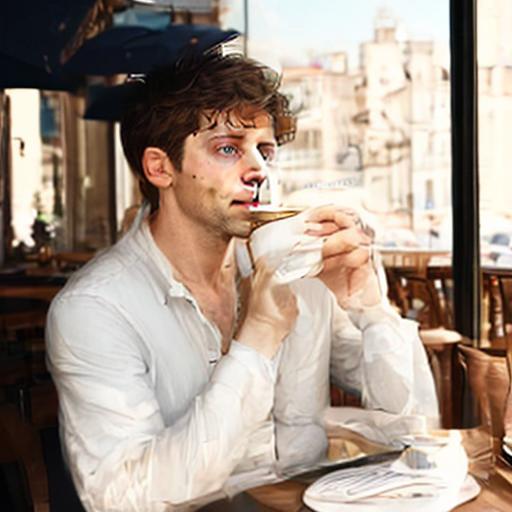}
        \includegraphics[width=.14\linewidth]{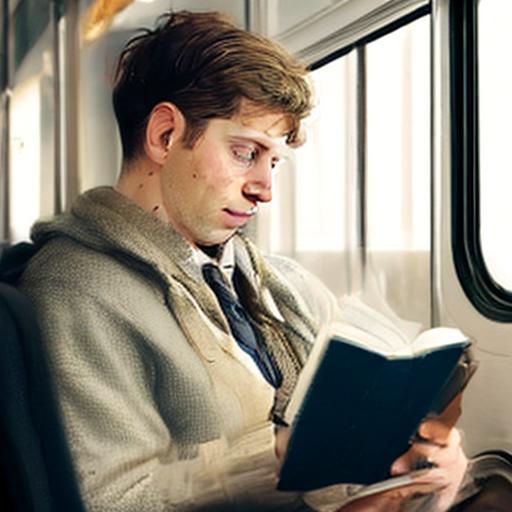}
        \includegraphics[width=.14\linewidth]{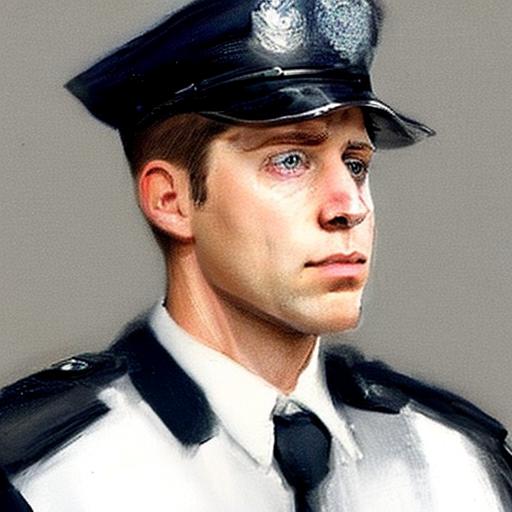}
        \includegraphics[width=.14\linewidth]{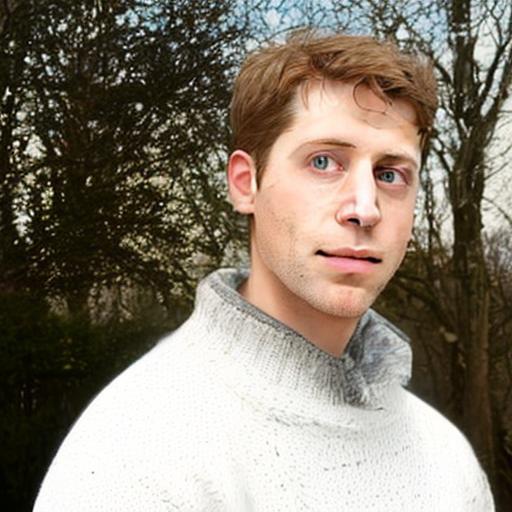}
        \includegraphics[width=.14\linewidth]{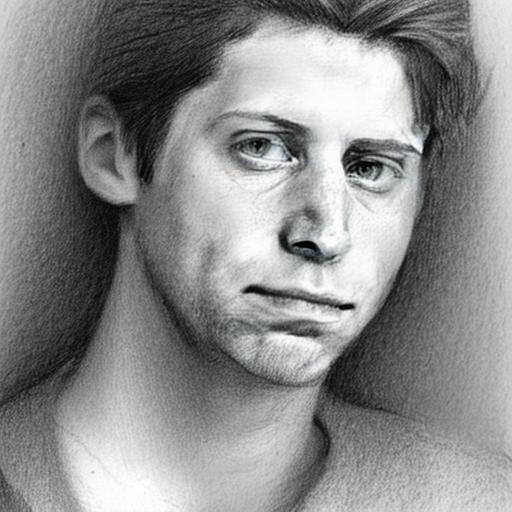}
    \end{subfigure}
    \begin{subfigure}[t]{\linewidth}
        \makebox[.1\linewidth]{Input}\hfill
        \makebox[.14\linewidth]{\begin{tabular}{c}cave mural\end{tabular}}
        \makebox[.14\linewidth]{\begin{tabular}{c}sipping coffee\\at a cafe\end{tabular}}
        \makebox[.14\linewidth]{\begin{tabular}{c}reading on\\the train\end{tabular}}
        \makebox[.14\linewidth]{\begin{tabular}{c}in a police\\outfit\end{tabular}}
        \makebox[.14\linewidth]{\begin{tabular}{c}wearing a sweater\\outdoors\end{tabular}}
        \makebox[.14\linewidth]{\begin{tabular}{c}pencil\\drawing\end{tabular}}
    \end{subfigure}
    \begin{subfigure}[t]{\linewidth}
        \raisebox{.019\linewidth}{\includegraphics[width=.1\linewidth]{figs/ref_person/swift.jpg}}\hfill
        \includegraphics[width=.14\linewidth]{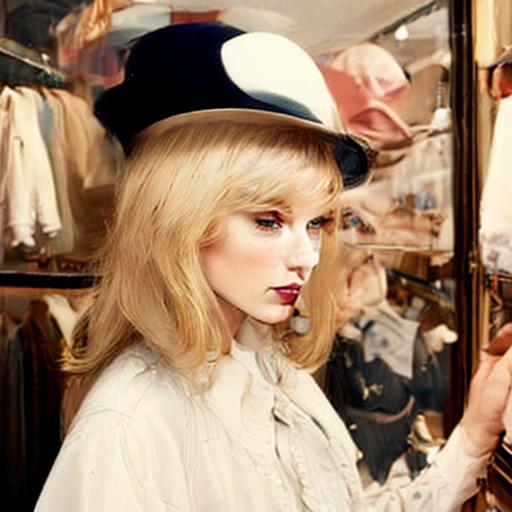}
        \includegraphics[width=.14\linewidth]{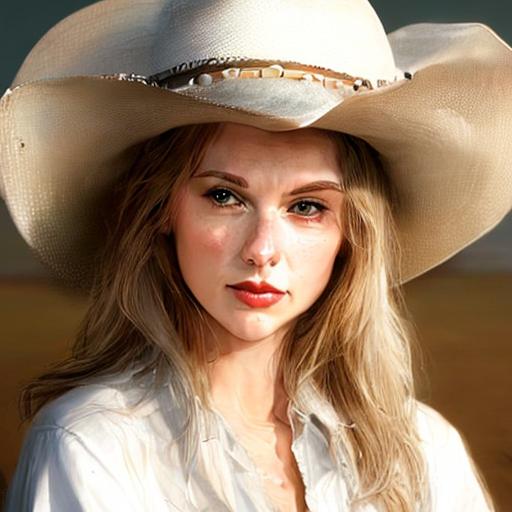}
        \includegraphics[width=.14\linewidth]{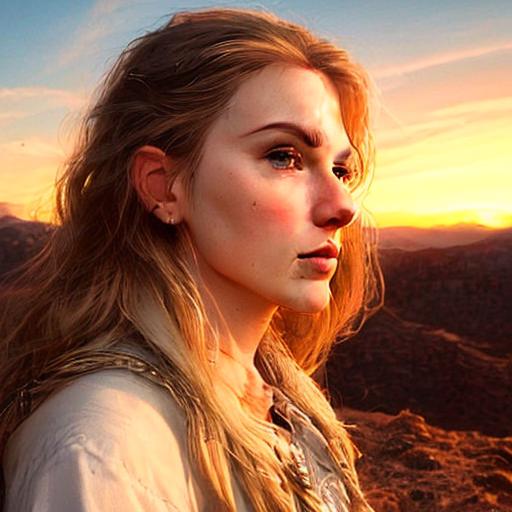}
        \includegraphics[width=.14\linewidth]{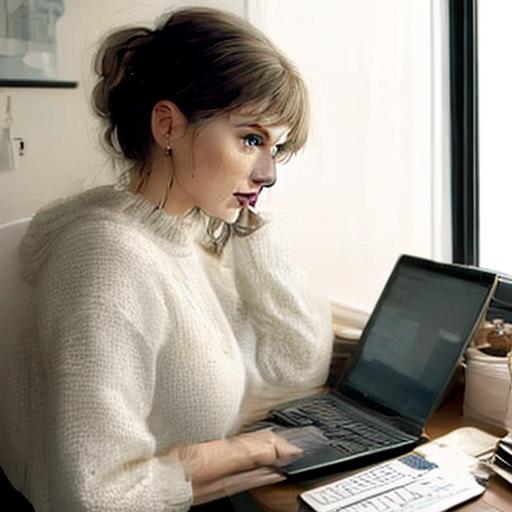}
        \includegraphics[width=.14\linewidth]{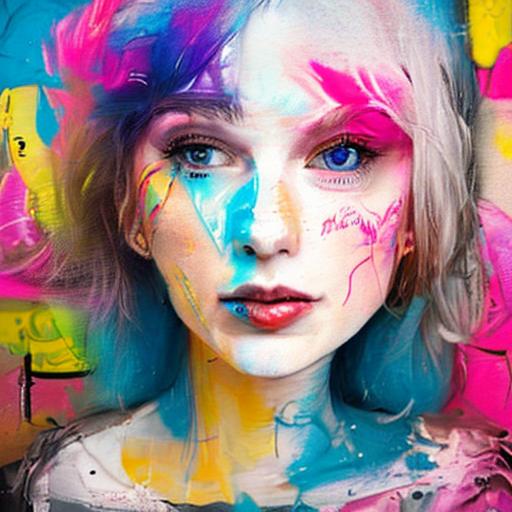}
        \includegraphics[width=.14\linewidth]{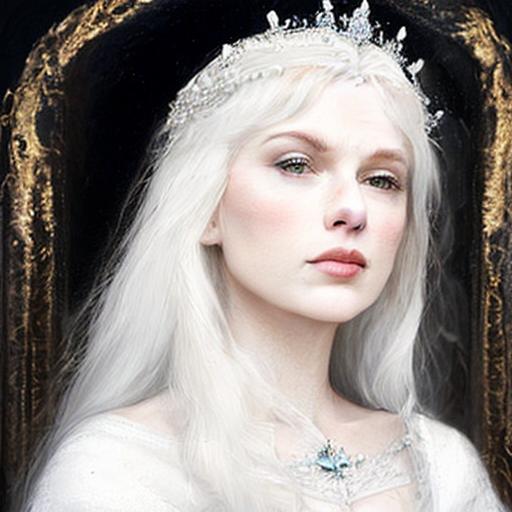}
    \end{subfigure}
    \begin{subfigure}[t]{\linewidth}
        \makebox[.1\linewidth]{Input}\hfill
        \makebox[.14\linewidth]{\begin{tabular}{c}trying on hats\\in a vintage\\boutique\end{tabular}}
        \makebox[.14\linewidth]{\begin{tabular}{c}in a cowboy\\hat\end{tabular}}
        \makebox[.14\linewidth]{\begin{tabular}{c}western vibes,\\sunset, rugged\\landscape\end{tabular}}
        \makebox[.14\linewidth]{\begin{tabular}{c}coding in a\\home office\end{tabular}}
        \makebox[.14\linewidth]{\begin{tabular}{c}colorful\\graffiti\end{tabular}}
        \makebox[.14\linewidth]{\begin{tabular}{c}as White\\Queen\end{tabular}}
    \end{subfigure}
    \begin{subfigure}[t]{\linewidth}
        \raisebox{.019\linewidth}{\includegraphics[width=.1\linewidth]{figs/ref_person/johansson.jpg}}\hfill
        \includegraphics[width=.14\linewidth]{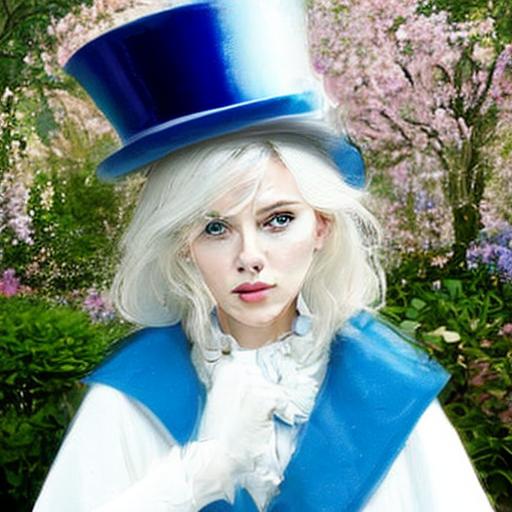}
        \includegraphics[width=.14\linewidth]{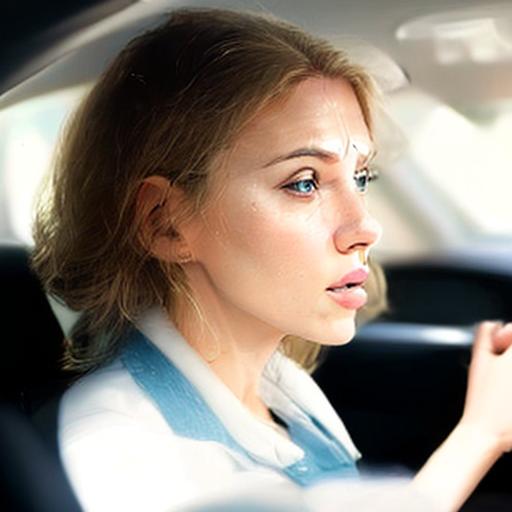}
        \includegraphics[width=.14\linewidth]{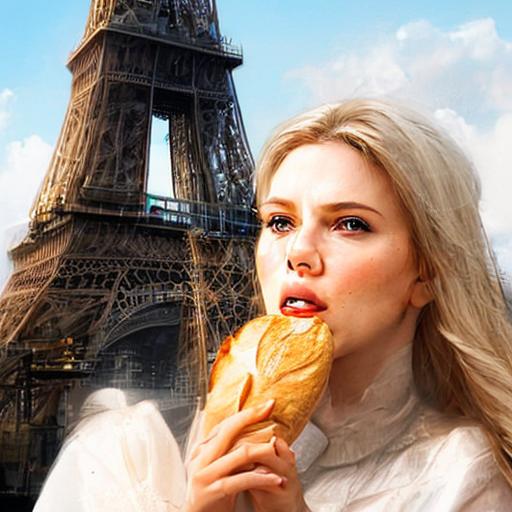}
        \includegraphics[width=.14\linewidth]{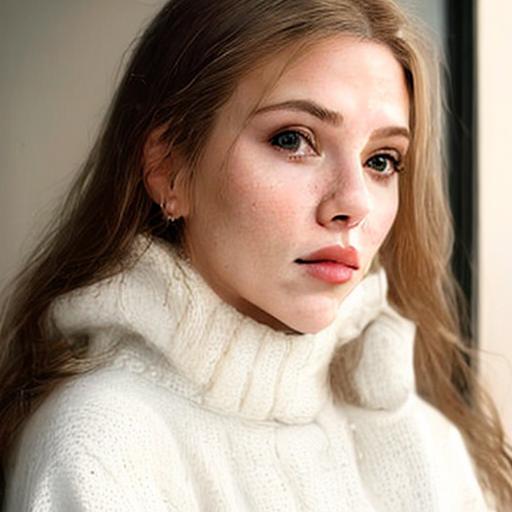}
        \includegraphics[width=.14\linewidth]{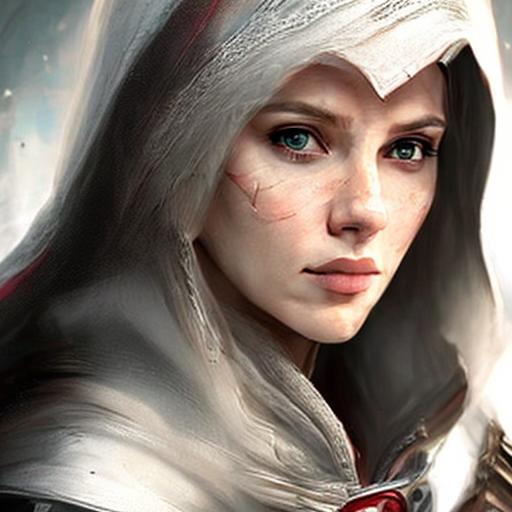}
        \includegraphics[width=.14\linewidth]{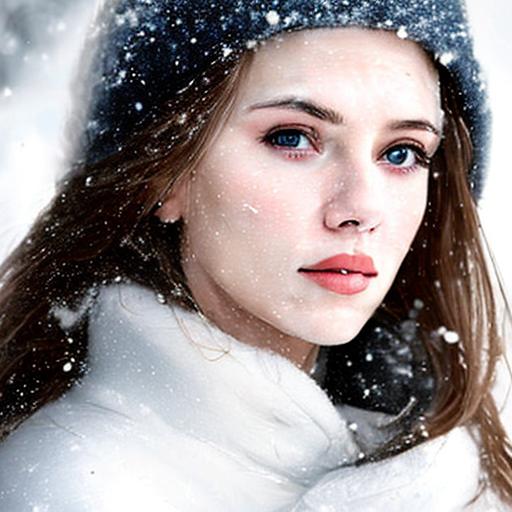}
    \end{subfigure}
    \begin{subfigure}[t]{\linewidth}
        \makebox[.1\linewidth]{Input}\hfill
        \makebox[.14\linewidth]{\begin{tabular}{c}wear a magician\\hat and a blue\\coat in a garden\end{tabular}}
        \makebox[.14\linewidth]{\begin{tabular}{c}driving a car\end{tabular}}
        \makebox[.14\linewidth]{\begin{tabular}{c}eating bread\\in front of the\\Eiffel Tower\end{tabular}}
        \makebox[.14\linewidth]{\begin{tabular}{c}wearing an\\oversized\\sweater\end{tabular}}
        \makebox[.14\linewidth]{\begin{tabular}{c}in assassin's\\creed\end{tabular}}
        \makebox[.14\linewidth]{\begin{tabular}{c}in the snow\end{tabular}}
    \end{subfigure}
    \begin{subfigure}[t]{\linewidth}
        \raisebox{.019\linewidth}{\includegraphics[width=.1\linewidth]{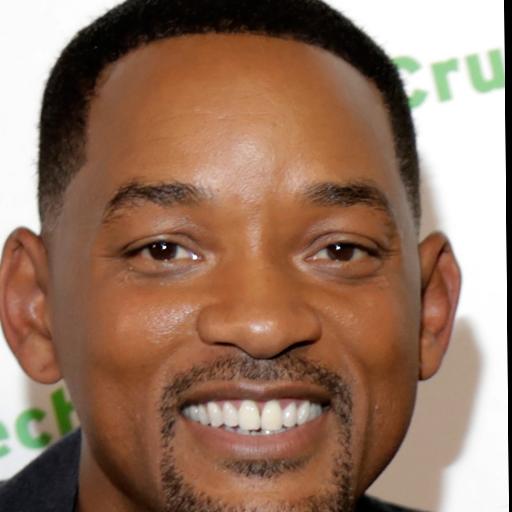}}\hfill
        \includegraphics[width=.14\linewidth]{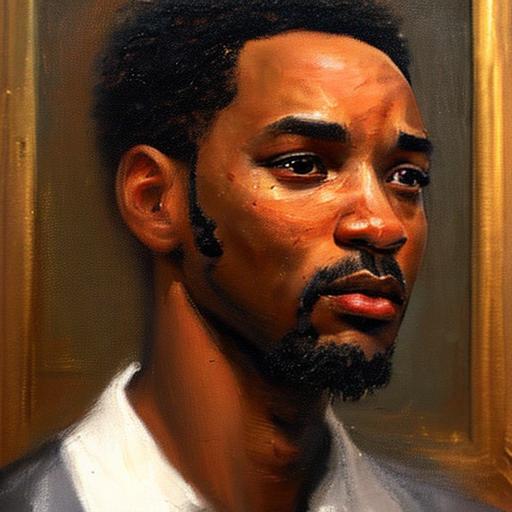}
        \includegraphics[width=.14\linewidth]{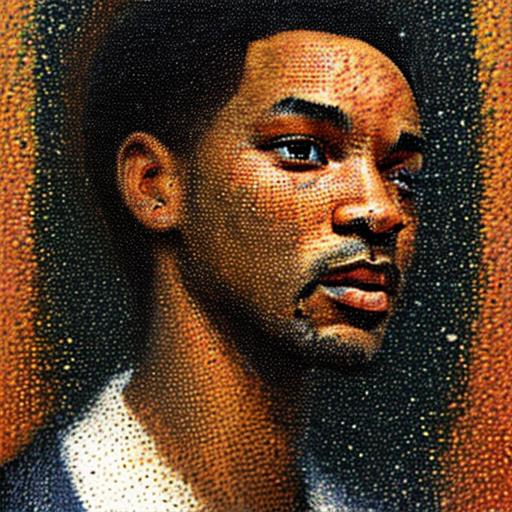}
        \includegraphics[width=.14\linewidth]{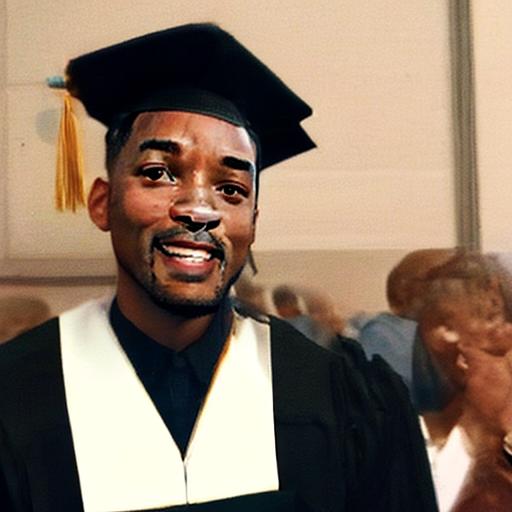}
        \includegraphics[width=.14\linewidth]{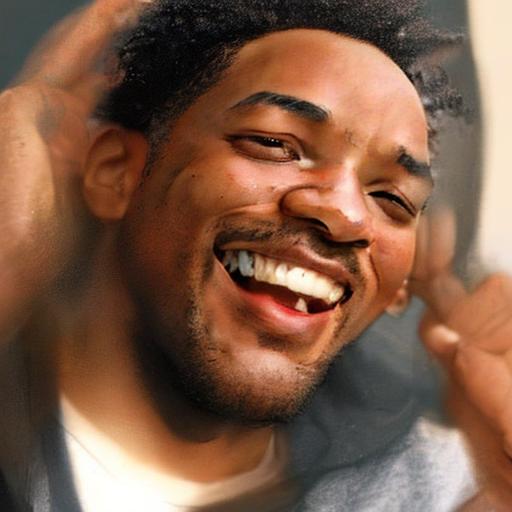}
        \includegraphics[width=.14\linewidth]{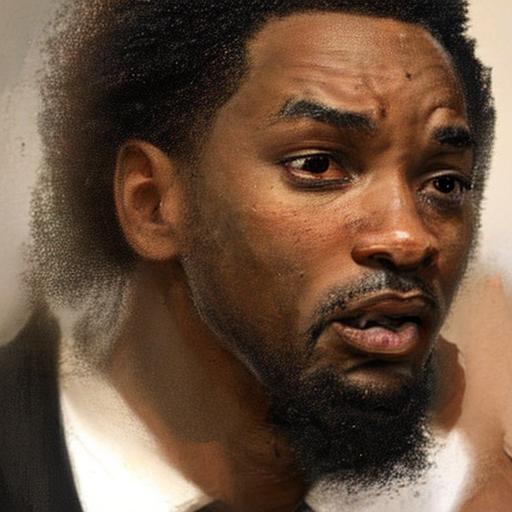}
        \includegraphics[width=.14\linewidth]{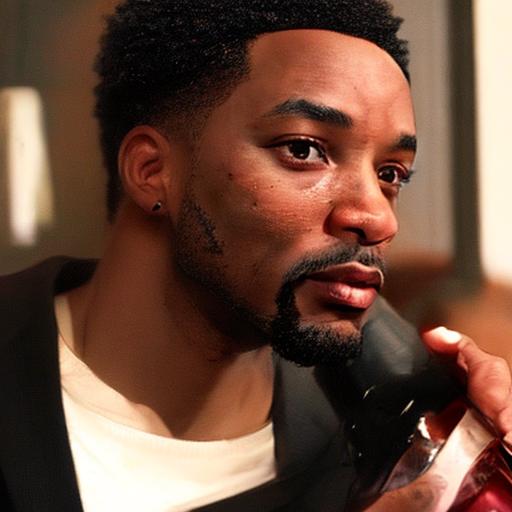}
    \end{subfigure}
    \begin{subfigure}[t]{\linewidth}
        \makebox[.1\linewidth]{Input}\hfill
        \makebox[.14\linewidth]{\begin{tabular}{c}oil painting\end{tabular}}
        \makebox[.14\linewidth]{\begin{tabular}{c}pointillism\\painting\end{tabular}}
        \makebox[.14\linewidth]{\begin{tabular}{c}graduating after\\finishing PhD\end{tabular}}
        \makebox[.14\linewidth]{\begin{tabular}{c}happy\end{tabular}}
        \makebox[.14\linewidth]{\begin{tabular}{c}surprised\end{tabular}}
        \makebox[.14\linewidth]{\begin{tabular}{c}holding a bottle\\of red wine\end{tabular}}
    \end{subfigure}
    \caption{Face-centric personalization results with vanilla 2-rectified flow~\citep{liu2023flow, liu2024instaflow}, which is based on Stable Diffusion 1.4~\citep{rombach2022high}. Our method preserves their identities well while remaining faithful to the text prompt during generation.}
    \label{fig:single_person_2rflow}
\end{figure}

\begin{figure}[t]
    \centering
    \footnotesize
    \begin{subfigure}[t]{\linewidth}
        \raisebox{.019\linewidth}{\includegraphics[width=.1\linewidth]{figs/ref_object/cat2_00.jpg}}\hfill
        \includegraphics[width=.14\linewidth]{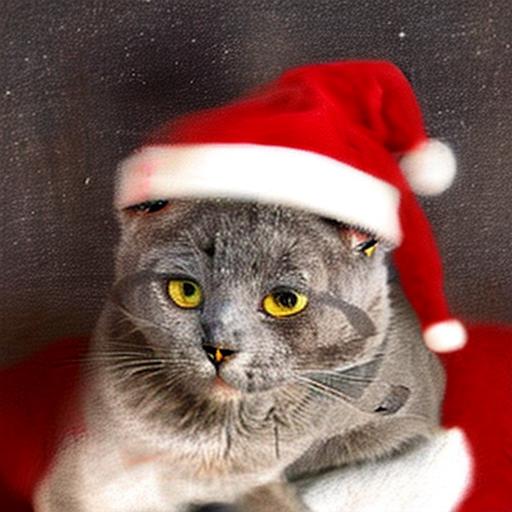}
        \includegraphics[width=.14\linewidth]{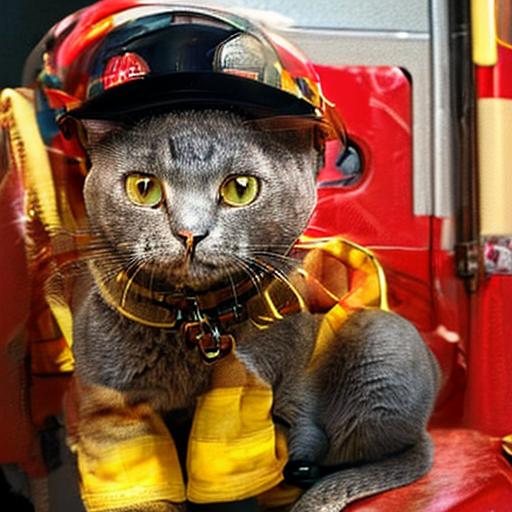}
        \includegraphics[width=.14\linewidth]{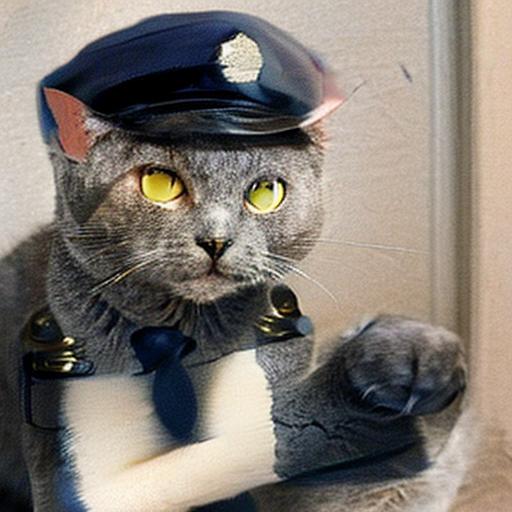}
        \includegraphics[width=.14\linewidth]{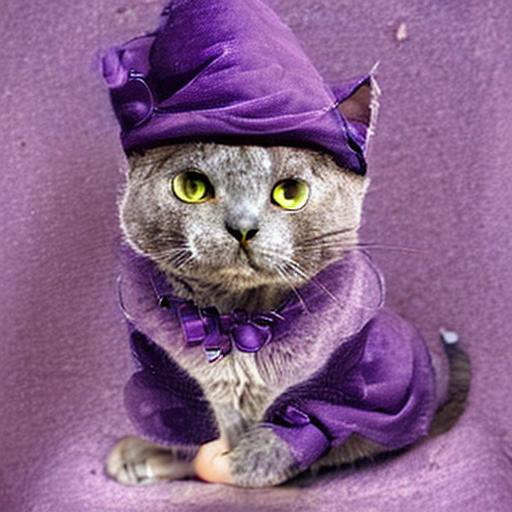}
        \includegraphics[width=.14\linewidth]{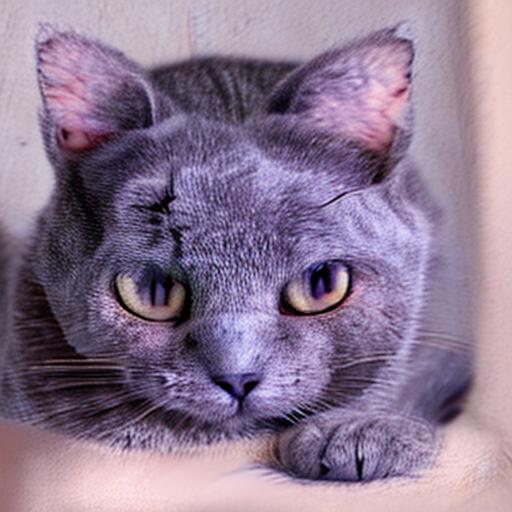}
        \includegraphics[width=.14\linewidth]{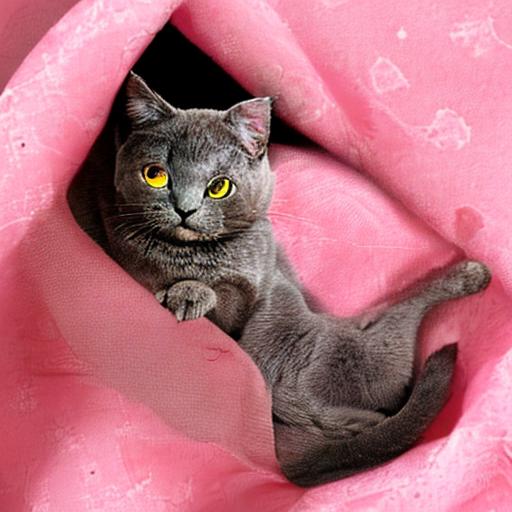}
    \end{subfigure}
    \begin{subfigure}[t]{\linewidth}
        \makebox[.1\linewidth]{Input}\hfill
        \makebox[.14\linewidth]{\begin{tabular}{c}wearing a\\santa hat\end{tabular}}
        \makebox[.14\linewidth]{\begin{tabular}{c}in a firefighter\\outfit\end{tabular}}
        \makebox[.14\linewidth]{\begin{tabular}{c}in a police\\outfit\end{tabular}}
        \makebox[.14\linewidth]{\begin{tabular}{c}in a purple\\wizard outfit\end{tabular}}
        \makebox[.14\linewidth]{\begin{tabular}{c}purple\end{tabular}}
        \makebox[.14\linewidth]{\begin{tabular}{c}on pink fabric\end{tabular}}
    \end{subfigure}
    \begin{subfigure}[t]{\linewidth}
        \raisebox{.019\linewidth}{\includegraphics[width=.1\linewidth]{figs/ref_object/dog_00.jpg}}\hfill
        \includegraphics[width=.14\linewidth]{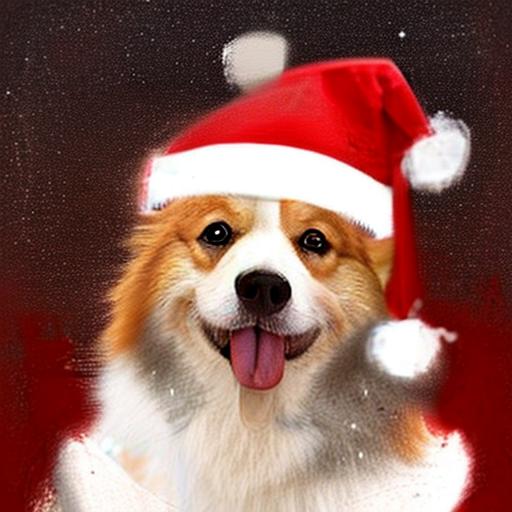}
        \includegraphics[width=.14\linewidth]{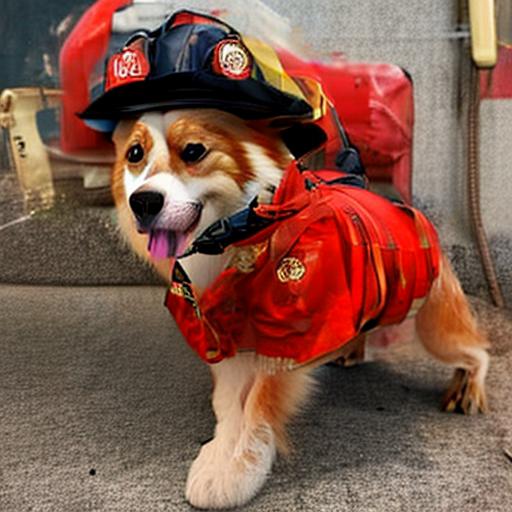}
        \includegraphics[width=.14\linewidth]{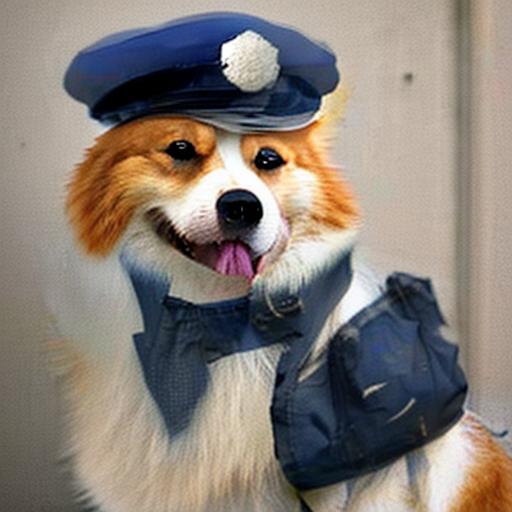}
        \includegraphics[width=.14\linewidth]{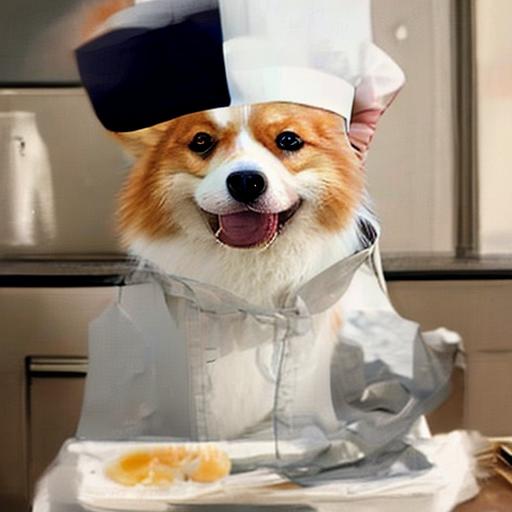}
        \includegraphics[width=.14\linewidth]{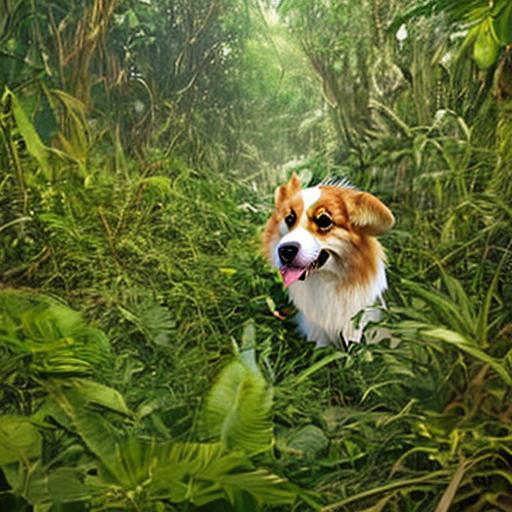}
        \includegraphics[width=.14\linewidth]{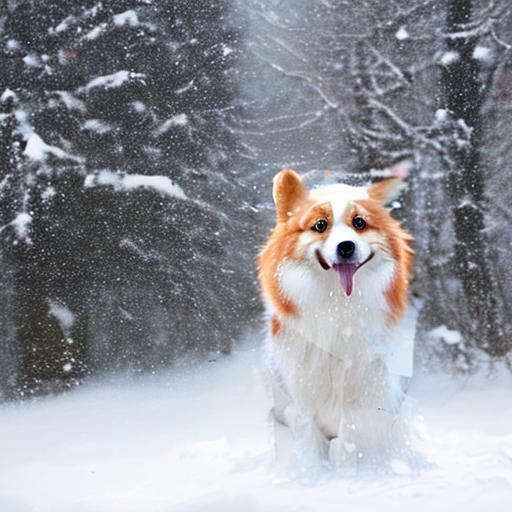}
    \end{subfigure}
    \begin{subfigure}[t]{\linewidth}
        \makebox[.1\linewidth]{Input}\hfill
        \makebox[.14\linewidth]{\begin{tabular}{c}wearing a\\santa hat\end{tabular}}
        \makebox[.14\linewidth]{\begin{tabular}{c}in a firefighter\\outfit\end{tabular}}
        \makebox[.14\linewidth]{\begin{tabular}{c}in a police\\outfit\end{tabular}}
        \makebox[.14\linewidth]{\begin{tabular}{c}in a chef outfit\end{tabular}}
        \makebox[.14\linewidth]{\begin{tabular}{c}in the jungle\end{tabular}}
        \makebox[.14\linewidth]{\begin{tabular}{c}in the snow\end{tabular}}
    \end{subfigure}
    \begin{subfigure}[t]{\linewidth}
        \raisebox{.019\linewidth}{\includegraphics[width=.1\linewidth]{figs/ref_object/dog2_00.jpg}}\hfill
        \includegraphics[width=.14\linewidth]{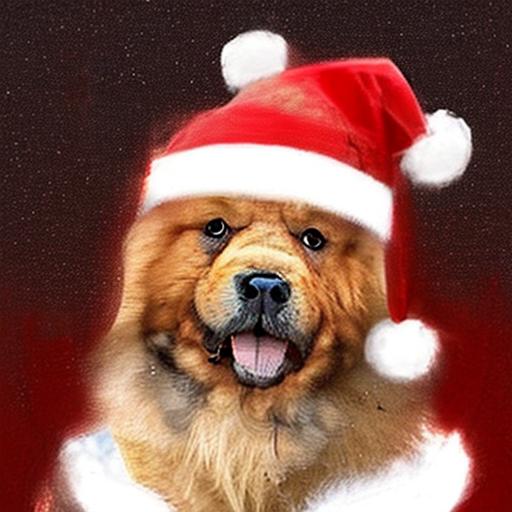}
        \includegraphics[width=.14\linewidth]{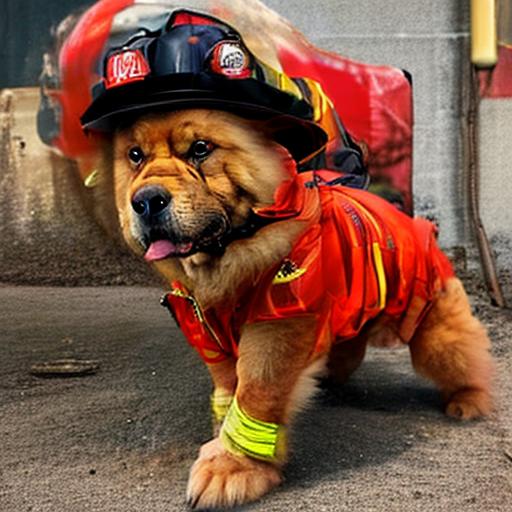}
        \includegraphics[width=.14\linewidth]{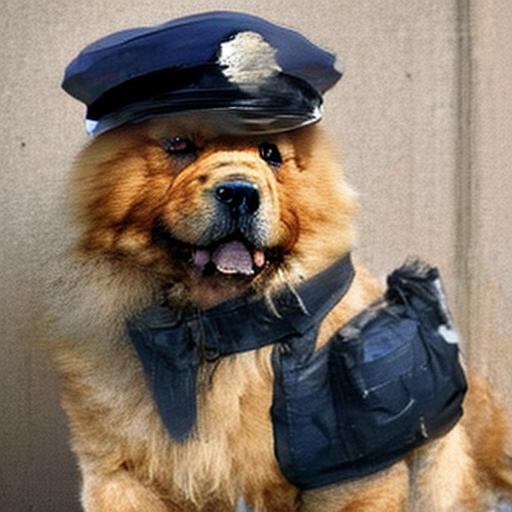}
        \includegraphics[width=.14\linewidth]{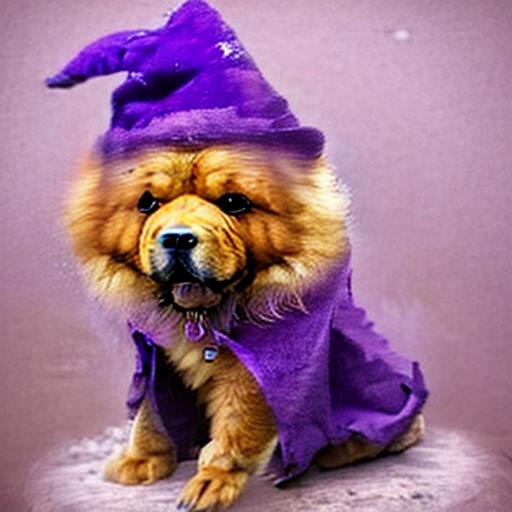}
        \includegraphics[width=.14\linewidth]{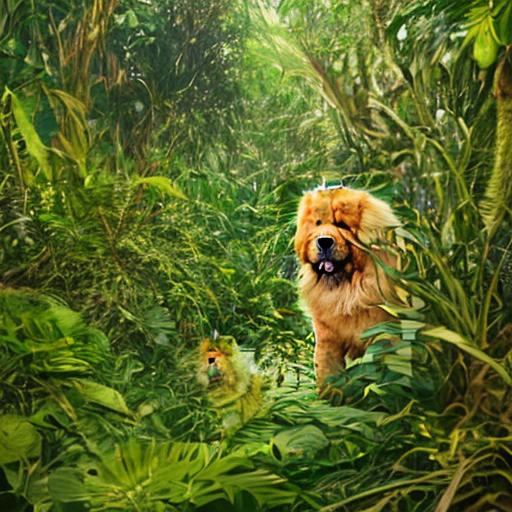}
        \includegraphics[width=.14\linewidth]{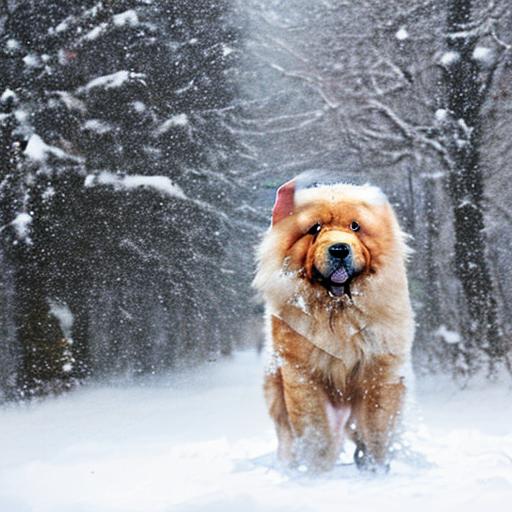}
    \end{subfigure}
    \begin{subfigure}[t]{\linewidth}
        \makebox[.1\linewidth]{Input}\hfill
        \makebox[.14\linewidth]{\begin{tabular}{c}wearing a\\santa hat\end{tabular}}
        \makebox[.14\linewidth]{\begin{tabular}{c}in a firefighter\\outfit\end{tabular}}
        \makebox[.14\linewidth]{\begin{tabular}{c}in a police\\outfit\end{tabular}}
        \makebox[.14\linewidth]{\begin{tabular}{c}in a purple\\wizard outfit\end{tabular}}
        \makebox[.14\linewidth]{\begin{tabular}{c}in the jungle\end{tabular}}
        \makebox[.14\linewidth]{\begin{tabular}{c}in the snow\end{tabular}}
    \end{subfigure}
    \begin{subfigure}[t]{\linewidth}
        \raisebox{.019\linewidth}{\includegraphics[width=.1\linewidth]{figs/ref_object/dog7_00.jpg}}\hfill
        \includegraphics[width=.14\linewidth]{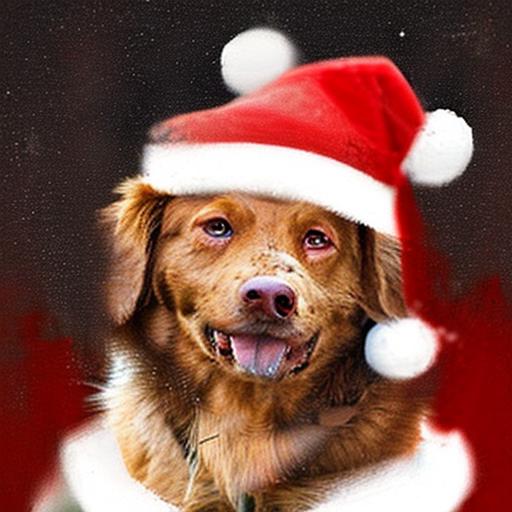}
        \includegraphics[width=.14\linewidth]{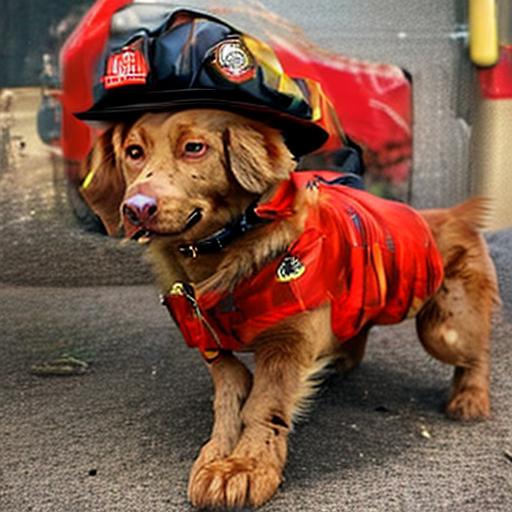}
        \includegraphics[width=.14\linewidth]{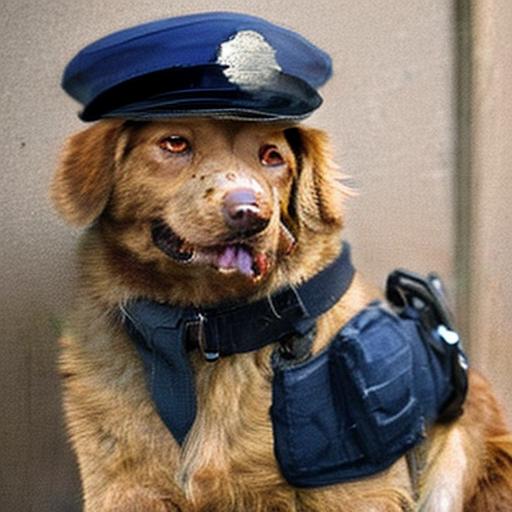}
        \includegraphics[width=.14\linewidth]{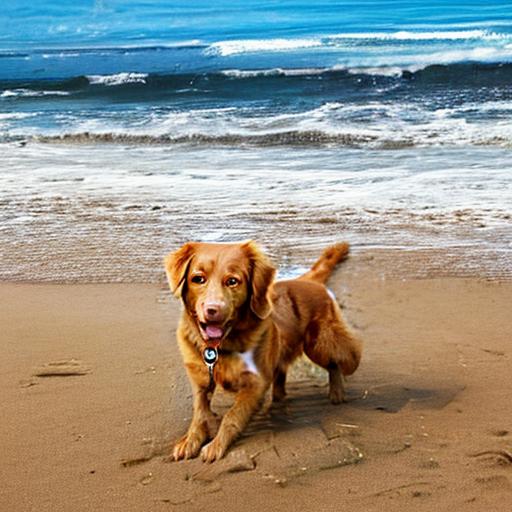}
        \includegraphics[width=.14\linewidth]{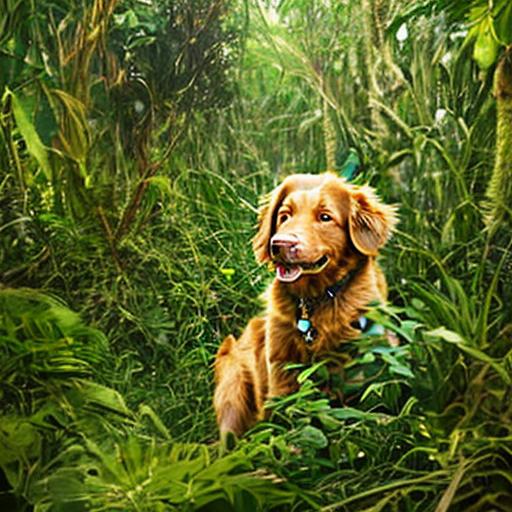}
        \includegraphics[width=.14\linewidth]{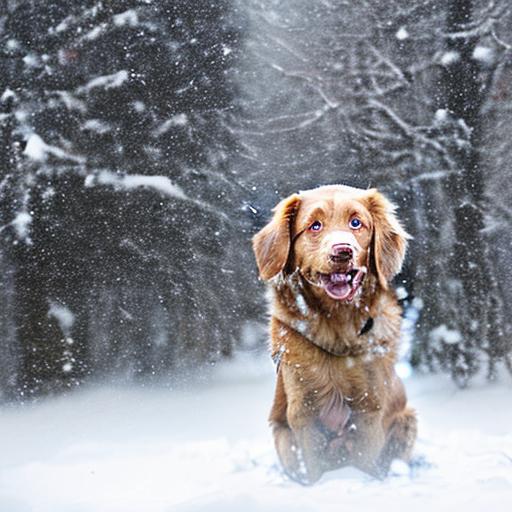}
    \end{subfigure}
    \begin{subfigure}[t]{\linewidth}
        \makebox[.1\linewidth]{Input}\hfill
        \makebox[.14\linewidth]{\begin{tabular}{c}wearing a\\santa hat\end{tabular}}
        \makebox[.14\linewidth]{\begin{tabular}{c}in a firefighter\\outfit\end{tabular}}
        \makebox[.14\linewidth]{\begin{tabular}{c}in a police\\outfit\end{tabular}}
        \makebox[.14\linewidth]{\begin{tabular}{c}on the beach\end{tabular}}
        \makebox[.14\linewidth]{\begin{tabular}{c}in the jungle\end{tabular}}
        \makebox[.14\linewidth]{\begin{tabular}{c}in the snow\end{tabular}}
    \end{subfigure}
    \begin{subfigure}[t]{\linewidth}
        \raisebox{.019\linewidth}{\includegraphics[width=.1\linewidth]{figs/ref_object/dog8_00.jpg}}\hfill
        \includegraphics[width=.14\linewidth]{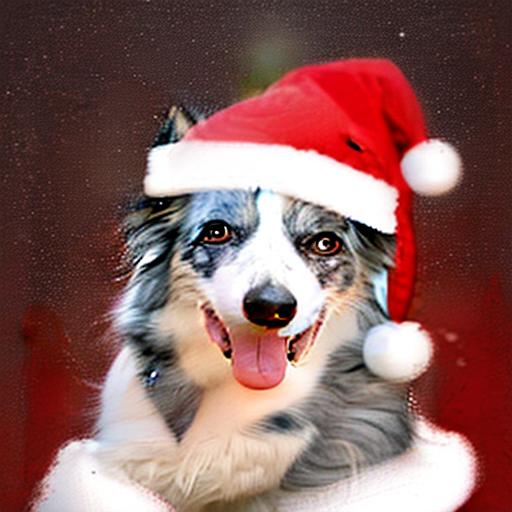}
        \includegraphics[width=.14\linewidth]{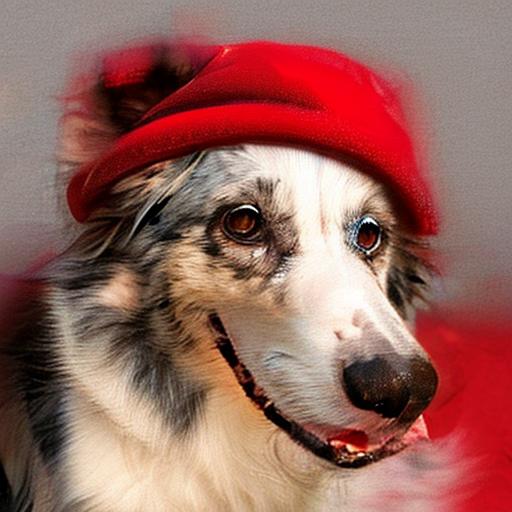}
        \includegraphics[width=.14\linewidth]{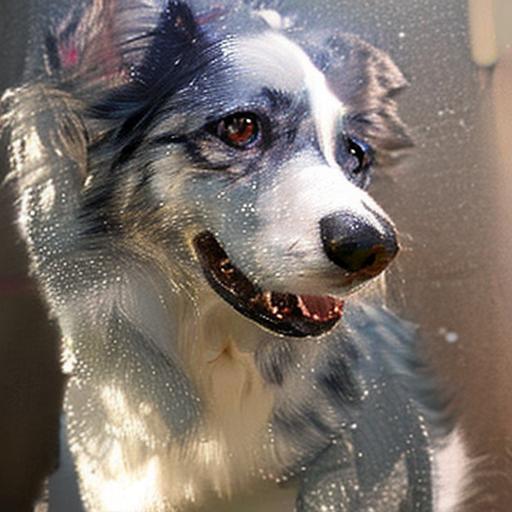}
        \includegraphics[width=.14\linewidth]{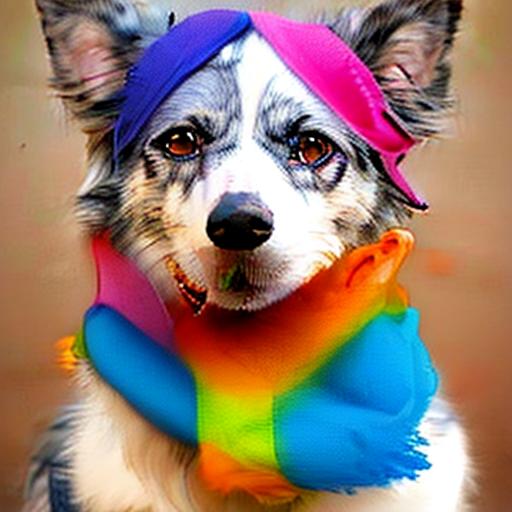}
        \includegraphics[width=.14\linewidth]{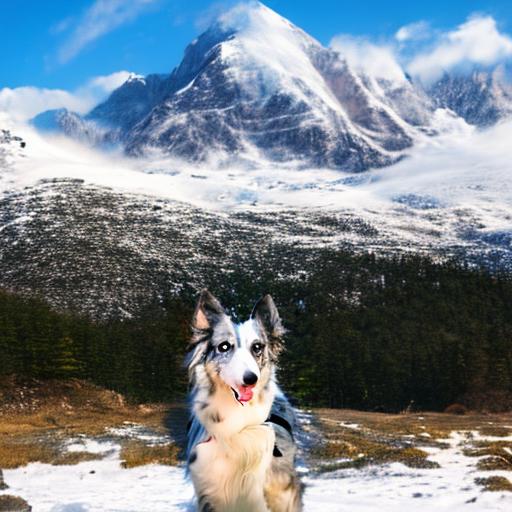}
        \includegraphics[width=.14\linewidth]{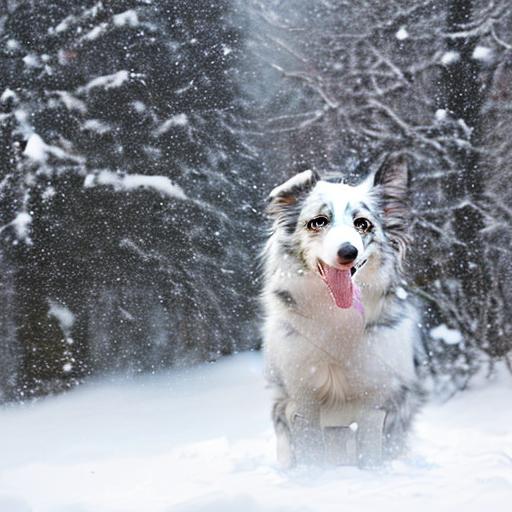}
    \end{subfigure}
    \begin{subfigure}[t]{\linewidth}
        \makebox[.1\linewidth]{Input}\hfill
        \makebox[.14\linewidth]{\begin{tabular}{c}wearing a\\santa hat\end{tabular}}
        \makebox[.14\linewidth]{\begin{tabular}{c}wearing a\\red hat\end{tabular}}
        \makebox[.14\linewidth]{\begin{tabular}{c}shiny\end{tabular}}
        \makebox[.14\linewidth]{\begin{tabular}{c}wearing a\\rainbow scarf\end{tabular}}
        \makebox[.14\linewidth]{\begin{tabular}{c}in front of a\\mountain\end{tabular}}
        \makebox[.14\linewidth]{\begin{tabular}{c}in the snow\end{tabular}}
    \end{subfigure}
    \begin{subfigure}[t]{\linewidth}
        \raisebox{.019\linewidth}{\includegraphics[width=.1\linewidth]{figs/ref_object/vase_00.jpg}}\hfill
        \includegraphics[width=.14\linewidth]{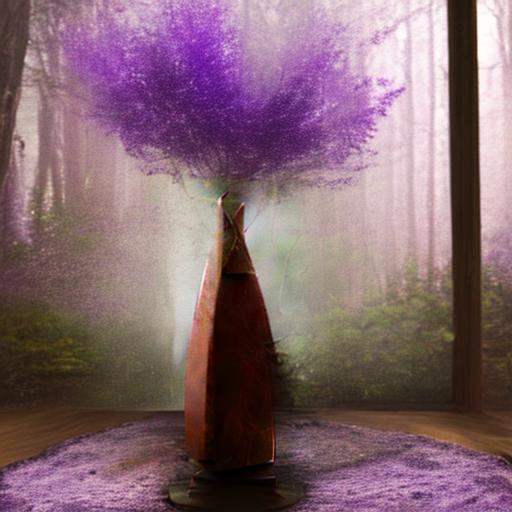}
        \includegraphics[width=.14\linewidth]{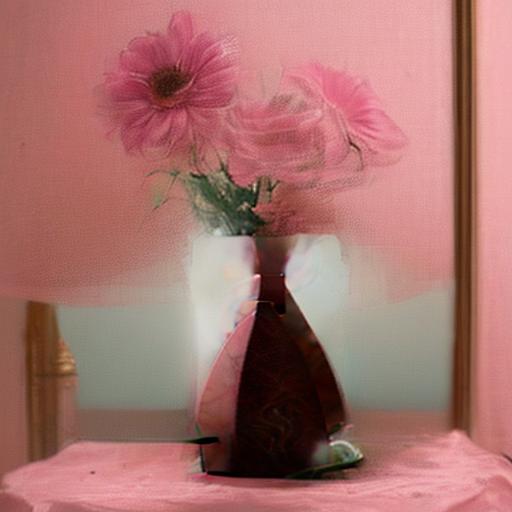}
        \includegraphics[width=.14\linewidth]{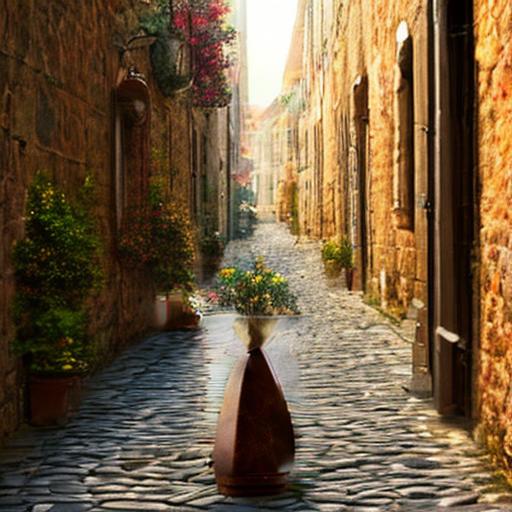}
        \includegraphics[width=.14\linewidth]{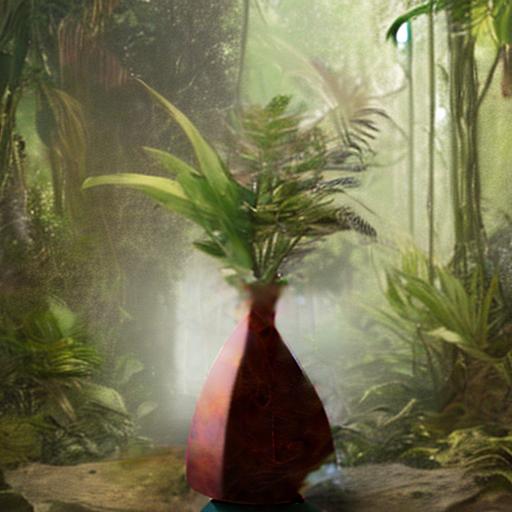}
        \includegraphics[width=.14\linewidth]{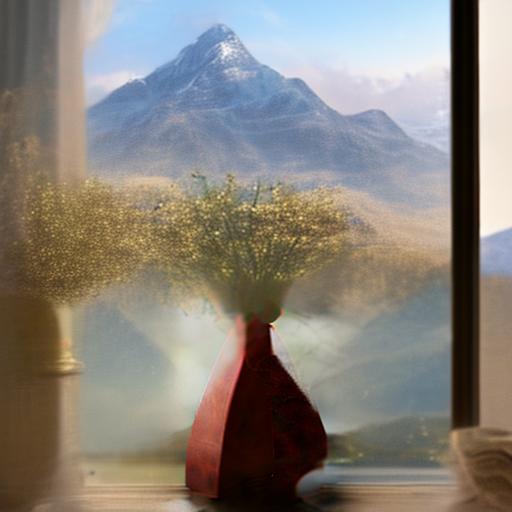}
        \includegraphics[width=.14\linewidth]{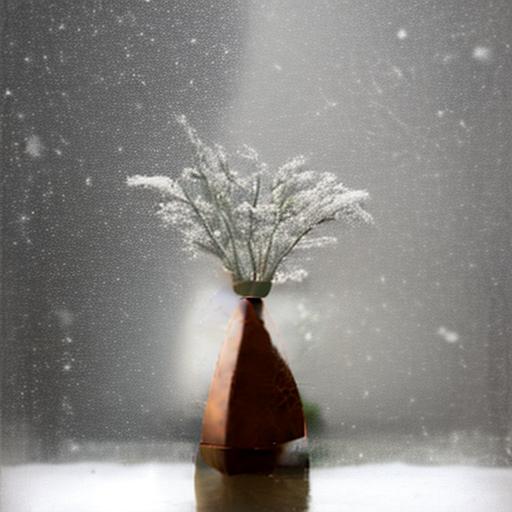}
    \end{subfigure}
    \begin{subfigure}[t]{\linewidth}
        \makebox[.1\linewidth]{Input}\hfill
        \makebox[.14\linewidth]{\begin{tabular}{c}on a purple rug\\in the forest\end{tabular}}
        \makebox[.14\linewidth]{\begin{tabular}{c}on pink fabric\end{tabular}}
        \makebox[.14\linewidth]{\begin{tabular}{c}on a cobblestone\\street\end{tabular}}
        \makebox[.14\linewidth]{\begin{tabular}{c}in the jungle\end{tabular}}
        \makebox[.14\linewidth]{\begin{tabular}{c}in front of a\\mountain\end{tabular}}
        \makebox[.14\linewidth]{\begin{tabular}{c}in the snow\end{tabular}}
    \end{subfigure}
    \begin{subfigure}[t]{\linewidth}
        \raisebox{.019\linewidth}{\includegraphics[width=.1\linewidth]{figs/ref_object/can_00.jpg}}\hfill
        \includegraphics[width=.14\linewidth]{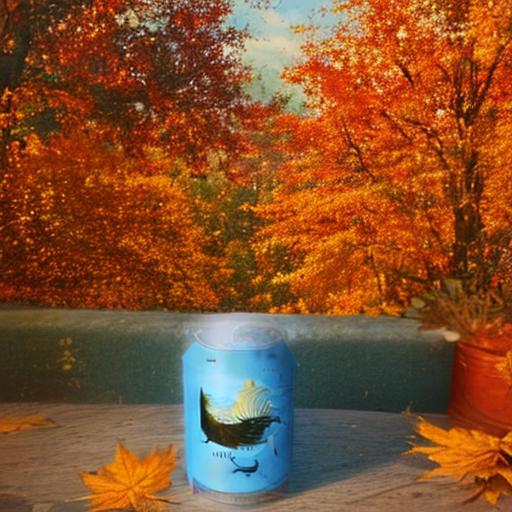}
        \includegraphics[width=.14\linewidth]{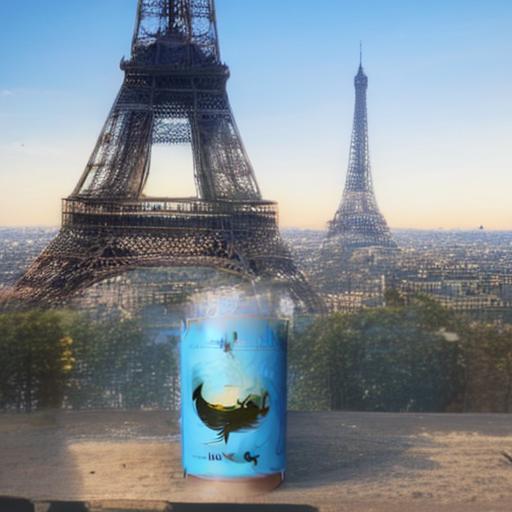}
        \includegraphics[width=.14\linewidth]{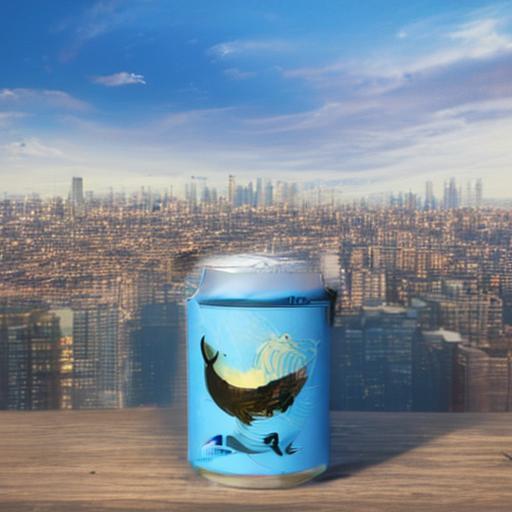}
        \includegraphics[width=.14\linewidth]{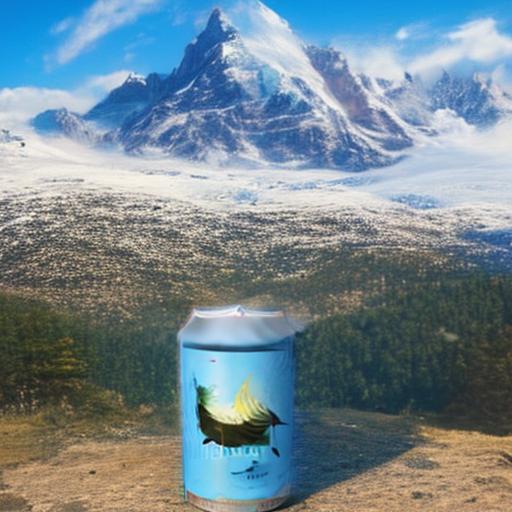}
        \includegraphics[width=.14\linewidth]{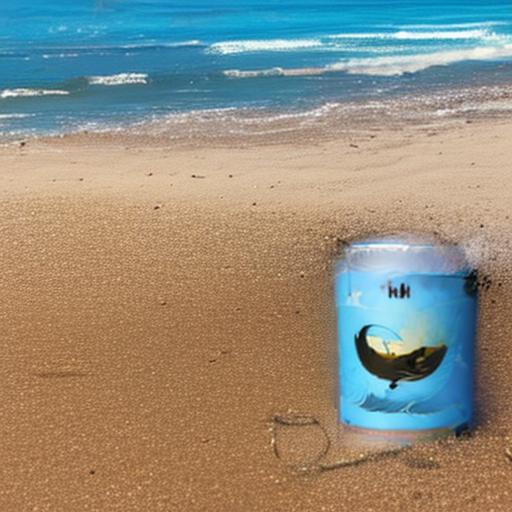}
        \includegraphics[width=.14\linewidth]{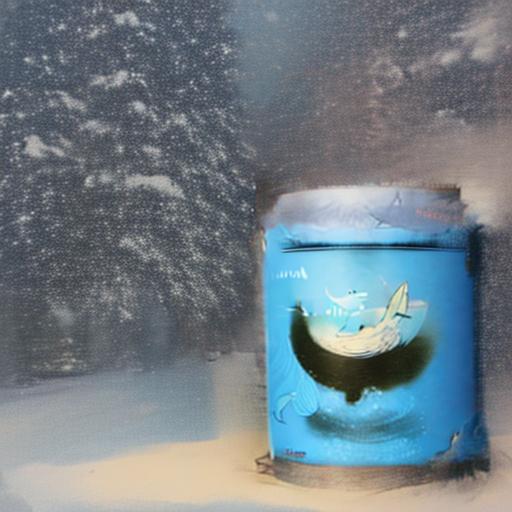}
    \end{subfigure}
    \begin{subfigure}[t]{\linewidth}
        \makebox[.1\linewidth]{Input}\hfill
        \makebox[.14\linewidth]{\begin{tabular}{c}with autumn\\leaves\end{tabular}}
        \makebox[.14\linewidth]{\begin{tabular}{c}in front of the\\Eiffel Tower\end{tabular}}
        \makebox[.14\linewidth]{\begin{tabular}{c}in front of a\\city\end{tabular}}
        \makebox[.14\linewidth]{\begin{tabular}{c}in front of a\\mountain\end{tabular}}
        \makebox[.14\linewidth]{\begin{tabular}{c}on the beach\end{tabular}}
        \makebox[.14\linewidth]{\begin{tabular}{c}in the snow\end{tabular}}
    \end{subfigure}
    \caption{Subject-driven generation results with vanilla 2-rectified flow~\citep{liu2023flow, liu2024instaflow}, which is based on Stable Diffusion 1.4~\citep{rombach2022high}. Examples for additional categories of cats, dogs, and objects are included to demonstrate the effectiveness of our approach.}
    \label{fig:single_object_2rflow}
\end{figure}

\end{document}